\documentclass[11pt]{article}
\input{macros}
\usepackage{mathrsfs}
\usepackage{todonotes}
\allowdisplaybreaks

\begin{document}

\title{Replica Symmetry Breaking and
Algorithmic Thresholds in\\
Empirical Risk Minimization 
under Multi-Index Model}
\author{Andrea Montanari\thanks{Department of Mathematics and Department of Statistics, 
		Stanford University} \,\, and \,\, Kangjie Zhou\thanks{Department of Statistics, 
		Columbia University}}
\date{\today}
\maketitle

\begin{abstract}
Modern machine learning models are trained
by optimizing high-dimensional non-convex empirical risk functions.
Such cost functions can have a multitude of local
optima and yet, gradient-based optimization appears to converge to  
near-global optima.

Within a simple supervised learning setting,
we develop a precise picture of which parts of the empirical risk 
landscape are accessible by polynomial-time algorithms. 
We are given i.i.d. pairs
$\{(\xx_i,y_i):\; 1 \le i\le n\}$ with 
$\xx_i\in \R^d$ standard Gaussian feature vectors, and $y_i\in\R$ response 
variables that depend on $\xx_i$ through their projections
on an unknown $k$-dimensional subspace. We use empirical
risk minimization to learn a model that depends on an $m$-dimensional 
projection of the data (e.g., an $m$-neurons neural network).

We propose an incremental approximate message passing (IAMP) algorithm
and precisely characterize the training error it achieves, as well as the relation
between test and training error, in the high dimensional asymptotics $n,d\to\infty$,
with $n/d\to\alpha \in (0, +\infty)$. Based on earlier work in related models,
we expect that the performance achieved by our algorithm is optimal among polynomial-time algorithms. 
\end{abstract}

\tableofcontents

\section{Introduction}
Empirical risk minimization (ERM) is the primary guiding principle in the design of learning algorithms. However, its theoretical properties are still insufficiently understood in the regime where the model complexity\footnote{Characterized by the number of parameters or the Rademacher complexity \cite{shalev2014understanding}.} scales proportionally with the sample size, while this scaling is precisely the one privileged by current AI trends \cite{kaplan2020scaling,hoffmann2022training}.

In the case of convex empirical risk functions (namely,
linear models with convex losses), a rich theory allows us to derive sharp
 asymptotic results under simple data distributions 
 \cite{bayati2011lasso,thrampoulidis2015regularized,el2018impact,donoho2016high}. 
 Recent work has developed methods to handle non-convexity when the 
 empirical risk is locally convex in a neighborhood of the global minimizer, 
 and there is only one or a small number of near-global minima 
 \cite{asgari2025local,vilucchio2025asymptotics,montanari2026topological}.
 However, the empirical risk landscape  of interest in machine learning is often
 qualitatively different, with many well-separated local minima.

An example of this type arises in the context of tensor PCA, 
where one tries to estimate a high-dimensional rank-one tensor, from
observations corrupted by Gaussian noise \cite{montanari2014statistical}.
In this case, the negative log-likelihood function is a Gaussian process 
$\hL_n(\ww)$ indexed by the parameter vector $\ww\in\S^{d-1}$,
and can present an exponential number of local minima 
\cite{arous2019landscape}. This type of cost functions is also at the core
of the theory of mean-field spin glasses \cite{auffinger2023optimization}.

  Since global optimization of empirical risk functions with
  many local minima is out of reach of polynomial-time algorithms
  in the worst case, 
  we instead address the following central question:
\begin{itemize}\label{Q:train_test_err}
   \item[{\sf Q}] \textit{Which values of the training and test error are achievable by 
   polynomial-time algorithms with high probability with respect to a given data distribution?}
\end{itemize}

To be definite, consider a simple setting where the observed i.i.d. data $\{ (\xx_i, y_i) \}_{i=1}^{n}$ follow a Gaussian single-index model:
\begin{equation}\label{eq:single_index_model_first}
    y_i= \sqrt{\lambda}\, \varphi(\ww_*^{\sT}\xx_i)+\eps_i\, ,\;\;\ \xx_i \sim \normal(\bzero, \id_d), \,\, \eps_i\sim\normal(0,1)\, ,
\end{equation}
where $\varphi$ is the link function, 
$\ww_* \in \S^{d-1}$ is the true signal, and 
$\lambda > 0$ represents the signal-to-noise ratio. We will be interested in the proportional scaling regime where $n, d \to \infty$ such that $n/d \to \alpha \in (0, \infty)$. While our main results consider more general models, the above setup is already rich enough to illustrate the key points.

To learn the true signal $\ww_*$, we propose to solve the following correlation maximization problem
over the unit sphere $\S^{d-1}$:
 \begin{equation}\label{eq:single_index_model_second}
	\begin{aligned}
    \mbox{maximize}\;\;\;&
	\hCorr_n(\ww) := \frac{1}{n \sqrt{\lambda}} 
	\sum_{i=1}^n y_i \sigma(\ww^{\sT}\xx_i),\\
	\mbox{subject to}\;\;\;& \ww \in \S^{d-1},
	\end{aligned}
 \end{equation}
 for a given activation function $\sigma$. Also in this respect, our main results
 are more general and we consider correlation loss here only as an illustration.
 On the other hand, we do not assume $\sigma=\varphi$
 because---in general---the data generating process is unknown.
 We are particularly interested in characterizing the training 
 and test errors achievable by efficient iterative algorithms, 
 such as gradient-based methods and approximate message passing (AMP). 

We pause to note that many special cases of the model~\eqref{eq:single_index_model_first}, and ERM methods analogous to \eqref{eq:single_index_model_second}
have been extensively investigated in the literature. See, e.g., \cite{chen2017solving,mondelli2018fundamental,aubin2018committee,barbier2019optimal}. Recall that $\alpha = \lim_{n, d \to \infty} n / d$ is the limiting aspect ratio. Existing results in literature \cite{vilucchio2025asymptotics,montanari2026topological,maillard2026topological} 
suggest the following qualitative picture of the landscape of $\hCorr_n(\ww)$:
\begin{enumerate}
	\item For $\alpha>\alpha_{\str}(\lambda,\varphi,\sigma)$, the empirical correlation 
	$\hCorr_n(\ww)$ admits a unique global maximizer $\hww \in \S^{d-1}$. Furthermore, $\hww$
	 is positively correlated with the true signal $\ww_*$, with 
	$\hCorr_n(\ww)$ being strongly concave in a neighborhood of $\hww$. This local maximum is accessible by
	suitable polynomial-time 
	algorithms that only use gradient or Hessian information of $\hCorr_n$, such as AMP.
	\item For $\alpha<\alpha_{\str}(\lambda,\varphi,\sigma)$, either 
	 all near-global maximizers are roughly orthogonal to $\ww_*$,
    or a global maximizer exists and is positively correlated with 
	$\ww_*$, but it cannot be found by polynomial-time algorithms 
	or does not have a strongly concave neighborhood. 
\end{enumerate}
It is also possible that $\alpha_{\str}=0$ or $\alpha_{\str}=\infty$.
We refer to the phase transition at $\alpha_{\str}$ as the `trivialization' phase
transition.  In statistical physics terminology, when the landscape presents multiple
well-separated near-optima, with roughly independent random values
(as is the case for $\alpha<\alpha_{\str}$), one speaks of `replica symmetry breaking.'

 Existing results successfully characterize the Bayes optimal 
 estimation error or analyze ERM for $\alpha>\overline{\alpha}_{\str}$ 
 (with $\overline{\alpha}_{\str}$ an upper bound on the trivialization
 threshold $\alpha_{\str}$) \cite{barbier2019optimal,vilucchio2025asymptotics,asgari2025local,montanari2026topological}. However, these works fall short of answering Question~{\sf Q} 
 in the most challenging regime $\alpha<\alpha_{\str}$.

\begin{figure}[!ht]
    \centering
    \includegraphics[width=0.9\linewidth]{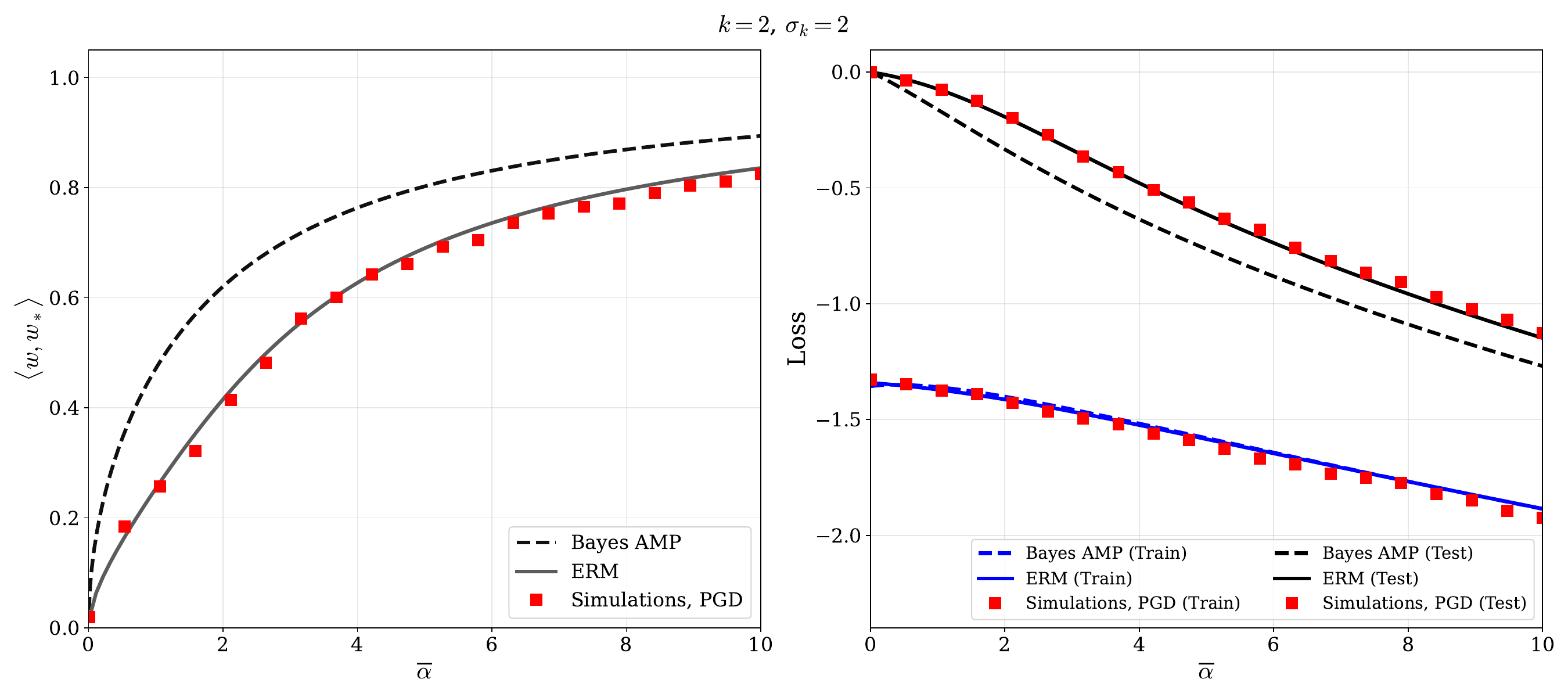}
    \caption{\textbf{Left panel:} The limiting overlap with $\ww_*$ for Bayes AMP, 
    ERM (value achieved by our two-stage algorithm),
    and projected gradient descent (PGD). \textbf{Right panel:} The limiting training error (normalized empirical correlation) and test error (normalized population correlation) for these three methods. These values are computed under the limit where $\alpha \to \infty$ and $\lambda \to 0$ such that $\alpha \lambda \to \oalpha$. See \cref{sec:Applications} for a comprehensive description of the experimental setup and implementation details.}
    \label{fig:misspecified_intro}
\end{figure}

To illustrate the content of  Question~{\sf Q}, 
Figure~\ref{fig:misspecified_intro} reports the results of a numerical simulation. 
We consider the ERM problem \eqref{eq:single_index_model_second} with
data generated using $\varphi(x)= \operatorname{ReLU} (x) = \max (x, 0)$ and learning with the misspecified activation function 
$\sigma(x) = x + \sqrt{2} x^2$.
We evaluate the performance of three algorithms: 
$(i)$ Bayes AMP; $(ii)$ ERM computed via our two-stage algorithm, 
which we expect to be optimal among polynomial-time algorithms
(see \cref{sec:main_results} for details); and $(iii)$~Projected gradient 
descent (PGD) minimizing $-\hCorr_n$ (maximizing $\hCorr_n$) over the unit sphere. 
For $(i)$ we use the theoretical predictions from  \cite{barbier2019optimal},
while for $(ii)$ we plot the predictions from the present work.
Finally, for $(iii)$ we plot the results of numerical experiments.

We plot suitably normalized versions of train and test errors
(see \cref{sec:single_index_example} for details), as well as the correlation 
with the true signal $\ww_*$ attained by each of the three methods.
We observe that numerical simulations with PGD are closely matched by the theoretical 
predictions for the optimal two-stage algorithm. In other words, our theory appears
to capture the behavior of suitably tuned gradient descent.
On the other hand, a substantial gap exists between the test error of
ERM based methods (either PGD or our two-stage algorithm) and that of the Bayes optimal AMP.

\subsection{Technical setting: Multi-index model and ERM}
In the remainder of this paper, we aim to answer Question~{\sf Q}
 by delineating the computationally feasible training and test errors for a broad class of 
 iterative algorithms, under the following Gaussian multi-index model.
\begin{ass}\label{ass:linear_signal}
	$\{ (\xx_i, y_i) \}_{i \in [n]}$ are i.i.d. data such that for each $i \in [n]$, 
	$\xx_i \sim \normal(\bzero, \id_d)$, $y_i = \varphi (\WW_*^\sT \xx_i, \veps_i)$, 
	where $\WW_* \in \R^{d \times k}$ is a deterministic signal matrix and $\varphi: \R^k \times \R \to \R$ is a deterministic link function. Without loss of generality, we assume that $\WW_* \in O(d, k)$, the space of $d\times k$ orthogonal matrices (cf. \cref{lem:Reduction_Signal}). Further, denoting $\yy = (y_i)_{i \in [n]}$, $\vveps = (\veps_i)_{i \in [n]}$ and $\XX = [\xx_1, \cdots, \xx_n]^\top$, we assume that the noise vector $\vveps$ is independent of the data matrix $\XX$, with $\veps_i \sim_{\iid} P_{\veps}$ such that the $y_i$'s are sub-Gaussian.
\end{ass}

Given $h: \R \times \R^m \to \R$ a bounded Lipschitz function, we aim to characterize
the achievable values of random optimization problems of the form
\begin{align}\label{eq:GeneralOpt_Supervised}
\mbox{maximize} \quad \frac{1}{n} \sum_{i=1}^{n} h \big( y_i, \WW^\sT \xx_i \big), \quad \mbox{subject to} \ \WW \in O(d, m),
\end{align}
in the proportional asymptotics where
$n/d \to \alpha \in (0, \infty)$ as $n, d \to \infty$, 
while $m, k = O(1)$. We will be interested both in the 
asymptotics of the maximum value (possibly subject to constraints on $\WW^{\sT}\WW_*$),
and on the maximum value achievable by polynomial-time algorithms.
Note that, although in \cref{eq:GeneralOpt_Supervised} we only optimize over orthogonal matrices, general inner product structures of $\WW$ can be encoded into the definition of $h$.

In statistical learning
applications, one normally takes $h = - \ell$ for some
 loss function $\ell$, and minimizes the empirical risk
\begin{equation}\label{eq:ERM_definition}
	\hat{L}_n (\WW) = \, \frac{1}{n} \sum_{i=1}^{n} \ell \big( y_i, \WW^\sT \xx_i \big), \quad \mbox{subject to} \ \WW \in O(d, m).
\end{equation}
Obviously, this minimization problem is entirely equivalent to the maximization problem~\eqref{eq:GeneralOpt_Supervised}.

\begin{rem}[Two-layer neural networks]
	As a special case of the ERM problem \eqref{eq:ERM_definition},
we consider training a two-layer neural network with squared loss. This example corresponds to minimizing the following empirical risk function
\begin{align}
\hat{L}_n (\UU;\aa) = 
\frac{1}{2n}\sum_{i=1}^n\left(y_i-\sum_{j=1}^m a_j \sigma(\< \uu_j, \xx_i \>) \right)^2 \, 
\end{align}
over parameters $a_1,\dots,a_m\in\R$ and $\uu_1,\dots,\uu_m\in\R^d$.
This can be performed in two steps: $(i)$~Optimization over $\WW\in O(d,m)$, an
orthogonal basis for the span of $\uu_1,\dots,\uu_m\in\R^d$; $(ii)$~Optimization over
$a_1,\dots,a_m\in\R$, as well as over the representation of $(\uu_j)_{j \le m}$ in the basis $\WW$,
namely over $\bb_1, \cdots, \bb_m \in \R^m$ such that $\uu_j = \WW \bb_j$ for $j = 1, \cdots, m$.

The inner optimization (optimizing over $\WW$ at fixed $\aa, \BB:=(\bb_j)_{j \le m}$) corresponds to
the setting of our paper with the following choice of $\ell$:
\begin{align}
	\ell\big( y;\zz \big) =\frac{1}{2} \left( y-\sum_{j=1}^m a_j \sigma(\<\bb_j, \zz\>) \right)^2
	\, .
\end{align}
 We note that the outer step is substantially easier both from an optimization viewpoint, because it is a constant-dimensional optimization problem, and from a 
statistical viewpoint. Indeed, by Gaussian concentration, the optimal cost at fixed $(\aa,\BB)$ converges uniformly to its expectation, as $n,d\to\infty$.
Hence, once we characterize the asymptotics of the inner optimization problem,
the overall result follows by optimizing over the low-dimensional parameters $\aa,\BB$.
\end{rem}

\subsection{Connection with projection pursuit}

The high-dimensional asymptotics of the random optimization problem 
\eqref{eq:GeneralOpt_Supervised} is intimately related to 
projection pursuit, which aims to find low-dimensional projections that are `informative'
or `atypical' \cite{friedman1974projection,diaconis1984asymptotics,montanari2022overparametrized}. 
In order to formalize the projection pursuit problem, and clarify its relation to
empirical risk minimization, let us define the set of feasible empirical joint distributions of the $y_i$'s and low-dimensional projections of the $\xx_i$'s as
\begin{equation}\label{eq:Feasible_Supervised}
\begin{split}
	\cuF_{m,\alpha,\varphi}:= \Big\{P \in \cuP (\R \times \R^{m}):\, & \exists
	\WW = \WW_{n, d} (\XX, \yy, \omega) \in O(d, m), \,\,\\
	& \mbox{s.t.} \, \, \frac{1}{n} \sum_{i=1}^{n} \delta_{(y_i, \WW^\sT \xx_i)} \stackrel{w}{\to} P\, \mbox{ in probability}
	\Big\}\, ,
\end{split}
\end{equation}
where $\omega$ represents some additional randomness independent of $(\XX, \yy)$ 
(without loss of generality, we take $\omega\sim\Unif([0,1])$),
and $\cuP (\R \times \R^{m})$ is the set of all 
probability measures on $\R \times \R^m$. 

We emphasize one important feature of the feasible set $\cuF_{m,\alpha,\varphi}$ defined in \eqref{eq:Feasible_Supervised},
namely $\WW= \WW_{n, d} (\XX, \yy, \omega)$ does not depend on $\WW_*$ and hence is
a true statistical estimator.

To establish the connection between the random optimization problem~\eqref{eq:GeneralOpt_Supervised} and the feasible set $\cuF_{m,\alpha,\varphi}$, we define the dual functional $\VH_{m,\alpha,\varphi} (\cdot)$ of $\cuF_{m,\alpha,\varphi}$ as
\begin{align}\label{eq:Dual_Functional}
	\VH_{m, \alpha, \varphi}(h) := \sup_{P \in \cuF_{m, \alpha, \varphi}} \left\{ \int_{\R \times \R^m} h(y, z) P(\d y, \d z) \right\}, \quad \forall h \in C_{b} (\R \times \R^m),
\end{align}
where $C_{b} (\R \times \R^m)$ denotes the collection of bounded continuous functions on $\R \times \R^m$. According to \cite[Proposition 4.1]{montanari2022overparametrized}, for any $h \in C_{b} (\R \times \R^m)$,
\begin{equation}\label{eq:Opt_PP_Equivalence}
\pliminf_{n/d \rightarrow \alpha} \max_{\WW \in O(d, m)} \frac{1}{n} \sum_{i=1}^{n} h \left( y_i,  \WW^\sT \xx_i \right) = \VH_{m, \alpha, \varphi}(h).
\end{equation}
Therefore, characterizing the feasible set $\cuF_{m,\alpha,\varphi}$
would allow us to determine $\VH_{m,\alpha,\varphi}(h)$, the asymptotics of the global maximum for all problems of the form 
\eqref{eq:GeneralOpt_Supervised}. On the other hand, a direct application of geometric Hahn-Banach theorem implies that, determining $\VH_{m,\alpha,\varphi}(h)$ for all $h \in C_{b} (\R \times \R^m)$ provides a complete characterization of the closed convex hull of $\cuF_{m,\alpha,\varphi}$ (cf. \cite[Theorem 1.1]{montanari2024exceptional}).

As emphasized above, we are equally interested in the values of the optimization problem \eqref{eq:GeneralOpt_Supervised} that can be achieved in polynomial time. 
Correspondingly, we define $\cuF_{m,\alpha,\varphi}^{\salg}$, the subset of $\cuF_{m,\alpha, \varphi}$ that can be realized via polynomial-time computable projections, namely
\begin{equation}\label{eq:Feasible_Supervised_Alg}
\begin{split}
	\cuF^{\salg}_{m, \alpha, \varphi} := \Big\{P \in \cuP (\R \times \R^{m}):\, & \exists
	\WW = \WW_{n, d} (\XX, \yy, \omega) \in O(d, m) \mbox{ polytime computable}, \\
 & \mbox{s.t.} \, \,
	\frac{1}{n} \sum_{i=1}^{n} \delta_{(y_i, \WW^\sT \xx_i)} \stackrel{w}{\to} P\, \mbox{ in probability}
	\Big\}\, .
\end{split}
\end{equation}
In the above display, $\WW_{n, d} (\XX, \yy, \omega)$ is `polytime computable' if there exists an algorithm accepting $(\XX, \yy, \omega)$ (or its finite-precision approximation) as input and computing $\WW_{n, d} (\XX, \yy, \omega)$ in time at most polynomial in $n, d$. The associated dual functional is then defined as
\begin{align}\label{eq:Dual_Functional_Supervised}
	\VH_{m, \alpha, \varphi}^{\salg} (h) := \sup_{P \in \cuF_{m, \alpha, \varphi}^{\salg}} \left\{ \int_{\R \times \R^m} h(y, z) P(\d y, \d z) \right\}, \quad \forall h \in C_{b} (\R \times \R^m).
\end{align}
Analogous to \cref{eq:Opt_PP_Equivalence}, there is a direct correspondence between 
precisely characterizing the algorithmically feasible set 
$\cuF^{\salg}_{m, \alpha, \varphi}$ and exactly determining the values of 
the optimization problem~\eqref{eq:GeneralOpt_Supervised} attainable via 
polynomial-time algorithms.

\subsection{General approach}

Over the last few years, a general approach has emerged to 
characterize the optimal value achieved by polynomial-time algorithms
in certain classes of random optimization problems. This approach 
has two components (see also \cite{auffinger2023optimization,montanari2024optimization} for reviews):
\begin{description}
\item[Achievability.] A rich class of first-order optimization 
algorithms can be constructed using 
approximate message passing (AMP) or related strategies 
\cite{celentano2020estimation,subag2021following,montanari2021optimization,el2021optimization,montanari2024exceptional}.
Designing the optimal algorithm in this class is equivalent to solving a certain
(low dimensional) stochastic optimal control  problem. This
optimal control problem is dual to a Parisi-style formula, thus 
connecting directly to fundamental concepts in spin glass theory. Solving 
the optimal control problem yields the predicted algorithmic threshold. 
\item[Impossibility.] Matching impossibility results are proven
for algorithms that act as Lipschitz continuous functions of their input 
(in our case, $(\XX,\yy)$), with a bounded Lipschitz constant under suitable normalization.
The proof technique relies on establishing that near-optima exhibit a certain geometric
property---the `overlap gap property'---that cannot be 
reproduced by the output of Lipschitz algorithms.
This approach was initiated in
\cite{gamarnik2014limits}, applied to spin glass models related to the present setting in 
\cite{gamarnik2021overlap}, and a sharp version 
based on the `branching overlap gap property' was recently developed in
\cite{huang2022tight,huang2024optimization}. 
\end{description}
In this paper, we carry out the first part of this general program for the problem of learning a multi-index model. As the first statistical application of these techniques, our setting presents several distinctive challenges. Most notably, we must account for the fact that we only have access to the empirical data $(\XX, \yy)$, and not the true underlying parameters $\WW_*$.

\subsection{Summary of main results and paper organization}
The main contributions of this paper are summarized below:
\begin{description}

\item[Conjectures from statistical physics.] In Section \ref{sec:replica_method}, we use the non-rigorous replica method from statistical physics to derive a 
Parisi-type variational principle, which jointly predicts the feasible set 
$\cuF_{m, \alpha, \varphi}$ and $\VH_{m, \alpha, \varphi} (h)$, the asymptotic value of~\eqref{eq:GeneralOpt_Supervised}. 
These predictions are formalized in 
Conjectures~\ref{conj:Parisi_formula_mdim} and \ref{conj:Parisi_formula_1dim},
and serve as a useful benchmark for our rigorous results.

\item[Algorithmic achievability.] In Section \ref{sec:main_results}, we present our two-stage AMP algorithm 
and rigorously characterize its asymptotics when applied to the 
problem \eqref{eq:GeneralOpt_Supervised}. This analysis yields an explicit 
inner bound on the set of feasible distributions
$\cuF^{\salg}_{m, \alpha, \varphi}$ (\cref{thm:AMP_Feasibility_general}),
which we expect to be tight over all polynomial-time algorithms.

\item[Asymptotics of test and train error.] In particular, we precisely characterize the asymptotic training and
   test errors achievable by the two-stage AMP algorithm for the 
   ERM of Eq.~\eqref{eq:ERM_definition}, in the high-dimensional proportional limit 
   $n, d \to \infty$ with $n / d \to \alpha$ (see \cref{rem:test_error_AMP_ERM}).

\item[Large $n/d$ and tensor PCA equivalence.] As a special case, 
in Section \ref{sec:Applications} we derive simple expressions
for the algorithmically achievable training and test errors in 
the single-index model with correlation loss (cf. \cref{eq:single_index_model_first}), under the double
asymptotics $\alpha\to\infty$ and $\lambda\to 0$.
Interestingly, our results for the single-index model admit a natural
 interpretation via an equivalent generalized tensor PCA problem, which 
 we elaborate in \cref{sec:Gaussian_model_setup}.
 \item[Duality and Parisi-style formulas.] Our algorithmic inner bound on $\cuF^{\salg}_{m, \alpha, \varphi}$ yields a corresponding lower bound on the dual functional $\VH_{m, \alpha, \varphi}^{\salg} (h)$, which we denote by $\VH_{m, \alpha, \varphi}^{\sAMP} (h)$. Namely, $\VH_{m, \alpha, \varphi}^{\sAMP} (h)$ is the limiting optimal value of the optimization problem~\eqref{eq:GeneralOpt_Supervised} achieved by our two-stage AMP algorithm. 
 In the case $k = m = 1$, we
  establish a Parisi-type variational principle for 
  $\VH_{1, \alpha, \varphi}^{\sAMP} (h)$ over an extended function space 
  (\cref{thm:two_stage_strong_duality}). 
\end{description}

\cref{sec:iamp} formally defines the two-stage AMP algorithm and provides the proof of \cref{thm:AMP_Feasibility_general}, establishing our general AMP achievability results for both $\cuF^{\salg}_{m, \alpha, \varphi}$ and $\VH_{m, \alpha, \varphi}^{\salg} (h)$. In \cref{sec:char_opt_ctrl}, we present the necessary technical preliminaries regarding the Parisi functional and prove \cref{thm:two_stage_strong_duality}, the extended variational principle for $\VH_{m, \alpha, \varphi}^{\sAMP} (h)$. The proofs of all auxiliary lemmas and additional technical details are deferred to the appendices.

\subsection{Definitions and notation}


We will follow the convention of using boldface letters for matrices or vectors whose
dimensions diverge as $n,d\to\infty$, and normal fonts otherwise.
We denote the standard scalar product between two vectors $\uu,\vv$ by $\<\uu,\vv\>$,
and the matrix scalar product by $\<\AA,\BB\> = \Tr(\AA^\sT\BB)$. We use $\norm{\cdot}_2$ to denote the Euclidean norm of a vector. We use $\norm{\cdot}_{L^p}$ to denote the standard $L^p$ norm of a function for $p \in [1, +\infty]$.

We denote by $\Sym_+^m$ the convex cone of $m\times m$ positive semi-definite matrices. For $d \ge m$, we denote by $O(d, m)$ the set of all $d \times m$ orthogonal matrices. For a subset $S$ in a topological space, we denote by $\close(S)$ its closure.
For $p \ge 1$, we denote by $C^k (\R^p)$ the collection of all functions that have continuous $k$-th derivatives in $\R^p$. We also denote by $C_b (\R^p)$ the set of all bounded continuous functions on $\R^p$, and by $C_{c}^{\infty} (\R^p)$ the set of all infinitely differentiable functions with compact supports. We use $\cuP (\R^p)$ to denote the set of all probability measures on $\R^p$ equipped with the topology of weak convergence, unless otherwise stated. For a sequence of probability measures $(\nu_n)_{n \ge 1}$ and a probability measure $\nu$, we write $\nu_n \stackrel{w}{\to} \nu$ if $\nu_n$ converges weakly to $\nu$. We use $\Law (X)$ to denote the law of a random variable $X$. For two random variables $X$ and $Y$, we write $X \perp\!\!\!\perp Y$ if $X$ is independent of $Y$.

For $d \in \mathbb{N}$ and $r \ge 0$, $B_d (r) = \{ x \in \R^d: \norm{x}_2 \le r \}$. For a function $h$, we denote by $\operatorname{conc} (h)$ the (upper) concave envelope of $h$. Namely, $\operatorname{conc} (h)$ is the pointwise minimum of all concave functions that dominate $h$. For $l, k \ge 1$, and a differentiable mapping $F: \R^k \to \R^l$, we denote by $J_F \in \R^{l \times k}$ the Jacobian matrix of $F$, namely for $x \in \R^k$: $J_F(x)_{ij} = \partial F_i / \partial x_j$. We occasionally use $J_F$ as a shorthand for $J_F (x)$ whenever the variable $x$ is clear from the context. We say that a function $\psi:\R^{p}\to\R$ is pseudo-Lipschitz if there exists a constant $C$ such that, 
for all $x,y\in \R^p$, 
$$
|\psi(x)-\psi(y)|\le C(1+\|x\|_2+\|y\|_2)\|x-y\|_2.
$$
Let $\{ B_t \}_{t \in [0, 1]}$ be an $m$-dimensional standard Brownian motion, and let $\{ \cF_t \}_{t \in [0, 1]}$ be its canonical filtration. For $s \le t$, we denote by $D[s, t]$ the space of all admissible controls on the interval $[s, t]$, i.e., the collection of all progressively measurable processes $\{ \Phi_r \}_{s \le r \le t}$ satisfying
\begin{equation}
	\sigma(\Phi_r) \subset \cF_r, \ \forall r \in [s, t], \ \mbox{and} \ \E \left[ \int_{s}^{t} \Phi_r \Phi_r^\sT \d r \right] < \infty.
 \label{eq:Ddef}
\end{equation}

%
%
\section{Conjectures from statistical physics}\label{sec:replica_method}

In this section, we derive a prediction for $\VH_{m, \alpha, \varphi} (h)$  using physicists' replica method, with detailed calculations deferred to Appendix~\ref{append:replica}.
Recall that $\VH_{m,\alpha,\varphi}(h)$ is defined in Eq.~\eqref{eq:Dual_Functional}.
\begin{conj}[Replica prediction for $\VH_{m,\alpha,\varphi}(h)$]\label{conj:Parisi_formula_mdim}
	For any $h \in C_b (\R \times \R^m)$, almost surely
	\begin{equation*}
		\lim_{n \to \infty} \max_{\WW \in O(d, m)} \frac{1}{n} \sum_{i=1}^{n} h \left( y_i, \WW^\sT \xx_i \right) = \VH_{m,\alpha,\varphi}(h)\, .
	\end{equation*}
	Further, $\VH_{m,\alpha,\varphi}(h)$ has the following variational representation:
	\begin{equation*}
		\VH_{m,\alpha,\varphi}(h) = \sup_{R \in \mathscr{B}_{k, m}} \inf_{(\mu, M, C) \in \mathscr{U} \times \mathscr{I}_m (R) \times \Sym_+^m} \mathsf{F}_m (\mu, M, C, R).
	\end{equation*}
	In the above display, $\Sym_+^m$ denotes the convex cone of $m \times m$ positive semi-definite matrices, and
	\begin{align*}
		\mathscr{B}_{k, m} &:= \left\{ R \in \R^{k \times m} : R^\sT R \preceq I_m \right\}, \\
		\mathscr{I}_m (R) &:= \left\{ M: [0, 1) \to \Sym_+^m : \int_{0}^{1} M(t) \,\d t = I_m - R^\sT R \right\},\\
	    \mathscr{U} &:= \left\{ \mu: [0, 1) \to \R_{\ge 0}: \ \mu \ \mbox{{\rm non-decreasing}}, \ \int_{0}^{1} \mu(t) \d t < \infty \right\}.
	\end{align*}
	The $m$-dimensional Parisi functional $\mathsf{F}_m$ is defined as
	\begin{equation*}
		\mathsf{F}_m (\mu, M, C, R) = \, \E_{Y, G} \left[ f_{Y, \mu} \left( 0, R^\sT G \right) \right] + \frac{1}{2 \alpha} \int_{0}^{1} \Tr \left( M (t) \left( C + \int_{t}^{1} \mu(s) M (s) \d s \right)^{-1} \right) \d t,
	\end{equation*}
	where the random vector $(Y, G)$ is distributed as follows:
	\begin{equation*}
		G \sim \normal(0, I_k), \quad Y = \varphi(G, \veps), \, \veps \sim P_{\veps}, \, \veps \perp\!\!\!\perp G.
	\end{equation*}
	Further, for any fixed $(y, \mu)$, $f_{y, \mu}: [0, 1] \times \R^m \to \R$ solves the following $m$-dimensional Parisi PDE:
	\begin{equation}\label{eq:Parisi_mdim}
	\begin{split}
		\partial_t f_{y, \mu} (t, x) + \, & \frac{1}{2} \mu (t)\<\nabla_{x} f_{y, \mu} (t, x), M (t) \nabla_{x} f_{y, \mu} (t, x)\> +\frac{1}{2} \operatorname{Tr}\left( M (t) \nabla_{x}^2 f_{y, \mu} (t, x) \right) = \, 0, \\
		f_{y, \mu} (1, x) = \, & \sup_{u \in \R^m} \left\{ h \left( y, \, x + u \right) - \frac{1}{2} \<u, C^{-1}u\> \right\}\, .
	\end{split}	
	\end{equation}	
    (For notational simplicity, we omit the dependence of $f_{y, \mu}$ on $(M, C)$.)
\end{conj}
For $m = 1$, the replica formulas in \cref{conj:Parisi_formula_mdim} reduce to the following simpler form:
\begin{conj}[Replica prediction for $\VH_{1,\alpha,\varphi}(h)$]\label{conj:Parisi_formula_1dim}
	For any $h \in C_{b} (\R \times \R)$, almost surely
	\begin{equation*}
		\lim_{n \to \infty} \max_{\ww \in \S^{d-1}} \frac{1}{n} \sum_{i=1}^{n} h \left( y_i, \langle \ww, \xx_i \rangle \right) = \VH_{1,\alpha,\varphi}(h) \, .
	\end{equation*}
	Further, $\VH_{1,\alpha,\varphi}(h)$ has the following variational representation:
	\begin{align*}
    	\VH_{1,\alpha,\varphi}(h) = \sup_{r \in B_k(1)} \inf_{(\mu, c) \in \mathscr{U} \times \R_{>0}} \mathsf{F}_1 (\mu, c, r).
	\end{align*}
	In the above display, the Parisi functional $\mathsf{F}_1: \mathscr{U} \times \R_{>0} \times B_k (1) \to \R$ is defined as follows: Let $(Y, G)$ be such that $G \sim \normal(0, I_k)$, $Y = \varphi(G, \veps)$ where $\veps \sim P_{\veps}$ is independent of $G$. For any fixed $y$ and $\mu$, let $f_{y, \mu} (t, x)$ be the solution to the following Parisi PDE:
	\begin{equation}\label{eq:ParisiPDE_1dim_reformulated}
		\begin{split}
			&\partial_t f_{y, \mu} (t, x)+\frac{1}{2} \mu (t) (\partial_x f_{y, \mu} (t, x))^2 +  \frac{1}{2} \partial_x^2 f_{y, \mu} (t, x) = \, 0, \\
			&f_{y, \mu} (1, x) = \,  \sup_{u \in \R} \left\{ h \left( y, x + u \right) - \frac{u^2}{2c}  \right\}.
		\end{split}
	\end{equation}
	Note that $f_{y, \mu} (t, x)$ also depends on $c$, although we suppress this dependence in its definition to avoid heavy notation. Finally, we define
	\begin{equation}\label{eq:ParisiFunc_1dim_reformulated}
		\mathsf{F}_1 (\mu, c, r) = \E_{Y, G} \left[ f_{Y, \mu} \left( \norm{r}_2^2, r^\sT G \right) \right] + \frac{1}{2 \alpha} \int_{\norm{r}_2^2}^{1} \frac{\d t}{c + \int_{t}^{1} \mu(u) \d u }\, .
	\end{equation}
\end{conj}
In the following sections, we will drop the subscript ``$1$" and use $\mathsf{F} (\mu, c, r)$ instead of $\mathsf{F}_1 (\mu, c, r)$.

%
%
\section{Main results (I): Algorithmic achievability}\label{sec:main_results}
In this section, we present our main results regarding $\cuF_{m, \alpha, \varphi}^{\salg}$, the set of  computationally
feasible probability distributions in $\cuF_{m, \alpha, \varphi}$. 
In Section \ref{sec:IAMP}, we describe the class of two-stage AMP algorithms to be analyzed, with detailed analysis deferred to \cref{sec:iamp}.
Section \ref{sec:FeasibleIAMP} presents the characterization of the set of
probability distributions in $\cuF_{m, \alpha, \varphi}^{\salg}$ that can be
realized using this class of algorithms. 
Section \ref{sec:VH_statement} then states our main achievability result for 
$\VH^{\salg}_{m,\alpha,\varphi} (h)$, the dual functional of $\cuF_{m, \alpha, \varphi}^{\salg}$.
%
%
\subsection{Overview of the algorithm}
\label{sec:IAMP}
We give a brief description of our AMP algorithm, and refer to Section~\ref{sec:iamp} for further details\footnote{The notation used here is primarily for simplicity of exposition, and may differ slightly from that in Section~\ref{sec:iamp}.}. Our algorithm has two stages. The first stage consists of $T_1$ iterations with fixed step size, followed by an incremental stage of
$T_2$ iterations with small step sizes, where $T_1$ and $T_2$ are two positive 
integers (independent of $n$ and $d$) to be determined.

The first stage of our algorithm consists of $T_1$ AMP iterations, where the $t$-th iteration only depends on the $(t-1)$-th one: for $t = 1, \cdots, T_1$, we update $\VV^t\in\R^{n\times m}$, $\WW^t\in\R^{d\times m}$ according to
%
\begin{align*}
    \WW^{t + 1} & = \frac{1}{\sqrt{n}} \XX^\sT F_t (\VV^{t}, \yy) - \WW^{t} K_{t}^\sT, \\
	\VV^{t} & = \frac{1}{\sqrt{n}} \XX \WW^{t} - \frac{d}{n} F_{t-1} (\VV^{t-1}, \yy)\, ,
\end{align*}
with $\WW^1 = \XX^\sT F_{0} (\yy) / \sqrt{n}$.
Here, $F_t:\R^m\times \R\to\R^m$ is understood to be applied row-wise.
Namely, for $\VV\in\R^{n\times m}$ (with rows $\vv_i$), $\yy\in\R^n$
(with entries $y_i$), 
$F_t (\VV, \yy)\in\R^{n\times m}$ is the matrix whose $i$-th row is $F_t (\vv_i, y_i)$.
Further, the Onsager correction term $K_t\in\R^{m\times m}$ is given by
\begin{equation*}
	K_{t} = \frac{1}{n} \sum_{i=1}^{n} \frac{\partial F_t}{\partial \vv_i^t} (\vv_i^t, y_i),
\end{equation*}
where $\vv_i^t$ is the $i$-th row of $\VV^t$, and $\frac{\partial F_t}{\partial \vv_i^t} (\vv_i^t, y_i)$ denotes the Jacobian matrix of $F_t$ with respect to $\vv_i^t$. We will show (using general tools from the analysis of AMP algorithms) 
that, with proper choices of the non-linearities and in the limit of large $T_1$ after $n,d\to \infty$, $n/d \to \alpha$,
this iteration converges to an approximate fixed point,
which will be the starting point of the second stage of our algorithm.

In the second stage, we allow each iterate to depend on all previous ones. Denote $\WW^{\le t} = (\WW^s)_{1 \le s \le t}$ and $\VV^{\le t} = (\VV^s)_{1 \le s \le t}$. We iterate, for $T_1 + 1 \le t \le T_1 + T_2$:
\begin{equation}\label{eq:amp_iter_2nd}
\begin{split}
    \WW^{t + 1} & = \frac{1}{\sqrt{n}} \XX^\sT F_t (\VV^{\le t}, \yy) - \sum_{s=1}^{t} G_s (\WW^{\le s}) K_{t, s}^\sT, \\
	\VV^{t} & = \frac{1}{\sqrt{n}} \XX G_t(\WW^{\le t}) - \sum_{s=1}^{t} F_{s-1} (\VV^{\le s-1}, \yy) D_{t, s}^\sT,
\end{split}
\end{equation}
where $F_t:(\R^{m})^t\times \R\to\R^m$ and 
$G_t:(\R^{m})^t\to\R^m$ are also understood to act row-wise, and
\begin{equation*}
    K_{t, s} = \frac{1}{n} \sum_{i=1}^{n} \frac{\partial F_t}{\partial \vv_i^s} \left( \vv_i^{1}, \cdots, \vv_i^t, y_i \right), \ D_{t, s} = \frac{1}{n} \sum_{i=1}^{d} \frac{\partial G_t}{\partial \ww_i^s} \left( \ww_i^{1}, \cdots, \ww_i^t \right), \quad t \ge s.
\end{equation*}
As before, we will overload the notations and let $F_t$ and $G_t$ operate on their argument matrices row-wise. 
We further assume that $F_t$ and $G_t$ take the following specific structure:
\begin{align*}
    F_t \left( \vv^{1}, \cdots, \vv^{t}, y \right) = \, & \vv^t \Phi_{t-1} \left( \vv^{1}, \cdots, \vv^{t-1}, y \right), \\
    G_t \left( \ww^{1}, \cdots, \ww^{t} \right) = \, & \ww^t \Psi_{t-1} \left( \ww^{1}, \cdots, \ww^{t-1} \right),
\end{align*}
where $\Phi_{t-1}$ and $\Psi_{t-1}$ 
are matrix-valued mappings that satisfy certain moment constraints,
which we spell out in \cref{sec:iamp}. The second stage of our algorithm involves $T_2$ 
iterations with the above choices of $F_t$ and $G_t$.

Finally, the output of our two-stage AMP algorithm is a linear combination of $\WW^{T_1}$ and the incremental AMP iterations in the second stage. To be concrete, we will show that 
\begin{equation*}
    \plim_{n,d\to\infty} \frac{1}{n} (\WW^{T_1})^{\sT} \WW^{T_1} = Q,
\end{equation*}
where $Q \in \Sym_+^m$ is a deterministic $m\times m$ matrix satisfying $0 \preceq Q \preceq I_m$, which will be characterized in \cref{sec:fixed_pt_amp}. Let $Q_1, \cdots, Q_{T_2}$ be $T_2$ deterministic $m\times m$ matrices such that 
$$
\sum_{t=1}^{T_2} Q_t^\sT Q_t = I_m - Q,
$$
and define
\begin{equation*}
    \WW_Q = \frac{1}{\sqrt{n}}  \WW^{T_1} + \frac{1}{\sqrt{n}} \sum_{t=1}^{T_2} G_{T_1 + t+1} \left( \WW^{\le T_1 + t + 1} \right) Q_t \, 
\end{equation*}
and set the final output of our algorithm to be $\hat{\WW}_n^{\sAMP} = \WW_Q (\WW_Q^\sT \WW_Q)^{-1/2}$, which is guaranteed to be a $d \times m$ orthogonal matrix. The set of $(\alpha, m)$-feasible distributions achieved by our algorithm will be studied in the next section.
%
%
\subsection{Achievability results for AMP}\label{sec:FeasibleIAMP}

We begin by presenting our main AMP achievability result for general $m$
 and $k$. We assume $m \ge k$ without loss of generality;
  if $m < k$, any projection matrix $\WW \in O(d, m)$ can be 
  embedded into a $d \times k$ orthogonal matrix by considering 
  only its first $m$ columns. 
\begin{defn}[Bayes AMP feasible region]\label{defn:Bayes_AMP_region_general}
	Let the random vector $(Y, G, Z) \in \R \times \R^k \times \R^m$ be such that
	\begin{equation}\label{eq:Triple_distribution_general}
		\begin{split}
			& (Y, G) \perp\!\!\!\perp Z, \, Z \sim \normal(0, I_m), \, G \sim \normal(0, I_k), \\
			& Y = \varphi(G, \veps), \, \veps \sim P_{\veps}, \, \veps \perp\!\!\!\perp G.
		\end{split}
	\end{equation}
    Assume that $\E \left[ G \vert Y \right]$ is non-zero and
    define $C_{\sBayes}\in\R^{k \times m}$  via
    \begin{align}
    \cuC_{\sBayes} &:= 
    \Big\{ \C\in \Sym_+^{k}:\; \C=  \alpha \E \left[ \E \Big[ G - \C^{1/2} Z \Big\vert (\C^{1/2})^{\sT} G + Z, Y \Big] \E \Big[ G - \C^{1/2} Z \Big\vert (\C^{1/2})^{\sT}G + Z, Y \Big]^\sT \right]\Big\}\, ,\nonumber\\
    &C_{\sBayes}:= \C_{\sBayes}^{1/2}\, ,\;\;\; \;\;\;   \C_{\sBayes}:=\min{}_{\preceq} \cuC_{\sBayes}\, ,\label{eq:CBayesDef}
    \end{align}
    where $\min{}_{\preceq}$ denotes the minimum with respect to the semidefinite (Loewner) order. It is understood that, for $\C\in\R^{k\times k}$,
    $C=\C^{1/2}\in\R^{k\times m}$ is any matrix such that $CC^{\sT}=\C$
    (by \cref{lem:property_recursion_general}, the specific choice of $C$ is immaterial to our results).
    
	We then define the \emph{Bayes AMP feasible region} as
	\begin{equation*}
		A_{\sBayes} := \, \left\{ (R, Q) \in \R^{k \times m} \times \Sym_{+}^{m}: \, R^\sT R \preceq Q \preceq I_m, \, R (Q - R^\sT R)^{-1} R^\sT \preceq C_{\sBayes} C_{\sBayes}^\sT \right\}.
	\end{equation*}
	If $\cuC_{\sBayes} = \emptyset$, the inequality constraint $R (Q - R^\sT R)^{-1} R^\sT \preceq C_{\sBayes} C_{\sBayes}^\sT$ in the above definition is vacuous and therefore removed.
\end{defn}
\begin{rem}
While the existence of the minimum in \eqref{eq:CBayesDef} is not immediately obvious, it is guaranteed by the following argument. 
Define the sequence $\{ \C_t \}_{t=0}^{\infty} \subset \Sym_+^k$ 
recursively via:
	\begin{equation}\label{eq:Bayes_recursion_general}
		\begin{split}
			\C_0 = \, & 0, \\
			\C_{t+1} = \, & \alpha \E \left[ \E \left[ G - \C^{1/2}_t Z \big\vert (\C^{1/2}_t)^{\sT} G + Z, Y \right] \E \left[ G - \C^{1/2}_t Z \big\vert (\C^{1/2}_t)^{\sT} G + Z, Y \right]^\sT \right] =: \SEmap(\C_t).
		\end{split}	
	\end{equation}
    Then, \cref{lem:property_recursion_general} in the appendix implies
    that $\C_* = \lim_{t \to \infty} \C_t$ exists, though it can be infinite. 
    Further, $\C_* \in \cuC_{\sBayes}$ by continuity of the mapping
    $\C\mapsto \SEmap(\C)$. Finally, for any $\C_*'\in \cuC_{\sBayes}$,
    we obviously have $\C_0\preceq  \C_{*}'$ and therefore (by monotonicity
    of $\SEmap$ with respect to the Loewner order established in \cref{lem:property_recursion_general})
    $\SEmap^t(\C_0) \preceq \SEmap^t(\C'_*)$ for all $t$, i.e.,  
	$\C_t\preceq \C'_*$ for
    all $t$. By taking the limit $t\to\infty$, we conclude that 
	$\C_{*}\preceq \C'_*$, whence $\C_*=\min_{\preceq}\cuC_{\sBayes}$. 
	We can therefore identify $\C_{\sBayes}=\C_*$. 
\end{rem}
We now introduce a family of Lipschitz functions, termed $(R, Q)$-contractions, which are used in the first stage of our AMP algorithm. The contraction property of these functions ensures that, during this stage, the AMP iterations converge to the solution of a set of fixed-point equations.
\begin{defn}\label{defn:mu_Q_contraction}
	Let $(Y, G, Z)$ be as described in \cref{eq:Triple_distribution_general}, and define 
	\begin{equation*}
		Z_{R, Q} = R^\sT G + ( Q - R^\sT R )^{1/2} Z.
	\end{equation*}
	We say that a Lipschitz function $F: \R^m \times \R \to \R^m$ is an $(R, Q)$-contraction, if
	\begin{align}
		\label{eq:R-Constraint}
		R^\sT = \, \E \left[ \frac{\partial F}{\partial G} \left( Z_{R, Q}, \varphi (G, \veps) \right) \right], \quad Q = R^\sT R + \frac{1}{\alpha} \E \left[ F \left( Z_{R, Q}, \varphi (G, \veps) \right) F \left( Z_{R, Q}, \varphi (G, \veps) \right)^\sT \right],
	\end{align}
	and there exists some $S \in \Sym_+^m$, $S \succ 0$, such that
	\begin{equation}\label{eq:R_Q_contraction_condition}
		\frac{1}{\alpha} \E \left[ \frac{\partial F}{\partial Z_{R, Q}} \left( Z_{R, Q}, \varphi (G, \veps) \right)^\sT S \,\frac{\partial F}{\partial Z_{R, Q}} \left( Z_{R, Q}, \varphi (G, \veps) \right) \right] \preceq \, S.
	\end{equation}
	(Here, the partial derivatives are defined by treating $F(Z_{R,Q},\varphi(G,\eps))$
	as a function of the variables $(Z_{R,Q},G)\in \R^m\times\R^k$.)
\end{defn}

We are now in position to state our main achievability results for 
AMP algorithms, whose proof is presented in \cref{sec:iamp}. For the reader's convenience, in Appendix \ref{app:Special_m=1}
we provide the explicit formulation of Theorem \ref{thm:AMP_Feasibility_general} for the special case $m=k=1$. 
\begin{thm}
\label{thm:AMP_Feasibility_general}
	For any $(R, Q) \in A_{\sBayes}$, let $(Y, Z_{R, Q})$ be as described in \cref{defn:mu_Q_contraction}, and $F$ be an $(R, Q)$-contraction.
	Let $(B_t)_{0 \le t \le 1}$ be an $m$-dimensional standard Brownian motion independent of $(Y, Z_{R, Q})$. Define the filtration $\{ \cF_t \}_{0 \le t \le 1}$ by
		$\cF_t = \sigma \left( Z_{R, Q}, Y, (B_s)_{0 \le s \le t} \right)$
    for all $ 0 \le t \le 1$.
	
	Assume $Q(t) \in L^2 ([0, 1] \to \R^{m \times m})$ satisfies
	\begin{equation}
		\int_{0}^{1} Q(t) Q(t)^\sT \d t = I_m - Q,\label{eq:Q-Constraint}
	\end{equation}
	and $\{ \Phi_t \}_{0 \le t \le 1}$ is an $m \times m$ matrix-valued progressively measurable stochastic process with respect to the filtration $\{ \cF_t \}_{0 \le t \le 1}$, satisfying
	\begin{equation}\label{eq:Phi-Constraint}
		\E \left[ \Phi_t \Phi_t^\sT \right] \preceq \frac{I_m}{\alpha}, \;\;\;\; \forall 0 \le t \le 1\, .
	\end{equation}
	Then, there exists a two-stage AMP algorithm (described in \cref{sec:IAMP} and further detailed in \cref{sec:iamp}) that outputs $\hat{\WW}_n^{\sAMP} \in O(d, m)$, such that
    \begin{itemize}
        \item [(a)] $\lim_{n \to \infty} \WW_*^\sT \hat{\WW}_n^{\sAMP} = R$ almost surely.
        \item [(b)] Defining
            \begin{equation}\label{eq:Udef}
                U = \, Z_{R, Q} + \frac{1}{\alpha} F \left( Z_{R, Q}, Y \right) + \int_{0}^{1} Q(t) \left( I_m + \Phi_t \right) \d B_t,
            \end{equation}
            then the following holds almost surely as $n \to \infty$:
            \begin{equation*}
            \begin{split}
                \frac{1}{n} \sum_{i=1}^{n} \delta_{( y_i, \, (\hat{\WW}_n^{\sAMP})^\sT \xx_i ) } \stackrel{w}{\to} \, &  \operatorname{Law} ( Y, U ), \\
                \frac{1}{n} \sum_{i=1}^{n} h \big( y_i, \, \big( \hat{\WW}_n^{\sAMP} \big)^\sT \xx_i \big) \stackrel{w}{\to} \, & \E [ h ( Y, U ) ], \quad \forall h\in C_b (\R \times \R^m).
            \end{split}
            \end{equation*}     
            Consequently, we have $\operatorname{Law} (Y, U) \in \cuF_{m, \alpha, \varphi}^{\salg}$. 
    \end{itemize}
\end{thm}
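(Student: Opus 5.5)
The plan is to construct the two-stage AMP of \cref{sec:IAMP} explicitly and analyze it through the general state evolution for AMP with matrix-valued, memory-dependent nonlinearities. The data $\yy=\varphi(\XX\WW_*,\vveps)$ depend on $\XX$ only through the projection $\XX\WW_*$. I would therefore decompose $\XX = \XX\WW_*\WW_*^\sT + \XX(\id_d-\WW_*\WW_*^\sT)$ and treat the $k$ columns of $\XX\WW_*$ as auxiliary side information, as is standard for AMP in generalized linear and multi-index models. This yields a state evolution in which the $\VV$-iterates are asymptotically $Z$-type Gaussians correlated with $G$ (the limit of the rows of $\XX\WW_*$). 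The $\WW$-iterates are asymptotically $\WW_*\,(\cdot) + $ independent Gaussian noise.

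\textbf{Stage one.} I would first reach the point $(R,Q)$. Since $(R,Q)\in A_{\sBayes}$, there is an initialization phase (Bayes-AMP type, using $F_0(\yy)$ and a few iterations, in the spirit of the recursion \eqref{eq:Bayes_recursion_general}) that produces an iterate whose state-evolution limit has overlap $R$ with $\WW_*$ and Gram matrix $Q$. The constraint $R(Q-R^\sT R)^{-1}R^\sT\preceq C_{\sBayes}C_{\sBayes}^\sT$ is exactly what lets the signal-to-noise ratio $R(Q-R^\sT R)^{-1}R^\sT$ be dominated by what Bayes AMP achieves. Any lower effective SNR can then be reached by adding independent Gaussian noise via $\omega$ and rescaling.

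I would then run $T_1$ iterations with $F_t\equiv F$, the given $(R,Q)$-contraction. By \eqref{eq:R-Constraint}, the point $(R,Q)$ is a fixed point of the state evolution. By \eqref{eq:R_Q_contraction_condition}, the linearization of state evolution around it is a contraction in the $S$-weighted norm. It follows that the covariance between consecutive iterates tends to the Gram matrix, so $\frac1n\|\WW^{t+1}-\WW^t\|_F^2\to0$ as $t\to\infty$ after $n\to\infty$. At the approximate fixed point, the Onsager correction gives $\VV^{T_1}\approx \frac{1}{\sqrt n}\XX\WW^{T_1} - \frac{1}{\alpha}F(\VV^{T_1},\yy)$, with rows distributed as $Z_{R,Q}$. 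Rearranging, $\frac{1}{\sqrt n}\XX\WW^{T_1}$ has rows distributed as $Z_{R,Q}+\frac1\alpha F(Z_{R,Q},Y)$, which is the first two terms of \eqref{eq:Udef}. The factor $d/n=1/\alpha$ enters here.

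\textbf{Stage two.} I would follow the incremental AMP of \cite{montanari2021optimization,montanari2024exceptional} with the structure $F_t=\vv^t\Phi_{t-1}$, $G_t=\ww^t\Psi_{t-1}$. The new directions are orthogonal martingale increments. Their state-evolution Gaussians are independent of the past and have variances of order $\delta=1/T_2$, which builds a discretized Brownian motion $B$ in the limit, adapted to $\cF_t$. Each step contributes $\frac{1}{\sqrt n}\XX G_{t+1}Q_t$. Expanding via the $\VV$-update, this equals $\Delta B_t Q_t$ plus the Onsager term $F_t\cdot$(coefficient), which produces $Q(t)\Phi_t\Delta B_t$; here $\Psi$ is chosen to whiten the $\WW$-increments. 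Constraint \eqref{eq:Phi-Constraint} ensures $\E[\Phi\Phi^\sT]\preceq I/\alpha$, so the $\WW$-increments stay normalizable, that is, $\Psi$ exists and is bounded. Constraint \eqref{eq:Q-Constraint} gives $\frac1n\WW_Q^\sT\WW_Q\to Q+\sum Q_t^\sT Q_t = I_m$ and $\WW_*^\sT\WW_Q\to R$, because the incremental directions are asymptotically orthogonal to $\WW_*$ and to $\WW^{T_1}$. Thus the normalization $(\WW_Q^\sT\WW_Q)^{-1/2}\to I_m$, and part (a) follows.

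\textbf{Conclusion.} Part (b) follows from the state evolution applied to pseudo-Lipschitz test functions of $(y_i,\text{rows})$. This gives convergence of the empirical law to the discretized version of $(Y,U)$. Taking $T_2\to\infty$ with Riemann-sum and It\^o-isometry approximations, after approximating a general progressive $\Phi_t$ and $L^2$ function $Q(t)$ by simple, Lipschitz ones, yields $\operatorname{Law}(Y,U)$; a diagonal argument over $T_1,T_2$ then completes the proof. Almost-sure convergence comes from the standard almost-sure AMP state evolution theorem. The main obstacle I expect is stage one. Specifically, it is justifying that every $(R,Q)\in A_{\sBayes}$ can be reached by some initialization while $F$ remains a valid contraction, and controlling the memory-dependent Onsager terms so that the stage-two increments are exactly orthogonal to the stage-one fixed point. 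I would handle the latter by Gram–Schmidt orthogonalization within the state-evolution covariance.
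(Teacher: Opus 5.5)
Your plan follows the paper's proof step by step. You land exactly on $(R,Q)$ in state evolution (\cref{prop:Bayes_feasible_region}, via the one-step reachable set of \cref{lem:matrix_cauchy}). Your ``Bayes AMP plus independent noise and rescaling'' device realizes the same reachable set, and like the paper's construction it reaches only $\operatorname{int}A_{\sBayes}$ in finitely many steps, because $\C_t\uparrow\C_{\sBayes}$ only in the limit. You then iterate $F$ and seed the incremental stage with fresh directions orthogonal to the past and to $\WW_*$; this is \cref{ass:F_t_and_G_t}, and it is what your Gram--Schmidt step would produce. Finally you read the control off the diagonal Onsager coefficient of $G_t=\ww^t\Psi_{t-1}$, whose limit $\frac{1}{\alpha}\E[\Psi_{t-1}]^\sT$ is capped by $\E[\Psi_{t-1}^\sT\Psi_{t-1}]=\alpha I_m$. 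Conditioning on $\XX\WW_*$ instead of using the Haar reduction of \cref{lem:Reduction_Signal} is an immaterial difference. The given $F$ plays no role in the initialization, so there is no question of it ``remaining'' a contraction there; stage one only has to hit $(R,Q)$ exactly.

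The genuine gap is your justification that the fixed-point iterations converge. Once $(R_t,Q_t)=(R,Q)$, the only quantity still evolving is $P_t=\E[\bar Z_{t+1}^\sT\bar Z_t]$. It follows $P_t=\psi(P_{t-1})$ with $\psi(P)=R^\sT R+\frac{1}{\alpha}\E[F(\bar Z,Y)^\sT F(\bar Z',Y)]$, where $(\bar Z,\bar Z')$ has cross-covariance $P$. Condition \eqref{eq:R_Q_contraction_condition} only says that the derivative of $\psi$ at $P=Q$ is non-expansive in the $S$-weighted sense. That is a local statement: $P_{T_{11}}$ is set by the last initialization nonlinearity and need not be near $Q$. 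It is also non-strict, so it gives no contraction even near $Q$. Hence ``it follows that the covariance between consecutive iterates tends to the Gram matrix'' does not follow.

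The missing global ingredient, supplied by \cref{lem:cov_mu_Q}, comes from Gaussian interpolation. On the order interval $\{R^\sT R\preceq P\preceq Q\}$, $\psi$ is Loewner-monotone, and $s\mapsto\beta^\sT\psi(P+s(Q-P))\beta$ is convex for every $\beta$. Suppose $P\neq Q$ were another fixed point in that interval. Convexity bounds $Q-P$ by the derivative of $\psi$ at $Q$ applied to $Q-P$, and pairing with $S$ contradicts a strict version of \eqref{eq:R_Q_contraction_condition}. Strictness may be assumed because $\cuF^{\salg}_{m,\alpha,\varphi}$ is closed under weak limits. So $Q$ is the unique fixed point in the interval, monotonicity keeps the iterates there, and they converge to $Q$. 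This still requires you to choose the initialization so that $P_{T_{11}}\succeq R^\sT R$. With this step replaced, the rest of your outline goes through as in the paper.
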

\begin{rem}[Test error]\label{rem:test_error_AMP_ERM}
    Let $h = - \ell$ for a given loss function $\ell$, and consider the 
    corresponding empirical risk minimization problem \eqref{eq:ERM_definition}. 
    We define the training and test errors of $\hat{\WW}_n^{\sAMP}$ as follows:
    %
    %
    \begin{align*}
    \hL_n(\hat{\WW}_n^{\sAMP})&:=\frac{1}{n} \sum_{i=1}^n
    \ell \Big( y_i, \, \big( \hat{\WW}_n^{\sAMP} \big)^\sT \xx_i \Big)\, ,
    \\
    L(\hat{\WW}_n^{\sAMP})&:=
    \E \Big[ \ell \Big( y_{\stest}, \, \big( \hat{\WW}_n^{\sAMP} \big)^\sT \xx_{\stest} \Big)\Big| \yy,\XX\Big]\, ,
    \end{align*}
    where $(\xx_{\rm test}, y_{\rm test})$ is a test data point 
	independent of the training set $\{ (\xx_i, y_i) \}_{i=1}^{n}$.
    \cref{thm:AMP_Feasibility_general} then implies that, for any $(R,Q)\in A_{\sBayes}$
	and $(Q(t), \Phi_t)$ satisfying 
	Eqs.~\eqref{eq:Q-Constraint} and \eqref{eq:Phi-Constraint}, the following pair of training and test errors are achievable in the proportional limit 
    $n/d \to \alpha$ (with $U$ defined as per Eq.~\eqref{eq:Udef}):
    \begin{align*}
		\lim_{n \to \infty} \hL_n (\hat{\WW}_n^{\sAMP})& 
		=   \E[\ell(Y,U)]\, ,\\
        \lim_{n \to \infty} L(\hat{\WW}_n^{\sAMP})&= \E \left[ \ell \left( Y, \, R^{\sT}G +(I_m-R^{\sT}R)^{1/2}Z\right) \right] \, .
    \end{align*}
    In particular, the limiting test error only depends on $R$, which captures the
	asymptotic correlation between $\hat{\WW}_n^{\sAMP}$ and the ground truth $\WW_*$.
\end{rem}

\begin{rem}\label{rem:weak_diff_contraction}
    Since $\cuF_{m, \alpha, \varphi}^{\salg}$ is closed under 
	weak limits (cf. \cite[Lemma E.8]{montanari2022overparametrized}), we may allow an $(R, Q)$-contraction $F$ to be only weakly 
	differentiable. In this case, $\partial F / \partial G$ and 
	$\partial F / \partial Z_{R, Q}$ in \cref{defn:mu_Q_contraction} 
	should be interpreted as weak derivatives. Furthermore, by Stein's identity, the first equality constraint in 
	Eq.~\eqref{eq:R-Constraint}, namely
	\begin{align*}
		R^\sT = \, \E \left[ \frac{\partial F}{\partial G} \left( Z_{R, Q}, \varphi (G, \veps) \right) \right]\, , 
	\end{align*}
    is equivalent to
    \begin{equation}\label{eq:contraction_Stein}
        R = \, ( I_k- RQ^{-1}R^{\sT} )^{-1} 
		\E \big[\left( G - R Q^{-1}Z_{R, Q} \right)\,
		 F \left( Z_{R, Q}, \varphi(G, \veps) \right)^{\sT}
		 \big]\, .
    \end{equation}
\end{rem}

Theorem~\ref{thm:AMP_Feasibility_general} 
(see also Theorem \ref{thm:AMP_Feasibility_Super} for the case $m=k=1$)
characterizes the asymptotic value of the optimization 
problem~\eqref{eq:GeneralOpt_Supervised} achieved by our two-stage AMP algorithm. Optimizing this value involves a variational problem over $F$ and $\Phi$, 
which will be examined in greater detail in the following sections.

%
%

\subsection{Dual value $\VH_{m, \alpha, \varphi}^{\salg}(h)$ and stochastic optimal control}
\label{sec:VH_statement}
We now discuss the implications of \cref{thm:AMP_Feasibility_general} 
on the optimization problem~\eqref{eq:GeneralOpt_Supervised}, 
which we restate here for future reference:
\begin{equation*}
    \mbox{maximize} \quad H_{n, d}(\WW) := \, \frac{1}{n} \sum_{i=1}^{n} h \big( y_i, \WW^\sT \xx_i \big), \quad \mbox{subject to} \ \WW \in O(d, m).
\end{equation*}
To characterize its optimum achieved by our two-stage AMP algorithm, 
define the dual functional
\begin{equation}\label{eq:SOC_First}
\begin{split}
	& \VH_{m, \alpha, \varphi}^{\sAMP} (h) := \, \sup \, \E \left[ h \left( Y, \, Z_{R, Q} + \frac{1}{\alpha} F \left( Z_{R, Q}, Y \right) + \int_{0}^{1} Q(t) \left( I_m + \Phi_t \right) \d B_t \right) \right], \\
	& \mbox{subject to} \,\, (R, Q) \in A_{\sBayes}, \, \text{$F$ is an $(R, Q)$-contraction}, \\
    & \quad \quad \quad \quad \,\, \int_{0}^{1} Q(t) Q(t)^\sT \d t = I_m - Q, \, \text{and} \,\, \E \left[ \Phi_t \Phi_t^\sT \right] \preceq \frac{I_m}{\alpha}, \, \forall t \in [0, 1].
\end{split}
\end{equation}
Note that for any fixed choice of $(R, Q)$, $F$ and $\{ Q(t) \}_{t \in [0, 1]}$, \cref{eq:SOC_First} describes a stochastic optimal control problem for the control process $\{ \Phi_t \}_{t \in [0, 1]}$. 
The following theorem, which is a direct consequence of \cref{thm:AMP_Feasibility_general}, characterizes the asymptotic maximum of $H_{n, d} (\WW)$ achieved by our two-stage AMP algorithm, in terms of $\VH_{m,\alpha,\varphi}^{\sAMP} (h)$.
\begin{thm}[Optimal value achieved by AMP algorithms] \label{thm:ValH}
For any $h: \R \times \R^m \to \R$ continuous, bounded from above, and of at most linear growth at infinity, the following holds:
\begin{itemize}
\item[(a)] \emph{Upper bound.} If $\hat{\WW}_n^{\sAMP}$ is the output of any AMP algorithm as described in 
  Section \ref{sec:IAMP} (and further elaborated in \cref{sec:iamp}), then almost surely,
	\begin{equation*}
		\lim_{n \to \infty} H_{n,d} \big( \hat{\WW}_n^{\sAMP} \big) \le \, \VH_{m,\alpha,\varphi}^{\sAMP} (h).
	\end{equation*}
 \item[(b)] \emph{Achievability.}
	For any $\veps > 0$, there exists a two-stage AMP algorithm outputting $\hat{\WW}_n^{\sAMP}$, such that almost surely,
	\begin{equation*}
		\lim_{n \to \infty} H_{n,d} \big( \hat{\WW}_n^{\sAMP} \big) \ge \, \VH_{m,\alpha,\varphi}^{\sAMP} (h) - \veps \, .
	\end{equation*}
 \end{itemize}
\end{thm}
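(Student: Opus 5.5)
The argument reduces Theorem~\ref{thm:ValH} to Theorem~\ref{thm:AMP_Feasibility_general}, which gives $\frac1n\sum_i\delta_{(y_i,(\hat\WW_n^{\sAMP})^\sT\xx_i)}\stackrel{w}{\to}\Law(Y,U)$, with $U$ as in \eqref{eq:Udef}. Theorem~\ref{thm:AMP_Feasibility_general} is stated only for test functions in $C_b$, so the first step is an approximation argument that extends convergence of $\frac1n\sum_i h(y_i,(\hat\WW_n^{\sAMP})^\sT\xx_i)$ to $h$ continuous, bounded above, and of at most linear growth. For the extension I would use uniform integrability. The AMP state evolution (proved in \cref{sec:iamp}) gives convergence of the empirical law in Wasserstein-2 almost surely, since AMP guarantees hold for pseudo-Lipschitz test functions. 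In particular $\frac1n\sum_i\|(\hat\WW_n^{\sAMP})^\sT\xx_i\|_2^2$ and $\frac1n\sum_i y_i^2$ converge to the corresponding second moments of $(Y,U)$; for the $y_i$ this uses sub-Gaussianity and the strong law of large numbers. Now truncate $h$ as $h_M=\max(h,-M)$, which is continuous and bounded (above by assumption, below by $-M$), so Theorem~\ref{thm:AMP_Feasibility_general} applies to it. The error $|h-h_M|\le C(1+|y|+\|z\|_2)\mathbf 1\{C(1+|y|+\|z\|_2)>M\}$ has empirical and limiting averages that vanish as $M\to\infty$, uniformly in $n$, by the second-moment control. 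Therefore $\lim_n H_{n,d}(\hat\WW_n^{\sAMP})=\E[h(Y,U)]$ almost surely.

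For part (a), observe that any AMP algorithm in the class of Section~\ref{sec:IAMP} is parametrized by an admissible tuple consisting of $(R,Q)\in A_{\sBayes}$, an $(R,Q)$-contraction $F$, a schedule $Q(t)$ satisfying \eqref{eq:Q-Constraint}, and a control $\Phi_t$ satisfying \eqref{eq:Phi-Constraint}. The limiting value computed above is then exactly the objective in \eqref{eq:SOC_First} evaluated at that tuple, hence it is at most the supremum $\VH^{\sAMP}_{m,\alpha,\varphi}(h)$. The subtle point here is that a finite-step algorithm (finitely many $T_2$ increments with piecewise-constant $Q_t$, $\Phi$) corresponds to a step-function $Q(t)$ and a piecewise-constant adapted control. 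Such objects are themselves feasible for \eqref{eq:SOC_First}, so no closure issue arises in this direction. I would verify this correspondence from the state evolution in \cref{sec:iamp}.

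For part (b), fix $\veps>0$ and choose a feasible tuple whose objective in \eqref{eq:SOC_First} is within $\veps/2$ of the supremum. If the supremum is not attained or is infinite, use the standard near-optimizer; since $h$ is bounded above, the supremum is finite. Theorem~\ref{thm:AMP_Feasibility_general} supplies an algorithm realizing this tuple, and the extension from the first paragraph gives the almost-sure limit $\E[h(Y,U)]$, which is at least $\VH^{\sAMP}_{m,\alpha,\varphi}(h)-\veps$.

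I expect the main obstacle to be the uniform integrability and truncation step. It requires second-moment convergence of the projections, which is not literally contained in the weak-convergence statement of Theorem~\ref{thm:AMP_Feasibility_general}. I would first try to obtain it directly: since $\hat\WW_n^{\sAMP}\in O(d,m)$, we have $\frac1n\sum_i\|\hat\WW^\sT\xx_i\|^2\le\|\XX^\sT\XX/n\|_{\rm op}\,m$, which is bounded almost surely by $(1+\alpha^{-1/2})^2m+o(1)$. This only gives tightness and a uniform second-moment bound, not convergence, but it suffices for the truncation: linear growth combined with a bounded second moment already yields uniform integrability of $h$, and no additional rate is needed.
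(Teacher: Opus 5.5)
Your route is the paper's. The paper gives no separate proof and calls Theorem~\ref{thm:ValH} a direct consequence of Theorem~\ref{thm:AMP_Feasibility_general}: $H_{n,d}(\hat\WW_n^{\sAMP})\to\E[h(Y,U)]$, after which (a) is the definition of the supremum in \eqref{eq:SOC_First} and (b) follows by realizing a near-optimal admissible tuple. Your truncation $h_M=\max(h,-M)$ is correct and supplies the passage from $C_b$ to linear-growth $h$ that the paper leaves implicit. It combines $\frac1n\sum_i\|(\hat\WW_n^{\sAMP})^\sT\xx_i\|_2^2\le m\|\XX^\sT\XX/n\|_{\mathrm{op}}$ with the strong law for $\frac1n\sum_i y_i^2$. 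Rely on that bound, which holds for every element of $O(d,m)$. Drop the $W_2$-convergence claim at the start: Theorem~\ref{thm:AMP_Feasibility_general} is obtained through closure under weak limits and only asserts weak convergence.

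The step that would fail is your ``subtle point'' in (a). A finite-step algorithm does \emph{not} correspond exactly to a step function $Q(t)$ and a piecewise-constant control, for two reasons.

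First, by Theorem~\ref{thm:iamp_feasibility}, for finite $T_1$ the limiting output contains $\bar Z_{T_1}+\frac1\alpha F(\bar Z_{T_1-1},Y)$. Here $\bar Z_{T_1}$ and $\bar Z_{T_1-1}$ are distinct Gaussians whose cross-covariance $P_{T_1-1}$ in general reaches $Q$ only as $T_1\to\infty$ (\cref{lem:cov_mu_Q}). Unless $F$ ignores its first argument, this is not the first-stage term $Z_{R,Q}+\frac1\alpha F(Z_{R,Q},Y)$ of \eqref{eq:SOC_First}.

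Second, in Theorem~\ref{thm:FeasibleDiscrete} each increment is $Q_t(V^{t+1}+\Phi_{t-1}V^t)$, so the control sits one index behind the fresh Gaussian. After regrouping, $V^1$ enters only through $Q_1\Phi_0V^1$, with no identity part, and $V^s$ enters through $Q_{s-1}+Q_s\Phi_{s-1}$. This is not in general of the form $\tilde Q_s(I_m+\tilde\Phi_s)$ with $\sum_s\tilde Q_s\tilde Q_s^\sT=I_m-Q$ and $\E[\tilde\Phi_s\tilde\Phi_s^\sT]\preceq I_m/\alpha$.

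The stochastic-integral form \eqref{eq:Udef} appears only in the limit $T_1,T_2\to\infty$, which is exactly how the paper passes to Theorem~\ref{thm:AMP_Feasibility_general}. No check of state evolution will produce the exact correspondence you claim. So read (a), as the paper implicitly does, for the algorithms indexed by admissible tuples, i.e.\ those whose empirical law converges to $\operatorname{Law}(Y,U)$ with $U$ as in \eqref{eq:Udef}; for these (a) is immediate. Covering literal finite-$(T_1,T_2)$ outputs would require a genuinely new argument placing their limit laws in the closure of the feasible set of \eqref{eq:SOC_First}. Neither your proposal nor the paper supplies one.
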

Of course, since AMP is a polynomial-time algorithm, we conclude that
\begin{align*}
\VH_{m,\alpha,\varphi}^{\salg} (h) \ge \, \VH_{m,\alpha,\varphi}^{\sAMP} (h)\, .
\end{align*}
Namely, $\VH_{m,\alpha,\varphi}^{\sAMP} (h)$ is an algorithmic lower bound for the value of the optimization problem \eqref{eq:GeneralOpt_Supervised}.
%
%

Henceforth, we focus on the case $m = k = 1$. In this setting, the function parameter 
$Q(t)=q(t)$ in \cref{thm:AMP_Feasibility_general} (see also its specialization to $m=k=1$
in Appendix \ref{app:Special_m=1}) can be eliminated, as any choice of $q(t)$ reduces to $q(t)=1$ via a reparametrization of the time variable $t$. As a consequence, the stochastic control problem~\eqref{eq:SOC_First} can be further simplified, as stated below.
\begin{prop}\label{thm:AMP_Feasibility_Simple}
For $m = k = 1$, we have
\begin{align}
	\VH_{1, \alpha, \varphi}^{\sAMP} (h)  = &\sup\Big\{ \VH_{1, \alpha, \varphi}^{\sAMP} (r,q,h):
	(r, q) \in A_{\sBayes}\Big\}\, ,\label{eq:SOC_First_simple}\\
	 \VH_{1, \alpha, \varphi}^{\sAMP} (r,q,h) := &\; \sup \; \E \left[ h \left( Y, \, Z_{r, q} + \frac{1}{\alpha} F \left( Z_{r, q}, Y \right) + \int_{q}^{1} \left( 1 + \phi_t \right) \d B_t \right) \right], \label{eq:SOC_First_1dim}\\
	& \mbox{\rm subject to} \,\, \text{$F$ is an $(r, q)$-contraction}, \, \text{and} \,\, \sup_{t \in [q, 1]} \E \left[ \phi_t^2 \right] \le \frac{1}{\alpha}\, , \nonumber
\end{align}
where we recall that $F$ is an $(r, q)$-contraction if and only if
\begin{align*}
&r = \, \E \left[ \frac{\partial F}{\partial G} \left( Z_{r, q}, \varphi(G, \veps) \right) \right], \,\, q = \, r^2 + \frac{1}{\alpha} \E \left[ F \left( Z_{r, q}, \varphi(G, \veps) \right)^2 \right], \\
& \quad \quad \frac{1}{\alpha} \E \left[ \frac{\partial F}{\partial Z_{r, q}} \left( Z_{r, q}, \varphi(G, \veps) \right)^2 \right] \le 1 \, .
\end{align*}
\end{prop}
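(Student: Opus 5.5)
The plan is to specialize the definition~\eqref{eq:SOC_First} to $m=k=1$ and then remove the function $Q(t)=q(t)$ by a deterministic time change of the stochastic integral. For $m=k=1$ all matrices become scalars. We have $(R,Q)=(r,q)$ with $r\in\R$ and $q\in[0,1]$, and the constraint $\int_0^1 Q(t)Q(t)^\sT\d t=I_m-Q$ reads $\int_0^1 q(t)^2\d t = 1-q$. The constraint on the control reads $\E[\Phi_t^2]\le 1/\alpha$. The contraction conditions of \cref{defn:mu_Q_contraction} become the three scalar conditions listed in the proposition: for the third, take $S=1$, since in one dimension any $S>0$ cancels. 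Write $\VH(r,q,h)$ for the inner supremum of~\eqref{eq:SOC_First} at fixed $(r,q)$, taken over $F$, $q(\cdot)$ and $\Phi$. It then suffices to show that $\VH(r,q,h)$ equals the right-hand side of~\eqref{eq:SOC_First_1dim}, i.e.\ that the random variable $\int_0^1 q(t)(1+\Phi_t)\d B_t$ ranges over the same set of laws (jointly with $(Z_{r,q},Y)$) as $\int_q^1(1+\phi_s)\d B_s$.

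\emph{Direction ``$\ge$''.} Given an admissible $\phi$ on $[q,1]$, choose $q(t)\equiv\sqrt{1-q}$ and set $\tau(t)=q+(1-q)t$. Define $\Phi_t=\phi_{\tau(t)}$ and the Brownian motion $\tilde B_t=(1-q)^{-1/2}(B_{\tau(t)}-B_q)$. Then $\int_0^1\sqrt{1-q}\,(1+\Phi_t)\d\tilde B_t=\int_q^1(1+\phi_s)\d B_s$. The filtration generated by $(Z_{r,q},Y,\tilde B)$ makes $\Phi$ progressively measurable, and $\E[\Phi_t^2]\le 1/\alpha$ holds. This needs a small care step: $\phi$ on $[q,1]$ may depend on $(B_s)_{s\le q}$. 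That extra information is an independent $\normal(0,q)$ variable, which we absorb using the external randomness $\omega$ or by enlarging the initial $\sigma$-field. Equivalently, we may assume WLOG that $\phi$ depends only on increments after $q$, since a conditionally independent variable can be generated from the Brownian path itself, e.g.\ from $\tilde B$ on a short initial interval where we set $q(t)=0$.

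\emph{Direction ``$\le$''.} Given $q(\cdot)\in L^2$ with $\int_0^1 q^2=1-q$, define the deterministic increasing clock $\tau(t)=q+\int_0^t q(s)^2\d s$, which maps $[0,1]$ onto $[q,1]$. By Dambis--Dubins--Schwarz, or more simply by Lévy's characterization, $M_t=\int_0^t |q(s)|\d B_s$ is a time-changed Brownian motion $\beta_{\tau(t)-q}$. We then set $\phi_{\tau(t)}=\Phi_t\,\mathrm{sgn}(q(t))$, so that
\[
\int_0^1 q(t)(1+\Phi_t)\d B_t=\int_q^1(1+\phi_s)\d \beta'_s .
\]
Here $\beta'$ absorbs the sign, and the sign of $q(t)$ in the $1$ term is handled by the symmetric change $\d B\to\mathrm{sgn}(q)\,\d B$, which preserves Brownian law. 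The bound $\E[\phi_s^2]\le1/\alpha$ is inherited from $\Phi$.

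The main obstacle is that $\tau$ may be non-injective on intervals where $q(t)=0$. There $\Phi$ may still depend on $B$ increments not captured by $\beta$. I would handle this by noting that these increments are independent of $\beta$ and can be realized as extra independent randomness in the initial $\sigma$-field. Such randomness does not enlarge the achievable law, because that law is already invariant under randomization: a mixture of controls is still admissible, and the supremum is unaffected. Alternatively, one can approximate $q$ by strictly positive functions in $L^2$, using continuity of $h$, the linear growth of $h$, and It\^o isometry to pass to the limit. Combining both directions with the outer supremum over $(r,q)\in A_{\sBayes}$ gives~\eqref{eq:SOC_First_simple}.
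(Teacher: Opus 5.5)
This is correct and is essentially the paper's route: the paper omits the proof, saying only that any $q(t)$ reduces to the constant case by a reparametrization of time as in \cite[Proposition 3.2]{montanari2024exceptional}, which is exactly your deterministic clock $\tau(t)=q+\int_0^t q(s)^2\,\d s$, together with your zero-initial-segment device for generating the independent randomness $(B_s)_{s\le q}$ in the other direction. Two small repairs are needed: first, drop the sign factor, since $M_t=\int_0^t q(s)\,\d B_s$ already has $\langle M\rangle_t=\int_0^t q(s)^2\,\d s$, so the correct choice is $\phi_{\tau(t)}=\Phi_t$ (your $\Phi_t\,\mathrm{sgn}(q(t))$ combined with $\d B\to\mathrm{sgn}(q)\,\d B$ counts the sign twice); second, for the flat pieces of $\tau$ rely on your approximation by a.e.\ nonvanishing $q_n$ (then $B=\int q_n^{-1}\,\d M^n$, so the two filtrations coincide), because the remark that mixtures of controls are admissible does not by itself show that extra randomness cannot help under the averaged constraint $\E[\phi_t^2]\le 1/\alpha$.
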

The proof of \cref{thm:AMP_Feasibility_Simple} closely follows that of \cite[Proposition 3.2]{montanari2024exceptional}, so we omit it here for simplicity. Later in \cref{sec:Parisi_formula}, we will present an extended variational principle for 
$\VH_{1,\alpha,\varphi}^{\sAMP} (h)$, which parallels the replica prediction for $\VH_{1,\alpha,\varphi} (h)$ established in \cref{sec:replica_method}.

%
%
%
\section{Replica symmetric regime for $m = k = 1$}\label{sec:heuristic}
At sufficiently large aspect ratio $\alpha$, the variational principles described in the previous sections simplify significantly, along with the 
ERM landscape. In statistical physics, this is usually referred to as a 
`replica symmetric' phase.
In this section, we state the resulting
simplified formulas, deferring their proofs to Appendix~\ref{append:heuristic}.

We begin with considering the variational principle for the optimizer of $\mathsf{F} (\mu, c, r) := \mathsf{F}_1 (\mu, c, r)$ given in Conjecture \ref{conj:Parisi_formula_1dim}. Define $r_* \in [-1, 1]$ as
\begin{equation*}
 r_* = \, \arg \max_{r \in [-1, 1]} \inf_{(\mu, c) \in \mathscr{U} \times \R_{> 0}} \mathsf{F} (\mu, c, r).
\end{equation*}
The optimizer $r_*$ can be interpreted as the asymptotic limit of the inner product
$\< \hww_n^{\sERM}, \ww_* \>$, where $\hww_n^{\sERM}$ is the global maximizer of \eqref{eq:GeneralOpt_Supervised} (or equivalently, the global minimizer of \eqref{eq:ERM_definition} with $\ell = - h$). The next proposition establishes that for sufficiently large $\alpha$, the Parisi variational principle is replica symmetric at $r_*$.
\begin{prop}
	\label{prop:ERM_asymptotics}
    Assume $\E [ (\partial_x h(Y, G))^2 ] > 0$, and for 
	$r \in [-1, 1]$, define the function
	\begin{equation*}
        v (r) = \, \E \left[ h \left( Y, r G + \sqrt{1 - r^2} Z \right) \right], \quad Z \sim \normal(0, 1), \,\, Z \perp \!\!\! \perp (Y, G).
    \end{equation*}
	Further assume that $r_1 := \arg \max_{r \in [-1, 1]} v (r) \in \{ -1, 1 \}$, and $r_1 v'(r_1) > 0$. Then, there exist constants $C, \alpha_0 > 0$, such that for all $\alpha > \alpha_0$, the following happens:
    \begin{itemize}
        \item [(i)] If $1 - r^2 \le C / \alpha$, then the Parisi variational problem $\inf_{(\mu, c) \in \mathscr{U} \times \R_{> 0}} \mathsf{F} (\mu, c, r)$ has a replica symmetric solution, i.e., the infimum is achieved at $(\mu_* = 0, c_*)$.
        \item [(ii)] $r_*$ satisfies $1 - r_*^2 \le C / \alpha$.
    \end{itemize}
	In this case, $r_*$ can be equivalently defined as
	\begin{align}\label{eq:r_*_RS}
		r_*=\arg \max_{r\in [-1,1]} \sup_{\E [U^2] \le 1 / \alpha} \E \left[ h \left( Y, r G + \sqrt{1 - r^2} (Z + U) \right) \right].
		\end{align}
\end{prop}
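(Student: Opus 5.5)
The plan is to work directly with the Parisi functional $\mathsf{F}(\mu,c,r)$ of Conjecture~\ref{conj:Parisi_formula_1dim}. I will show that, in the regime $1-r^2\lesssim 1/\alpha$, the effective time interval $[r^2,1]$ of the Parisi PDE is short. On a short interval the nonlinear term $\tfrac12\mu(\partial_x f)^2$ is a perturbation, so the minimizing $\mu$ is forced to vanish.

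\textbf{Step 1: the replica symmetric functional.} First I compute $\mathsf{F}(0,c,r)$ explicitly. With $\mu=0$ the PDE is the heat equation, so
\[
f_{y,0}(r^2,x)=\E\Big[\sup_u\{h(y,x+\sqrt{1-r^2}Z+u)-u^2/(2c)\}\Big],
\]
and the second term of \eqref{eq:ParisiFunc_1dim_reformulated} equals $(1-r^2)/(2\alpha c)$. Setting $\delta:=1-r^2$, minimizing over $c$ is a Lagrangian dual: by a standard minimax argument,
\[
\inf_c \mathsf{F}(0,c,r)=\sup_{\E[U^2]\le \delta/\alpha}\E\big[h(Y,rG+\sqrt{\delta}Z+U)\big].
\]
After rescaling $U\mapsto\sqrt{\delta}\,U$ this is exactly the objective in \eqref{eq:r_*_RS}. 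For (i), it then suffices to show that $\mu=0$ is optimal among all $\mu\in\mathscr{U}$.

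\textbf{Step 2: optimality of $\mu=0$.} I fix $r$ with $\delta\le C/\alpha$ and compute the first variation of $\mathsf{F}$ in $\mu$ at $(0,c_*)$. The standard Parisi variational calculus gives
\[
\partial_\mu\mathsf{F}\ \propto\ \int_{r^2}^1 \d\mu(s)\,\Big[\int_{r^2}^{s}\big(\E[(\partial_x f(t,X_t))^2]-\tfrac{1}{\alpha}(c+\cdots)^{-2}\big)\,\d t\Big].
\]
So replica symmetry is equivalent to the condition $\E[(\partial_x f(t,X_t))^2]\le 1/(\alpha c_*^2)$ holding uniformly in $t$, together with the convexity of $\mathsf{F}$ in $\mu$, which is known for Parisi functionals of this type. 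Along the heat flow the map $t\mapsto\E[(\partial_x f(t,X_t))^2]$ is nondecreasing, because $\partial_x f$ is a martingale along $X_t$. The condition therefore reduces to its value at $t=1$, where it coincides with the stationarity condition for $c_*$, namely $\E[u_*^2]=\delta/\alpha$ with $\partial_x f(1,\cdot)=u_*/c$. Equality at $t=1$ is not enough on its own. I also need strict control of the second variation: since $\E[(\partial_x h)^2]>0$, the optimizer $c_*$ is of order $\delta/\sqrt{\alpha}$, small but nondegenerate, and the interval length $\delta=O(1/\alpha)$ makes the correction terms $O(\delta)$, so they can be absorbed. This yields (i). \textbf{This step is the main obstacle:} getting uniform-in-$t$ control of the gap while $c_*\to 0$. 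I would first try to rescale time by $\delta$ and space by $\sqrt{\delta}$, so that the problem becomes an $\alpha$-independent PDE plus small perturbations.

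\textbf{Step 3: localization of $r_*$.} For (ii), I compare values. Trivially $\inf_{\mu,c}\mathsf{F}(\mu,c,r)\ge v(r)$, using $u=0$ in the terminal condition and $\mu\ge 0$ together with the comparison principle. In the other direction, taking $\mu=0$ gives
\[
\inf\mathsf{F}\le \sup_{\E[U^2]\le\delta/\alpha}\E[h(Y,rG+\sqrt{\delta}Z+U)]\le v(r)+L\sqrt{\delta/\alpha},
\]
with $L$ the Lipschitz constant of $h$. Hence $\max_r\inf\mathsf{F}\ge v(r_1)$, while for any $r$,
\[
\inf\mathsf{F}(\cdot,\cdot,r)\le v(r)+L\sqrt{1/\alpha}.
\]
Since $r_1=\pm1$ and $r_1v'(r_1)>0$, near $r_1$ we have $v(r_1)-v(r)\gtrsim (1-r^2)$ (using $\delta\approx 2(1-|r|)$), and away from $r_1$ the gap $v(r_1)-v(r)$ is bounded below by a constant. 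This crude comparison only localizes $r_*$ to $\delta\lesssim 1/\sqrt{\alpha}$. To sharpen it to $\delta\le C/\alpha$, I use the refined upper bound from $\mu=0$, which adds only $L\sqrt{\delta/\alpha}$. Comparing $c_0\delta$ with $L\sqrt{\delta/\alpha}$ forces $\delta\le (L/c_0)^2/\alpha$. For $r$ outside this range, the value is strictly below the value at the point achieving $v(r_1)$. Taking $C$ at least this constant, and $\alpha_0$ large enough for Step 2 to apply, proves (ii). Combining (i), (ii) and Step 1 then gives \eqref{eq:r_*_RS}.
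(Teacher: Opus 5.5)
Your Steps 1 and 3 are sound and follow the paper's argument for (ii): the Lagrange-duality identity for $\inf_c\mathsf{F}(0,c,r)$, and the localization of $r_*$ by comparing $c_0(1-r^2)$ with $L\sqrt{(1-r^2)/\alpha}$. Your comparison-principle bound $\inf_{\mu,c}\mathsf{F}(\mu,c,r)\ge v(r)$ is a small improvement, since it controls the value at $r_1$ without invoking (i).

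The gap is in Step 2, which carries all of part (i): the replica-symmetry criterion you state is wrong. At $\mu=0$ the first variation in direction $\mu$ is $\frac12\int_{r^2}^1\mu(t)A(t)\,\d t$, where
\[
A(t)=\E\big[(\partial_xf_{Y,c_*}(t,X_t))^2\big]-\frac{t-r^2}{\alpha c_*^2}.
\]
Two corrections to your formula follow from this. The subtracted term is $\frac1\alpha\int_{r^2}^t\gamma(s)^2\,\d s$, not $\gamma(t)^2/\alpha$. After integrating by parts against a non-decreasing $\mu$, the first-order condition for $\mu=0$ reads $\int_s^1A(t)\,\d t\ge0$ for all $s\in[r^2,1]$, so the inner integral runs over $[s,1]$, not $[r^2,s]$. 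Stationarity in $c$ gives $A(1)=0$, using $h_{c_*}=h$, which is automatic for small $c_*$.

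This condition is a \emph{lower} bound on $\E[(\partial_xf)^2]$ relative to a linearly growing function. You instead wrote an upper bound, $\E[(\partial_xf(t,X_t))^2]\le 1/(\alpha c_*^2)$, which follows trivially from stationarity and monotonicity. That is why your reduction to $t=1$ collapses to an identity and leaves you appealing to an unspecified second variation. Monotonicity of $t\mapsto\E[(\partial_xf)^2]$ cannot settle the question, because both terms of $A$ increase in $t$.

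The missing idea is convexity. Under the heat flow, $\partial_x^2f(t,X_t)$ is also a martingale, so $A'(t)=\E[(\partial_x^2f)^2]-1/(\alpha c_*^2)$ is non-decreasing and $A$ is convex. Since $A(1)=0$, the condition $\int_s^1A\ge0$ for all $s$ is then equivalent to $A'(1)\le0$, i.e.
\[
\E\big[(\partial_x^2f_{Y,c_*}(1,X_1))^2\big]\le\frac{1}{\alpha c_*^2}=\frac{\E[(\partial_xf_{Y,c_*}(1,X_1))^2]}{1-r^2}.
\]
Here $c_*\asymp\sqrt{(1-r^2)/\alpha}\to0$, not $(1-r^2)/\sqrt\alpha$ as you wrote. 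As $c_*\to0$ the two expectations converge to $\E[(\partial_x^2h(Y,G))^2]$ and $\E[(\partial_xh(Y,G))^2]>0$. The right-hand side is therefore at least of order $\alpha/C$, and the inequality holds for large $\alpha$.

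So the short interval enters only through the factor $1/(1-r^2)$ in this single endpoint inequality. There is no uniform-in-$t$ obstacle and no rescaling is needed. One further correction: the convexity that turns first-order conditions into global optimality is convexity in $\gamma=1/(c+\int_t^1\mu)$, which comes from the sup-over-controls representation. It is not convexity in $\mu$ directly, which is not available for free here.
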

\begin{rem}
    The conditions $r_1 \in \{ -1, 1 \}$ and $r_1 v' (r_1) > 0$ ensure that ERM achieves perfect recovery in the limit $\alpha = \infty$. This requirement is essential. Without this assumption, 
    the estimation error is bounded away from zero as $\alpha\to\infty$
    (the empirical risk minimizer $\hww_n^{\sERM}$ is an inconsistent estimator of $\ww_*$),
    thus changing the nature of the large-$\alpha$ asymptotics (also making it less interesting from a statistical  viewpoint).
\end{rem}

Recent works \cite{vilucchio2025asymptotics,montanari2026topological} derived high-dimensional asymptotics
for the empirical risk minimizer $\hww_n^{\sERM}$ in the regime $\alpha > \alpha_0$ for some constant $\alpha_0$ (independent of $n$ and $d$), under certain technical conditions. Combining their results with \cref{prop:ERM_asymptotics}, we obtain the following corollary.
\begin{cor}
Let $\ell = - h$. Under the setting of Proposition \ref{prop:ERM_asymptotics}, as well as the assumptions and 
rate trivialization conditions in \cite[Theorem 3]{montanari2026topological}, there exists $\alpha_0 < \infty$ such that 
Conjecture \ref{conj:Parisi_formula_1dim} holds
for all $\alpha > \alpha_0$. In particular, we have almost surely
$$
\lim_{n \to \infty} \left\< \hww_n^{\sERM},\ww_* \right\>=r_*
$$ 
with $r_*$ defined as per \cref{eq:r_*_RS}. Further, the limiting training and test errors of $\hww_n^{\sERM}$ are given by:
\begin{align}
	\lim_{n \to \infty} \hL_n \left( \hww_n^{\sERM} \right) 
	= \, & \inf_{\E [U^2] \le 1 / \alpha}\E \left[ \ell \left( Y, r_* G + \sqrt{1 - r_*^2} (Z + U) \right) \right]\,,\label{eq:RS-asymptotics-train} \\
	\lim_{n \to \infty} L \left( \hww_n^{\sERM} \right) = \, & \E \left[ \ell \left( Y, \, r_*G + \sqrt{1-r_*^2} Z\right) \right] \, .\label{eq:RS-asymptotics-test}
\end{align}
\end{cor}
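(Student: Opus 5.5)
The plan is to show that, once $\alpha$ is large enough, both sides of Conjecture~\ref{conj:Parisi_formula_1dim} reduce to the same replica symmetric (RS) variational problem. The right-hand side reduces by \cref{prop:ERM_asymptotics}. The left-hand side, i.e.\ the global ERM value and overlap, is supplied by \cite[Theorem 3]{montanari2026topological} under its rate trivialization hypotheses. The proof then consists of computing $\mathsf{F}(0,c,r)$ explicitly, matching it to the formula of \cite{montanari2026topological}, and reading off the training and test errors.

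\emph{Step 1 (RS evaluation).} For $\mu=0$, the Parisi PDE~\eqref{eq:ParisiPDE_1dim_reformulated} is the backward heat equation. Hence
\[
\mathsf{F}(0,c,r)=\E\Big[\sup_{u}\Big\{h\big(Y,rG+\sqrt{1-r^2}Z+u\big)-\tfrac{u^2}{2c}\Big\}\Big]+\frac{1-r^2}{2\alpha c}.
\]
I will then take $\inf_{c>0}$. This is a Lagrangian dual in the multiplier $\lambda=1/(2c)$ of the problem $\sup\{\E h(Y,rG+\sqrt{1-r^2}Z+U):\E[U^2]\le(1-r^2)/\alpha\}$, where $U$ ranges over measurable functions of $(Y,G,Z)$. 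The pointwise supremum over $u$ realizes the optimal $U$ for fixed $\lambda$. Strong duality holds because the constraint is a single quadratic moment condition: the set of achievable pairs $(\E h,\E U^2)$ is convex by randomization/mixing, and Slater's condition holds for $r^2<1$. Rescaling $U\mapsto\sqrt{1-r^2}\,U$ gives
\[
\inf_c\mathsf{F}(0,c,r)=\sup_{\E U^2\le 1/\alpha}\E\big[h\big(Y,rG+\sqrt{1-r^2}(Z+U)\big)\big]=:\Psi_{\rm RS}(r).
\]

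\emph{Step 2 (identifying the sup--inf).} Since $\mu=0\in\mathscr{U}$, we have $\inf_{\mu,c}\mathsf{F}(\mu,c,r)\le\Psi_{\rm RS}(r)$ for every $r$. By \cref{prop:ERM_asymptotics}(i), equality holds when $1-r^2\le C/\alpha$. By (ii), the maximizer $r_*$ lies in that region. It remains to show that $\sup_r\Psi_{\rm RS}(r)$ is attained in that region, so that $\sup_r\inf_{\mu,c}\mathsf{F}=\Psi_{\rm RS}(r_*)=\max_r\Psi_{\rm RS}(r)$ and \eqref{eq:r_*_RS} holds. For this I will use $\Psi_{\rm RS}(r)=v(r)+O(\alpha^{-1/2})$ uniformly in $r$, which follows from Lipschitzness of $h$. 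Combined with $r_1\in\{\pm1\}$ and $r_1v'(r_1)>0$, this is the same localization argument used to prove part (ii), so I expect to reuse it directly.

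\emph{Step 3 (matching with ERM asymptotics and errors).} I will write out the characterization of \cite[Theorem 3]{montanari2026topological}: the limit of $\<\hww_n^{\sERM},\ww_*\>$ and the limiting optimal value are given by an RS fixed-point/variational system in $(r,c)$. I will verify that its stationarity conditions are exactly the first-order conditions of $\max_r\inf_c\mathsf{F}(0,c,r)$ from Step 1. This yields the conjectured value and $\<\hww_n^{\sERM},\ww_*\>\to r_*$ almost surely, with $r_*$ unique for large $\alpha$ by the same localization. The training error \eqref{eq:RS-asymptotics-train} is then minus the optimal value, with $\ell=-h$. For the test error \eqref{eq:RS-asymptotics-test}, $\xx_{\stest}$ is independent of the training data, so conditionally $(\<\ww_*,\xx_{\stest}\>,\<\hww_n^{\sERM},\xx_{\stest}\>)$ is a bivariate Gaussian with correlation $\<\hww_n^{\sERM},\ww_*\>\to r_*$. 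Hence the projection has the law of $r_*G+\sqrt{1-r_*^2}Z$, and continuity and boundedness of $\ell$ give the limit.

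The main obstacle is Step 3: translating the formulation of \cite{montanari2026topological} (a min--max or fixed-point system, possibly with a different parametrization of the Moreau envelope) into our $\mathsf{F}(0,c,r)$. A related difficulty is justifying strong duality and attainment in Step 1 without extra smoothness of $h$. If a direct match of the two formulations fails, I would instead compare the optimal values: both formulas would be shown to equal $\max_r\Psi_{\rm RS}(r)$, with the overlap identified through uniqueness of the maximizer.
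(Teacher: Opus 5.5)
Your proposal is correct and follows the paper's route. The paper justifies the corollary only by combining Theorem~3 of the cited trivialization paper with \cref{prop:ERM_asymptotics}; your Steps~1--2 reproduce that proposition's proof (the Lagrange-duality identity $\inf_c\mathsf F(0,c,r)=\Psi_{\rm RS}(r)$ and the localization via $|\Psi_{\rm RS}(r)-v(r)|\le L\sqrt{(1-r^2)/\alpha}$), and your Step~3 supplies the identification with the ERM asymptotics that the paper leaves implicit, plus the Gaussian computation of the test error. One minor overstatement: the localization only confines maximizers to the window $1-r^2\le C/\alpha$ and does not by itself make $r_*$ unique, so uniqueness must come from the cited theorem or a local second-order argument as in \cref{lem:err_bayes_erm} (the paper likewise assumes it implicitly when it defines $r_*$ as an argmax).
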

Finally, we show that for sufficiently large $\alpha$, these training and test errors are also achievable by our two-stage AMP algorithm.
\begin{prop}\label{prop:rs_amp_achievability}
Let $r_*$ be defined by Eq.~\eqref{eq:r_*_RS}. Under the settings of \cref{prop:ERM_asymptotics}, there exists $\alpha_0 < \infty$ such that for all $\alpha > \alpha_0$, 
one can construct a two-stage AMP algorithm as in Section \ref{sec:FeasibleIAMP} outputting $\hww_n^{\sAMP}$, such that
\begin{equation*}
    \lim_{n \to \infty} \left\langle \hww_n^{\sAMP}, \ww_* \right\rangle = r_*.
\end{equation*}
Further, Eqs.~\eqref{eq:RS-asymptotics-train} and \eqref{eq:RS-asymptotics-test} 
hold with $\hww_n^{\sERM}$ replaced by $\hww_n^{\sAMP}$.
\end{prop}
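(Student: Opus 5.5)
The proof specializes Theorem~\ref{thm:AMP_Feasibility_general} (in its $m=k=1$ form, Proposition~\ref{thm:AMP_Feasibility_Simple}) to a replica-symmetric choice of parameters: no Brownian component in the limit law of the projected data, and a contraction $F$ encoding the optimal perturbation $U$ from \eqref{eq:r_*_RS}. Concretely, let $U_*$ attain (or nearly attain) the inner supremum in \eqref{eq:r_*_RS} at $r=r_*$. Stationarity gives $U_*$ as a function of $(Y,\, r_*G+\sqrt{1-r_*^2}Z)$, namely a proximal-type map $U_*=\lambda^{-1}\partial_x h(Y,\cdot)$ evaluated at a shifted point with Lagrange multiplier $\lambda$.

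\textbf{Step 1: set the parameters.} Take $q=1$, so the integral $\int_q^1(1+\phi_t)\,\d B_t$ disappears and the constraint on $\phi$ is vacuous. Then $Z_{r_*,1}=r_*G+\sqrt{1-r_*^2}Z$, and the target representation becomes $U=Z_{r_*,1}+\alpha^{-1}F(Z_{r_*,1},Y)$. We therefore want
\[
\alpha^{-1}F(Z_{r_*,1},Y)=\sqrt{1-r_*^2}\,U_*,
\]
with the rescaling by $\sqrt{1-r_*^2}$ absorbed appropriately, since the shift in \eqref{eq:r_*_RS} multiplies $U$ by $\sqrt{1-r_*^2}$. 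Because $q=1$ and the output is normalized to unit norm, a small renormalization argument may be needed. Alternatively, take $q<1$ slightly and let the remaining variance $1-q$ go into $\int(1+\phi_t)\,\d B_t$ with $\phi=0$. I would choose whichever makes the identities in \eqref{eq:R-Constraint} hold exactly.

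\textbf{Step 2: verify the contraction conditions for large $\alpha$.} Three conditions must hold.
\begin{itemize}
\item $q=r^2+\alpha^{-1}\E[F^2]$: this should be exactly the saturated second-moment constraint $\E[U^2]=1/\alpha$ times $(1-r^2)$. Saturation holds because $v'(r_1)r_1>0$ forces the constraint to be active.
\item $r=\E[\partial_G F]$: I would check this via Stein's form \eqref{eq:contraction_Stein}. It should be exactly the first-order optimality condition in $r$ of the outer maximization in \eqref{eq:r_*_RS}, after differentiating $\E[h(Y,rG+\sqrt{1-r^2}(Z+U))]$ in $r$ and using Gaussian integration by parts.
\item Contraction $\alpha^{-1}\E[(\partial_Z F)^2]\le 1$: for large $\alpha$, Proposition~\ref{prop:ERM_asymptotics} gives $1-r_*^2\le C/\alpha$, so the perturbation is small. $F=\alpha\sqrt{1-r_*^2}\,U_*$ has derivative of order $\alpha\sqrt{1-r_*^2}\cdot\lambda^{-1}\partial_x^2h$. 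Since $\lambda$ scales like $\sqrt{\alpha}$, the ratio is $O(1/\sqrt{\alpha})\cdot O(\sqrt{\alpha})$-type bounded, and with the $1/\alpha$ prefactor the condition holds for $\alpha$ large. This is where the assumptions $\E[(\partial_xh)^2]>0$ and smoothness or Lipschitz bounds on $h$ enter; weak derivatives are allowed by Remark~\ref{rem:weak_diff_contraction}.
\end{itemize}

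\textbf{Step 3: check $(r_*,q)\in A_{\sBayes}$.} The condition reads $r^2/(q-r^2)\le C_{\sBayes}^2$. With $q-r^2=O(1/\alpha)$, the left side is of order $\alpha$. Bayes AMP has $C_{\sBayes}^2=\C_*$, and for large $\alpha$ the Bayes state evolution gives $\C_*\gtrsim c\alpha$ (or $\C_*=\infty$, in which case the constraint is vacuous). I would compare the constants using the fixed-point equation \eqref{eq:Bayes_recursion_general}, noting that ERM overlap is dominated by Bayes overlap.

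\textbf{Conclusion and main obstacle.} Theorem~\ref{thm:AMP_Feasibility_general} then gives the overlap $r_*$ and the law $(Y,U)$, hence the training error \eqref{eq:RS-asymptotics-train}. Remark~\ref{rem:test_error_AMP_ERM} gives the test error \eqref{eq:RS-asymptotics-test}. I expect the main obstacle to be Step 2's exact matching of $r=\E[\partial_GF]$ with the stationarity condition in $r$, together with the quantitative contraction bound uniform in large $\alpha$. Step 3's comparison of constants with the Bayes AMP fixed point is a second delicate point.
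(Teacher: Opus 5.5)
Steps 1--2 and your conclusion follow the paper's proof. The paper applies \cref{thm:AMP_Feasibility_Simple} with $q=1$ directly. No renormalization or $q<1$ variant is needed, and the $q<1$ variant would add independent Gaussian noise and change the training error. It takes $F(x,Y)=\alpha c_*\,\partial_x f_{Y,c_*}(1,x)$, which is exactly your $\alpha\sqrt{1-r_*^2}\,U_*$, because $c\,\partial_x f_{y,c}(1,x)$ is the proximal displacement. It reads off $1=r_*^2+\alpha^{-1}\E[F^2]$ and $\alpha^{-1}\E[(\partial_Z F)^2]\le 1$ from the replica-symmetry condition \eqref{eq:condition_RS} verified in the proof of \cref{prop:ERM_asymptotics}. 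Your scaling count is the same computation; saturation really comes from $\E[(\partial_x h)^2]>0$, which fixes $c_*$, rather than from $r_1v'(r_1)>0$. Finally, it obtains $r_*=\E[\partial_G F]$ from the envelope-theorem first-order condition of \eqref{eq:r_*_RS} in $r$, as you propose.

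Step 3 is where your plan lacks the key ingredient. The justification you sketch (``ERM overlap is dominated by Bayes overlap'') does not suffice as stated. $A_{\sBayes}$ is defined through the \emph{minimal} fixed point $c_{\sBayes}$ of the Bayes-AMP state evolution, not through the information-theoretic Bayes-optimal overlap. Moreover, both $r_*^2/(1-r_*^2)$ and $c_{\sBayes}^2$ are of order $\alpha$, so only the leading constants decide.

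The paper settles this via \cref{lem:err_bayes_erm}:
\begin{itemize}
\item Using the Stein-lemma variational representation from the proof of \cref{lem:property_recursion_general}, one gets $c_{\sBayes}^2/\alpha\to g_{\sBayes}(\infty)=\max_f \E[\partial_y f(Y,G)\,\partial_G\varphi(G,\veps)]^2/\E[f(Y,G)^2]$.
\item A second-order expansion of \eqref{eq:r_*_RS} around $r=1$, with $1-r=t/\alpha$, gives $r_*^2/(\alpha(1-r_*^2))\to\E[\partial_{yx}h\,\partial_G\varphi]^2/\E[(\partial_x h)^2]$. This is the same ratio evaluated at the particular test function $f=\partial_x h$. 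Here $v'(1)=\E[\partial_{yx}h\,\partial_G\varphi]$ by Stein, which is where $r_1v'(r_1)>0$ is used.
\end{itemize}
Hence $r_*\le r_{\sBayes}$, i.e.\ $(r_*,1)\in A_{\sBayes}$, for large $\alpha$. This comparison holds only at leading order. If $\partial_x h$ is itself an optimal test function, the two constants coincide and a finer expansion would be needed; the paper's lemma does not treat that case either.
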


%

\section{Intermezzo: Large $n/d$ and tensor PCA equivalence}
\label{sec:Applications}


In this section, we apply the general theory developed in previous sections to a single-index model ($m=k=1$) with additive noise, trained by optimizing a correlation loss objective. We obtain a straightforward characterization of the algorithmically achievable training and test errors, in the large-$\alpha$ limit. We then show that this characterization admits a natural interpretation, in terms of an equivalent tensor PCA problem \cite{montanari2014statistical}. Proofs of the results in this section are deferred to \cref{append:single_index}.

\subsection{Single-index model}
\label{sec:SingleIndex}

Throughout this section, we assume the data
$\{ (\xx_i, y_i) \}_{i=1}^{n}$ follow the single-index model
of Eq.~\eqref{eq:single_index_model_first}, and consider maximizing 
the empirical correlation objective of Eq.~\eqref{eq:single_index_model_second} to learn the true signal $\ww_*$.
Without loss of generality, we will assume 
$\E [\sigma (G)] = 0$ for $G \sim \normal (0, 1)$; otherwise, one can simply consider the 
centered activation function $\sigma(z) - \E [\sigma (G)]$.

 We first take the limit $n,d\to\infty$, $n/d\to\alpha$,
  and then 
 \begin{align} \label{eq:DoubleLimit}
  \alpha\to\infty, \;\;\;\;\lambda\to 0, \;\;\;\;\alpha\lambda\to\oalpha\, ,
 \end{align}
   for some fixed $\oalpha \in (0, +\infty)$.
This turns out to be the correct scaling to avoid classical
low-dimensional behavior, or a trivial information-theoretic limit.

We will establish that---in this double limit--- 
the single-index model~\eqref{eq:single_index_model_first} is 
asymptotically equivalent 
to a Gaussian process model with asymptotically matching first and 
second moments. Namely, we let $\hCorr^g_d (\,\cdot\, )$ be 
a Gaussian process on $\S^{d-1}$ with mean and covariance
\begin{equation}\label{eq:GaussianModel}
\begin{split}
	\E \big[ \hCorr^g_d(\ww) \big] = \, \xifs(\<\ww_*,\ww\>)\, , \quad \Cov\big(\hCorr^g_d(\ww_1), \hCorr^g_d(\ww_2)\big) = \, \frac{1}{d\oalpha}\, \xiss(\<\ww_1,\ww_2\>)\, ,
\end{split}
\end{equation}
where $\xiss, \xifs$, and $\xiff$ (for future reference) are defined as 
\begin{align*}
\xiss(q) = \E \big[\sigma(G_1)\,\sigma(G_q)\big]\, , \,\, \xifs(q) = \E \big[\varphi(G_1)\,\sigma(G_q)\big]\, , \,\, \xiff(q) = \E \big[\varphi(G_1)\,\varphi(G_q)\big] \, ,
\end{align*}
for $(G_1,G_q)$ jointly centered Gaussian with $\E[G_1^2]=\E[G_q^2]=1$
and $\E[G_1G_q]=q$. Denoting by $ \{ \varphi_k \}_{k=0}^{\infty}$ and $\{ \sigma_k \}_{k=0}^{\infty}$ the Hermite coefficients of functions $\varphi$ and $\sigma$, we have
\begin{equation}\label{eq:xi_expansion}
\xiff(t) = \sum_{k=0}^{\infty}\varphi^2_k t^k\, ,\;\;\;\;
\xifs(t) = \sum_{k=0}^{\infty}\varphi_k\sigma_k t^k\, ,\;\;\;\;
\xiss(t) = \sum_{k=0}^{\infty}\sigma^2_k t^k\, .
\end{equation}
By straightforward calculation, we can verify that $\hCorr^g_d$ and $\hCorr_n$ 
(the latter being defined in Eq.~\eqref{eq:single_index_model_second}) have the same mean, and their covariances are asymptotically equivalent in the double limit $n / d \to \alpha$, $\alpha \lambda \to \oalpha$.

In Section \ref{sec:Gaussian_model_setup}, we will see that the Gaussian process model $\hCorr^g_d$ has a natural statistical interpretation as a generalized tensor PCA problem.


\subsection{Replica prediction for global maximum}\label{sec:replica_single_index}
We now present the replica prediction for the asymptotic global
 maximum of $\hCorr_n (\ww)$ over the unit sphere,
 under the double limit \eqref{eq:DoubleLimit}.
 
  In this regime, the Parisi variational principle simplifies 
  dramatically, as stated next. 
  The following claim is a consequence of Conjecture \ref{conj:Parisi_formula_1dim}: we derive it from that conjecture
  in Appendix \ref{app:Derivation_replica_single_index}.
\begin{conj}\label{conj:replica_single_index}
	Recall the function space $\mathscr{U}$ from \cref{conj:Parisi_formula_1dim} and $\hCorr_n (\ww)$ from \cref{eq:single_index_model_second}. Under the double limit~\eqref{eq:DoubleLimit}, we have
\begin{align}\label{eq:replica_single_index}
	\lim_{\alpha \lambda \to \oalpha} \lim_{n/d \to \alpha} \max_{\ww \in \S^{d-1}} \hCorr_n (\ww) = \, \sup_{r \in [-1, 1]} \inf_{(\mu, c) \in \mathscr{U} \times \R_{>0}} \osF_{\oalpha} (\mu,c,r ),
\end{align}
	where
\begin{equation}\label{eq:parisi_func_single_index}
\osF_{\oalpha} (\mu, c, r) = \, \xifs (r) + \frac{1}{2} \int_{r^2}^{1} \mu(t) \xiss' (t) \d t + \frac{1}{2} c \xiss' (1) + \frac{1}{2 \oalpha} \int_{r^2}^{1} \frac{\d t}{c +  \int_{t}^{1} \mu(s) \d s } \, .
\end{equation}
\end{conj}
We also note that the minimization of 
$\osF_{\oalpha} (\mu, c, r)$ over the extended domain used in \cref{sec:Parisi_formula} below can be performed explicitly. We state the resulting minimum in the following proposition and omit its proof, which follows from a straightforward application of integration by parts.
\begin{prop}\label{prop:single_index_extend_minimize}
Define for $q \in [0, 1]$, the function spaces
\begin{align}
\label{eq:FS_def}
\FS&:= \left\{ (\mu, c) \in L^1 [0, 1] \times \R_{> 0}:\, \mu \vert_{[0, t]} \in L^{\infty} [0, t] \ \text{and} \ c + \int_{t}^{1} \mu(s)\, \d s > 0, \, \forall t \in [0, 1) \right\},\\
\label{eq:FS_q_def}
\FS (q) &:= \left\{ (\mu, c) \in \FS:\, \mu (t) = 0, \ \forall t \in [0, q] \right\}.
\end{align}
Further define for any $(q, r) \in [0, 1] \times [-1, 1]$ with $r^2 \le q \le 1$, the functional
\begin{align}\label{eq:Vstar_single_index}
\VH_{\oalpha}^*(r,q) := \xifs(r) + \frac{1}{\sqrt{\oalpha}} \sqrt{(q-r^2)\xiss'(q)}
+ \frac{1}{\sqrt{\oalpha}}\int_q^1\sqrt{\xiss''(t)}\, \de t\, .
\end{align}
Then, we have
\begin{equation}\label{eq:Vstar_single_index_min}
	\VH_{\oalpha}^*(r,q)= \inf_{(\mu,c)\in\FS(q)}\osF_{\oalpha} (\mu, c, r)\, .	
\end{equation}
\end{prop}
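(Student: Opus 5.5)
The plan is to reduce the infimum to a pointwise minimization by changing variables from $\mu$ to the function $\Lambda(t) := c + \int_t^1 \mu(s)\,\d s$. On $\FS(q)$ this function has three properties: it is absolutely continuous on $[0,1]$ with $\Lambda(1) = c$, it satisfies $\Lambda' = -\mu$ a.e., and it is constant on $[0,q]$, equal to $\Lambda(q)$, because $\mu$ vanishes there. Conversely, every absolutely continuous positive $\Lambda$ that is constant on $[0,q]$, Lipschitz on each $[0,t]$ with $t<1$, and has $\Lambda(1)>0$ comes from some $(\mu,c)\in\FS(q)$. Note that $\FS$ imposes no monotonicity on $\mu$, so there is no sign constraint on $\Lambda'$.

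\emph{Step 1: integration by parts.} Since $\mu = 0$ on $[r^2,q]$, the integral of $\mu\xiss'$ reduces to $[q,1]$, and
\[
\int_q^1 \mu(t)\xiss'(t)\,\d t = -\int_q^1 \Lambda'(t)\xiss'(t)\,\d t = -c\,\xiss'(1) + \Lambda(q)\xiss'(q) + \int_q^1 \Lambda(t)\xiss''(t)\,\d t .
\]
The $c\,\xiss'(1)$ terms then cancel. Splitting the last integral in \eqref{eq:parisi_func_single_index} at $q$ gives
\[
\osF_{\oalpha}(\mu,c,r) = \xifs(r) + \frac12\Big(\Lambda(q)\xiss'(q) + \frac{q-r^2}{\oalpha\,\Lambda(q)}\Big) + \frac12\int_q^1\Big(\Lambda(t)\xiss''(t) + \frac{1}{\oalpha\,\Lambda(t)}\Big)\d t .
\]
Here $\xiss', \xiss'' \ge 0$ because the Hermite coefficients enter squared in \eqref{eq:xi_expansion}.

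\emph{Step 2: lower bound.} Apply AM--GM, $\tfrac12(a\Lambda + b/\Lambda) \ge \sqrt{ab}$ for $\Lambda>0$, to the boundary term and pointwise to the integrand. This gives $\osF_{\oalpha}(\mu,c,r) \ge \VH^*_{\oalpha}(r,q)$ for every $(\mu,c)\in\FS(q)$.

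\emph{Step 3: approximate attainment (the main obstacle).} The pointwise minimizers are $\Lambda_*(q) = \sqrt{(q-r^2)/(\oalpha\,\xiss'(q))}$ for the boundary term and $\Lambda_*(t) = (\oalpha\,\xiss''(t))^{-1/2}$ for $t\in(q,1]$. These two values generally do not match at $t=q$, whereas $\Lambda$ must be continuous. They may also be $0$ or $\infty$ when $q=r^2$, when $\xiss''$ vanishes, or when $\xiss''(1)=\infty$. The infimum is therefore in general not attained, and I would build an approximating sequence as follows.
\begin{itemize}
\item Clip: let $\Lambda_\epsilon = \min\{\max\{\Lambda_*,\epsilon\},1/\epsilon\}$ on $(q+\delta,1]$. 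For the boundary value, clip $\Lambda_*(q)$ the same way.
\item Interpolate: on $[q,q+\delta]$, connect the clipped boundary value to $\Lambda_\epsilon(q+\delta)$ linearly.
\item Regularize: smooth $\Lambda_\epsilon$ so that it is Lipschitz on every $[0,t]$ with $t<1$. This makes $\mu = -\Lambda'$ lie in $L^\infty[0,t]$ and in $L^1$, and $c = \Lambda(1) \ge \epsilon > 0$.
\end{itemize}
On the transition interval the integrand is bounded by $C(\epsilon)$, so that interval contributes $O(\delta)$. The clipping error goes to $0$ as $\epsilon\to0$ by monotone or dominated convergence, using that $\sqrt{\xiss''}$ is integrable: $\int_q^1\sqrt{\xiss''} \le \sqrt{(1-q)\,(\xiss'(1)-\xiss'(q))}$. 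In the degenerate case $q=r^2$, taking $\Lambda(q)\to0$ makes the boundary term vanish, consistent with \eqref{eq:Vstar_single_index}. Letting $\delta\to0$ and then $\epsilon\to0$ gives the matching upper bound and hence \eqref{eq:Vstar_single_index_min}. If $\xiss''(1)$ or $\xiss'(1)$ is infinite, I would first check the finiteness conventions needed for \eqref{eq:parisi_func_single_index} itself; I expect that the boundedness of $\sigma$'s second moment only gives $\xiss(1)<\infty$, so this limiting case would have to be handled by the same truncation argument.
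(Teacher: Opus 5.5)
Your argument is correct and follows the paper's route. The paper omits the proof, saying only that it follows from integration by parts, and that is exactly your change of variables $\Lambda(t)=c+\int_t^1\mu(s)\,\d s$ (the paper's $1/\gamma$), followed by AM--GM applied to the boundary term and pointwise to the integrand; your clipping-and-interpolation step supplies the approximation the paper leaves implicit, which is needed because the optimal $\Lambda$ generally jumps at $t=q$ (matching only when $\xiss'(q)=\xiss''(q)(q-r^2)$) and can degenerate to $0$ or $\infty$. The case $\xiss'(1)=\infty$ needs no separate truncation: the term $\tfrac12 c\,\xiss'(1)$ would make $\osF_{\oalpha}$ ill-defined, so finiteness of $\xiss'(1)$ is implicit in the statement, while $\xiss''(1)=\infty$ (as for ReLU) is already covered by your clipping.
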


\subsection{Achievability via AMP}

In this section, we characterize the asymptotic values of 
$\hCorr_n (\ww)$ achieved by our two-stage AMP algorithm, in the double limit 
\eqref{eq:DoubleLimit}. Throughout the analysis, we assume 
$\xiff'(0) \neq 0$, namely $\varphi_1 :=\E[G\varphi(G)]\neq 0$. We begin by defining the Bayes AMP feasible region for the single-index model:
\begin{align}\label{eq:A_Bayes_single_index}
	A_{\sBayes} &= \{ (r, q) \in \R^2: q \in [r^2, 1], \, r^2 \le c_{\sBayes}^2 (q - r^2) \},\\
\label{eq:C_Bayes_Perceptron}
    c_{\sBayes} &:=\inf\left\{c>0:
	c^2= \oalpha \, \xiff' \left( \frac{c^2}{1+c^2} \right)\right\} \, .
\end{align}
 Next, for $(q, r) \in [0, 1] \times [-1, 1]$ with $r^2 \le q \le 1$, 
 define
\begin{equation}\label{eq:T0_change_of_variable}
 v_k =\sqrt{\frac{k\oalpha}{c^2}} \left(\frac{r^2}{q}\right)^{(k-1)/2} \varphi_k \, , \quad s_k = \sqrt{k}\sigma_k q^{(k-1)/2}\, , \,\, k \ge 1,
\end{equation}
and 
\begin{align}\label{eq:T0_def}
T_0(r, q) := \max\Big\{\<s,x\>:\; \; \|x\|_2 \le 1, \, \<x,v\>=1, \, \sum_{k\ge 1} (k-1)x_k^2\le \frac{q}{q-r^2}\Big\}\, .
\end{align}
Then, for a fixed pair $(r, q)$, we establish that the asymptotic value of $\hCorr_n (\ww)$ achieved by our two-stage AMP algorithm takes the simple form:
\begin{align}\label{eq:VAMP_single_index}
	\VH_{\oalpha}^{\sAMP} (r, q)&:=\xifs(r) +\sqrt{\frac{q-r^2}{\oalpha}} \, T_0(r,q)+ \frac{1}{\sqrt{\oalpha}}\int_q^1\sqrt{\xiss''(t)}\, \de t\, ,
\end{align}
see below for the precise statement.
\begin{prop}\label{prop:T0_equiv_def}
	For any 
	$(r, q) \in A_{\sBayes}$ and $(\alpha, \lambda) \in \R_+^2$, 
	there exists a two-stage AMP algorithm that returns 
	$\hbw_n^{\alpha, \lambda} \in \S^{d-1}$, such that
	\begin{align*}
	\lim_{\alpha \lambda \to \oalpha}\lim_{n/d \to \alpha} \< \ww_*, \hbw_n^{\alpha, \lambda} \> = r \, , \quad
	 \lim_{\alpha \lambda \to \oalpha} \lim_{n/d \to \alpha}  \hCorr_n \big( \hbw_n^{\alpha, \lambda} \big) = \VH_{\oalpha}^{\sAMP} (r,q)\, .
	\end{align*}
\end{prop}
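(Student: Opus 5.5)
The plan is to specialize the $m=k=1$ achievability result, \cref{thm:AMP_Feasibility_Simple} (equivalently \cref{thm:AMP_Feasibility_general}), to $h(y,z)=y\,\sigma(z)/\sqrt{\lambda}$, and then evaluate the resulting stochastic-control value in the double limit \eqref{eq:DoubleLimit}.

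\textbf{Extending to unbounded $h$.} This $h$ is not bounded, so the first step is to extend the conclusion $\frac1n\sum_i h(y_i,\hww^\sT\xx_i)\to\E[h(Y,U)]$ to functions of quadratic growth. I would do this with a standard truncation argument, using sub-Gaussianity of $y_i$ and the fact that $\hww^\sT\xx_i$ has bounded empirical second moments, as is standard in AMP state evolution. Part (a), $\<\ww_*,\hww\>\to r$, is then immediate for every fixed $(\alpha,\lambda)$.

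\textbf{Limiting feasible region.} For $Y=\sqrt\lambda\varphi(G)+\veps$ with $\lambda\to0$ and $\alpha\lambda\to\oalpha$, I would check that the Bayes recursion \eqref{eq:Bayes_recursion_general} converges to $c^2\mapsto\oalpha\,\xiff'(c^2/(1+c^2))$. To do this, expand the posterior mean $\E[G-\cdot\mid\cdot,Y]$ to first order in $\sqrt\lambda$: the likelihood ratio of $Y$ is $1+\sqrt\lambda\,\varphi(G)\veps+O(\lambda)$, and the resulting quadratic form multiplied by $\alpha$ becomes a series in the Hermite coefficients $\varphi_k^2$. Hence $A_{\sBayes}$ converges to the set \eqref{eq:A_Bayes_single_index}, and points in its interior are feasible for all large $\alpha$. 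Boundary points are handled by continuity.

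\textbf{Scalings.} The contraction constraint $q=r^2+\E[F^2]/\alpha$ forces $F=\sqrt{\alpha}\,f$ with $\E f^2=q-r^2$. The control constraint forces $\phi_t=\psi_t/\sqrt\alpha$ with $\E\psi_t^2\le1$. Write $X=Z_{r,q}+\int_q^1\d B_t$ and
\[
U=X+\delta,\qquad \delta=\frac{f}{\sqrt\alpha}+\frac{1}{\sqrt\alpha}\int_q^1\psi_t\,\d B_t=O(\alpha^{-1/2}).
\]
Now expand $\E[Y\sigma(U)]/\sqrt\lambda$. The signal part gives $\E[\varphi(G)\sigma(X)]+o(1)=\xifs(r)+o(1)$, since $X$ has variance $1$ and correlation $r$ with $G$. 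The noise part gives $\lambda^{-1/2}\E[\veps\,\sigma'(X)\delta]+o(1)$, and $\lambda^{-1/2}\alpha^{-1/2}\to\oalpha^{-1/2}$. The two pieces of $\delta$ then decouple, as follows.

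\emph{Control term.} For the $\int\psi_t\,\d B_t$ term, Itô isometry gives $\oalpha^{-1/2}\int_q^1\E\big[\veps\,\psi_t\,\E[\sigma''(X)\mid\cF_t]\big]\d t$. Cauchy--Schwarz is saturated by $\psi_t\propto\veps\,\E[\sigma''(X)\mid\cF_t]$, and $\E\big(\E[\sigma''(X)\mid\cF_t]\big)^2=\xiss''(t)$. This yields $\oalpha^{-1/2}\int_q^1\sqrt{\xiss''(t)}\,\d t$, exactly as in the spherical spin glass IAMP analysis.

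\emph{First-stage term.} For $f$, expand $f(z,y)$ in Hermite polynomials of $z/\sqrt q$. To leading order $f$ is linear in $y\approx\veps$, i.e. $f=\veps\sum_k a_k\mathrm{He}_k(z/\sqrt q)/\sqrt{k!}$; its $\veps$-independent part contributes nothing at leading order. The objective $\E[\veps\sigma'(X)f]$ becomes, after the rescaling \eqref{eq:T0_change_of_variable} with $x_k\propto a_k$, equal to $\sqrt{q-r^2}\,\<s,x\>$. I expect the three contraction constraints to become, respectively:
\begin{itemize}
\item $\E f^2=q-r^2$ becomes $\|x\|_2\le1$; the inequality suffices by monotonicity, via padding with a harmless component.
\item The $r$-constraint, in the Stein form \eqref{eq:contraction_Stein}, becomes $\<x,v\>=1$, with the $\sqrt\lambda$ of $Y$ combining with $\sqrt\alpha$ of $F$ to produce $\sqrt{\oalpha}$ and the factors $(r^2/q)^{(k-1)/2}$.
\item The derivative bound $\E[(\partial_z f)^2]\le1$ becomes $\sum_k(k-1)x_k^2\le q/(q-r^2)$ once combined with the norm constraint.
\end{itemize}
Taking the supremum gives $\sqrt{(q-r^2)/\oalpha}\,T_0(r,q)$, and summing the three contributions gives \eqref{eq:VAMP_single_index}. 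For each $(\alpha,\lambda)$ one chooses $F,\phi$ from near-optimizers $x$ and $\psi$, truncated to finitely many Hermite modes and made Lipschitz.

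\textbf{Main obstacle.} I expect the hardest step to be this last correspondence. It requires verifying exactly, with constants, that the contraction conditions map onto the three constraints in \eqref{eq:T0_def}, and that the higher-order remainders in the $\sqrt\lambda$ and $\alpha^{-1/2}$ expansions vanish uniformly over the chosen (finite-mode, truncated) $F$ and $\psi$, so that the iterated limits commute with the supremum. I would first handle finite Hermite truncations, where all remainders are explicit polynomial moments, and then pass to general $x$ by density together with continuity of $T_0$ in its constraints.
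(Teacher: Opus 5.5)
Your overall route is the paper's. You use the same perturbative bound $\liminf c_{\sBayes}^{\alpha,\lambda}\ge c_{\sBayes}$: the zeroth order vanishes because $G-cZ$ is independent of $(cG+Z,\veps)$, and the first order gives $\oalpha\,\xiff'(c^2/(1+c^2))$. You use the same scalings, $F\sim\lambda^{-1/2}$ and $\phi\sim\alpha^{-1/2}$. And you use the same three-term evaluation: $\xifs(r)$; Clark--Ocone plus Cauchy--Schwarz for the control; and a Hermite expansion in $z/\sqrt q$, keeping the part linear in $\veps$, to get $T_0$.

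\textbf{The gap.} It is in your last construction step: ``for each $(\alpha,\lambda)$ one chooses $F,\phi$ from near-optimizers $x$ and $\psi$, truncated and made Lipschitz.'' \cref{thm:AMP_Feasibility_Simple} needs $F$ to be an \emph{exact} $(r,q)$-contraction at the given finite $(\alpha,\lambda)$. The conditions $r=\E[\partial_G F]$ and $q=r^2+\alpha^{-1}\E[F^2]$ are equalities; they are what make $(r,q)$ a fixed point of the first-stage state evolution in \cref{lem:cov_mu_Q}. An $F$ built from a limiting optimizer satisfies them only up to $o(1)$. For example, $F=\sqrt\alpha\,Y p(Z_{r,q})$ gives $\E[\partial_G F]=\sqrt{\alpha\lambda}\,\E[\varphi'(G)p(Z_{r,q})]$ and $\alpha^{-1}\E[F^2]=\E[p^2]+\lambda\E[\varphi(G)^2p(Z_{r,q})^2]$. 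Your ``harmless'' padding component is harmless only at $\lambda=0$, and it and the Lipschitz truncation perturb these further. The derivative bound can also fail by $o(1)$ when it is tight in the limit. Controlling remainders, where you place the main obstacle, does not repair this: an equality constraint tolerates no small violation.

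\textbf{The fix.} You need a finite-dimensional correction. The paper sets $F=\lambda^{-1/2}\big(F_0+\sum_{i=1}^3 a_i(\lambda)g_i\big)$ and solves for $a(\lambda)\to 0$ by the implicit function theorem. This requires choosing $g_1,g_2,g_3$ so that the Jacobian at $\lambda=0$ is nonsingular, i.e.\ the linear map $g\mapsto\big(\E[\partial_y g\,\varphi'(G)],\,\E[gF_0],\,\E[\partial_z g\,\partial_z F_0]\big)$ must be surjective. The paper proves this by Gaussian integration by parts, using $(r,q)\in\operatorname{int}A_{\sBayes}$ so that $Z_{r,q}\neq G$. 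Alternatively, you could let $(r_\lambda,q_\lambda)\to(r,q)$ be an exact fixed point for a fixed $F$, but that also needs a nondegeneracy argument.

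With this correction you no longer need remainders uniform over a class of $F$'s, nor any interchange of the limit with the supremum. You simply fix one limiting (near-)optimizer and perturb it.

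\textbf{Smaller remarks.} With the correct index shift $x_{k+1}\propto a_k$, the derivative bound gives $\frac{q-r^2}{q}\sum_k (k-1)x_k^2\le 1$ directly, without combining it with the norm constraint. Your truncation argument for the unbounded $h(y,z)=y\,\sigma(z)/\sqrt\lambda$ covers a point the paper passes over silently.
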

Let $\VH_{\oalpha}^{\sAMP} = \sup_{(r, q) \in A_{\sBayes}} \VH_{\oalpha}^{\sAMP} (r, q)$ denote the asymptotic maximum value of $\hCorr_n (\ww)$ achieved by our two-stage AMP algorithm over all feasible choices of $(r, q)$. The following theorem establishes that, under certain optimality conditions, $\VH_{\oalpha}^{\sAMP}$ admits a simpler expression that coincides with the 
formula in Eq.~\eqref{eq:Vstar_single_index}.
\begin{thm}\label{thm:single_index_duality}
Define $\VH_{\oalpha}^*(r,q)$ as per Eq.~\eqref{eq:Vstar_single_index}.
Then the following holds:
\begin{itemize}
    \item [(a)] For all $(q, r)$ with $r^2 \le q \le 1$,
	 $\VH^{\sAMP}_{\oalpha}(r,q) \le \VH_{\oalpha}^*(r,q)$. 
    \item [(b)] For all $r \in [-1, 1]$, define
        \begin{align}\label{eq:def_q_star}
            q_*(r)= \arg\min_{q\in [r^2,1]} \left\{ \frac{\xiss'(q)}{q-r^2} \right\}.
        \end{align}
        Then, we have
        \begin{equation*}
            \sup_{q : (r, q) \in A_{\sBayes}} \VH_{\oalpha}^{\sAMP} (r, q) \le 
			 \VH_{\oalpha}^*(r, q_*(r)).
        \end{equation*}
    \item [(c)] Let
        \begin{equation}\label{eq:r_star_oalpha}
            r_* := \argmax_{r : (r, q_*(r)) \in A_{\sBayes}} \VH_{\oalpha}^*(r,q_*(r)) \, , \,\, \;\;q_* = q_* (r_*).
        \end{equation}
        Then, we have $ \VH_{\oalpha}^{\sAMP}(r_*, q_*) =  \VH_{\oalpha}^* (r_*, q_*)$, i.e., the value $ \VH_{\oalpha}^*(r_*, q_*)$ is achievable by our two-stage AMP algorithm. As a consequence,
		\begin{equation}\label{eq:optimality_single_index}
		    \max_{r : (r, q_*(r)) \in A_{\sBayes}} \VH_{\oalpha}^*(r,q_*(r)) 
            \le    \max_{(r, q) \in A_{\sBayes}}\VH_{\oalpha}^{\sAMP}(r, q)\le 
             \max_{r : (r, 1) \in A_{\sBayes}} \VH_{\oalpha}^*(r,q_*(r)).
		\end{equation}
		In particular, if the first and last terms in the above chain of inequalities coincide, then 
        $\VH_{\oalpha}^{\sAMP} = \VH_{\oalpha}^*(r_*, q_*)$.
\end{itemize}
\end{thm}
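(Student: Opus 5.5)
The three parts reduce to a single comparison between $T_0(r,q)$ and $\sqrt{\xiss'(q)}$; I will prove that comparison first and then read off (a)--(c).

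\textbf{Part (a).} Comparing \eqref{eq:VAMP_single_index} with \eqref{eq:Vstar_single_index}, the terms $\xifs(r)$ and $\oalpha^{-1/2}\int_q^1\sqrt{\xiss''(t)}\,\de t$ appear in both. So (a) is equivalent to
\[
T_0(r,q)\le \sqrt{\xiss'(q)} .
\]
I will prove this by Cauchy--Schwarz applied to \eqref{eq:T0_def}. Any feasible $x$ satisfies $\|x\|_2\le 1$, so $\<s,x\>\le \|s\|_2$. By \eqref{eq:T0_change_of_variable} and \eqref{eq:xi_expansion},
\[
\|s\|_2^2=\sum_{k\ge1}k\sigma_k^2q^{k-1}=\xiss'(q).
\]
The constraints $\<x,v\>=1$ and $\sum_k(k-1)x_k^2\le q/(q-r^2)$ only shrink the feasible set, so they can be dropped for an upper bound. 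The degenerate case $q=r^2$ is harmless: the prefactor $\sqrt{q-r^2}$ in \eqref{eq:VAMP_single_index} vanishes.

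\textbf{Part (b).} I will exploit the two constraints that part (a) discarded. The aim is a sharper bound, uniform over $q$ with $(r,q)\in A_{\sBayes}$:
\[
\sqrt{q-r^2}\,T_0(r,q)+\int_q^1\sqrt{\xiss''}
\;\le\;\sqrt{(q_*-r^2)\,\xiss'(q_*)}+\int_{q_*}^1\sqrt{\xiss''},\qquad q_*=q_*(r).
\]
The natural route is to view the left side as $\sup_{\mu,c}$ of $\osF_{\oalpha}$ restricted to $\FS(q)$ via \eqref{eq:Vstar_single_index_min}. The right side is the value obtained by optimizing $q$ in the formula of Proposition~\ref{prop:single_index_extend_minimize}.

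Concretely, I will read the constraint $\sum_k(k-1)x_k^2\le q/(q-r^2)$ as follows. Using the weights $s_k$, it encodes the derivative of $\xiss'$ along the path from $q$ upward. It then yields, for every $q'\in[q,1]$, a bound of the form
\[
\sqrt{q-r^2}\,\<s,x\>\le \sqrt{(q'-r^2)\,\xiss'(q')}+\int_q^{q'}\sqrt{\xiss''}.
\]
This would follow from a Cauchy--Schwarz interpolation in $k$ between the weights $q^{k-1}$ and $q'^{k-1}$. Choosing $q'=q_*$ when $q\le q_*$ gives the claim.

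When $q>q_*$, I will use instead the defining minimality in \eqref{eq:def_q_star}, namely that $\xiss'(t)/(t-r^2)$ is minimal at $q_*$. Writing $\sqrt{(q-r^2)\xiss'(q)}-\sqrt{(q_*-r^2)\xiss'(q_*)}$ as an integral of its derivative, I will show it is bounded by $\int_{q_*}^q\sqrt{\xiss''}$ via AM--GM:
\[
\frac{d}{dt}\sqrt{(t-r^2)\xiss'(t)}=\frac{\xiss'+(t-r^2)\xiss''}{2\sqrt{(t-r^2)\xiss'}}\ \ge\ \sqrt{\xiss''},
\]
with equality exactly when $(t-r^2)\xiss''=\xiss'$. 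Tracking the sign correctly here is the delicate point, and it is where the constraint involving $v$ (hence $A_{\sBayes}$) may also need to enter. I expect this step to be the main obstacle.

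\textbf{Part (c).} I will construct an explicit maximizer of $T_0(r_*,q_*)$ that attains $\sqrt{\xiss'(q_*)}$. The candidate is $x=s/\|s\|_2$, for which:
\begin{itemize}
\item the norm constraint holds with equality;
\item the quadratic constraint $\sum_k(k-1)s_k^2/\xiss'(q_*)\le q_*/(q_*-r_*^2)$ reads $q_*\xiss''(q_*)/\xiss'(q_*)\le q_*/(q_*-r_*^2)$, which is exactly the first-order condition for $q_*$ in \eqref{eq:def_q_star}, or a boundary inequality when $q_*=1$.
\end{itemize}
The remaining constraint $\<x,v\>=1$ requires $\<s,v\>=\|s\|_2$. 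I would handle it by rescaling the free parameter $c$ (which, through \eqref{eq:T0_change_of_variable}, scales $v$), using the freedom allowed by $(r_*,q_*)\in A_{\sBayes}$, i.e.\ $c\le c_{\sBayes}$ in \eqref{eq:A_Bayes_single_index}. If that fails, I would perturb $x$ along a direction orthogonal to $s$ that preserves the quadratic constraint to first order.

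With this, Proposition~\ref{prop:T0_equiv_def} gives $\VH^{\sAMP}_{\oalpha}(r_*,q_*)=\VH^*_{\oalpha}(r_*,q_*)$, which is the lower inequality in \eqref{eq:optimality_single_index}. The upper inequality follows from (b) by maximizing over $r$: if $(r,q)\in A_{\sBayes}$ for some $q$, then $(r,1)\in A_{\sBayes}$, because the defining inequality in \eqref{eq:A_Bayes_single_index} is monotone in $q$.
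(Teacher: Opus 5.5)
Part (a) and your derivation of the outer inequalities in \eqref{eq:optimality_single_index} are correct and match the paper. Parts (b) and (c), however, each have a genuine gap.

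\textbf{Part (b).} Your AM--GM computation gives $\frac{\mathrm d}{\mathrm dt}\sqrt{(t-r^2)\xiss'(t)}\ge\sqrt{\xiss''(t)}$ for every $t$. Hence $\sqrt{(q-r^2)\xiss'(q)}-\sqrt{(q_*-r^2)\xiss'(q_*)}\ge\int_{q_*}^q\sqrt{\xiss''}$ for $q>q_*$, which is the reverse of what you need. This is not a sign-tracking issue. The map $q\mapsto\VH^*_{\oalpha}(r,q)$ is non-decreasing, and strictly increasing past an interior $q_*$, so the bound $T_0\le\sqrt{\xiss'(q)}$ from (a) is provably too weak on $(q_*,1]$. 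On that range the constraint $\sum_k(k-1)x_k^2\le q/(q-r^2)$ is active: $x=s/\|s\|_2$ violates it precisely because $(q-r^2)\xiss''(q)>\xiss'(q)$ there. So it must be used exactly where you planned to drop it.

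Your case split is in fact reversed. For $q\le q_*$, (a) plus the same AM--GM monotonicity already suffices. Note also that your displayed interpolation bound needs $-\int_q^{q'}\sqrt{\xiss''}$, not $+$.

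The paper handles the hard regime via Clark--Ocone. It writes $\oalpha T(r,q)$ as a supremum of $\E[VU_{r^2}]+\int_{r^2}^q\E[\phi_t\psi_t]\,\mathrm dt$, subject to $\E[\phi_t^2]\le\oalpha$ and $\E[V^2]+\int_{r^2}^q\E[\phi_t^2]\,\mathrm dt\le\oalpha(q-r^2)$. It then dualizes with a time-dependent multiplier $\gamma(t)$ and a scalar $\lambda$. Absorbing the $\int_q^1\sqrt{\xiss''}$ term yields a $q$-independent bound $\inf_{\gamma\ge\lambda\ge 0}\{\cdots\}$. This is evaluated with $\gamma=\lambda$ before a switch point $s$ and $\gamma=\sqrt{\xiss''/\oalpha}$ after it, the optimal switch being $s=q_*(r)$. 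Neither the $v$-constraint nor $A_{\sBayes}$ is needed in (b).

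\textbf{Part (c).} The $c$ in \eqref{eq:T0_change_of_variable} is not a free parameter. The change of variables forces $c=r/\sqrt{q-r^2}$, which is why $(r,q)\in A_{\sBayes}$ reads $c\le c_{\sBayes}$. Moreover, $\<s,x\>=\|s\|_2$ with $\|x\|_2\le 1$ forces $x=s/\|s\|_2$, so no perturbation preserves the value either.

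What you must show is $\<s,v\>=\|s\|_2$ at $(r_*,q_*)$. This unwinds to $\xifs'(r_*)=\sqrt{\xiss'(q_*)/\oalpha}\,r_*/\sqrt{q_*-r_*^2}$, i.e.\ stationarity of $r\mapsto\VH^*_{\oalpha}(r,q_*(r))$ at $r_*$. That derivative is computed by the envelope theorem, since $\partial_q\VH^*_{\oalpha}=0$ at an interior $q_*$.

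The missing step is proving that a maximizer over the constrained set $\{r:(r,q_*(r))\in A_{\sBayes}\}$ is a critical point. The paper shows this set is an interval $[-r_A,r_A]$ with $r_A/\sqrt{q_*(r_A)-r_A^2}=c_{\sBayes}$. It then combines Lemma~\ref{lem:xi_cauchy_schwarz}, in the form $\xifs'(r)\le\sqrt{\xiss'(q)\,\xiff'(r^2/q)}$, with the fixed-point equation $c_{\sBayes}^2=\oalpha\,\xiff'\big(c_{\sBayes}^2/(1+c_{\sBayes}^2)\big)$. This gives a non-positive derivative at $r_A$ and a non-negative one at $-r_A$, so the derivative vanishes at any maximizer. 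This is where $A_{\sBayes}$ actually enters. After that, $x=s/\|s\|_2$ works, and your check of the quadratic constraint via the definition of $q_*$ goes through.
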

While it is not guaranteed that condition~\eqref{eq:optimality_single_index} holds universally, we demonstrate below that it is satisfied for sufficiently large $\oalpha$.
\begin{prop}\label{prop:optimality_large_oalpha}
	There exists a threshold $\oalpha_* (\varphi, \sigma)$ depending only on $\varphi$ and $\sigma$, such that \cref{eq:optimality_single_index} holds for all $\oalpha \ge \oalpha_* (\varphi, \sigma)$.
\end{prop}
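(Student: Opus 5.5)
The plan is to compare the feasible sets of the two outer maximizations in \eqref{eq:optimality_single_index} and to show that, for $\oalpha$ large, the maximizer of the rightmost problem is already feasible for the leftmost one.

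First, the feasible sets. The constraint $r^2 \le c_{\sBayes}^2 (q-r^2)$ defining $A_{\sBayes}$ in \eqref{eq:A_Bayes_single_index} is monotone in $q$. Since $q_*(r)\le 1$, this gives
\[
\{r:(r,q_*(r))\in A_{\sBayes}\}\subseteq\{r:(r,1)\in A_{\sBayes}\}=\{r: r^2\le c_{\sBayes}^2/(1+c_{\sBayes}^2)\}.
\]
The last term of \eqref{eq:optimality_single_index} maximizes the same function $r\mapsto \VH^*_{\oalpha}(r,q_*(r))$ over the larger set. Hence it suffices to show that some maximizer $\bar r$ of the last problem satisfies $\bar r^2\le c_{\sBayes}^2(q_*(\bar r)-\bar r^2)$.

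Second, the size of $c_{\sBayes}$ and the limiting objective. From \eqref{eq:C_Bayes_Perceptron}, using $\xiff'(t)\ge \varphi_1^2>0$ (which holds since $\xiff'(t)=\sum_k k\varphi_k^2t^{k-1}$), we get $c_{\sBayes}^2\ge \oalpha\varphi_1^2\to\infty$. Moreover, $\VH^*_{\oalpha}(r,q_*(r))=\xifs(r)+O(\oalpha^{-1/2})$ uniformly in $r$, because $(q-r^2)\xiss'(q)$ and $\int_0^1\sqrt{\xiss''}$ are bounded. By compactness, every maximizer $\bar r=\bar r(\oalpha)$ has limit points in $\arg\max_{[-1,1]}\xifs$. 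I would then split into two cases according to where these limit points lie.

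\emph{Interior case.} Suppose the limit points $r_0$ satisfy $|r_0|<1$. The map $q\mapsto \xiss'(q)/(q-r^2)$ blows up as $q\downarrow r^2$ whenever $\xiss'(r^2)>0$, and it is continuous in $r$. Hence $q_*(r)-r^2\ge \delta>0$ uniformly for $r$ near $r_0$. Since $c_{\sBayes}^2\delta\to\infty$, the constraint $r^2\le c_{\sBayes}^2(q_*(r)-r^2)$ holds for $\oalpha$ large. If $\xiss'(r_0^2)=0$, i.e.\ $\sigma_1=0$ and $r_0=0$, the constraint is trivially satisfied at $r=0$.

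\emph{Boundary case.} Suppose instead $|\bar r|\to 1$. I claim $q_*(r)=1$ once $1-r^2$ is small, in which case the two constraints coincide and there is nothing to prove. Differentiating,
\[
\partial_q\Big[\frac{\xiss'(q)}{q-r^2}\Big]=\frac{\xiss''(q)(q-r^2)-\xiss'(q)}{(q-r^2)^2}.
\]
For $q\in[r^2,1]$ we have $\xiss''(q)(q-r^2)\le \xiss''(1)(1-r^2)$ and $\xiss'(q)\ge \xiss'(r^2)\ge \xiss'(0)=\sigma_1^2$. Provided $\sigma_1\neq 0$, the numerator is therefore negative once $1-r^2<\sigma_1^2/\xiss''(1)$. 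The ratio is then decreasing on $(r^2,1]$, so $q_*(r)=1$. If $\sigma_1=0$, I would instead use $\xiss'(q)\ge \xiss'(r^2)\ge \xiss'(1)/2$, which holds for $r^2$ near $1$, together with $\xiss''(1)<\infty$. This finiteness holds for polynomially growing $\sigma$ with $\sum k^2\sigma_k^2<\infty$, and I would assume it if needed.

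The main obstacle is making the limit-point argument uniform, so that a single threshold $\oalpha_*(\varphi,\sigma)$ works even when $\arg\max\xifs$ contains both interior and boundary points. I would handle this by fixing a small $\eta$ and treating $\{|r|\le 1-\eta\}$ and $\{|r|>1-\eta\}$ separately. The choice of $\eta$ comes from the boundary-case computation, which gives $q_*\equiv 1$ on the outer region. On the inner region, a uniform lower bound $q_*(r)-r^2\ge\delta(\eta)$ holds by compactness and continuity. Finally, $\oalpha_*$ is taken with $c_{\sBayes}^2\delta(\eta)\ge 1$. With these choices the constraint is verified for every $r$ with $(r,1)\in A_{\sBayes}$, not just for maximizers, so the feasible sets coincide and the two ends of \eqref{eq:optimality_single_index} agree.
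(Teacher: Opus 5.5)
Your reduction (show that $\{r:(r,q_*(r))\in A_{\sBayes}\}$ equals $\{r:(r,1)\in A_{\sBayes}\}$ for large $\oalpha$) is correct. So are the bound $c_{\sBayes}^2\ge\oalpha\varphi_1^2$ and the outer-region argument giving $q_*(r)=1$ when $1-r^2$ is small. For $\sigma_1\neq 0$ your final $\eta$-splitting is a complete proof: comparing the value at $q_*(r)$ with the value at $q=1$ gives the explicit bound $q_*(r)-r^2\ge\sigma_1^2(1-r^2)/\xiss'(1)$.

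The step that fails is the inner-region claim $q_*(r)-r^2\ge\delta(\eta)>0$ when $\sigma_1=0$. The statement does not exclude this case; only $\sigma_0=0$ and $\varphi_1\neq0$ are imposed. In that case $q_*(r)-r^2$ typically vanishes as $r\to 0$. For $\xiss(t)=t^3$ one has $\partial_q[3q^2/(q-r^2)]=3q(q-2r^2)/(q-r^2)^2$. Hence $q_*(r)=2r^2$ and $q_*(r)-r^2=r^2$ for $r^2\le 1/2$, so no $\delta(\eta)$ exists. Your remark about $r_0=0$ covers only the single point $r=0$, not its neighborhood; the maximizers $\bar r(\oalpha)\to0$ need not equal $0$. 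The constraint still holds near $0$ because both sides scale like $r^2$. What you need is a lower bound on the ratio $(q_*(r)-r^2)/r^2$, not on the difference.

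That ratio bound is the ingredient behind the paper's proof, and it removes the need for any splitting:
\begin{itemize}
\item If $q_*(r)<1$, \cref{lem:q_star_r}(b) gives $q_*-r^2=\xiss'(q_*)/\xiss''(q_*)$. So $r^2\le c_{\sBayes}^2(q_*-r^2)$ is equivalent to $q_*\xiss''(q_*)/\xiss'(q_*)\le 1+c_{\sBayes}^2$.
\item The left side is nondecreasing in $q$, since it is the mean of $k-1$ under the weights $k\sigma_k^2q^{k-1}$. Hence it is at most $\xiss''(1)/\xiss'(1)$.
\item If $q_*(r)=1$, the two constraints coincide.
\end{itemize}
So the two sets agree as soon as $1+c_{\sBayes}^2\ge\xiss''(1)/\xiss'(1)$.

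The paper reaches the same criterion through the interval structure $\{r:(r,q_*(r))\in A_{\sBayes}\}=[-r_A,r_A]$, established in the proof of \cref{thm:single_index_duality}(c). The two ends of \eqref{eq:optimality_single_index} then have the same feasible set iff $q_*(r_B)=1$ with $r_B=c_{\sBayes}/\sqrt{1+c_{\sBayes}^2}$, i.e.\ iff $\xiss'(1)\ge\xiss''(1)/(1+c_{\sBayes}^2)$. Your bound $c_{\sBayes}^2\ge\oalpha\varphi_1^2$ is a useful addition here, because the paper only notes that $c_{\sBayes}$ increases with $\oalpha$ and never shows it diverges. Combined with the criterion, it gives the explicit threshold $\oalpha_*=(\xiss''(1)/\xiss'(1)-1)_+/\varphi_1^2$.
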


\subsection{The equivalent Gaussian model}\label{sec:Gaussian_model_setup}
We next revisit the Gaussian process model defined by 
Eq.~\eqref{eq:GaussianModel}, and show that it has the following tensor PCA interpretation. Recall from \cref{sec:SingleIndex} that $\ww_*$ is the true signal, $\varphi_k$ is the $k$-th Hermite coefficient of $\varphi$, and $\sigma_k$ is the $k$-th Hermite coefficient of $\sigma$. For each $k\ge 1$, we observe $\YY^{(k)}\in(\R^d)^{\otimes k}$ as a rank-one, order-$k$ tensor perturbed by i.i.d. Gaussian noise:
\begin{align*}
\YY^{(k)} = \varphi_k \ww_*^{\otimes k} +\frac{1}{\sqrt{d\oalpha}}\GG^{(k)}\, ,
\end{align*}
where $\GG^{(k)} = (G^{(k)}_{i_1, \dots, i_k})_{1 \le i_1, \dots, i_k \le d}
 \in (\R^d)^{\otimes k}$ is a random tensor with i.i.d. $\normal (0, 1)$ entries. We propose to learn $\ww_*$ by maximizing the ``generalized tensor PCA'' objective:
\begin{align}
    \hCorr^g_d(\ww) = \, \sum_{k \ge 1} \sigma_k \<\YY^{(k)},\ww^{\otimes k}\> \, , \quad \ww \in \S^{d-1}\, ,\label{eq:TensorPCA-Cost}
\end{align}
The lemma below establishes the equivalence between this model and the Gaussian process model introduced in \cref{sec:SingleIndex}.
\begin{lem}\label{lem:tensor_pca_equivalence}
The stochastic process $\big\{ \hCorr^g_d (\ww) \big\}_{\ww \in \S^{d-1}}$ defined as per Eq.~\eqref{eq:TensorPCA-Cost} is a Gaussian process with mean and covariance given by Eq.~\eqref{eq:GaussianModel}.
\end{lem}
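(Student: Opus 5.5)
The argument is a direct computation. The process $\hCorr^g_d(\ww)=\sum_{k\ge1}\sigma_k\<\YY^{(k)},\ww^{\otimes k}\>$ is an affine function of the jointly Gaussian family $\{\GG^{(k)}\}_{k\ge1}$. Such a process is Gaussian once it is well defined, so it suffices to compute its mean and covariance and to justify the infinite sum.

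\textbf{Decomposition.} We split
\[
\hCorr^g_d(\ww)=\sum_{k\ge1}\sigma_k\varphi_k\<\ww_*,\ww\>^k+\frac{1}{\sqrt{d\oalpha}}\sum_{k\ge1}\sigma_k\<\GG^{(k)},\ww^{\otimes k}\>,
\]
using $\<\ww_*^{\otimes k},\ww^{\otimes k}\>=\<\ww_*,\ww\>^k$.

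\textbf{Mean.} The noise part has mean zero. By \eqref{eq:xi_expansion}, the deterministic part equals $\xifs(\<\ww_*,\ww\>)-\varphi_0\sigma_0$. Since $\sigma$ is centered, $\sigma_0=\E[\sigma(G)]=0$, so the mean is exactly $\xifs(\<\ww_*,\ww\>)$.

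\textbf{Covariance.} The tensors $\GG^{(k)}$ for different $k$ are independent with i.i.d.\ standard entries, so
\[
\E\big[\<\GG^{(k)},\ww_1^{\otimes k}\>\<\GG^{(l)},\ww_2^{\otimes l}\>\big]=\mathbf 1_{k=l}\,\<\ww_1^{\otimes k},\ww_2^{\otimes k}\>=\mathbf 1_{k=l}\<\ww_1,\ww_2\>^k.
\]
Therefore
\[
\Cov\big(\hCorr^g_d(\ww_1),\hCorr^g_d(\ww_2)\big)=\frac{1}{d\oalpha}\sum_{k\ge1}\sigma_k^2\<\ww_1,\ww_2\>^k=\frac{1}{d\oalpha}\xiss(\<\ww_1,\ww_2\>),
\]
where the last step again uses $\sigma_0=0$.

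\textbf{Main obstacle: convergence.} This is the only non-routine point.
\begin{itemize}
\item \emph{Deterministic part.} We have $\sum_k|\sigma_k\varphi_k|\le\|\sigma\|_{L^2}\|\varphi\|_{L^2}<\infty$ (Cauchy--Schwarz, norms in $L^2$ of the Gaussian measure), and $|\<\ww_*,\ww\>|\le 1$. Hence this series converges absolutely.
\item \emph{Noise part.} For fixed $\ww$, the summands are independent centered Gaussians with variances $\sigma_k^2$. Since $\sum_k\sigma_k^2<\infty$, the series converges in $L^2$ and almost surely by Kolmogorov's two-series theorem, and its limit is Gaussian.
\item \emph{Joint Gaussianity.} For finitely many points $\ww_1,\dots,\ww_p$, the vector of partial sums is jointly Gaussian for each truncation level. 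These vectors converge in $L^2$, and $L^2$ limits of jointly Gaussian vectors are jointly Gaussian. Passing the limit inside expectations is justified by $L^2$ convergence, which gives the covariance formula above. So all finite-dimensional distributions are Gaussian with the stated mean and covariance.
\end{itemize}

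If a version of the process with continuous sample paths is wanted, it follows from the covariance being a smooth function of $\<\ww_1,\ww_2\>$ on $[-1,1]$ (Kolmogorov continuity). This is not needed for the statement, which concerns only the law as a Gaussian process.
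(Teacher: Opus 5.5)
Your proof is correct and is the same direct computation as the paper's: the mean comes from the Hermite expansion of $\xifs$ with $\sigma_0=0$, and the covariance from independence of the $\GG^{(k)}$ together with $\langle \ww_1^{\otimes k},\ww_2^{\otimes k}\rangle=\langle \ww_1,\ww_2\rangle^k$; you also justify convergence of the series and joint Gaussianity, which the paper simply calls obvious. One small caveat, irrelevant to the lemma: in your optional continuity aside, $\xiss$ is generally not smooth at $t=\pm 1$ (for ReLU, $\xiss''(t)\propto(1-t^2)^{-1/2}$), so a continuous version would need a modulus-of-continuity bound on $\xiss(1)-\xiss(t)$ rather than smoothness.
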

In Appendix~\ref{append:generalized_tensor_pca}, we show that this generalized tensor PCA model
 (and the equivalent Gaussian process model) share the same 
 asymptotic global maximum---as well as the same algorithmic values achieved by
  a two-stage AMP-based procedure---as the single-index model 
  analyzed in \cref{sec:SingleIndex}. 

\subsection{Examples}\label{sec:single_index_example}
We now present several illustrative examples of the single-index model and its tensor PCA equivalent. 
\begin{figure}[!ht]
    \centering
    \includegraphics[width=0.9\linewidth]{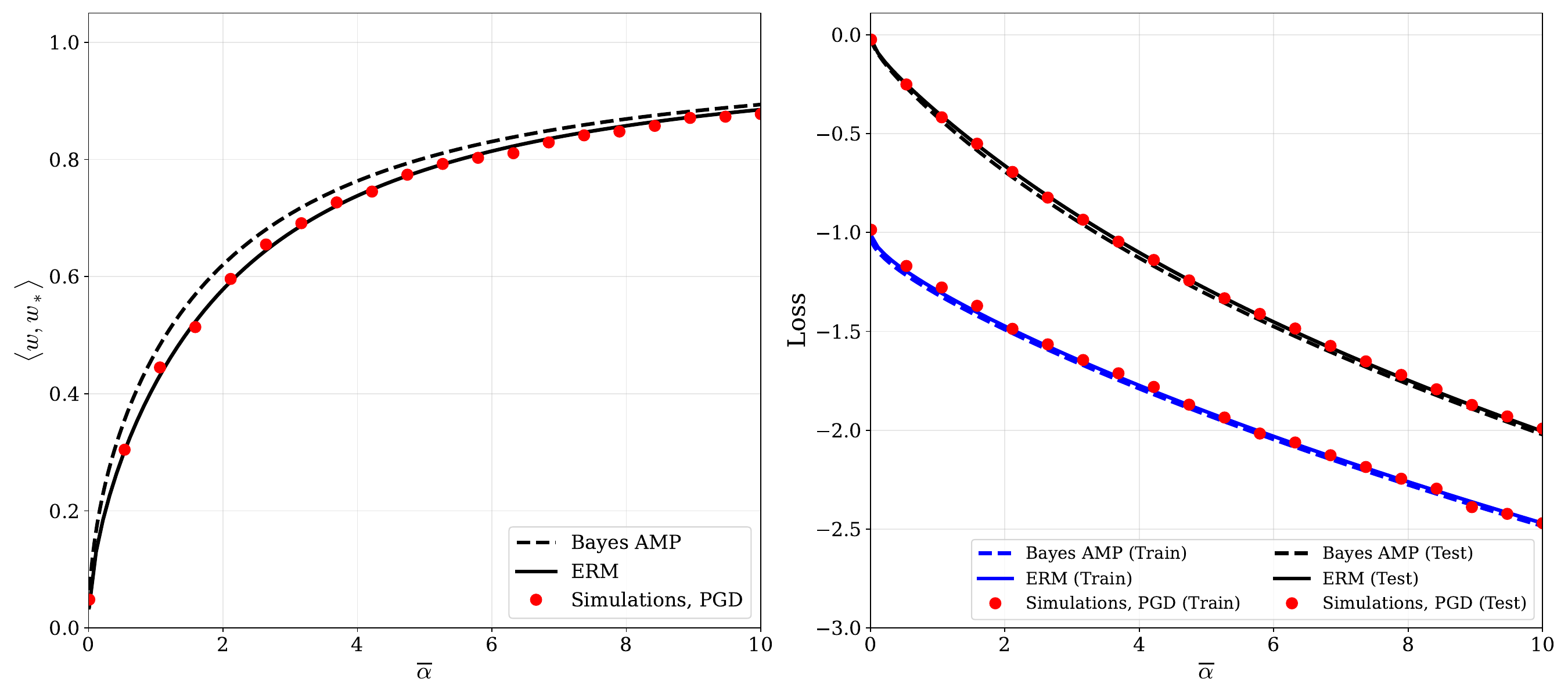}
    \caption{\textbf{Left panel:} The limiting overlap with $\ww_*$ for Bayes AMP, ERM and PGD. \textbf{Right panel:} The normalized limiting empirical and population correlation for these three methods. In our implementation of PGD, we set $d = 100$ and $\alpha = 200$, sweeping $\oalpha \in [0, 10]$ so that $\lambda$ ranges in $[0, 0.05]$. All reported numerical results are averaged over $100$ independent runs of PGD.}
    \label{fig:well_specified}
\end{figure}
\paragraph{Well-specified model.} We consider a simple well-specified setting, with $\sigma(z) = \varphi(z) = \max(0, z)$ being the standard ReLU activation function. In this setting, the learning task is equivalent to training a single-neuron ReLU network, and we have
\begin{align*}
	\xiss (x) = \xiff (x) = \xifs (x) = \, & \frac{x}{4} + \frac{x \arcsin x + \sqrt{1 - x^2}}{2\pi}.
\end{align*}
In Figure \ref{fig:well_specified},
we present theoretical predictions for both the Bayes optimal estimator, and the estimator that minimizes the training error within the class of two-stage AMP algorithms considered in Proposition \ref{prop:T0_equiv_def} (ERM). By direct calculation, we know that
\begin{equation*}
	x \mapsto (1 - x) \xiff' (x) = (1 - x) \left( \frac{1}{4} + \frac{1}{2 \pi} \arcsin x \right)
\end{equation*}
is decreasing on $[0, 1]$, which verifies condition $(i)$ in the statement of \cref{lem:single_index_FOC}. Therefore, the optimality condition~\eqref{eq:optimality_single_index} holds. As a consequence, maximizing 
$\VH_{\oalpha}^* \big( r, q_* (r) \big)$ over $(r, 1) \in A_{\sBayes}$ yields the theoretical predictions for ERM. For both Bayes AMP and ERM, we plot their correlations with the true parameter $\ww_*$, as well as the normalized training error
\begin{align}
	\hat{L}_n(\ww) := -\frac{1}{d \sqrt{\alpha\xiss(1)}} 
	\sum_{i=1}^n y_i \sigma(\ww^{\sT}\xx_i)
\end{align}
and the test error normalized in the same way. 

We also compare these theoretical predictions with 
numerical simulations for projected gradient descent (PGD) maximizing the empirical correlation $\hCorr_n (\ww)$ over the unit sphere.
We observe that the behavior of PGD closely matches our predictions for the optimal algorithm.
\begin{figure}[!ht]
    \centering
    \includegraphics[width=0.9\linewidth]{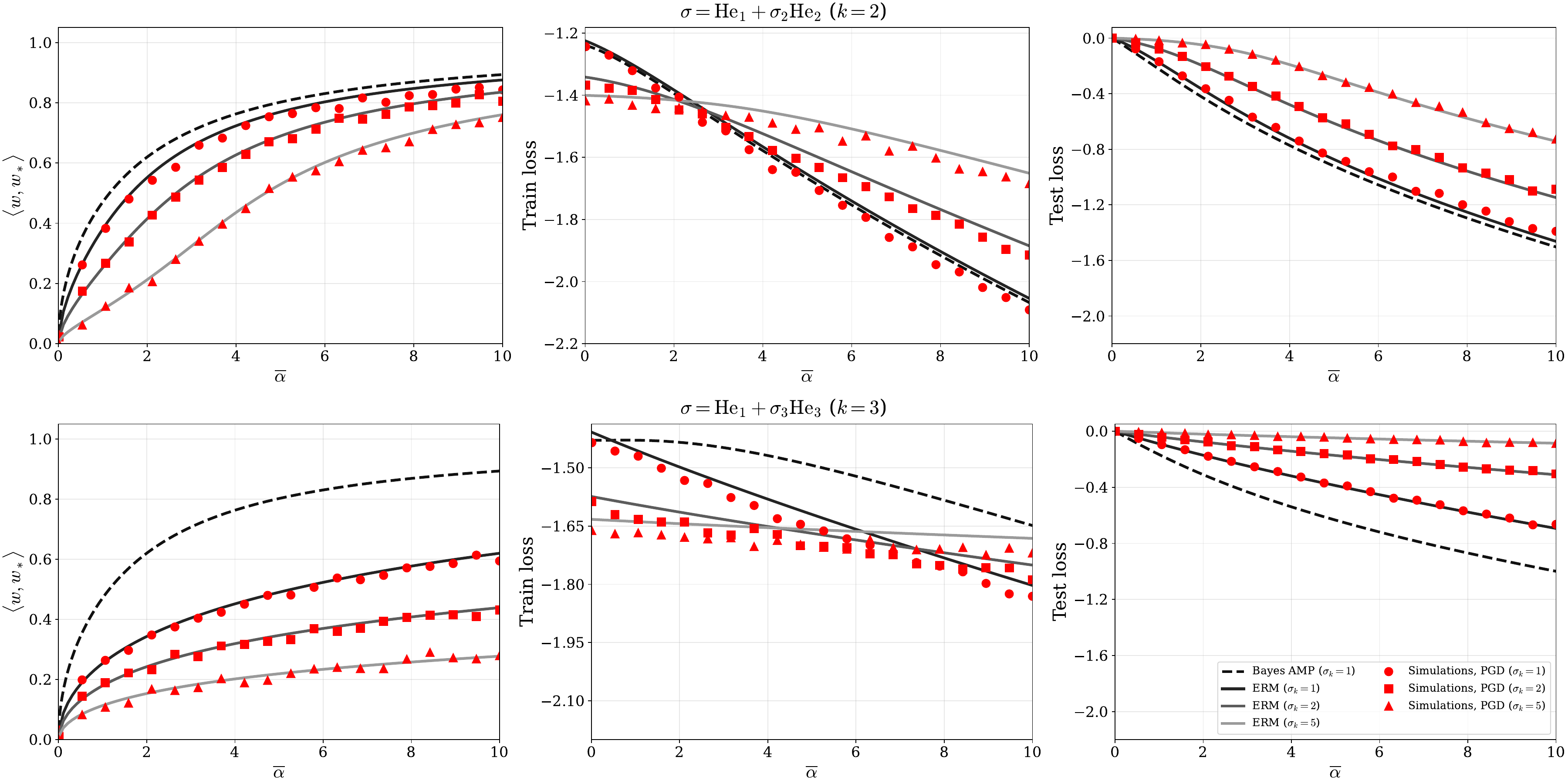}
    \caption{\textbf{Left panel:} The limiting overlap with $\ww_*$ for Bayes AMP, ERM and PGD, under different choices of the activation $\sigma$. \textbf{Right panel:} The normalized limiting empirical and population correlation for these three methods. In this experiment, we fix $\sigma_1 = 1$ and vary $k \in \{ 2, 3 \}$ and $\sigma_k \in \{ 1, 2, 5 \}$. The top and bottom rows present our results for $k=2$ and $k=3$, respectively. The remaining experimental setup is the same as in the well-specified case.}
    \label{fig:misspecified}
\end{figure}
\paragraph{A family of misspecified models.} 
In Figure~\ref{fig:misspecified}, we consider 
a misspecified scenario, 
where the true link function is still ReLU: $\varphi(z) = \max(0, z)$,
 yet the activation function takes the form 
 $\sigma(z) = \sigma_1 \He_1(z) + \sigma_k \He_k(z)$ for $k \ge 2$. Here, $\He_k$ denotes the $k$-th orthonormal Hermite polynomial. In this setting, we have $\xiss(t) = \sigma_1^2 t + \sigma_k^2 t^k$. Consequently, the optimization landscape becomes more complicated as we increase $k$.
Similar to the well-specified setting, maximizing 
$\VH_{\oalpha}^* \big( r, q_* (r) \big)$ over $(r, 1) \in A_{\sBayes}$ 
determines the limiting overlap with $\ww_*$, as well as the empirical 
and population correlation for the optimal ERM algorithm. We compute and plot these 
theoretical values across different choices of $k$ and 
$(\sigma_1, \sigma_k)$, and compare them with Bayes AMP and numerical
 simulations of PGD in Figure~\ref{fig:misspecified}.
 
 As in the well-specified setting, the theoretical predictions
 for the optimal ERM algorithm
 closely match the empirical results with PGD.
 However, the estimation and test errors of these methods are
 significantly larger than 
 those of Bayes AMP, especially when $\oalpha$ is small. 
 Furthermore, their performance deteriorates as we increase 
 $k$ or $\sigma_k$, indicating an increasingly severe overfitting.

%
%
%

\section{Main results (II): Parisi variational principle}
\label{sec:Parisi_formula}

In this section, we develop a variational principle that is dual to 
the stochastic optimal control problem~\eqref{eq:SOC_First_1dim}.
The resulting formula is closely related to the Parisi  
variational principle, which we derived heuristically in Section 
\ref{sec:replica_method} (namely,
the simplified Parisi formula for $m=1$ in \cref{conj:Parisi_formula_1dim}).

We begin with defining two relevant function spaces.
\begin{defn}[Space of functional order parameters]\label{defn:gamma_space}
Define the function spaces $\FS$ and $\FS(q)$
 as per Eqs.~\eqref{eq:FS_def} and~\eqref{eq:FS_q_def}, and 
 further define, for any $q \in [0, 1)$:
 \begin{align}
		\FSG&:= \Big\{\gamma: [0, 1] \to \R_{> 0} \mbox{ {\rm absolutely continuous}}:\, 
  (\mu, c) \in \FS \ \mbox{{\rm for }} \mu = \gamma' / \gamma^2, \, c = 1 / \gamma(1)
  \Big\}\, ,\\
	\FSG (q) &:=  \left\{ \gamma \in \FSG: \, \gamma \vert_{[0, q]} \ \mbox{is constant} \right\}.
\end{align}
Then, we see that $\gamma \in \FSG (q)$ if and only if $(\mu, c) \in\FS (q)$ with $(\mu, c) = (\gamma' / \gamma^2, 1 / \gamma(1))$.
\end{defn}
We now extend the definition of the Parisi functional in \cref{eq:ParisiFunc_1dim_reformulated} (we set $k = 1$ and drop the subscript $1$ for notational simplicity) from $\mathscr{U} \times \R_{> 0}$ to the larger function space $\FS$. For $(\mu, c) \in \FS$ and $r \in [-1, 1]$, define
\begin{equation*}
	\mathsf{F} (\mu, c, r) = \E_{Y, G} \left[ f_{Y, \mu} \left( r^2, r G \right) \right] + \frac{1}{2 \alpha} \int_{r^2}^{1} \frac{\d t}{c + \int_{t}^{1} \mu(u) \d u }\, ,
\end{equation*}
where for fixed $y \in \R$, $f_{y, \mu}$ is the solution to the  partial differential equation~\eqref{eq:ParisiPDE_1dim_reformulated}, which we copy here for the reader's convenience:
\begin{equation}\label{eq:ParisiEq-m1-bis}
\begin{split}
    &\partial_t f_{y, \mu} (t, x)+\frac{1}{2} \mu (t) (\partial_x f_{y, \mu} (t, x))^2 +  \frac{1}{2} \partial_x^2 f_{y, \mu} (t, x) = \, 0, \\
    &f_{y, \mu} (1, x) = \,  \sup_{u \in \R} \left\{ h \left( y, x + u \right) - \frac{u^2}{2c}  \right\}.
\end{split}
\end{equation}
Later in \cref{sec:tech_prelim}, we will establish existence, uniqueness and regularity of solutions to the Parisi PDE, thus ensuring that the Parisi functional is indeed well-defined on $\FS$.

We are now in position to state our main result establishing a Parisi-type variational principle for $\VH_{1, \alpha, \varphi}^{\sAMP} (r, q, h)$, defined via the stochastic optimal control problem~\eqref{eq:SOC_First_1dim}.
\begin{thm}\label{thm:two_stage_strong_duality}
    Assume that for all $y \in \R$, $h (y, \cdot)$ is $C^2$, Lipschitz continuous and bounded from above. Further assume that
    \begin{equation*}
        \E \left[ \norm{\partial_x h(Y, \cdot)}_{\infty}^2 \right] < \infty.
    \end{equation*}
    Then, the following holds.
 \begin{itemize} 
 \item[(a)] \emph{Variational formula.}
 Fix $(r, q) \in A_{\sBayes}$. For any $\gamma \in \FSG (q)$ and $(\mu, c) \in \FS (q)$ satisfying $\mu = \gamma' / \gamma^2$ and $c = 1 / \gamma(1)$, we have:
\begin{align*}
    \mathsf{F} (\mu, c, r) = \, \sup_{\substack{F: \R^2 \to \R \\ \phi \in D[q, 1] }} \E \Bigg[ & h \left( Y, \, Z_{r, q} + \frac{1}{\alpha} F \left( Z_{r, q}, Y \right) + \int_{q}^{1} \left( 1 + \phi_t \right) \d B_t \right) - \frac{1}{2} \int_{q}^{1} \gamma(t) \left( \phi_t^2 - \frac{1}{\alpha} \right) \d t \\
    & - \frac{\gamma(q)}{2 \alpha} \left( \frac{F (Z_{r, q}, Y)^2}{\alpha} - (q - r^2) \right) \Bigg].
\end{align*}
\item [(b)] \emph{Weak duality.} For any $(r, q) \in A_{\sBayes}$, we have
\begin{equation}\label{eq:weak_dual}
    \VH_{1,\alpha,\varphi}^{\sAMP} (r,q,h) \le \, \inf_{(\mu, c) \in \mathscr{L} (q)} \mathsf{F} (\mu, c, r).
\end{equation}
%
\item [(c)] \emph{Strong duality.} For any fixed $q \in [0, 1]$, denote $A_{\sBayes} (q) = \{ r \in [-1, 1]: (r, q) \in A_{\sBayes} \}$. Assume there exists $(\mu_*, c_*) \in \FS(q)$ and $r_* \in A_{\sBayes} (q)$, such that the following holds:
\begin{itemize}
    \item [(i)] The mapping $r \mapsto \inf_{(\mu, c) \in \FS (q)} \mathsf{F} (\mu, c, r)$ is differentiable at $r_*$;
    \item [(ii)] $(\mu_*, c_*, r_*)$ satisfies
    \begin{equation*}
        \mathsf{F} (\mu_*, c_*, r_*) = \inf_{(\mu, c) \in \FS (q)} \mathsf{F} (\mu, c, r_*) = \sup_{r \in A_{\sBayes} (q)} \inf_{(\mu, c) \in \FS (q)} \mathsf{F} (\mu, c, r).
    \end{equation*}
\end{itemize}
Then, there exists a feasible pair $(F^*, \phi^*)$ such that $F^*$ is an $(r_*, q)$-contraction, and
\begin{equation*}
    \E [(\phi_t^*)^2] \le \frac{1}{\alpha}, \ \forall t \in [q, 1].
\end{equation*}
Furthermore,
\begin{equation}\label{eq:strong_duality_final}
    \mathsf{F} (\mu_*, c_*, r_*) = \, \E \left[ h \left( Y, \, Z_{r_*, q} + \frac{1}{\alpha} F^* \left( Z_{r_*, q}, Y \right) + \int_{q}^{1} \left( 1 + \phi_t^* \right) \d B_t \right) \right] = \VH_{1,\alpha,\varphi}^{\sAMP} (r_*, q, h).
\end{equation}
\end{itemize}
\end{thm}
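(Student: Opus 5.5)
We prove part (a) by a dynamic-programming (verification) argument for the Parisi PDE~\eqref{eq:ParisiEq-m1-bis}, in the spirit of the stochastic-control representation used for mean-field spin glasses. The proof has two steps.

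\emph{Step 1: the Auffinger--Chen-type representation on $[q,1]$.} Using the existence, uniqueness and regularity of $f_{y,\mu}$ (established in \cref{sec:tech_prelim}), we show that for every $(t,x)$ with $t\ge q$,
\[
f_{y,\mu}(t,x)=\sup_{\phi\in D[t,1]}\E\Big[h\big(y,x+\textstyle\int_t^1(1+\phi_s)\,\d B_s\big)-\tfrac12\int_t^1\gamma(s)\big(\phi_s^2-\tfrac1\alpha\big)\d s\Big]+\mbox{(explicit constant)}.
\]
The terminal condition's inf-convolution $\sup_u\{h(y,x+u)-u^2/(2c)\}$ is produced by an extra Gaussian increment on a time window of variance $c=1/\gamma(1)$. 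This is why $\gamma(1)$ appears.

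The verification goes as follows. Apply It\^o's formula to $f_{y,\mu}(s,X_s)$ along $\d X_s=(1+\phi_s)\d B_s$. Then complete the square pointwise:
\[
\phi\,\partial_x^2 f\ \text{terms} \;\le\; \tfrac{\gamma}{2}\phi^2+(\cdots).
\]
Using the relation $\mu=\gamma'/\gamma^2$ (equivalently $(1/\gamma)'=-\mu$), the drift matches the PDE, with equality at the optimizer $\phi^*_s=\gamma(s)^{-1}\partial_x^2 f$-type feedback. The $\frac1{2\alpha}\int\gamma$ term collects exactly into the second part of $\mathsf F$, namely $\frac1{2\alpha}\int_{r^2}^1 \d t/(c+\int_t^1\mu)=\frac1{2\alpha}\int\gamma$.

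\emph{Step 2: the segment $[r^2,q]$.} Here $\mu\equiv0$, so $f(r^2,\cdot)$ is the heat-semigroup evolution of $f(q,\cdot)$. We then optimize over $F$. Conditioning on $(Z_{r,q},Y)$ and using the constancy $\gamma\equiv\gamma(q)$ on $[0,q]$, the sup over $F$ becomes a pointwise Hopf--Lax/Legendre step:
\[
\sup_v\Big\{f(q,z+v/\alpha)-\tfrac{\gamma(q)}{2\alpha^2}v^2\Big\}.
\]
The $\frac{\gamma(q)}{2\alpha}(q-r^2)$ term is precisely the integral $\frac1{2\alpha}\int_{r^2}^q\gamma$. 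Reconciling the Legendre step with the heat flow on $[r^2,q]$ is the delicate identity in (a). We would check it by noting that, for constant $\gamma$, the PDE with $\mu=0$ and the quadratic-penalized sup coincide via the Cole--Hopf-free identity $E_{G}$ over the Gaussian part of $Z_{r,q}$. If this fails as stated, we would first double-check that the $F$-term is intended to reproduce the inf-convolution at time $q$.

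Part (b) is immediate from (a). For feasible $(F,\phi)$ we have $\E[\phi_t^2]\le1/\alpha$, and the contraction identity gives $\E F^2/\alpha=q-r^2$. Hence both penalty terms are $\ge0$ in expectation, since $\gamma>0$. So each feasible value is at most $\mathsf F(\mu,c,r)$; then take the infimum over $\FS(q)$.

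Part (c) is the main obstacle. The plan:
\begin{enumerate}
\item Take the minimizer $(\mu_*,c_*)$ and the optimal controls $(F^*,\phi^*)$ from (a).
\item Derive first-order conditions: perturbing $\gamma$ on $[q,1]$ in the variational formula (envelope theorem) gives $\E[(\phi^*_t)^2]\le 1/\alpha$, with equality where $\mu_*>0$. Perturbing $\gamma(q)$ gives $\E F^{*2}/\alpha=q-r_*^2$.
\item Use differentiability (i) and optimality in $r$ (ii) to obtain $\partial_r\mathsf F=0$ at interior $r_*$. Via Gaussian integration by parts in $r$, this should translate into the first contraction constraint $r_*=\E[\partial_G F^*]$, in its Stein form~\eqref{eq:contraction_Stein}.
\item Obtain the contraction inequality $\frac1\alpha\E[(\partial_zF^*)^2]\le1$ from second-order stationarity of the pointwise Legendre sup, i.e.\ concavity of $v\mapsto f(q,z+v/\alpha)-\gamma(q)v^2/(2\alpha^2)$.
\end{enumerate}
With these constraints, the penalties vanish and~\eqref{eq:strong_duality_final} follows, combined with (b).

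The hardest points are justifying the envelope differentiation over the non-compact $\FS(q)$ and handling boundary cases of $r_*$ in $A_{\sBayes}(q)$.
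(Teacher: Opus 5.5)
Your part (b) is fine: it is the paper's argument in slightly different words. The gaps are in (a) and (c).

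\textbf{Part (a).} The identity in your Step~1, $f_{y,\mu}(t,x)=V_\gamma(t,y,x)+\text{const}$ (with $V_\gamma$ the control value \eqref{eq:redefine_value_func}), is false.

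At $t=1$ the control window is empty, so $V_\gamma(1,y,z)=h(y,z)$. But $f_{y,\mu}(1,x)=\sup_u\{h(y,x+u)-u^2/(2c)\}$ is a Moreau envelope, which differs from $h$ by more than a constant whenever $h(y,\cdot)$ is not affine. Martingale controls only add mean-zero perturbations, so they can never create the deterministic shift $u$.

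The correct statement is the Legendre-type duality of \cref{prop:veri_arg_supervised}:
\[
f_{y,\mu}(t,x)=\sup_{z\in\R}\Big\{V_\gamma(t,y,z)-\tfrac{\gamma(t)}{2}(z-x)^2\Big\}-\tfrac{1}{2\alpha}\int_t^1\gamma(s)\,\d s .
\]
So the controlled state is the dual variable $M_t=\gamma(t)^{-1}\partial_xf_{y,\mu}(t,X_t)+X_t$, not $X_t$. This is also why It\^o plus completing the square on $f(s,X_s)$ along $\d X=(1+\phi)\,\d B$ fails: the coefficient of $\phi^2$ is $\tfrac12(\partial_x^2f-\gamma)$, which has no sign.

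Because you place the quadratic sup on $f(q,\cdot)$ rather than on $V_\gamma(q,Y,\cdot)$, you are led to look for an identity between the heat flow on $[r^2,q]$ and a quadratic sup-convolution. No such identity holds, and none is needed. With the correct duality, (a) is immediate:
\begin{itemize}
\item $\mu\equiv0$ on $[r^2,q]$ gives $\E f_{Y,\mu}(r^2,rG)=\E f_{Y,\mu}(q,Z_{r,q})$;
\item apply the display at $t=q$ with $z=Z_{r,q}+F/\alpha$;
\item use $\tfrac1{2\alpha}\int_{r^2}^q\gamma=\tfrac{\gamma(q)}{2\alpha}(q-r^2)$;
\item interchange $\E$ with the sup over $\phi$ that defines $V_\gamma$.
\end{itemize}

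\textbf{Part (c).} Two steps fail, and one is missing.

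First, you take ``the optimal controls from (a)'', but the supremum in (a) need not be attained. Using large $\phi$ on a short window before $t=1$, the controller can add any mean-zero perturbation $U$ at cost $\approx\E U^2/(2c)$. So the value is effectively computed with $h_c=\operatorname{conc}(h-x^2/(2c))+x^2/(2c)\ge h$. The paper instead uses explicit candidates: $F(x,y)=\frac{\alpha}{\gamma(q)}\partial_xf_{y,\mu}(q,x)$, and the $\phi$ from the martingale representation of $M_t$ (formally $\phi_t=\partial_x^2f_{Y,\mu}(t,X_t)/\gamma(t)$). For these, \cref{thm:variation_compute}(i) represents $\mathsf F$ with $h_{c}$, not $h$.

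Closing the gap $\E h_{c_*}(Y,M_1^*)=\E h(Y,M_1^*)$ requires the first-order condition in $c$. Combined with the $\mu$-condition at $t=1$, it gives $\E g_{c_*}(Y,M_1^*)=0$. Your plan never perturbs $c$.

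Second, $\frac1\alpha\E[(\partial_zF^*)^2]\le1$ is an $L^2$ bound. It cannot follow from pointwise concavity of a Legendre step, which gives only one-sided pointwise bounds such as $\partial_x^2f>-\gamma$, i.e.\ $\partial_zF>-\alpha$. It comes instead from global stationarity in $\mu$. Since $\FS(q)$ allows perturbations of either sign,
\[
\E\big[(\partial_xf_{Y,\mu_*}(t,X^*_t))^2\big]=\tfrac1\alpha\int_{r_*^2}^t\gamma_*^2 \quad\text{for all } t\in[q,1].
\]
Hence $\E(\phi^*_t)^2=1/\alpha$ on all of $[q,1]$, not only where $\mu_*>0$. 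Since formally $\partial_zF^*(X_q^*,Y)=\alpha\phi^*_q$, this gives the inequality; the paper defers the details to \cite[Section 8.2]{montanari2024exceptional}.

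Your step 3, $\partial_r\mathsf F=0$ yielding the Stein form of $r_*=\E[\partial_GF^*]$, does match the paper.
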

The proof of Theorem~\ref{thm:two_stage_strong_duality} is given in Section~\ref{sec:proof_general_q}, with certain technical details deferred to \cref{sec:proof_variation}. Note that, under the conditions of point $(c)$, we have the dual characterization 
\begin{align*}
    \VH_{1,\alpha,\varphi}^{\sAMP} (r_*, q, h) = \sup_{r \in A_{\sBayes} (q)} \inf_{(\mu, c) \in \FS (q)} \mathsf{F} (\mu, c, r)\, .
\end{align*}
Namely, the optimal value achievable by our two-stage AMP algorithm is characterized by the Parisi variational principle on an extended function space.

%
%

\section{Two-stage AMP algorithm: Proof of Theorem \ref{thm:AMP_Feasibility_general}}\label{sec:iamp}
This section will be devoted to establishing our general two-stage AMP algorithm and the proof of Theorem~\ref{thm:AMP_Feasibility_general}, with proofs of auxiliary results deferred to Appendix~\ref{sec:proof_iamp}.

Recall from \cref{ass:linear_signal} that $\{ (\xx_i, y_i) \}_{i \in [n]}$ are i.i.d. pairs satisfying
\begin{equation*}
	\xx_i \sim \normal( \bzero, \id_d), \quad y_i = \varphi (\WW_*^\sT \xx_i, \veps_i), \,\, \veps_i \sim P_{\veps}.
\end{equation*}
Using vector notation, we will write $\yy = \varphi (\XX \WW_*, \vveps)$ occasionally, 
where $\varphi$ is understood to act on its arguments row-wise.
We further note that there is no real loss of generality
in assuming $\WW_* \sim\Unif( O(d, k))$, the uniform distribution (Haar measure) on the Stiefel manifold $O(d, k)$ (see \cref{lem:Reduction_Signal} for the precise statement). Indeed, by rotation invariance
of the covariates $\xx_i$, $\Unif( O(d, k))$ is the least favorable prior distribution. 
Under this simplification, we will assume that the empirical distribution of the rows of 
$\sqrt{n} \WW_*$ converges in $W_2$ distance to $\normal(0, \alpha I_k)$ as 
$n, d \to \infty$, $n/d \to \alpha$.


\subsection{Approximate message passing}
Following \cite{bayati2011dynamics, javanmard2013state, celentano2020estimation}, we define the general AMP algorithm as an iterative procedure that generates two sequences of matrices $\{ \WW^t \}_{t \ge 1} \subset \R^{d \times m}$ and $\{ \VV^t \}_{t \ge 1} \subset \R^{n \times m}$ according to:
\begin{equation}\label{eq:amp_iter}
\begin{split}
    \WW^{t + 1} & = \frac{1}{\sqrt{n}} \XX^\sT F_t (\VV^{\le t}, \yy) - \sum_{s=1}^{t} G_s (\WW^{\le s}) K_{t, s}^\sT, \\
	\VV^{t} & = \frac{1}{\sqrt{n}} \XX G_t(\WW^{\le t}) - \sum_{s=1}^{t} F_{s-1} (\VV^{\le s-1}, \yy) D_{t, s}^\sT,
\end{split}
\end{equation}
where $\WW^{1} = \XX^\sT F_0 (\yy) / \sqrt{n}$, and $\{ F_t: \R^{mt + 1} \to \R^m \}_{t \ge 0}$ and $\{ G_t: \R^{mt} \to \R^m \}_{t \ge 1}$ are two sequences of Lipschitz functions. Moreover, we let $\WW^{\le t} = (\WW^{s})_{1 \le s \le t}$, $\VV^{\le t} = (\VV^{s})_{1 \le s \le t}$, and adopt the convention that the Lipschitz functions $G_t$ and $F_t$ apply row-wise to their arguments. The $m \times m$ matrices $D_{t, s}$ and $K_{t, s}$ are defined as
\begin{equation}\label{eq:onsager_terms}
	D_{t, s} = \frac{1}{n} \sum_{i=1}^{d} \frac{\partial G_t}{\partial \ww_i^s} (\ww_i^1, \cdots, \ww_i^t), \ \,\, K_{t, s} = \frac{1}{n} \sum_{i=1}^{n} \frac{\partial F_t}{\partial \vv_i^s} (\vv_i^1, \cdots, \vv_i^t, y_i),
\end{equation}
where $\ww_i^s$ is the $i$-th row of $\WW^s$ and $\vv_i^s$ is the $i$-th row of $\VV^s$, respectively.
\begin{rem}
We will refer to $D_{t, s}$ and $K_{t, s}$ in \cref{eq:onsager_terms} as ``Onsager coefficients".
	The population versions of these coefficients are used in some of the earlier
 literature, where the empirical average over $i$ is replaced by an expectation over the
 asymptotic distributions of the $\ww^t_i$'s and $\vv^t_i$'s. 
 By an induction argument in \cite{javanmard2013state}, the high-dimensional asymptotics of
 these two versions of the general AMP algorithm are the same. 
\end{rem}

As $n, d \to \infty$ and $n/d \to \alpha$, for any fixed $t \in \mathbb{N}$, the limiting joint distribution of the first $t$ AMP iterates is exactly characterized by the following proposition.
\begin{prop}[State evolution of AMP]\label{prop:state_evolution}
	Let $Y \in \R$ and $\overline{Z}_0, V \in \R^k$ be row vectors. Denote $\overline{Z}_{\le t} := (\overline{Z}_1, \cdots, \overline{Z}_t) \in \R^{mt}$ and $Z_{\le t} := (Z_1, \cdots, Z_t) \in \R^{mt}$, where each $Z_t, \overline{Z}_t \in \R^m$ is a row vector. Then, the distributions of the random row vectors $(\overline{Z}_0, \overline{Z}_{\le t}, Y) \in \R^{mt + k + 1}$ and $(Z_{\le t}, V) \in \R^{mt + k}$ are defined as follows: (i) $V \sim \normal (0, \alpha I_k)$ is independent of $(Z_i)_{i \ge 1}$, and $\veps \sim P_{\veps}$ is independent of $(\overline{Z}_i)_{i \ge 0}$, (ii) both $(\overline{Z}_0, \overline{Z}_{\le t} )$ and $Z_{\leq t}$ are multivariate Gaussian with zero mean, and their covariance structures are specified via the following recursion:
\begin{equation}\label{eq:covariance}
\begin{split}
    & \E \left[ \bar{Z}_i^\sT \bar{Z}_j \right] = \frac{1}{\alpha} \mathbb{E}\left[G_i \left( V R_{\leq i} + Z_{\leq i} \right)^\sT G_j \left( V R_{\leq j} + Z_{\leq j} \right)\right], \quad i, j \geq 1, \\
    & \E \left[ \bar{Z}_i^\sT \bar{Z}_0 \right] = \frac{1}{\alpha} \mathbb{E}\left[G_i \left( V R_{\leq i} + Z_{\leq i} \right)^\sT V \right], \quad \E \left[ \bar{Z}_0^\sT \bar{Z}_0 \right] = \frac{1}{\alpha} \mathbb{E}\left[ V^\sT V \right] = I_k, \quad i \geq 1, \\
    & \E \left[ Z_i^\sT Z_j \right] =\mathbb{E}\left[F_{i-1}\left(\overline{Z}_{\leq i-1} , Y\right)^\sT F_{j-1}\left(\overline{Z}_{\leq j-1} , Y \right)\right], \quad i, j \geq 1, \\
    & Y = \varphi \left( \bar{Z}_0, \veps \right), \ R_{t+1}=\mathbb{E}\left[ \frac{\partial F_t}{\partial \bar{z}_0} \left(\overline{Z}_{\leq t} , \varphi \left( \bar{Z}_0, \veps \right) \right)\right] \in \R^{k \times m}, \quad t \ge 0.
\end{split}
\end{equation}
	In the above display, we denote $R_{\le t} = (R_1, \cdots, R_t) \in \R^{k \times mt}$ for all $t \ge 1$, so that $V R_{\le t} + Z_{\le t} = (V R_1 + Z_1, \cdots, V R_t + Z_t) \in \R^{mt}$, and in the definition of $R_{t+1}$, $\partial F_t / \partial \overline{z}_0$ represents the Jacobian of $F_t$ with respect to $\overline{Z}_0 \in \R^{k}$. Under this specification, we have
	\begin{equation*}
		\plim_{n \to \infty} D_{t, s} = \frac{1}{\alpha} \E \left[ \frac{\partial G_t}{\partial w^s} \left( V R_{\le t} + Z_{\le t} \right) \right], \ \plim_{n \to \infty} K_{t, s} = \E \left[ \frac{\partial F_t}{\partial v^s} \left( \bar{Z}_{\le t}, Y \right) \right].
	\end{equation*}
	Furthermore, denoting by $\vv_i \in \R^k$ the $i$-th row of $\sqrt{n} \WW_* \in \R^{d \times k}$ for $i \in [d]$, we have for any pseudo-Lipschitz functions $\psi_1: \R^{mt + k} \to \R$ and $\psi_2: \R^{mt + 1} \to \R$:
\begin{equation}\label{eq:state_evolution}
\begin{split}   
    \lim_{n \to \infty} \frac{1}{d} \sum_{i=1}^{d} \psi_1 \left( \ww_i^1, \cdots, \ww_i^t, \vv_i \right) & = \E \left[ \psi_1 \left( V R_{\le t} + Z_{\le t}, V \right) \right], \\
    \lim_{n \to \infty} \frac{1}{n} \sum_{i=1}^{n} \psi_2 \left( \vv_i^1, \cdots, \vv_i^t, y_i \right) & = \E \left[ \psi_2 \left( \bar{Z}_{\le t}, Y \right) \right],
\end{split}
\end{equation}
almost surely as $n, d \to \infty$ and $n/d \to \alpha$.
\end{prop}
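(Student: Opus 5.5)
The plan is to reduce the statement to the standard state evolution theorem for general AMP with non-separable, matrix-valued and multi-step nonlinearities, as in \cite{javanmard2013state}. The twist is that the observation vector $\yy=\varphi(\XX\WW_*,\vveps)$ depends on $\XX$. We handle this by the usual augmentation trick. We first use \cref{lem:Reduction_Signal} to take $\WW_*\sim\Unif(O(d,k))$, so that the rows $\vv_i$ of $\sqrt n\WW_*$ have empirical law converging in $W_2$ to $\normal(0,\alpha I_k)$. We then view $\XX\WW_*$ as an extra ``iterate'' generated at time zero. Concretely, define an augmented AMP on $(d\times(m+k))$ and $(n\times(m+k))$ matrices, where the extra $k$ columns carry $\sqrt n\WW_*$ on the $d$-side and $\bar{\ZZ}_0:=\XX\WW_*$ on the $n$-side. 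Then $\bar{\ZZ}_0=\frac{1}{\sqrt n}\XX(\sqrt n\WW_*)$ is exactly a first half-step of an AMP with $G_0(\cdot)=$ identity on the $\vv$-block, and it needs no Onsager correction because no prior $F$ exists. Every later $F_t(\VV^{\le t},\yy)=F_t(\VV^{\le t},\varphi(\bar{\ZZ}_0,\vveps))$ becomes a row-wise function of the augmented iterates together with independent side information $\vveps$. We smooth $\varphi$ if needed and use a density argument to reduce to Lipschitz dependence.

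Next we apply the general AMP state evolution, with independent side information and arbitrary memory, to this augmented recursion. It gives jointly Gaussian limits on each side. On the $d$-side the limit is $(V,\tilde Z_{\le t})$ with $V\sim\normal(0,\alpha I_k)$. On the $n$-side it is $(\bar Z_0,\bar Z_{\le t})$ with $\E[\bar Z_0^\sT\bar Z_0]=\alpha^{-1}\E[V^\sT V]=I_k$. The covariances are given by the usual second-moment formulas, i.e.\ the first three lines of \eqref{eq:covariance}.

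There is one remaining discrepancy. In the augmented AMP, the Onsager term for $\WW^{t+1}$ includes a contribution from the derivative of $F_t$ in $\bar z_0$ (through $y$), multiplied by the $\WW_*$ column. Our actual algorithm \eqref{eq:amp_iter} does not subtract this term. The omitted term is $\sqrt n\,\WW_* R_{t+1}$ with $R_{t+1}=\E[\partial F_t/\partial\bar z_0]$, which we identify via Stein's lemma. So the true $\WW^{t+1}$ equals the augmented (centered) iterate plus $\sqrt n\WW_*R_{t+1}$. Its rows therefore behave like $VR_{t+1}+Z_{t+1}$, which explains the $VR_{\le t}+Z_{\le t}$ appearing in \eqref{eq:state_evolution}. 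We then re-express the $G_s$ nonlinearities as functions of the centered iterates composed with the shift by $VR$. These remain Lipschitz, so the augmented recursion is a legitimate AMP, and the $D_{t,s}$ Onsager terms match the stated limits.

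The limits of $D_{t,s},K_{t,s}$ follow from \eqref{eq:state_evolution}, applied to the derivatives. Where these are only almost-everywhere continuous, we approximate by smooth functions as in \cite{javanmard2013state}, using the remark that empirical and population Onsager coefficients give the same asymptotics. The main obstacle is exactly this bookkeeping in the previous paragraph. We must verify, inductively in $t$, that the un-subtracted signal component is deterministic in the limit, equal to $R_{t+1}$, and that it does not corrupt the Gaussianity of the residual. We would proceed by induction, with a conditioning argument on the $\sigma$-algebra generated by previous iterates together with $\XX\WW_*$.
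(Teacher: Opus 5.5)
Your proposal is correct and takes essentially the route the paper intends: the paper gives no proof beyond deducing the statement from the general state evolution results of \cite{javanmard2013state, celentano2020estimation, montanari2022statistically}, and your augmentation is the standard reduction behind those references. That reduction treats $\XX\WW_*$ as a time-zero iterate, uses $\vveps$ as independent side information, and absorbs the un-subtracted Onsager term $\sqrt{n}\,\WW_* R_{t+1}$ as the signal component that yields $VR_{\le t}+Z_{\le t}$. The bookkeeping you flag does not need a new conditioning argument: once state evolution holds for the augmented recursion with population coefficients, the empirical-versus-population discrepancies and the smoothing of $\varphi$ follow from a Lipschitz and operator-norm perturbation argument.
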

\begin{proof}
	This can be deduced from the results in \cite{javanmard2013state, celentano2020estimation, montanari2022statistically}.
\end{proof}

\begin{rem}\label{rmk:Randomness}
As mentioned in the introduction, we allow our two-stage AMP algorithm to be randomized. Within the framework of this section, randomization can be implemented by letting 
the functions $F_t$ (or $G_t$) depend on some additional randomness. For example, one can replace
$F_t(\vv_i^1,\dots,\vv^t_i,y_i)$ by 
$F_t(\vv_i^1,\dots,\vv^t_i,y_i,\omega_i)$ with $(\omega_i)_{i\ge 1}\sim_{\iid} \Unif [0,1]$, the uniform distribution on $[0, 1]$.
For simplicity of notation, we will leave this dependence implicit.
Expectations in the state evolution equations are understood to be taken with respect to
these random variables as well.
\end{rem}
\begin{cor}
	As $n, d \to \infty$, $n / d \to \alpha$, the empirical distribution of $(\vv_i^1, \cdots, \vv_i^t, y_i)_{1 \le i \le n}$ almost surely weakly converges to the law of $( \bar{Z}_{\le t}, Y )$. Similarly, the empirical distribution of $(\ww_i^1, \cdots, \ww_i^t, \vv_i)_{1 \le i \le d}$ almost surely weakly converges to the law of $( V R_{\le t} + Z_{\le t}, V )$.
\end{cor}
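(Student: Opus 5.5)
This follows from \cref{prop:state_evolution} together with the standard fact that almost sure convergence of empirical averages for a suitable countable class of test functions implies almost sure weak convergence of the empirical measures. I treat the $\VV$-side statement; the $\WW$-side is identical, with $(\ww_i^1,\dots,\ww_i^t,\vv_i)_{i\le d}$ and the limit $\Law(VR_{\le t}+Z_{\le t},V)$ in place of $(\vv_i^1,\dots,\vv_i^t,y_i)_{i\le n}$ and $\Law(\bar Z_{\le t},Y)$.

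\textbf{Countable convergence-determining class.} Let $p=mt+1$ and fix a countable family $\{\psi^{(j)}\}_{j\ge1}\subset C_c^\infty(\R^p)$ that is dense in $C_c(\R^p)$ for the sup norm. For instance, take finite rational combinations of smooth bumps centered at rational points with rational radii, or use the Stone--Weierstrass theorem on growing balls. Each $\psi^{(j)}$ is Lipschitz and bounded, hence pseudo-Lipschitz. \cref{prop:state_evolution}, applied with $\psi_2=\psi^{(j)}$, gives an event $\Omega_j$ of probability one on which
\[
\frac1n\sum_{i=1}^n\psi^{(j)}(\vv_i^1,\dots,\vv_i^t,y_i)\to\E\big[\psi^{(j)}(\bar Z_{\le t},Y)\big].
\]
Add one more event for the function $\psi(x)=\|x\|_2^2$, which is also pseudo-Lipschitz. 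On that event the second moments of the empirical measures converge to $\E\|(\bar Z_{\le t},Y)\|_2^2<\infty$; this is finite because $Y$ is sub-Gaussian and $\bar Z$ is Gaussian. Intersect these countably many events to obtain a single event $\Omega_0$ of probability one.

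\textbf{Tightness and identification.} On $\Omega_0$ the empirical measures $\nu_n$ have uniformly bounded second moments. By Markov's inequality they are therefore tight. Every subsequence then has a further subsequence converging weakly to some probability measure $\nu$; tightness ensures no mass escapes, so $\nu$ is a probability measure. Along that subsequence $\int\psi^{(j)}\,d\nu_n\to\int\psi^{(j)}\,d\nu$ for every $j$, so $\int\psi^{(j)}\,d\nu=\E[\psi^{(j)}(\bar Z_{\le t},Y)]$. By density this identity extends to all of $C_c(\R^p)$, and a Radon measure is determined by its integrals against $C_c$. Hence $\nu=\Law(\bar Z_{\le t},Y)$. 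Since every subsequential limit is the same, the whole sequence converges weakly almost surely.

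\textbf{Main obstacle.} The only real subtlety is that the almost sure event in \cref{prop:state_evolution} may depend on the test function, so one cannot simply quantify over all bounded continuous functions. This is handled by the countable dense class together with tightness, and the tightness comes from applying the proposition to the quadratic test function rather than from any separate moment bound. The randomized version of the algorithm (\cref{rmk:Randomness}) needs no change, since the auxiliary variables are absorbed into the expectations in the state evolution.
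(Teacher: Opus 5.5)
Your proof is correct and takes essentially the same route as the paper, which defers to the proof of Corollary 4.2 in \cite{montanari2024exceptional}: \cref{prop:state_evolution} gives almost-sure convergence for each fixed pseudo-Lipschitz test function, and intersecting over a countable convergence-determining class upgrades this to almost-sure weak convergence. The quadratic test function is harmless but not needed: vague convergence of probability measures to a probability measure, which your dense countable subset of $C_c$ already gives, forces tightness by itself.
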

\begin{proof}
Identical to the proof of \cite[Corollary 4.2]{montanari2024exceptional}.
\end{proof}

\subsection{First stage: Fixed-point AMP}\label{sec:fixed_pt_amp}
In this section, we present the first stage of our general two-stage AMP algorithm, which consists of several vanilla AMP iterations. The AMP iterations in this stage eventually converge to a fixed point of the state evolution equations~\eqref{eq:covariance}, which will be the starting point of the pure incremental part in the second stage.

We now specify our choices of $F_t$ and $G_t$. For this stage, we let $F_t$ be a function that only depends on $(\VV^t, \yy)$, and let $G_t$ be the identity mapping of $\WW^t$. Consequently, $D_{t} = (d/n) I_m$ and the AMP iterations reduce to
\begin{equation}\label{eq:amp_iter_stage_1}
\begin{split}
    \WW^{t + 1} & = \frac{1}{\sqrt{n}} \XX^\sT F_t (\VV^{t}, \yy) - \WW^{t} K_{t}^\sT, \\
	\VV^{t} & = \frac{1}{\sqrt{n}} \XX \WW^{t} - \frac{d}{n}  F_{t-1} (\VV^{t-1}, \yy) ,
\end{split}
\end{equation}
where we still have $\WW^{1} = \XX^\sT F_0 (\yy) / \sqrt{n}$, and
\begin{equation}\label{eq:onsager_terms_stage_1}
	 K_{t} = \frac{1}{n} \sum_{i=1}^{n} \frac{\partial F_t}{\partial \vv_i^t} (\vv_i^t, y_i)
\end{equation}
are the Onsager terms. With this simple choice, the state evolution equations~\eqref{eq:covariance} reduce to
\begin{equation}\label{eq:covariance_stage_1}
\begin{split}
& \E \left[ \bar{Z}_i^\sT \bar{Z}_j \right] =  R_i^\sT R_j + \frac{1}{\alpha} \E \left[ Z_i^\sT Z_j \right], \quad i, j \geq 1, \\
& \E \left[ \bar{Z}_i^\sT \bar{Z}_0 \right] = R_i^\sT, \quad \E \left[ \bar{Z}_0^\sT \bar{Z}_0 \right] = I_k, \quad i \geq 1, \\
& \E \left[ Z_1^\sT Z_1 \right] =\mathbb{E}\left[F_0 \left( Y \right)^\sT F_0 \left( Y \right)\right], \\
& \E \left[ Z_i^\sT Z_1 \right] =\mathbb{E}\left[F_{i-1} \left(\overline{Z}_{i-1} , Y\right)^\sT F_0 \left( Y \right)\right], \quad i \geq 2, \\
& \E \left[ Z_i^\sT Z_j \right] =\mathbb{E}\left[F_{i-1} \left(\overline{Z}_{i-1} , Y\right)^\sT F_{j-1} \left(\overline{Z}_{j-1} , Y \right)\right], \quad i, j \geq 2, \\
& Y = \varphi \left( \bar{Z}_0, \veps \right), \ R_{1}=\mathbb{E}\left[ \frac{\partial F_0}{\partial \bar{z}_0} \left( \varphi \left( \bar{Z}_0, \veps \right) \right)\right] \in \R^{k \times m}, \\
& R_{t+1}=\mathbb{E}\left[ \frac{\partial F_t}{\partial \bar{z}_0} \left(\overline{Z}_{t} , \varphi \left( \bar{Z}_0, \veps \right) \right)\right] \in \R^{k \times m}, \quad t \ge 1.
\end{split}
\end{equation}

In order to specify our choices of the Lipschitz functions $\{ F_t \}$ in this stage, we first define two crucial quantities that track the state evolution of AMP:
\begin{equation*}
	R_t = \E \left[ \bar{Z}_0^\sT \bar{Z}_t \right] \in \R^{k \times m}, \quad Q_t = \E \left[ \bar{Z}_t^\sT \bar{Z}_t \right] \in \Sym_+^m.
\end{equation*}
Recall the Bayes AMP feasible region from \cref{defn:Bayes_AMP_region_general}:
\begin{equation*}
	A_{\sBayes} = \, \left\{ (R, Q) \in \R^{k \times m} \times \Sym_{+}^{m}: \, R^\sT R \preceq Q \preceq I_m, \, R (Q - R^\sT R)^{-1} R^\sT \preceq C_{\sBayes} C_{\sBayes}^\sT \right\}.
\end{equation*}
The proposition below establishes the connection between $A_{\sBayes}$ and the collection of $(R_t, Q_t)$'s achievable via vanilla AMP iterations.
\begin{prop}\label{prop:Bayes_feasible_region}
	Under state evolution~\eqref{eq:covariance_stage_1}, the following hold:
	\begin{itemize}
		\item [(a)] For all $t \in \mathbb{N}$, $(R_t, Q_t) \in \R^{k \times m} \times \Sym_+^m$ satisfies
			\begin{equation*}
				R_t \big( Q_t - R_t^\sT R_t \big)^{-1} R_t^\sT \preceq C_{\sBayes} C_{\sBayes}^\sT.
			\end{equation*}
			Further, $(R_t, Q_t) \in A_{\sBayes}$ if $Q_t \preceq I_m$.
		\item [(b)] For any $(R, Q) \in \operatorname{int} A_{\sBayes}$, there exists $T_{11} \in \mathbb{N}$ and Lipschitz functions $\{ F_t \}_{t \le T_{11} - 1}$, such that $(R_{T_{11}}, Q_{T_{11}}) = (R, Q)$.
	\end{itemize}
\end{prop}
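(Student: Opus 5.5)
Part (a) is a Bayes-optimality comparison and part (b) is a direct construction. For (a), fix $t$ and use the state evolution \eqref{eq:covariance_stage_1}. Write $\bar Z_t = \bar Z_0 R_t + \tilde Z_t$ with $\tilde Z_t$ Gaussian, independent of $\bar Z_0$ (hence of $(G,Y)$ with $G=\bar Z_0$), and of covariance $Q_t - R_t^\sT R_t$. Since $R_t$ is a $k\times m$ matrix, the quantity $R_t(Q_t-R_t^\sT R_t)^{-1}R_t^\sT$ is the effective signal-to-noise matrix of the Gaussian channel "observe $G R_t + \tilde Z_t$" after whitening. The idea is that AMP information about $G$, measured in this matrix sense, cannot exceed what the Bayes AMP recursion $\C_{t+1}=\SEmap(\C_t)$ from \eqref{eq:Bayes_recursion_general} produces. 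I would prove by induction on $t$ that
\[
R_t (Q_t - R_t^\sT R_t)^{-1} R_t^\sT \preceq \C_{t},
\]
with $\C_t\preceq \C_*=C_{\sBayes}C_{\sBayes}^\sT$ (when $\cuC_{\sBayes}=\emptyset$, $\C_*=\infty$ and the claim is vacuous).

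For the induction step, let $\C$ be the SNR matrix of $\bar Z_{t}$. Since $F_t$ depends only on $(\bar Z_t,Y)$, $F_t$ is a function of the channel output $\bar Z_t$ and $Y$. Stein's lemma, as in \cref{rem:weak_diff_contraction}, gives
\[
R_{t+1}=\E\big[(G-\text{proj})\,F_t^\sT\big],
\]
where $G-\text{proj}$ is the component of $G$ orthogonal to the observation. Apply Cauchy--Schwarz in matrix form: for any $a\in\R^k$,
\[
\big(a^\sT R_{t+1} M\big)^2 \le \E\big[(a^\sT \E[G-\cdots\mid \bar Z_t,Y])^2\big]\,\E\big[(F_t M)^2\big].
\]
Here one uses that conditioning on $(\bar Z_t,Y)$ does not change the correlation with $F_t$. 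Using $Q_{t+1}-R_{t+1}^\sT R_{t+1}=\alpha^{-1}\E[F_t^\sT F_t]$, this yields
\[
R_{t+1}(Q_{t+1}-R_{t+1}^\sT R_{t+1})^{-1}R_{t+1}^\sT\preceq \SEmap(\C).
\]
Then monotonicity of $\SEmap$ (\cref{lem:property_recursion_general}) together with $\C\preceq\C_t$ closes the induction. The base case uses $F_0(Y)$, which has SNR $0=\C_0$ as input. The statement that $(R_t,Q_t)\in A_{\sBayes}$ when $Q_t\preceq I_m$ is then immediate, since $R_t^\sT R_t\preceq Q_t$ holds as a covariance inequality.

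For (b), take $(R,Q)\in\operatorname{int}A_{\sBayes}$, so $R(Q-R^\sT R)^{-1}R^\sT \prec \C_*$ strictly. Then for some finite $T$, $\C_T$ already dominates it, since $\C_t\uparrow\C_*$. I would run Bayes-optimal AMP, with $F_t$ equal to the posterior-mean denoisers (Lipschitz after slight smoothing, with a small loss absorbed by the strict margin), for $T$ steps. This gives an iterate with SNR $\succeq R(Q-R^\sT R)^{-1}R^\sT$. In a final step I would use a linear-plus-noise choice of $F$, possibly taking the form $F=\text{(optimal)}A+$ an independent Gaussian $\omega$ (randomization, \cref{rmk:Randomness}) or combining with $F_0$-type components. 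The aim is to degrade the SNR exactly to the target and rescale so that both $R_{T_{11}}=R$ and $Q_{T_{11}}-R^\sT R=\alpha^{-1}\E[F^\sT F]$ match. The degrees of freedom are a $k\times m$ mixing matrix, an $m\times m$ scale, and the added noise covariance, which suffice to hit any $(R,Q)$ whose SNR is below the achieved one.

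I expect the main obstacle to be the matrix Cauchy--Schwarz step in (a) in the genuinely multivariate case $k,m>1$ with rank-deficient $Q-R^\sT R$. The inverse must be interpreted as a pseudo-inverse, and the bound has to be formulated as a Loewner inequality valid uniformly over test directions. I would handle this by proving the scalar inequality for each pair $(a,M)$ and optimizing over them.
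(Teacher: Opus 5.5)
Your proposal is correct and follows essentially the paper's proof. Stein's lemma plus conditioning gives $R_{t+1}=\E\big[\E[H_t\mid\bar Z_t,Y]^\sT F_t\big]$ with $H_t^\sT=G-M_tZ$ and $M_t=R_t(Q_t-R_t^\sT R_t)^{-1/2}$. Matrix Cauchy--Schwarz (the paper's Lemma~\ref{lem:matrix_cauchy}, whose reverse inclusion is precisely your ``posterior mean times a matrix plus independent noise'' construction) then bounds the new SNR by $\alpha g_{\sBayes}(M_t)=\SEmap(M_tM_t^\sT)$ and shows the bound is attained, and induction with monotonicity yields both (a) and (b).

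One detail needs fixing: the vector in Stein's identity is not the bare residual of $G$ after regressing on $\bar Z_t$, but that residual premultiplied by $(I_k-R_tQ_t^{-1}R_t^\sT)^{-1}$. This normalization is what turns it into $G-M_tZ$ and lets you identify the Cauchy--Schwarz bound with $\SEmap$.
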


Now for any $(R, Q) \in \operatorname{int} A_{\sBayes}$, we can find some $T_{11} \in \mathbb{N}$ and $\{ F_t \}_{t \le T_{11} - 1}$ such that $(R_{T_{11}}, Q_{T_{11}}) = (R, Q)$. Let $F$ be an $(R, Q)$-contraction as defined in \cref{defn:mu_Q_contraction}. We set $F_t = F$ for all $T_{11} \le t \le T_{1} - 1$, where $T_{1} \in \mathbb{N}$ is a sufficiently large integer to be determined.
\begin{lem}\label{lem:cov_mu_Q}
	Denote $P_t = \E [ \bar{Z}_{t+1}^\sT \bar{Z}_t ]$. Then for all $T_{11} + 1 \le t \le T_{1}$, we have $(R_t, Q_t) = (R, Q)$, and $\lim_{T_{1} \to \infty} P_{T_{1}-1} = Q$.
\end{lem}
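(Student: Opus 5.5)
We argue by induction on $t$, using the stage-one state evolution \eqref{eq:covariance_stage_1} together with the two defining identities \eqref{eq:R-Constraint} of an $(R,Q)$-contraction.

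\textbf{Invariance of $(R_t,Q_t)$.} Suppose $(R_t,Q_t)=(R,Q)$ for some $T_{11}\le t\le T_1-1$; the base case $t=T_{11}$ holds by the choice of $\{F_s\}_{s\le T_{11}-1}$. The vector $(\bar Z_0,\bar Z_t)$ is jointly Gaussian with $\E[\bar Z_0^\sT\bar Z_t]=R$ and $\E[\bar Z_t^\sT\bar Z_t]=Q$, so $(\bar Z_t,\bar Z_0)\stackrel{d}{=}(Z_{R,Q},G)$. By \eqref{eq:covariance_stage_1}, $R_{t+1}=\E[\partial F/\partial\bar z_0(\bar Z_t,\varphi(\bar Z_0,\veps))]$, which equals $R$ by the first identity in \eqref{eq:R-Constraint}. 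Next, $Q_{t+1}=R_{t+1}^\sT R_{t+1}+\alpha^{-1}\E[Z_{t+1}^\sT Z_{t+1}]$ with $\E[Z_{t+1}^\sT Z_{t+1}]=\E[F(\bar Z_t,Y)^\sT F(\bar Z_t,Y)]$. By the second identity in \eqref{eq:R-Constraint}, this gives $Q_{t+1}=Q$. This completes the induction.

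\textbf{Convergence of the cross-covariance.} For $t\ge T_{11}+1$ we have $P_t=R^\sT R+\alpha^{-1}\E[F(\bar Z_t,Y)^\sT F(\bar Z_{t-1},Y)]$. Set $\Delta_t:=Q-P_t$. Writing $X_t:=F(\bar Z_t,Y)$, we get $\Delta_t=\frac{1}{2\alpha}\E[(X_t-X_{t-1})^\sT(X_t-X_{t-1})]$, since $\E[X_t^\sT X_t]=\E[X_{t-1}^\sT X_{t-1}]=\alpha(Q-R^\sT R)$. Note that $(\bar Z_t,\bar Z_{t-1})$ share the same $\bar Z_0$-component $R^\sT\bar Z_0$, so their difference is $\bar Z_t-\bar Z_{t-1}$, which is Gaussian, independent of $(\bar Z_0,\veps)$, and has covariance $2\Delta_{t-1}$.

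\textbf{Main obstacle: the contraction step.} We need a Jacobian bound of the form
\[
\frac{1}{2\alpha}\E\big[(X_t-X_{t-1})^\sT(X_t-X_{t-1})\big]\preceq \frac{1}{\alpha}\E\big[J^\sT\,\cdot\,J\big]\text{ applied to }\Delta_{t-1},
\]
with $J=\partial F/\partial Z_{R,Q}$. Rather than a Taylor bound, we would use the Gaussian interpolation (Hermite/Ornstein–Uhlenbeck) representation: the covariance $\E[X_t^\sT X_{t-1}]$ as a function of the cross-covariance $P_{t-1}$ has derivative given by $\E[J^\sT\cdot J]$ along the interpolation path. Pairing with the matrix $S\succ0$ from \eqref{eq:R_Q_contraction_condition} then yields
\[
\langle S,\Delta_t\rangle\le\langle S,\mathcal{T}(\Delta_{t-1})\rangle,
\]
where $\mathcal{T}$ is a positive linear map whose adjoint satisfies $\mathcal{T}^*(S)\preceq S$. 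This shows that $\langle S,\Delta_t\rangle$ is nonincreasing, but not that it tends to zero.

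To get convergence, we would treat $\Delta_t$ as the iterates of a monotone concave map on the PSD cone, obtained as the increment of $\Sigma\mapsto\alpha^{-1}\E[X X'^\sT]$ at equal-time covariance. Its fixed points satisfy $\Delta=0$ unless the contraction inequality is tight in some direction, and we handle possible tightness by a small perturbation argument. Concretely, we would first try to prove strict contraction of $\langle S,\Delta\rangle$ along the Mehler/OU semigroup, using that the semigroup strictly contracts non-constant components. If that fails, we would replace $F$ by a slightly damped version $(1-\eta)F$, adjusted so that \eqref{eq:R-Constraint} still holds, and take $\eta\to0$ using the weak-closure remark, Remark~\ref{rem:weak_diff_contraction}. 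In either case $\Delta_t\to0$, i.e.\ $P_{T_1-1}\to Q$.
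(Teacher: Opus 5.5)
Your induction showing $(R_t,Q_t)=(R,Q)$ for $T_{11}+1\le t\le T_1$ is the paper's argument and is fine. The gap is in $P_{T_1-1}\to Q$, which your proposal never actually proves. The inequality $\Delta_t\preceq\mathcal{T}(\Delta_{t-1})$, with $\mathcal{T}(H)=\frac{1}{\alpha}\E[J^\sT HJ]$, is asserted rather than derived. Along the interpolation from cross-covariance $P_{t-1}$ to $Q$, the derivative is $\frac{1}{\alpha}\E[J(\bar Z)^\sT\Delta_{t-1}J(\bar Z')]$ with $\bar Z$ and $\bar Z'$ correlated but not equal. You must explain why this is dominated by the equal-time quantity. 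The paper gets this from convexity of $s\mapsto\beta^\sT\psi(P+s\Delta)\beta$ on the range $R^\sT R\preceq P\preceq Q$. Alternatively, since $\Delta_{t-1}\succeq 0$ and $(\bar Z,Y)$, $(\bar Z',Y)$ both have the law of $(Z_{R,Q},Y)$, the bound $u^\sT\Delta_{t-1} v\le\frac12(u^\sT\Delta_{t-1} u+v^\sT\Delta_{t-1} v)$ with $u=J(\bar Z)\beta$, $v=J(\bar Z')\beta$ gives it directly, without needing $P_{t-1}\succeq R^\sT R$.

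More seriously, the passage from ``$\langle S,\Delta_t\rangle$ nonincreasing'' to $\Delta_t\to 0$ is left as a contingency plan, and its first branch cannot succeed. No strictness can be extracted from the Ornstein--Uhlenbeck semigroup, because with a merely non-strict contraction the conclusion is false. For $m=k=1$ and $q\in(0,1)$, take $F_0\equiv\sqrt{\alpha q}$ (so $T_{11}=1$ and $(R_1,Q_1)=(0,q)$) and $F(z,y)=\sqrt{\alpha}\,z$. This is a $(0,q)$-contraction with equality in \eqref{eq:R_Q_contraction_condition}; then $\psi(P)=P$ and $P_t=P_1=0\neq q$ for all $t$. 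Hence the reduction to a strict inequality, which the paper makes via weak closedness of $\cuF^{\salg}_{m,\alpha,\varphi}$, is necessary, not a fallback. Note also that $(1-\eta)F$ by itself violates \eqref{eq:R-Constraint} at the same $(R,Q)$, so your perturbation needs an explicit correction or should simply be phrased as that reduction. Once strictness holds, $\frac{1}{\alpha}\E[\cdots]\preceq(1-\kappa)S$ for some $\kappa>0$. Your own inequality then gives $\langle S,\Delta_t\rangle\le(1-\kappa)\langle S,\Delta_{t-1}\rangle$, so $\Delta_t\to 0$ geometrically because $\Delta_t\succeq 0$ and $S\succ 0$. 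That would be a complete argument, and shorter than the paper's, which needs Loewner monotonicity of $\psi$, monotone bounded iterates started in $[R^\sT R,Q]$, and uniqueness of the fixed point $Q$. By contrast, your remark about fixed points of a monotone concave map does not by itself show that the iterates converge to any fixed point.
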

As discussed in \cref{sec:IAMP}, this fixed-point AMP stage will end after $T_1$ steps, with $T_1 \in \mathbb{N}$ to be determined later.

%
%

\subsection{Second stage: Incremental AMP}

In this section, we describe the second stage of our algorithm, which is an incremental AMP (IAMP) procedure first introduced in \cite{montanari2021optimization}.
We will see that the asymptotics of this incremental stage admit a
stochastic integral representation under a suitable scaling limit. 

For this IAMP stage, the non-linear functions $\{ F_t \}_{t \ge T_1}$ and $\{ G_t \}_{t \ge T_1 + 1}$ depend on all previous AMP iterations, and are chosen to satisfy the following assumption:
\begin{ass}\label{ass:F_t_and_G_t}
Consider the random variables $((V^t)_{t \ge 1}, Y)$ and $((W^t)_{t \ge 1}, V)$ defined as follows:
\begin{equation*}
	\begin{split}
		& (V^t)_{1 \le t \le T_1} = (\bar{Z}_t)_{1 \le t \le T_1}, \ Y = \varphi \left( \bar{Z}_0, \veps \right), \ \bar{Z}_0 \sim \normal \left( 0, I_k \right), \ \veps \sim P_{\veps}, \ \bar{Z}_0 \perp\!\!\!\perp \veps, \\
		& (V^t)_{t \ge T_1 + 1} \sim_{\iid} \normal(0, I_m), \ (V^t)_{t \ge T_1 + 1} \perp\!\!\!\perp \left( (V^t)_{1 \le t \le T_1}, Y \right) , \\
		& (W^t)_{1 \le t \le T_1} = (Z_t)_{1 \le t \le T_1}, \ V \perp\!\!\!\perp (W^t)_{t \ge 1}, \ V \sim \normal (0, \alpha I_k) , \\
		& (W^t)_{t \ge T_1 + 1} \sim_{\iid} \normal(0, I_m), \ (W^t)_{t \ge T_1 + 1} \perp\!\!\!\perp \left( (W^t)_{1 \le t \le T_1}, V \right) .
	\end{split}
\end{equation*}
We impose the following second moment constraints on $\{ F_t \}_{t \ge T_1}$ and $\{ G_t \}_{t \ge T_1 + 1}$:
\begin{enumerate}
    \item $F_{T_1}$ is only a function of $Y = \varphi \left( \bar{Z}_0, \veps \right)$ with $\E [F_{T_1} (Y)^\sT F_{T_1} (Y)] = I_m$. Recall from \cref{eq:covariance} that $R_{T_1 + 1} = \E [\partial_{\overline{z}_0} F_{T_1} (\varphi(\overline{Z}_0, \veps))]$. We also require that $G_{T_1 + 1}$ is only a function of $V R_{T_1 + 1} + W^{T_1 + 1}$ satisfying
    \begin{equation*}
    	\E \left[ G_{T_1 + 1} \left( V R_{T_1 + 1} + W^{T_1 + 1} \right)^\sT G_{T_1 + 1} \left( V R_{T_1 + 1} + W^{T_1 + 1} \right) \right] = \alpha I_m.
    \end{equation*} 
    Furthermore, recalling $F$ and $(R, Q)$ from \cref{sec:fixed_pt_amp}, we require
    \begin{align*}
    	\E_{(R, Q)} \left[ F_{T_1} \left( \varphi \left( \bar{Z}_0, \veps \right) \right)^\sT F \left(\overline{Z} , \varphi \left( \bar{Z}_0, \veps \right) \right) \right] = \, & 0, \\
    	\E \left[ G_{T_1 + 1} \left( V R_{T_1 + 1} + W^{T_1 + 1} \right)^\sT V \right] = \, & 0,
    \end{align*}
    where we recall that $\E_{(R, Q)}$ represents the expectation taken under $(\bar{Z}, \bar{Z}_0)^\sT \sim \normal \left( 0, \begin{bmatrix}
		Q & R^\sT \\
		R & I_k 
		\end{bmatrix} \right)$, independent of $\veps \sim P_{\veps}$. 
    \item The functions $\{ F_t \}_{t \ge T_1 + 1}$ and $\{ G_t \}_{t \ge T_1 + 2}$ have the form:
    \begin{equation*}
        F_t (V^{\le t}, Y) = V^t \Phi_{t-1} (V^{\le t-1}, Y), \ G_t(W^{\le t}) = W^t \Psi_{t-1} (W^{\le t-1}),
    \end{equation*}
    where $\Phi_{t-1}: \R^{m(t-1) + 1} \to \R^{m \times m}$ and $\Psi_{t-1}: \R^{m(t-1)} \to \R^{m \times m}$ are $m \times m$ matrix-valued functions satisfying
    \begin{equation*}
    \begin{split}
    	& \E \left[ \Phi_{t-1} \left( V^{\le t-1}, Y \right)^\sT \Phi_{t-1} \left( V^{\le t-1}, Y \right) \right] = I_m, \\
    	& \E \left[ \Psi_{t-1} \left( ( V R_s + W^s)_{s=1}^{T_1 + 1}, (W^s)_{s = T_1 + 2}^{t-1} \right)^\sT \Psi_{t-1} \left( ( V R_s + W^s)_{s=1}^{T_1 + 1}, (W^s)_{s = T_1 + 2}^{t-1} \right) \right] = \alpha I_m.
    \end{split}
    \end{equation*}
\end{enumerate}
\end{ass}
It is straightforward to verify that there exist functions $\{ F_t \}_{t \ge T_1}$ and $\{ G_t \}_{t \ge T_1 + 1}$ satisfying the above assumption. Under such choices, the state evolution of the IAMP stage is characterized by the following:
\begin{prop}\label{prop:state_evolution_iamp}
	Under Assumption~\ref{ass:F_t_and_G_t}, $(Z_t)_{t \ge T_1 + 1} \sim_{\iid} \normal (0, I_m)$ is independent of $((Z_t)_{1 \le t \le T_1}, V)$, and $(\bar{Z}_t)_{t \ge T_1 + 1} \sim_{\iid} \normal (0, I_m)$ is independent of $((\bar{Z}_t)_{1 \le t \le T_1}, Y)$. Further, $R_{t} = 0$ for all $t \ge T_1 + 2$.
\end{prop}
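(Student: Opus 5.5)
We argue by strong induction on $t \ge T_1+1$, reading everything off the covariance recursion \eqref{eq:covariance} of \cref{prop:state_evolution}. Throughout, the only inputs are the second-moment and orthogonality constraints in \cref{ass:F_t_and_G_t}. We also use that the first-stage quantities $(\bar Z_{\le T_1},Z_{\le T_1},R_{\le T_1})$ are fixed by the first stage.

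\emph{Base case, $t=T_1+1$.} By \eqref{eq:covariance}, $\E[Z_{T_1+1}^\sT Z_j]=\E[F_{T_1}(Y)^\sT F_{j-1}(\bar Z_{\le j-1},Y)]$.
\begin{itemize}
\item For $j=T_1+1$ this equals $I_m$ by item 1.
\item For $T_{11}+1\le j\le T_1$ we have $F_{j-1}=F$ and $(R_{j-1},Q_{j-1})=(R,Q)$ by \cref{lem:cov_mu_Q}, so the law of $(\bar Z_{j-1},\bar Z_0)$ is the one under $\E_{(R,Q)}$. The orthogonality constraint in item 1 then makes the cross-covariance vanish.
\end{itemize}
For the early indices $j\le T_{11}$ the constraint as stated does not cover the term. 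There we either impose orthogonality to these finitely many functions as well, or note that the paper's construction only needs independence from the late iterates. This is a minor gap in the stated assumption, which we address by an additional (harmless) choice of $F_{T_1}$.

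On the $\bar Z$ side, $\E[\bar Z_{T_1+1}^\sT\bar Z_0]=\frac1\alpha\E[G_{T_1+1}(VR_{T_1+1}+W^{T_1+1})^\sT V]=0$ and $\E[\bar Z_{T_1+1}^\sT\bar Z_{T_1+1}]=\frac1\alpha\cdot\alpha I_m=I_m$. Cross terms with $\bar Z_i$, $i\le T_1$, involve $\E[G_{T_1+1}^\sT G_i]$ with $G_i$ the identity. Here we use that $VR_{T_1+1}+W^{T_1+1}=VR_{T_1+1}+Z_{T_1+1}$, and that $Z_{T_1+1}$ has just been shown independent of the earlier $Z$'s. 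To kill these terms we again need a centering or orthogonality choice for $G_{T_1+1}$, which we build into the construction. Joint Gaussianity then upgrades zero covariance to independence.

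\emph{Inductive step, $t\ge T_1+2$.} Suppose $Z_{T_1+1},\dots,Z_{t-1}$ are i.i.d.\ $\normal(0,I_m)$ and independent of the past, and similarly for the $\bar Z$'s.
\begin{itemize}
\item \emph{Variances.} $\E[Z_t^\sT Z_t]=\E[\Phi_{t-2}^\sT \bar Z_{t-1}^\sT\bar Z_{t-1}\Phi_{t-2}]=\E[\Phi_{t-2}^\sT\Phi_{t-2}]=I_m$. This uses independence of $\bar Z_{t-1}$ from the arguments of $\Phi_{t-2}$, which only involve $\bar Z_{\le t-2}$ and $Y$.
\item \emph{Cross-covariances with $s<t$.} We have
\[
\E\big[\Phi_{t-2}^\sT\,\bar Z_{t-1}^\sT\,F_{s-1}(\bar Z_{\le s-1},Y)\big]=0,
\]
since $\bar Z_{t-1}$ is centered and independent of everything else inside the expectation.
\item \emph{The $\bar Z$ side and $R_t$.} The same argument applies with $W^{t}\Psi_{t-1}$ and $V$. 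Moreover
\[
R_t=\E\big[\partial_{\bar z_0}(\bar Z_{t-1}\Phi_{t-2})\big]=\E\big[\bar Z_{t-1}\,\partial_{\bar z_0}\Phi_{t-2}\big]=0
\]
by independence and centering. One may alternatively argue via Stein's lemma, noting that $\E[\bar Z_0^\sT\bar Z_{t-1}\Phi_{t-2}]=0$.
\end{itemize}

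\emph{Main obstacle.} The delicate point is the bookkeeping of which variables are independent at each stage. We must check that $W^t$ in Assumption~\ref{ass:F_t_and_G_t} indeed coincides with $Z_t$ plus $VR_t$, with $R_t=0$ for $t \ge T_1+2$. This relies on deriving $R_t=0$ before the $\bar Z$-side computation, so $Z$ and $\bar Z$ must be interleaved in the induction in the order dictated by \eqref{eq:covariance}.
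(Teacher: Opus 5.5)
Your proof is correct and follows the paper's own argument. It is an induction that reads everything off the covariance recursion \eqref{eq:covariance}: the base case comes from the orthogonality constraints of Assumption~\ref{ass:F_t_and_G_t}, the inductive step comes from centering and independence of the newest Gaussian iterate, and $R_t=0$ is derived before the $\bar Z$-step.

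You are right that the stated constraint $\E_{(R,Q)}[F_{T_1}^\sT F]=0$ only handles $T_{11}+1\le j\le T_1$; the paper's proof silently writes $F$ for every $F_{j-1}$ with $j\le T_1$. So your extra orthogonality of $F_{T_1}$ to the finitely many early $F_{j-1}$ is genuinely needed. On the $\bar Z$ side, however, no additional condition on $G_{T_1+1}$ is required. In $\frac{1}{\alpha}\E[G_{T_1+1}^\sT(VR_i+Z_i)]$, the existing constraint $\E[G_{T_1+1}^\sT V]=0$ removes the $VR_i$ term. The $Z_i$ term vanishes because $Z_i$ is centered and independent of $(V,Z_{T_1+1})$.
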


The IAMP stage is run for $T_2$ steps after the fixed-point AMP stage, for some $T_2 \in \mathbb{N}$ to be determined. In the next section, we combine the two stages of our AMP algorithm to complete the proof of \cref{thm:AMP_Feasibility_general}.

\subsection{Combining the two stages}
Let $\WW_F = \WW^{T_1} / \sqrt{n}$ be the output of the fixed-point AMP stage. By state evolution equations~\eqref{eq:covariance_stage_1} and \cref{lem:cov_mu_Q}, we have
\begin{align*}
    \WW_F^\sT \WW_F = \, & \frac{1}{n} \left( \WW^{T_1} \right)^\sT \WW^{T_1} = \frac{1}{n} \sum_{i=1}^{d} \left( \ww_i^{T_1} \right)^\sT \ww_i^{T_1} \\
	\to \, & \frac{1}{\alpha} \E \left[ \left(V R + Z_{T_1} \right)^\sT \left( V R + Z_{T_1} \right) \right] \\
    = \, & R^\sT R + \frac{1}{\alpha} \E \left[ Z_{T_1}^\sT Z_{T_1} \right] = \E \left[ \bar{Z}_{T_1}^\sT \bar{Z}_{T_1} \right] = Q
\end{align*}
almost surely as $n \to \infty$. Let $Q_1, \cdots, Q_{T_2}$ be $T_2$ non-random $m \times m$ matrices such that $\sum_{t=1}^{T_2} Q_t^\sT Q_t = I_m - Q$. We then define
\begin{align*}
    \WW_I &= \frac{1}{\sqrt{n}} \sum_{t=1}^{T_2} G_{T_1 + t+1} \left( \WW^{\le T_1 + t + 1} \right) Q_t \,
\end{align*}
as the output of the IAMP stage. Following \cref{sec:IAMP}, we construct the final output of our two-stage AMP algorithm by letting $\WW_Q = \WW_F + \WW_I$, and setting $\hat{\WW}_n^{\sAMP} = \WW_Q (\WW_Q^\sT \WW_Q)^{-1/2}$. It is easy to see that $\hat{\WW}_n^{\sAMP} \in O(d, m)$, as we required in the definition of $\mathscr{F}_{m, \alpha, \varphi}^{\salg}$. The theorem below characterizes the set of $(\alpha, m)$-feasible distributions achievable by our algorithm:
\begin{thm}\label{thm:iamp_feasibility}
Let Assumption~\ref{ass:F_t_and_G_t} hold, and further assume that for all $t \ge 0$, $F_t$ is continuous, and for all $t \ge 1$, $G_t$ is Lipschitz continuous. If the weight matrix $\hat{\WW}_n^{\sAMP}$ is constructed as above, then $\lim_{n \to \infty} \WW_*^\sT \hat{\WW}_n^{\sAMP} = R$ almost surely. Further,
\begin{equation*}
    \frac{1}{n} \sum_{i=1}^{n} \delta_{\left( y_i, \, \xx_i^\sT \hat{\WW}_n^{\sAMP} \right)} \stackrel{w}{\to} \operatorname{Law} \left( Y, \ \bar{Z}_{T_1} + \frac{1}{\alpha} F \left( \bar{Z}_{T_1-1}, Y \right) + \sum_{t=1}^{T_2} \left( \bar{Z}_{T_1 + t+1} + F_{T_1+t} \left( \bar{Z}_{\le T_1+t}, Y \right) A_t \right) Q_t \right)
\end{equation*}
almost surely as $n \to \infty$, where $Y = \varphi (\bar{Z}_0 , \veps)$, and
\begin{equation*}
	A_t = \frac{1}{\alpha} \E \left[ \Psi_{T_1+t} \left( V R_{\leq T_1+1} + Z_{\leq T_1+1}, (Z_t)_{T_1+2 \le t \le T_1+t} \right) \right], \ 1 \le t \le T_2.
\end{equation*}
In fact, the above holds for any sequence of $m \times m$ matrices $(A_t)_{1 \le t \le T_2}$ satisfying $A_{t}^\sT A_{t} \preceq I_m / \alpha$, and functions $(F_{T_1 + t})_{1 \le t \le T_2}$ satisfying point 2 of Assumption~\ref{ass:F_t_and_G_t} (not necessarily continuous).
\end{thm}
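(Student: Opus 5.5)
The plan is to compute the empirical distribution of $(y_i, \xx_i^\sT \WW_Q)$ and the overlap $\WW_*^\sT\WW_Q$ from the state evolution of Proposition~\ref{prop:state_evolution}. Afterwards we control the normalization $(\WW_Q^\sT\WW_Q)^{-1/2}$.

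\textbf{Step 1: Gram matrix and overlap.} By Proposition~\ref{prop:state_evolution} applied to $\psi_1$ quadratic, we have $\WW_Q^\sT \WW_Q \to \frac{1}{\alpha}\E[(\cdot)^\sT(\cdot)]$ of the corresponding limit rows. The fixed-point part contributes $Q$ (computed above). Each IAMP term $G_{T_1+t+1}/\sqrt n$ has row second moment $\frac1\alpha\E[\Psi^\sT (W^{t})^\sT W^t \Psi] = I_m$, using the independence of $W^{T_1+t+1}\sim\normal(0,I_m)$ from the past (Proposition~\ref{prop:state_evolution_iamp}) and the moment constraint in Assumption~\ref{ass:F_t_and_G_t}. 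Cross terms vanish because each $G_{T_1+t+1}$ is linear in a fresh independent Gaussian $W^{T_1+t+1}$. Hence $\WW_Q^\sT\WW_Q\to Q+\sum_t Q_t^\sT Q_t = I_m$. Similarly, $\WW_*^\sT\WW_Q\to \frac1\alpha\E[V^\sT(\cdot)]$. This gives $R_{T_1}=R$ from the first stage, while the IAMP terms give $0$ by the same independence, together with $\E[G_{T_1+1}^\sT V]=0$ and $R_t=0$ for $t\ge T_1+2$. By continuity of $A\mapsto A^{-1/2}$ at $I_m$, the normalization is asymptotically the identity, and $\WW_*^\sT\hat\WW_n^{\sAMP}\to R$.

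\textbf{Step 2: Projections $\XX\WW_Q$.} Use the $\VV$-update in \eqref{eq:amp_iter} to write $\frac{1}{\sqrt n}\XX G_t(\WW^{\le t}) = \VV^t + \sum_s F_{s-1}D_{t,s}^\sT$. For the first stage ($G_t=$ identity) this yields $\XX\WW_F = \VV^{T_1} + \frac{d}{n}F_{T_1-1}(\VV^{T_1-1},\yy)$, whose limit is $\bar Z_{T_1}+\frac1\alpha F(\bar Z_{T_1-1},Y)$. For $G_t=W^t\Psi_{t-1}$, the only nonvanishing Onsager coefficient is $D_{t,t}\to\frac1\alpha\E[\Psi_{t-1}]$. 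Derivatives in earlier arguments vanish in expectation because they are multiplied by the centered independent $W^t$. Hence $\frac1{\sqrt n}\XX G_{T_1+t+1}=\VV^{T_1+t+1}+F_{T_1+t}A_t^\sT$ up to vanishing errors. Applying the second line of \eqref{eq:state_evolution} with the pseudo-Lipschitz test functions then gives the claimed weak limit, jointly with $y_i$. Finally, we transfer from $\WW_Q$ to $\hat\WW^{\sAMP}_n$ by Step 1. The empirical second moment of $\XX\WW_Q$ is bounded, so multiplying by a matrix converging to $I_m$ changes the empirical law by $o(1)$ in $W_2$.

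\textbf{Step 3: Extensions.} Here we handle arbitrary $A_t$ with $A_t^\sT A_t\preceq I_m/\alpha$ and non-continuous $F_{T_1+t}$. For the first, I would realize a given $A_t$ by choosing $\Psi$ with $\E\Psi=\alpha A_t$ and $\E\Psi^\sT\Psi=\alpha I_m$. This is feasible since $\alpha A_t^\sT A_t\preceq I_m$: add an independent mean-zero, matrix-valued randomization to fill the variance gap (Remark~\ref{rmk:Randomness}). For the second, approximate $\Phi$ in $L^2$ by Lipschitz functions. The limiting law depends continuously in $W_2$ on $\Phi$, and $\cuF^{\salg}$ is closed under weak limits, so a diagonal argument finishes the proof.

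\textbf{Main obstacle.} I expect this approximation in Step 3 to be the main obstacle. State evolution needs Lipschitz or continuous nonlinearities, so the $L^2$ approximation must be made uniformly along the finite $T_2$ iterations. One also has to check that the almost sure convergence survives the diagonal limit.
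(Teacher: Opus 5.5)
Your proposal is correct and follows the paper's proof essentially step for step. It computes the Gram matrix and overlap limits via state evolution and the independence structure of Proposition~\ref{prop:state_evolution_iamp}. It rewrites $\XX\WW_Q$ through the $\VV$-update so that only the diagonal Onsager coefficient survives, then transfers the result to $\hat{\WW}_n^{\sAMP}$ by a Slutsky-type argument. Finally, it realizes arbitrary $A_t$ via $\E[\Psi]=\alpha A_t$, $\E[\Psi^\sT\Psi]=\alpha I_m$, and handles non-continuous $F_{T_1+t}$ by closure of $\cuF^{\salg}_{m,\alpha,\varphi}$ under weak limits.

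One cosmetic point: with the paper's Jacobian convention, $D_{t,t}\to\frac1\alpha\E[\Psi_{t-1}]^\sT$. This produces $F_{T_1+t}A_t$ rather than your $F_{T_1+t}A_t^\sT$. It is harmless, since the constraint $A_t^\sT A_t\preceq I_m/\alpha$ is invariant under transposition for square matrices.
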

We next take the limit $T_1 \to \infty$. Using \cref{lem:cov_mu_Q}, we know that
\begin{equation*}
    \lim_{T_1 \to \infty} \E \left[ \left\| \overline{Z}_{T_1} - \overline{Z}_{T_1-1} \right\|_2^2 \right]  = 0,
\end{equation*}
and that
$(\overline{Z}_{T_1}, \overline{Z}_0)^\sT$ has the same distribution as $(Z_{R, Q}, G)$ in \cref{defn:mu_Q_contraction}. 
Leveraging our assumptions on $F$ and $\{ F_{T_1 + t} \}_{t=1}^{T_2}$, the above characterization of AMP-feasible distributions can be further simplified. Note that in the theorem below, we recast $T_2$ as $T$ and the $\overline{Z}_t$'s as $V^t$'s, and transpose the vectors $V^t$ and matrices $(\Phi_t, Q_t)$ to align with the notation of \cref{thm:AMP_Feasibility_general}.
\begin{thm}\label{thm:FeasibleDiscrete}
    Let $(R, Q) \in A_{\sBayes}$, $(Y, Z_{R, Q})$ be as described in \cref{defn:mu_Q_contraction}, and $F$ be an $(R, Q)$-contraction. For any $T \in \mathbb{N}_+$, let $(V^t)_{1 \le t \le T} \sim_{\iid} \sN (0, I_m)$ be independent of $(Z_{R, Q}, Y)$. Define
\begin{equation}\label{eq:discrete_riemann_sum}
    U = Z_{R, Q} + \frac{1}{\alpha} F \left( Z_{R, Q}, Y \right) + \sum_{t=1}^{T} Q_t \left( V^{t+1} + \Phi_{t-1} \left( V^{\le t-1}, Z_{R, Q}, Y \right) V^{t} \right),
\end{equation}
where
\begin{equation*}
    \E \left[ \Phi_{t-1} \left( V^{\le t-1}, Z_{R, Q}, Y \right) \Phi_{t-1} \left( V^{\le t-1}, Z_{R, Q}, Y \right)^\sT \right] \preceq \frac{I_m}{\alpha}, \ \forall t \ge 1, \ \sum_{t=1}^{T} Q_t Q_t^\sT = I_m - Q.
\end{equation*}
Then, the same conclusion as in \cref{thm:AMP_Feasibility_general} holds with $U$ defined as per \cref{eq:discrete_riemann_sum}.
\end{thm}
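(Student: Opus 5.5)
The plan is to derive \cref{thm:FeasibleDiscrete} from \cref{thm:iamp_feasibility}. First send $T_1\to\infty$, then match the discrete sum in \cref{thm:iamp_feasibility} with the expression \eqref{eq:discrete_riemann_sum}, using a change of variables and the freedom in choosing $(A_t,\Phi_t)$ granted by the last sentence of \cref{thm:iamp_feasibility}.

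\emph{Step 1 (limit in $T_1$).} Fix $T_1$ and apply \cref{thm:iamp_feasibility}. The limiting law is that of
\begin{equation*}
U_{T_1}=\bar Z_{T_1}+\tfrac1\alpha F(\bar Z_{T_1-1},Y)+\sum_{t=1}^{T_2}\bigl(\bar Z_{T_1+t+1}+F_{T_1+t}(\bar Z_{\le T_1+t},Y)A_t\bigr)Q_t .
\end{equation*}
By \cref{lem:cov_mu_Q}, we have $\E\|\bar Z_{T_1}-\bar Z_{T_1-1}\|_2^2\to0$. Since $F$ is Lipschitz, $F(\bar Z_{T_1-1},Y)-F(\bar Z_{T_1},Y)\to0$ in $L^2$. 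Moreover, $(\bar Z_{T_1},\bar Z_0)$ has the law of $(Z_{R,Q},G)$. Hence $U_{T_1}$ converges in $L^2$, and therefore weakly, to the same expression with $\bar Z_{T_1-1}$ replaced by $Z_{R,Q}$. The set $\cuF^{\salg}_{m,\alpha,\varphi}$ is closed under weak limits (\cite[Lemma E.8]{montanari2022overparametrized}), and $\WW_*^\sT\hat\WW_n^{\sAMP}\to R$ does not depend on $T_1$. A diagonal choice of $T_1=T_1(n)$ therefore yields the claims (a) and (b) for the limiting law.

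\emph{Step 2 (identification of the increments).} By \cref{prop:state_evolution_iamp}, the variables $(\bar Z_t)_{t\ge T_1+1}$ are i.i.d.\ $\normal(0,I_m)$ and independent of $(\bar Z_{\le T_1},Y)$. Set $V^t:=\bar Z_{T_1+t}^\sT$ after transposition. Assumption~\ref{ass:F_t_and_G_t} gives $F_{T_1+t}=V^t\Phi^{\rm alg}_{t-1}$ (in row convention), so each summand becomes $(V^{t+1}+V^t\Phi^{\rm alg}_{t-1}A_t)Q_t$. Given a target $\Phi_{t-1}$ with $\E[\Phi_{t-1}\Phi_{t-1}^\sT]\preceq I_m/\alpha$, I would pick $A_t$ with $A_t^\sT A_t\preceq I_m/\alpha$ and a normalized $\Phi^{\rm alg}_{t-1}$ with $\E[(\Phi^{\rm alg})^\sT\Phi^{\rm alg}]=I_m$ such that $\Phi^{\rm alg}_{t-1}A_t=\Phi_{t-1}$ (transposed). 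The candidate factorization is as follows. Let $S=\E[\Phi\Phi^\sT]$ (in the appropriate orientation) and first assume $S\succ0$. Take $A_t=S^{1/2}$, which satisfies $A_t^\sT A_t\preceq I_m/\alpha$, and take $\Phi^{\rm alg}=\Phi S^{-1/2}$, which has identity second moment. If $S$ is singular, I would perturb $\Phi$ by $\delta$ times an independent normalized matrix and again use weak closedness as $\delta\to0$.

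\emph{Remaining checks.} The dependence of $\Phi_{t-1}$ on $Z_{R,Q}$ and on $Y$ is permitted, because $Z_{R,Q}=\lim\bar Z_{T_1}$ is among the earlier iterates, and continuity of $F_t$ is not required by the final clause of \cref{thm:iamp_feasibility}. The constraint $\sum_t Q_tQ_t^\sT=I_m-Q$ matches the one imposed there once transposes are taken.

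\emph{Main obstacle.} I expect the bookkeeping of transposes and normalizations in Step 2 to be the most delicate part: the left versus right multiplication in $\Phi_{t-1}V^t$ must be consistent with the matrix constraint in Assumption~\ref{ass:F_t_and_G_t}, and the degenerate case needs care. By comparison, the interchange of the limits $n\to\infty$ and $T_1\to\infty$ should be routine given weak closedness.
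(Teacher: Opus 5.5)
Your proposal is correct and follows the paper's argument, which the paper only sketches in the paragraph preceding the theorem. That sketch sends $T_1\to\infty$ using \cref{lem:cov_mu_Q} and $(\bar Z_{T_1},\bar Z_0)\stackrel{d}{=}(Z_{R,Q},G)$, relabels $\bar Z_{T_1+t}$ as $V^t$ with transposes, and uses the freedom in $(A_t,\Phi_t)$ granted by the last clause of \cref{thm:iamp_feasibility}. Your factorization $A_t=S^{1/2}$, $\Phi^{\mathrm{alg}}=\Phi^\sT S^{-1/2}$ with $S=\E[\Phi\Phi^\sT]$ makes explicit the step the paper leaves implicit. In the singular case, make sure the perturbation preserves $\E[\Phi\Phi^\sT]\preceq I_m/\alpha$, e.g.\ by using $\sqrt{1-\delta^2}\,\Phi+\delta N$ with $\E[NN^\sT]=I_m/\alpha$. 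Alternatively, avoid perturbation by using the pseudo-inverse of $S^{1/2}$ plus an independent completion on $\ker S$.
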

Finally, we take the scaling limit $T \to \infty$ for the AMP-feasible distribution described in Theorem \ref{thm:FeasibleDiscrete}, thus establishing the stochastic integral representation of \cref{thm:AMP_Feasibility_general}. As the proof is nearly identical to the unsupervised setting in \cite{montanari2024exceptional}, we omit it here for ease of presentation.

%
%
\section{Dual characterization and Parisi formula: Proof of Theorem \ref{thm:two_stage_strong_duality}}\label{sec:char_opt_ctrl}

This section will be devoted to the proof
of Theorem \ref{thm:two_stage_strong_duality}, thus yielding a dual characterization of 
$\VH_{1,\alpha, \varphi}^{\sAMP}(h)$. We begin with some necessary technical preliminaries.

\subsection{Technical preliminaries}\label{sec:tech_prelim}
In this section, we gather several key technical results from \cite{montanari2024exceptional} that are essential for proving \cref{thm:two_stage_strong_duality}. In particular, we establish the existence and uniqueness of a (weak) solution to the Parisi PDE, develop the verification argument which connects the Parisi PDE to the Hamilton-Jacobi-Bellman (HJB) equation, and compute the first-order variation of the Parisi functional with respect to $(\mu, c, r)$.

\paragraph{Solution to the Parisi PDE.} Theorem 3.3 in \cite{montanari2024exceptional} implies the following:
\begin{thm}\label{thm:Parisi_solution_supervised}
    Fix $y \in \R$, assume that $h(y, \cdot)$ is $C^2$, Lipschitz and bounded from above. Then, for any $(\mu, c) \in \FS$ and $\gamma \in \FSG$ satisfying $\mu = \gamma' / \gamma^2$ and $c = 1 / \gamma(1)$, the Parisi PDE \eqref{eq:ParisiEq-m1-bis} admits a unique weak solution $f_{y, \mu}$ such that $f_{y, \mu} (t, \cdot) \in C^2 (\R)$ for all $t \in [0, 1]$. Further,
    \begin{align*}
        \norm{\partial_x f_{y, \mu} (t, \cdot)}_{L^{\infty} (\R)} \le \, & \norm{\partial_x h(y, \cdot)}_{L^{\infty} (\R)}, \, \forall t \in [0, 1], \\
        \partial_x^2 f_{y, \mu} (t, x) > \, & - \gamma(t), \, \forall (t, x) \in [0, 1] \times \R.
    \end{align*}
    If, additionally $\sup_{z \in \R} \partial_z^2 h(y, z) < 1/c$, then we have
    \begin{equation*}
        \partial_x^2 f_{y, \mu} (t, x) \le \, C, \, \forall (t, x) \in [0, 1] \times \R,
    \end{equation*}
    where the constant $C$ only depends on $(\mu, c)$ and $\sup_{z \in \R} \partial_z^2 h(y, z)$.
\end{thm}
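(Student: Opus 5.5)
The plan is to reduce to the unsupervised setting of \cite[Theorem 3.3]{montanari2024exceptional} by fixing $y$. For fixed $y$, the terminal condition $f_{y,\mu}(1,x)=\sup_u\{h(y,x+u)-u^2/(2c)\}$ is a Moreau-type envelope of the $C^2$, Lipschitz, bounded-above function $h(y,\cdot)$. It therefore inherits the Lipschitz constant $\norm{\partial_x h(y,\cdot)}_{L^\infty}$. It is also semiconvex: $\partial_x^2 f(1,x)\ge -1/c=-\gamma(1)$ in the viscosity/a.e. sense, because $x\mapsto f(1,x)+x^2/(2c)$ is a supremum of convex functions. Since the Parisi PDE \eqref{eq:ParisiEq-m1-bis} involves only $x$-derivatives and $\mu$, the problem for each $y$ is exactly the one-dimensional Parisi PDE treated in \cite{montanari2024exceptional}, with $(\mu,c)\in\FS$. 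I would first check that their admissible class coincides with $\FS$ (or contains it), and that their hypotheses on the terminal datum are met.

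Existence and uniqueness of the weak solution, and its $C^2$ regularity in $x$, will follow in the standard way.
\begin{itemize}
\item On any interval where $\mu$ is bounded, use the Hopf--Cole transform piecewise for step-function $\mu$.
\item Pass to general $\mu$ by a stability estimate in $\norm{\mu}_{L^1}$.
\item Near $t=1$, where $\mu$ may be unbounded but integrable against the constraint $c+\int_t^1\mu>0$, use the substitution $\mu=\gamma'/\gamma^2$.
\end{itemize}
The gradient bound is a maximum principle: differentiate the PDE in $x$ to get $\partial_t u+\mu u\,\partial_x u+\tfrac12\partial_x^2u=0$ for $u=\partial_x f$. This is a transport-diffusion equation, so $\sup|u|$ is preserved backward in time.

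For the lower Hessian bound, set $w=\partial_x^2 f$ and differentiate twice to get
\[
\partial_t w+\mu w^2+\mu u\,\partial_x w+\tfrac12\partial_x^2 w=0 .
\]
The spatially constant function $-\gamma(t)$ solves the associated Riccati ODE $\dot g+\mu g^2=0$, since $\frac{d}{dt}(-\gamma)=-\gamma'=-\mu\gamma^2$. It matches the terminal bound $w(1,\cdot)\ge -1/c=-\gamma(1)$. A comparison principle, run backward in time, then gives $w\ge-\gamma(t)$. A strict inequality follows from strong maximum principle/strict semiconvexity after a small perturbation of $c$. For the upper bound under $\sup\partial_z^2h<1/c$, compute the terminal Hessian explicitly from the envelope formula, $\partial_x^2 f(1,x)=h''/(1-c h'')\le C_0$. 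Then compare with the Riccati solution starting from $C_0$, which stays finite on $[0,1]$ given $\int\mu<\infty$ and a choice of $C_0$ compatible with $\gamma$.

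I expect the main obstacle to be the comparison argument for $w$ when $\mu$ is unbounded near $t=1$. I would first prove all estimates for truncated $\mu_\epsilon=\mu\mathbf 1_{[0,1-\epsilon]}$, with $c$ replaced by $c+\int_{1-\epsilon}^1\mu$. The bounds are uniform in $\epsilon$, so I can then pass to the limit using the same stability estimate. Measurability in $y$ is immediate from continuity of the solution map.
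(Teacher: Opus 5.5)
Your main line is exactly the paper's proof, which consists of nothing more than the citation of \cite[Theorem 3.3]{montanari2024exceptional}. You fix $y$, note that the terminal datum $\sup_u\{h(y,x+u)-u^2/(2c)\}$ is $\norm{\partial_x h(y,\cdot)}_{L^\infty}$-Lipschitz and $(1/c)$-semiconvex, and apply that theorem to $h(y,\cdot)$. If you stop there, you are done.

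The self-contained sketch you append does not prove the upper Hessian bound, so do not rely on it. Write $K:=\sup_z\partial_z^2h(y,z)$ and $L:=\norm{\partial_x h(y,\cdot)}_{L^\infty}$.

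\emph{Why the Riccati comparison fails.} If $0<K<1/c$, then $C_0=K/(1-cK)>0$. The problem $\dot g+\mu g^2=0$, $g(1)=C_0$, has solution
\[
g(t)=\Bigl(C_0^{-1}-\int_t^1\mu(s)\,\mathrm{d}s\Bigr)^{-1}=\bigl(K^{-1}-\gamma(t)^{-1}\bigr)^{-1}.
\]
This blows up at the first time, going backward, where $\gamma(t)=K$. That happens, e.g., for $\mu\equiv M$ with $M\ge K^{-1}-c$, which lies in $\FS$. Integrability of $\mu$ does not prevent this, and enlarging $C_0$ only makes it worse. For the lower bound the same computation is harmless precisely because the constraint $c+\int_t^1\mu>0$ keeps $-\gamma(t)$ finite.

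\emph{This is not just a lossy estimate.} Take $h(y,\cdot)=Kz^2/2$ on $[-R,R]$, extended to be Lipschitz and bounded above with $h''\le K$, and take $\mu\equiv M>K^{-1}-c$. Hopf--Cole shows that the tilted Gaussian measure concentrates where $|\partial_x f(1,\cdot)|$ is of order $L$. Hence $\partial_x^2 f(0,0)\to\infty$ as $R\to\infty$. So no bound depending only on $(\mu,c,K)$ exists, and the constant must also involve $L$.

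\emph{What is missing.} You need to use the gradient bound $|\partial_x f|\le L$. On a piece where $\mu\equiv m$, Hopf--Cole gives
\[
\partial_x^2 f(t,x)=\langle\partial_x^2 f(s,\cdot)\rangle+m\,\mathrm{Var}\bigl(\partial_x f(s,\cdot)\bigr)\le\sup_x\partial_x^2 f(s,\cdot)+m_+L^2,
\]
with averages under the tilted Gaussian measure. More generally, set $u=\partial_x f$, $w=\partial_x^2 f$ and $\mathcal{L}=\partial_t+\mu u\,\partial_x+\tfrac12\partial_x^2$. Then $\mathcal{L}(u^2)=w^2$, hence $\mathcal{L}(w+\mu u^2)=\dot\mu\,u^2$. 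For nondecreasing $\mu\ge 0$ this gives $w(t,\cdot)\le\sup_x w(s,\cdot)+\mu(s)L^2$ by Feynman--Kac. The Riccati comparison is only usable on a short interval near $t=1$ where $\int_t^1\mu<C_0^{-1}$. On the rest of $[0,1]$, these $L$-dependent estimates have to take over.
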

According to \cref{thm:Parisi_solution_supervised}, we know that the Parisi functional is well-defined on $\FS$.

\paragraph{Verification argument.} 
Define for any $(t, y, z) \in [0, 1] \times \R \times \R$, the following value function:
\begin{equation}\label{eq:redefine_value_func}
    V_{\gamma} (t, y, z) = \, \sup_{\phi \in D [t, 1]} \E \left[ h \left( y, z+ \int_{t}^{1} \left( 1 + \phi_s \right) \d B_s \right) - \frac{1}{2} \int_{t}^{1} \gamma(s) \left( \phi_s^2 - \frac{1}{\alpha} \right) \d s \right].
\end{equation}
Then, applying \cite[Proposition 5.3]{montanari2024exceptional} for fixed $y \in \R$ leads to the following dual relationship between $V_{\gamma}$ and $f_{y, \mu}$:
\begin{thm}\label{prop:veri_arg_supervised}
    Under the conditions of \cref{thm:Parisi_solution_supervised}, we have for all $t \in [0, 1]$ and $y, x, z \in \R$:
    \begin{equation}\label{eq:two_stage_V_and_f}
	\begin{split}
		V_{\gamma} (t, y, z) =\, & \inf_{x \in \R} \left\{ f_{y, \mu} (t, x) + \frac{\gamma(t)}{2} (x - z)^2 \right\} + \frac{1}{2 \alpha} \int_{t}^{1} \gamma(s) \d s, \\
		f_{y, \mu} (t, x) =\, & \sup_{z \in \R} \left\{ V_{\gamma} (t, y, z) - \frac{\gamma(t)}{2} (z - x)^2 \right\} - \frac{1}{2 \alpha} \int_{t}^{1} \gamma(s) \d s.
	\end{split}
	\end{equation}
\end{thm}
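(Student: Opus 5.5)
We prove the two identities in \cref{eq:two_stage_V_and_f} directly, for fixed $y$. The plan is to verify that the function defined by the right-hand side of the first identity solves the Hamilton--Jacobi--Bellman equation attached to the control problem \eqref{eq:redefine_value_func}, and then invoke uniqueness and convex duality. Write $\Gamma(t)=\frac{1}{2\alpha}\int_t^1\gamma(s)\,\d s$.

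The HJB equation for $V_\gamma$ is obtained from It\^o's formula with the drift-free dynamics $\d Z_s=(1+\phi_s)\d B_s$:
\begin{equation*}
\partial_t V+\sup_{\phi\in\R}\Big\{\tfrac12(1+\phi)^2\partial_z^2V-\tfrac{\gamma}{2}\phi^2\Big\}+\tfrac{\gamma}{2\alpha}=0,\qquad V(1,y,z)=h(y,z).
\end{equation*}
When $\partial_z^2V<\gamma$, the supremum is attained at $\phi=\partial_z^2V/(\gamma-\partial_z^2V)$ and equals $\frac{\gamma\,\partial_z^2V}{2(\gamma-\partial_z^2V)}$.

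\textbf{Step 1 (inf-convolution solves HJB).} Define $\tilde V(t,z)=\inf_x\{f_{y,\mu}(t,x)+\frac{\gamma(t)}2(x-z)^2\}+\Gamma(t)$. \cref{thm:Parisi_solution_supervised} gives $\partial_x^2 f>-\gamma(t)$, so the objective is strictly convex in $x$ and has a unique minimizer $x^*(t,z)$. This minimizer is characterized by $\partial_xf(t,x^*)=\gamma(t)(z-x^*)$. By the implicit function theorem, $x^*$ is $C^1$ in $z$, and the envelope theorem gives $\partial_z\tilde V=\gamma(z-x^*)=\partial_xf(t,x^*)$ and $\partial_z^2\tilde V=\gamma\,\partial_x^2f/(\gamma+\partial_x^2f)<\gamma$. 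Differentiating in $t$ via the envelope theorem, and using the Parisi PDE together with $\mu=\gamma'/\gamma^2$, gives
\begin{equation*}
\partial_t\tilde V=\partial_tf+\tfrac{\gamma'}{2}(x^*-z)^2-\tfrac{\gamma}{2\alpha}=-\tfrac{\mu}{2}(\partial_xf)^2-\tfrac12\partial_x^2f+\tfrac{\gamma'}{2\gamma^2}(\partial_xf)^2-\tfrac{\gamma}{2\alpha}.
\end{equation*}
The first and third terms cancel. Writing $a=\partial_x^2f$ and $b=\partial_z^2\tilde V=\gamma a/(\gamma+a)$, so that $a=\gamma b/(\gamma-b)$, one checks that $-\tfrac12 a$ is exactly minus the Hamiltonian supremum above. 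At $t=1$ the terminal condition matches: with $c=1/\gamma(1)$, the inf-convolution of the Moreau-type envelope $\sup_u\{h(y,x+u)-\gamma(1)u^2/2\}$ returns $h$. This holds because $h(y,\cdot)$ is $C^2$ and $f(1,\cdot)$ is its $\gamma(1)$-semiconvex hull, which coincides with $h$ wherever the sup is attained at $u=0$. More generally it holds after verifying that the double transform is the identity on $\gamma$-semiconcave functions, which $h$ need not be; so here I would show directly that the composed transform yields $h$ using the definition of $f(1,\cdot)$. I expect this terminal identification to be the main subtle point.

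\textbf{Step 2 (verification).} For any $\phi\in D[t,1]$, apply It\^o's formula to $\tilde V(s,Z_s)$. The HJB inequality (with the sup) shows the running reward is at most the decrease of $\tilde V$, so $\tilde V\ge$ the payoff. Boundedness of $\partial_z\tilde V$ by $\|\partial_xh\|_\infty$ controls the martingale terms. Equality is achieved by the feedback $\phi^*_s=\partial_z^2\tilde V/(\gamma-\partial_z^2\tilde V)$, whose admissibility requires the upper bound on $\partial_x^2f$. When that bound fails, I would approximate $h$ by functions satisfying $\sup\partial_z^2h<1/c$ and pass to the limit using monotonicity of both sides in $h$. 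This approximation is the technical obstacle; in practice I would import it from \cite[Proposition 5.3]{montanari2024exceptional} applied with $y$ frozen, since $y$ enters only as a parameter.

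\textbf{Step 3 (second identity).} The second identity follows from the first by convex duality. $\tilde V-\Gamma$ is the $\gamma(t)$-inf-convolution of $f$, and $f(t,\cdot)+\frac{\gamma(t)}2x^2$ is convex (strictly, by $\partial_x^2f>-\gamma$). Hence the Fenchel biconjugation applied to $g(x)=f(t,x)+\frac{\gamma}{2}x^2$ recovers $f$ as $\sup_z\{(V_\gamma-\Gamma)(t,z)-\frac{\gamma}{2}(z-x)^2\}$.
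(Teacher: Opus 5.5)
Your HJB computation in Step~1 for $t<1$ (including the cancellation via $\mu=\gamma'/\gamma^2$) and the biconjugation in Step~3 are correct. Your fallback of invoking \cite[Proposition 5.3]{montanari2024exceptional} with $y$ frozen is in fact the paper's entire proof. The self-contained route, however, breaks at the terminal time. Set $\gamma=\gamma(1)=1/c$ and $g(w)=h(y,w)-\frac{\gamma}{2}w^2$. Expanding the squares and setting $p=\gamma x$ gives
\[
\inf_x\Big\{f_{y,\mu}(1,x)+\frac{\gamma}{2}(x-z)^2\Big\}
=\frac{\gamma}{2}z^2+\inf_{p\in\R}\Big\{\sup_{w}\big[g(w)+pw\big]-pz\Big\}
=\operatorname{conc}(g)(z)+\frac{\gamma}{2}z^2=h_c(y,z),
\]
which is the function appearing in \cref{thm:variation_compute}. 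This coincides with $h(y,z)$ only on the contact set of $g$ with its concave envelope, and $C^2$ regularity does not help (e.g.\ $h(y,z)=\cos(Kz)$ with $K^2>\gamma(1)$, at $z=\pi/K$). So your plan to ``show directly that the composed transform yields $h$'' cannot succeed. Indeed, since $V_\gamma(1,y,z)=h(y,z)$ by definition, the first identity at $t=1$ is itself false unless $h(y,\cdot)$ is $\gamma(1)$-semiconcave, so it can only be meant for $t<1$. The approximation in Step~2 does not repair this. The condition $\sup_z\partial_z^2h_n<1/c$ makes $h_n-\frac{z^2}{2c}$ concave, a property stable under pointwise limits, so such $h_n$ can never converge to a non-semiconcave $h$. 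At best the limiting procedure returns the problem with terminal reward $h_c$.

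The missing idea is to split the claim in two. First, your Steps~1--2, run with the correct terminal datum $\tilde V(1,\cdot)=h_c$, identify $\tilde V$ with the value function $V^{h_c}_\gamma$ of the control problem with terminal reward $h_c$. Admissibility of the optimal feedback does not need an upper bound on $\partial_x^2f$: the optimal state is $M_t=X_t+\partial_xf(t,X_t)/\gamma(t)$ with $X_t$ the Parisi SDE started at the minimizer $x^*(t,z)$, and $\E\int\gamma(s)^2\phi_s^2\,\d s\le\|\partial_xh(y,\cdot)\|_\infty^2$ by the martingale identity behind \cref{thm:variation_compute}(ii). Second, for $t<1$ one has $V^h_\gamma(t,\cdot)=V^{h_c}_\gamma(t,\cdot)$. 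The inequality ``$\le$'' is immediate from $h\le h_c$. The inequality ``$\ge$'' holds because the optimal terminal point $M_1=X_1+c\,\partial_xf(1,X_1)$ a.s.\ maximizes $w\mapsto h(y,w)-\frac{(w-X_1)^2}{2c}$, and hence lies in the contact set $\{h=h_c\}$. Alternatively, you can realize on $[1-\epsilon,1]$ the mean-zero perturbation in $h_c(x)=\sup_{\E U=0}\E[h(y,x+U)-U^2/(2c)]$ by martingale representation; its cost tends to $\frac{\gamma(1)}{2}\E U^2$ by continuity of $\gamma$ at $1$. Step~3 then yields the second identity for $t<1$, and at $t=1$ that identity holds directly by the definition of $f_{y,\mu}(1,\cdot)$.
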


\paragraph{Analysis of the variational problem.} Having established the existence and uniqueness of solutions to the Parisi PDE, and its dual relationship with the value function, we are now ready to compute the first-order variations of the Parisi functional. We summarize these results in the theorem below and defer its proof to \cref{sec:proof_variation}.
\begin{thm}\label{thm:variation_compute}
Under the assumptions of \cref{thm:two_stage_strong_duality}, let $G \sim \normal (0, 1)$, $Y = \varphi (G, \veps)$, and $(X_t)_{t \in [r^2, 1]}$ solve the SDE (existence and uniqueness of the solution will be established in the proof):
    \begin{equation*}
        X_{r^2} = \, r G, \quad \d X_t = \mu(t) \partial_x f_{Y, \mu} (t, X_t) \d t + \d B_t, \, t \in [r^2, 1]
    \end{equation*}
    for $(\mu, c) \in \FS (q)$ with $q \ge r^2$, where $(B_t)_{t \in [r^2, 1]}$ is a standard Brownian motion independent of $(Y, G)$. Furthermore, for $t \in [r^2, 1]$, define
    \begin{equation*}
        M_t = \, \frac{1}{\gamma (t)} \partial_x f_{Y, \mu} (t, X_t) + X_t,
    \end{equation*}
    then there exists $(\phi_t)_{t \in [r^2, 1]} \in D [r^2, 1]$ satisfying
    \begin{equation*}
        M_t = \, M_{r^2} + \int_{r^2}^{t} (1 + \phi_s) \d B_s, \, \forall t \in [r^2, 1].
    \end{equation*}
    Finally, define for $(x, y) \in \R^2$:
    \begin{equation*}
        F(x, y) = \, \frac{\alpha}{\gamma (q)} \partial_x f_{y, \mu} (q, x) = \frac{\alpha}{\gamma (r^2)} \partial_x f_{y, \mu} (q, x).
    \end{equation*}
    ($\gamma(q) = \gamma (r^2)$ since $\mu \equiv 0$ on $[r^2, q]$.) The following hold:
    \begin{itemize}
        \item [(i)] $(Y, G, X_q) \stackrel{d}{=} (Y, G, Z_{r, q})$, and
        \begin{equation*}
        \begin{split}
            \mathsf{F} (\mu, c, r) = \, \E \Bigg[ & h_c \left( Y, \, X_q + \frac{1}{\alpha} F \left( X_q, Y \right) + \int_{q}^{1} \left( 1 + \phi_t \right) \d B_t \right) - \frac{1}{2} \int_{q}^{1} \gamma(t) \left( \phi_t^2 - \frac{1}{\alpha} \right) \d t \\
            & - \frac{\gamma(q)}{2 \alpha} \left( \frac{F (X_q, Y)^2}{\alpha} - (q - r^2) \right) \Bigg],
        \end{split}
        \end{equation*}
        where
        \begin{equation*}
            h_c (y, x) = \, \operatorname{conc}  \left( h (y, x) - \frac{x^2}{2 c} \right) + \frac{x^2}{2 c}.
        \end{equation*}
        \item [(ii)] $\forall 0 \le s < t \le 1$, we have
        \begin{equation*}
            \E \left[ \left( \partial_x f_{Y, \mu} (t, X_t) \right)^2 \right] - \E \left[ \left( \partial_x f_{Y, \mu} (s, X_s) \right)^2 \right] = \, \int_{s}^{t} \gamma (u)^2 \E \left[ \phi_u^2 \right] \d u.
        \end{equation*}
        \item [(iii)] Define
        \begin{equation*}
             g_c (y, x) = \frac{\partial h_c (y, x)}{\partial c}.
        \end{equation*}
        Assume that $\delta: [r^2, 1] \to \R$ is in $L^1 [r^2, 1]$ and $L^{\infty} [r^2, t]$ for all $t \in [r^2, 1)$. Then,
    \begin{equation*}
    \begin{split}
        \frac{\d}{\d u} \mathsf{F} \left( \mu + u \delta, c, r \right) \bigg\vert_{u = 0} = \, & \frac{1}{2} \int_{r^2}^{1} \delta (t) \left( \E \left[ \left( \partial_x f_{Y, \mu} (t, X_t) \right)^2 \right] - \frac{1}{\alpha} \int_{r^2}^{t} \gamma(s)^2 \d s \right) \d t, \\
        \frac{\d}{\d c} \mathsf{F} \left( \mu, c, r \right) = \, & \E \left[ g_c \left( Y, M_1 \right) \right] + \frac{1}{2} \left( \E \left[ \left( \partial_x f_{Y, \mu} (1, X_1) \right)^2 \right] - \frac{1}{\alpha} \int_{r^2}^{1} \gamma(t)^2 \d t \right), \\
        \frac{\d}{\d r} \mathsf{F} \left( \mu, c, r \right) = \, & \frac{\gamma (r^2)}{\alpha} \left( \frac{1}{q - r^2} \E \left[ \left( q G - r Z_{r, q} \right) F \left( Z_{r, q}, \varphi(G, \veps) \right) \right] - r \right).
    \end{split}
    \end{equation*}
    \end{itemize}
\end{thm}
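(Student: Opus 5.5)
We treat Theorem~\ref{thm:variation_compute} in three parts, following the template of the unsupervised analysis in \cite{montanari2024exceptional}, with one extra ingredient: the interval $[r^2,q]$, on which $\mu\equiv 0$.

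\textbf{Well-posedness and the martingale $M_t$.} For fixed $y$, Theorem~\ref{thm:Parisi_solution_supervised} shows that $\partial_x f_{y,\mu}$ is bounded. On any $[r^2,t]$ with $t<1$ it is also Lipschitz in $x$, by the two-sided bound on $\partial_x^2 f$. Since $\mu\in L^\infty[r^2,t]$, the SDE for $X_t$ has a unique strong solution on $[r^2,t]$. Because the drift is bounded and $\mu\in L^1$, we extend to $t=1$ by a limiting argument. On $[r^2,q]$ the drift vanishes, so $X_q=rG+(B_q-B_{r^2})$. Hence $(Y,G,X_q)\stackrel{d}{=}(Y,G,Z_{r,q})$.

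We then apply It\^o's formula to $u_t:=\partial_x f_{Y,\mu}(t,X_t)$, after differentiating the Parisi PDE in $x$. The drift of $u_t$ cancels and $\d u_t=\partial_x^2 f\,\d B_t$. Using $\gamma'/\gamma^2=\mu$ and $\d(1/\gamma)=-\mu\,\d t$, we obtain
\[
\d M_t=\Big(1+\tfrac{1}{\gamma(t)}\partial_x^2 f(t,X_t)\Big)\d B_t ,
\]
so $\phi_t:=\partial_x^2 f(t,X_t)/\gamma(t)$. The lower bound $\partial_x^2 f>-\gamma$ gives $1+\phi_t>0$, and square integrability follows once (ii) is established. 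Part (ii) is It\^o's isometry for $u_t$: $\E[u_t^2]-\E[u_s^2]=\int_s^t\E[(\partial_x^2 f)^2]\,\d u=\int_s^t\gamma^2\E[\phi_u^2]\,\d u$.

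\textbf{Representation (i).} We use the verification argument, Theorem~\ref{prop:veri_arg_supervised}, together with the fact that the feedback control $\phi$ above is optimal for $V_\gamma(q,Y,\cdot)$ started from $M_q$. This gives
\[
f_{Y,\mu}(q,X_q)=V_\gamma(q,Y,M_q)-\tfrac{\gamma(q)}{2}(M_q-X_q)^2-\tfrac{1}{2\alpha}\int_q^1\gamma ,
\]
where the terminal reward is $h_c$ rather than $h$. The reason is that the inf-convolution defining $f(1,\cdot)$ only sees the concave hull of $h-x^2/(2c)$; this is the standard duality of the terminal condition. Since $M_q-X_q=u_q/\gamma(q)=F(X_q,Y)/\alpha$, the quadratic term becomes $\frac{\gamma(q)}{2\alpha^2}F^2$. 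On $[r^2,q]$, $f$ solves the heat equation, so $\E f(r^2,rG)=\E f(q,X_q)$. Substituting the value of $V_\gamma$ along the optimal path, the remaining task is to collect constants. Using $\int_{r^2}^1 \d t/(c+\int_t^1\mu)=\int_{r^2}^1\gamma$ and $\gamma\equiv\gamma(q)$ on $[r^2,q]$, the term $\frac{1}{2\alpha}\int_{r^2}^q\gamma$ equals $\frac{\gamma(q)}{2\alpha}(q-r^2)$. This yields exactly the stated formula.

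\textbf{Variations (iii).} For $\mu$, we differentiate the PDE in the direction $\delta$. The derivative $\dot f$ solves a linear equation with source $\frac12\delta u^2$; Feynman--Kac along $X_t$ gives $\frac12\int\delta\,\E[u_t^2]$. Differentiating the second term gives $-\frac{1}{2\alpha}\int\delta(t)\int_{r^2}^t\gamma^2$ by Fubini. For $c$, the terminal condition contributes $\E[g_c(Y,M_1)]$ through the envelope theorem, since the maximizer in the inf-convolution is $M_1$, and the $1/\alpha$ term contributes the rest. For $r$, we differentiate $\E f(r^2,rG)$ by Gaussian integration by parts (Stein) and heat-equation time derivatives on $[r^2,q]$, and combine with the derivative of the lower limit of the integral, $-\frac{2r}{2\alpha}\gamma(r^2)$. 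Rewriting through $F=\alpha u_q/\gamma(q)$ and conditioning $G$ on $Z_{r,q}$ produces the factor $(qG-rZ_{r,q})/(q-r^2)$.

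The main obstacle is justifying these derivatives near $t=1$, where $\mu$ may be unbounded. We plan to first prove everything for $\mu$ truncated at $1-\epsilon$ and pass to the limit, using the uniform bound $\|\partial_x f\|_\infty\le\|\partial_x h\|_\infty$ and dominated convergence with $\E\|\partial_x h(Y,\cdot)\|_\infty^2<\infty$.
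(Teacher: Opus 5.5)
Your overall route matches the paper's, with one difference of form. For fixed $y$, the paper imports the martingale property of $M_t$ (then applies martingale representation), the verification identity at time $q$, and the first variations of $f_{y,\mu}$ from \cite{montanari2024exceptional}. It gets well-posedness of the SDE from boundedness of the drift via \cite{cherny2005singular}. Its own work is the heat-equation reduction on $[r^2,q]$, the entropic-term bookkeeping, and dominated convergence in $(Y,G)$. You instead derive the fixed-$y$ facts by It\^o/Feynman--Kac, and your explicit $\phi_t=\partial_x^2 f_{Y,\mu}(t,X_t)/\gamma(t)$ is a fine substitute for martingale representation. Several steps, however, are wrong or unsupported as written.

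\textbf{Lipschitz drift.} Theorem~\ref{thm:Parisi_solution_supervised} bounds $\partial_x^2 f$ from above only under the extra hypothesis $\sup_z\partial_z^2 h(y,z)<1/c$, which is not assumed. So it gives you neither a Lipschitz drift nor the $x$-regularity needed for It\^o's formula on $\partial_x f$. You need either interior parabolic smoothing for $t<1$, or, as the paper does, well-posedness for SDEs with bounded drift, which uses only $\|\partial_x f\|_\infty\le\|\partial_x h\|_\infty$.

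\textbf{The $c$-derivative.} Your account of $\frac{\d}{\d c}\mathsf{F}$ does not produce the stated formula. The entropic term contributes only $-\frac{1}{2\alpha}\int_{r^2}^1\gamma(t)^2\,\d t$, so $\frac12\E[(\partial_x f_{Y,\mu}(1,X_1))^2]$ is left unexplained. That term comes from the terminal condition. The envelope theorem applied to $\sup_u\{h(y,x+u)-u^2/(2c)\}$ at the maximizer $u=M_1-x=c\,\partial_x f(1,x)$ gives $\frac12(\partial_x f(1,x))^2$. Feynman--Kac then transports this to time $r^2$, because the $c$-linearized Parisi equation has no source term. The $\E[g_c(Y,M_1)]$ piece comes from the $c$-dependence of $h_c$ when the terminal datum is written as $\sup_z\{h_c(y,z)-(z-x)^2/(2c)\}$. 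The paper takes the combined formula from \cite[Lemma 8.4]{montanari2024exceptional} (Proposition~\ref{prop:variation_compute_1}). Moreover, differentiating under $\E_{Y,G}$ in $c$ requires a bound on $g_c(y,\cdot)$ uniform in $x$ and locally in $c$. The gradient bound does not supply this; the paper uses $\|g_c(y,\cdot)\|_\infty\le 2\|\partial_x h(y,\cdot)\|_\infty^2$ (Lemma~\ref{lem:bounded_g_c}).

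\textbf{The $r$-derivative.} You cannot integrate by parts in $G$. Since $Y=\varphi(G,\veps)$ depends on $G$, that would require differentiating $\varphi$ and $f_{y,\mu}$ in $y$, which is not assumed. Conditioning $G$ on $Z_{r,q}$ also factors nothing, since $F$ depends on $Y$ as well. None of this is needed. Because $\mu\equiv0$ on $[r^2,q]$, you have $\E f_{Y,\mu}(r^2,rG)=\E f_{Y,\mu}(q,rG+\sqrt{q-r^2}\,Z)$. The chain rule, justified by domination with $\|\partial_x h(Y,\cdot)\|_\infty\,|G-rZ/\sqrt{q-r^2}|$, then gives $\E[\partial_x f_{Y,\mu}(q,Z_{r,q})(G-\frac{r}{\sqrt{q-r^2}}Z)]$. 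The stated factor is just the identity $(q-r^2)(G-\frac{r}{\sqrt{q-r^2}}Z)=qG-rZ_{r,q}$, combined with $\partial_x f_{y,\mu}(q,\cdot)=\gamma(q)F(\cdot,y)/\alpha$. Your heat-equation-plus-Stein route reaches the same expression, provided Stein's lemma is applied only to the independent variable $Z$.
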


\subsection{Proof of \cref{thm:two_stage_strong_duality}}\label{sec:proof_general_q}
\paragraph{Proof of $(a)$: Variational formula.} Since $\mu = 0$ on $[r^2, q]$, we know that the Parisi PDE degenerates to a standard heat equation on this interval:
\begin{equation*}
    \partial_t f_{y, \mu} (t, x) + \frac{1}{2} \partial_x^2 f_{y, \mu} (t, x) = \, 0, \, (t, x) \in [r^2, q] \times \R.
\end{equation*}
Therefore,
\begin{equation*}
    \E_{Y, G} \left[ f_{Y, \mu} \left( r^2, r G \right) \right] = \, \E_{Y, G, Z} \left[ f_{Y, \mu} \left( q, r G + \sqrt{q - r^2} Z \right) \right],
\end{equation*}
where $Z \sim \normal (0, 1)$ is independent of $(Y, G)$. Using the notation of \cref{defn:mu_q_contraction}, we deduce that
\begin{align*}
    \mathsf{F} (\mu, c, r) = \, & \E_{Y, G} \left[ f_{Y, \mu} \left( r^2, r G \right) \right] + \frac{1}{2 \alpha} \int_{r^2}^{1} \frac{\d t}{c + \int_{t}^{1} \mu(u) \d u } \\
    = \, & \E_{Y, G, Z} \left[ f_{Y, \mu} \left( q, r G + \sqrt{q - r^2} Z \right) \right] + \frac{1}{2 \alpha} \int_{r^2}^{1} \gamma(t) \d t \\
    = \, & \E \left[ f_{Y, \mu} \left( q, Z_{r, q} \right) \right] + \frac{1}{2 \alpha} \int_{r^2}^{1} \gamma(t) \d t.
\end{align*}
Applying \cref{prop:veri_arg_supervised} yields that
\begin{equation*}
    f_{y, \mu} (q, x) = \, \sup_{z \in \R} \left\{ V_{\gamma} (q, y, z) - \frac{\gamma(q)}{2} (z - x)^2 \right\} - \frac{1}{2 \alpha} \int_{q}^{1} \gamma(s) \d s,
\end{equation*}
which leads to
\begin{align*}
    \mathsf{F} (\mu, c, r) = \, & \E \left[ f_{Y, \mu} \left( q, Z_{r, q} \right) \right] + \frac{1}{2 \alpha} \int_{r^2}^{1} \gamma(t) \d t \\
    = \, & \E \left[ \sup_{z \in \R} \left\{ V_{\gamma} (q, Y, z) - \frac{\gamma(q)}{2} \left( z - Z_{r, q} \right)^2 \right\} \right] + \frac{1}{2 \alpha} \int_{r^2}^{q} \gamma(t) \d t \\
    \stackrel{(i)}{=} \, & \E \left[ \sup_{u \in \R} \left\{ V_{\gamma} \left( q, Y, Z_{r, q} + u \right) - \frac{\gamma(q)}{2} \left( u^2 - \frac{q - r^2}{\alpha} \right) \right\} \right] \\
    = \, & \sup_{F: \R^2 \to \R} \E \left[ V_{\gamma} \left( q, Y, Z_{r, q} + \frac{1}{\alpha} F (Z_{r, q}, Y) \right) - \frac{\gamma(q)}{2 \alpha} \left( \frac{F (Z_{r, q}, Y)^2}{\alpha} - (q - r^2) \right) \right],
\end{align*}
where $(i)$ is because of $\gamma(t) = \gamma(q)$ for $t \in [r^2, q]$. For any fixed $F$, we know that (analogous to \cite[Lemma 5.2]{montanari2024exceptional})
\begin{align*}
    & \E \left[ V_{\gamma} \left( q, Y, Z_{r, q} + \frac{1}{\alpha} F (Z_{r, q}, Y) \right) \right] \\
    = \, & \sup_{\phi \in D[q, 1]} \E \left[ h \left( Y, \, Z_{r, q} + \frac{1}{\alpha} F \left( Z_{r, q}, Y \right) + \int_{q}^{1} \left( 1 + \phi_t \right) \d B_t \right) - \frac{1}{2} \int_{q}^{1} \gamma(t) \left( \phi_t^2 - \frac{1}{\alpha} \right) \d t \right].
\end{align*}
We thus obtain that
\begin{align*}
    \mathsf{F} (\mu, c, r) = \, \sup_{ \substack{F: \R^2 \to \R \\ \phi \in D[q, 1] } } \E \Bigg[ & h \left( Y, \, Z_{r, q} + \frac{1}{\alpha} F \left( Z_{r, q}, Y \right) + \int_{q}^{1} \left( 1 + \phi_t \right) \d B_t \right) - \frac{1}{2} \int_{q}^{1} \gamma(t) \left( \phi_t^2 - \frac{1}{\alpha} \right) \d t \\
    & - \frac{\gamma(q)}{2 \alpha} \left( \frac{F (Z_{r, q}, Y)^2}{\alpha} - (q - r^2) \right) \Bigg],
\end{align*}
completing the proof of part $(a)$.

\paragraph{Proof of $(b)$: Weak duality.}
Define
\begin{equation}\label{eq:SOC_First_1dim_upperbd}
\begin{split}
    & \oVH_{1, \alpha, \varphi}^{\sAMP} (r, q, h) := \, \sup \, \E \left[ h \left( Y, \, Z_{r, q} + \frac{1}{\alpha} F \left( Z_{r, q}, Y \right) + \int_{q}^{1} \left( 1 + \phi_t \right) \d B_t \right) \right], \\
    & \mbox{s.t.} \quad q \ge \, r^2 + \frac{1}{\alpha} \E \left[ F \left( Z_{r, q}, \varphi(G, \veps) \right)^2 \right], \, \text{and} \sup_{t \in [0, 1]} \E \left[ \phi_t^2 \right] \le \frac{1}{\alpha}.
\end{split}
\end{equation}
By definition, we know that $\oVH_{1,\alpha,\varphi}^{\sAMP} (r,q,h) \ge \VH_{1,\alpha,\varphi}^{\sAMP} (r,q,h)$. Hence,
\begin{align*}
    & \VH_{1,\alpha,\varphi}^{\sAMP} (r,q,h) \\ 
    \le \, & \sup_{ \substack{F: \R^2 \to \R \\ \phi \in D[q, 1] } } \inf_{\gamma \in \FSG (q)} \E \Bigg[ h \left( Y, \, Z_{r, q} + \frac{1}{\alpha} F \left( Z_{r, q}, Y \right) + \int_{q}^{1} \left( 1 + \phi_t \right) \d B_t \right) - \frac{1}{2} \int_{q}^{1} \gamma(t) \left( \phi_t^2 - \frac{1}{\alpha} \right) \d t \\
    & \hspace{8em} - \frac{\gamma(q)}{2 \alpha} \left( \frac{F (Z_{r, q}, Y)^2}{\alpha} - (q - r^2) \right) \Bigg] \\
    \stackrel{(i)}{\le} \, & \inf_{\gamma \in \FSG (q)} \sup_{ \substack{F: \R^2 \to \R \\ \phi \in D[q, 1] } } \E \Bigg[ h \left( Y, \, Z_{r, q} + \frac{1}{\alpha} F \left( Z_{r, q}, Y \right) + \int_{q}^{1} \left( 1 + \phi_t \right) \d B_t \right) - \frac{1}{2} \int_{q}^{1} \gamma(t) \left( \phi_t^2 - \frac{1}{\alpha} \right) \d t \\
    & \hspace{8em} - \frac{\gamma(q)}{2 \alpha} \left( \frac{F (Z_{r, q}, Y)^2}{\alpha} - (q - r^2) \right) \Bigg] \\
    \stackrel{(ii)}{=} \, & \inf_{(\mu, c) \in \FS (q)} \mathsf{F}(\mu, c, r),
\end{align*}
where $(i)$ follows from the max-min inequality, $(ii)$ follows from the variational formula of part $(a)$. This concludes the proof of part $(b)$.

\paragraph{Proof of $(c)$: Strong duality.} 
By \cref{thm:variation_compute} (iii) and the envelope theorem, we know that $(\mu_*, c_*, r_*)$ verifies the first-order conditions:
\begin{align}
    \label{eq:FOC_1}
    & \E \left[ \left( \partial_x f_{Y, \mu_*} (t, X_t^*) \right)^2 \right] - \frac{1}{\alpha} \int_{r_*^2}^{t} \gamma_* (s)^2 \d s = \, 0, \quad \forall t \in [r_*^2, 1], \\
    \label{eq:FOC_2}
    & \E \left[ g_{c_*} \left( Y, M_1^* \right) \right] + \frac{1}{2} \left( \E \left[ \left( \partial_x f_{Y, \mu_*} (1, X_1^*) \right)^2 \right] - \frac{1}{\alpha} \int_{r_*^2}^{1} \gamma_* (t)^2 \d t \right) = \, 0, \\
    \label{eq:FOC_3}
    & \frac{1}{q - r_*^2} \E \left[ \left( q G - r_* Z_{r_*, q} \right) F \left( Z_{r_*, q}, \varphi(G, \veps) \right) \right] = r_*.
\end{align}
Using \cref{thm:variation_compute} (ii), we immediately obtain that $\E [(\phi_t^*)^2] = 1 / \alpha$, thus establishing the feasibility of $\phi^*$. Since we already have \cref{eq:FOC_3}, to show that $F^*$ is feasible, it suffices to prove
\begin{equation*}
    q = \, r_*^2 + \frac{1}{\alpha} \E \left[ F \left( Z_{r_*, q}, \varphi(G, \veps) \right)^2 \right], \quad \frac{1}{\alpha} \E \left[ \frac{\partial F}{\partial Z_{r_*, q}} \left( Z_{r_*, q}, \varphi(G, \veps) \right)^2 \right] \le \, 1.
\end{equation*}
The first equation in the above display follows by taking $t = q$ in \cref{eq:FOC_1}, while the second one can be proved using a similar argument as in \cite[Section 8.2]{montanari2024exceptional}. Having established the feasibility of $(F^*, \phi^*)$, \cref{thm:variation_compute} (i) then implies that
\begin{equation*}
    \mathsf{F} (\mu_*, c_*, r_*) = \, \E \left[ h_{c_*} \left( Y, \, Z_{r_*, q} + \frac{1}{\alpha} F \left( Z_{r_*, q}, Y \right) + \int_{q}^{1} \left( 1 + \phi_t^* \right) \d B_t \right) \right] = \E \left[ h_{c_*} \left( Y, \, M_1^* \right) \right].
\end{equation*}
Note that \cref{eq:FOC_1,eq:FOC_2} together imply that $\E [g_{c_*} (Y, M_1^*)] = 0$. Applying Proposition C.4 in \cite{montanari2024exceptional}, we get that 
\begin{equation*}
    \E \left[ h (Y, M_1^*) \right] = \, \E \left[ h_{c_*} (Y, M_1^*) \right] = \mathsf{F} (\mu_*, c_*, r_*).
\end{equation*}
This completes the proof of part (c).

\subsection*{Acknowledgments}
AM was supported by the NSF through Award DMS-2031883, the Simons Foundation through
Award 814639 for the Collaboration on the Theoretical Foundations of Deep Learning, and the NSF
Award MFAI-2501597. KZ was supported by the Founder's Fellowship in Statistics at Columbia University.

\newpage

\bibliographystyle{amsalpha}
\newcommand{\etalchar}[1]{$^{#1}$}
\providecommand{\bysame}{\leavevmode\hbox to3em{\hrulefill}\thinspace}
\providecommand{\MR}{\relax\ifhmode\unskip\space\fi MR }
\providecommand{\MRhref}[2]{%
  \href{http://www.ams.org/mathscinet-getitem?mr=#1}{#2}
}
\providecommand{\href}[2]{#2}

\newpage

\appendix

%
%
\section{The replica calculation}\label{append:replica}
In this section, we carry out calculations using the non-rigorous replica
method from statistical physics, to support Conjectures \ref{conj:Parisi_formula_mdim} and \ref{conj:Parisi_formula_1dim}.
We will focus on the case $m = 1$ (\cref{conj:Parisi_formula_1dim}), since the case of general $m$ is almost identical, but less transparent. We
refer to \cite{montanari2024friendly} for a friendly introduction to these techniques.  The derivation presented here is quite straightforward (from the perspective of statistical physics) and generalizes the replica calculation in \cite{gyorgyi2000beyond}.

For the case $m = 1$, we define the Hamiltonian
\begin{equation*}
	H_{n, d} (\ww) = \frac{1}{n} \sum_{i=1}^{n} h \left( y_i, \langle \xx_i, \ww \rangle \right)\, ,\,\,\, \ww \in \S^{d-1}\, .
\end{equation*}
Recall that $y_i = \varphi(\WW_*^\sT \xx_i, \veps_i)$ where $\WW_* \in O(d, k)$. By rotational invariance, we may assume without loss of generality that $\WW_* = [I_k, \bzero_{k \times (d-k)}]^\sT$, and $\xx_i = (g_i, \zz_i^\sT)^\sT$ where $g_i \sim \normal(0, I_k)$, and $\zz_i \sim \normal(\bzero, \id_{d-k})$. Denoting $\ww = (w_1, \ww_2^\sT)^\sT$ where $w_1 \in \R^k$ and $\ww_2 \in \R^{d-k}$, one can write
\begin{equation*}
	y_i = \varphi(g_i, \veps_i), \quad \langle \xx_i, \ww \rangle = \langle w_1, g_i \rangle + \langle \ww_2, \zz_i \rangle.
\end{equation*}
Under this reduction, we have
\begin{equation*}
	H_{n, d} (\ww) = \frac{1}{n} \sum_{i=1}^{n} h \left( y_i, \langle \xx_i, \ww \rangle \right) = \frac{1}{n} \sum_{i=1}^{n} h \left( y_i, \langle w_1, g_i \rangle + \langle \ww_2, \zz_i \rangle \right), \quad y_i = \varphi(g_i, \veps_i).
\end{equation*}
Since $\norm{w_1}_2^2 + \norm{\ww_2}_2^2 = \norm{\ww}_2^2 = 1$, it follows that
\begin{align*}
	\max_{\ww \in \S^{d-1}} H_{n, d} (\ww) = \, & \max_{w_1 \in B_k (1)} \max_{\norm{\ww_2}_2 = \sqrt{1 - \norm{w_1}_2^2}} \frac{1}{n} \sum_{i=1}^{n} h \left( y_i, \langle w_1, g_i \rangle + \langle \ww_2, \zz_i \rangle \right) \\
	\stackrel{(i)}{=} \, & \max_{r \in B_k (1)} \max_{\ww \in \S^{d-k-1}} \frac{1}{n} \sum_{i=1}^{n} h \left( y_i, \langle r, g_i \rangle + \sqrt{1 - \norm{r}_2^2} \ \langle \ww, \zz_i \rangle \right),
\end{align*}
where in $(i)$ we recast $w_1$ as $r$ and $\ww_2$ as $\sqrt{1 - \norm{r}_2^2} \ \ww$. Since $\lim_{n \to \infty} n / (d-k) = \alpha$, it then suffices to compute the asymptotics of (note that we also recast $d-k$ as $d$)
\begin{equation*}
	\max_{\ww \in \S^{d-1}} H_{n, d} (\ww, r) := \max_{\ww \in \S^{d-1}} \frac{1}{n} \sum_{i=1}^{n} h \left( y_i, \langle r, g_i \rangle + \sqrt{1 - \norm{r}_2^2} \ \langle \ww, \zz_i \rangle \right)
\end{equation*}
for each fixed $r \in B_k (1)$.

To this end, we define the partition function for $\beta > 0$:
\begin{equation*}
	Z_{\beta} = \int_{\S^{d-1}} e^{n\beta H_{n, d}(\ww, r)}\,
 \nu_0 (\d \ww),
\end{equation*}
where $\nu_0$ is the uniform measure on $\S^{d-1}$. Assume that
\begin{equation*}
	\lim_{\substack{n, d \to \infty \\ n/d \to \alpha}} \max_{\ww \in \S^{d-1}} H_{n, d} (\ww, r)
\end{equation*}
 exists almost surely, and concentrates around its expectation (this is true if $h$ is Lipschitz), then we would like to compute
\begin{align*}
	\lim_{n \to \infty} \E \left[ \max_{\ww \in \S^{d-1}} H_{n, d} (\ww, r) \right] = \lim_{n \to \infty} \E \left[ \lim_{\beta \to \infty} \frac{1}{n \beta} \log Z_{\beta} \right].
\end{align*}
Here, the limit $n \to \infty$ should be understood as $n, d \to \infty$ simultaneously with $n/d \to \alpha$. 
Within the scope of replica method,
we will interchange expectations and limits arbitrarily. It then suffices to compute the quantity
\begin{equation*}
	\lim_{\beta \to \infty} \lim_{n \to \infty} \frac{1}{n \beta} \E \left[ \log Z_{\beta} \right] = \lim_{\beta \to \infty} \lim_{n \to \infty} \lim_{k \to 0^+} \frac{1}{n \beta k} \log \E \left[ Z_{\beta}^k \right],
\end{equation*}
where we use the identity
\begin{equation*}
	\E \left[ \log Z \right] = \lim_{k \to 0^+} \frac{1}{k} \log \E \left[ Z^k \right]
\end{equation*}
for any non-negative random variable $Z$.

While the above interchange of limits is not justified in the present derivation, it is not the most problematic step in the replica calculation.
Indeed, the critical step is to first consider $k$ as an integer, and then
extrapolate to non-integer values of $k$. For $k \in \mathbb{N}$, we have
\begin{align*}
	\E \left[ Z_{\beta}^k \right] = \, & \E \left[ \int_{(\S^{d-1})^k} \exp \left( \beta \cdot \sum_{j=1}^{k} \sum_{i=1}^{n} h \left( y_i, r^\sT g_i + \sqrt{1 - \norm{r}_2^2} \ \langle \ww_j, \zz_i \rangle \right) \right) \cdot \prod_{j=1}^{k} \nu_0(\d \ww_j) \right] \\
	= \, & \int_{(\S^{d-1})^k} \E \left[ \exp \left( \beta \cdot \sum_{j=1}^{k} \sum_{i=1}^{n} h \left( y_i, r^\sT g_i + \sqrt{1 - \norm{r}_2^2} \ \langle \ww_j, \zz_i \rangle \right) \right) \right] \cdot \prod_{j=1}^{k} \nu_0(\d \ww_j) \\
	= \, & \int_{(\S^{d-1})^k} \E \left[ \exp \left( \beta \cdot \sum_{j=1}^{k} h \left( Y, r^\sT G + \sqrt{1 - \norm{r}_2^2} \ \langle \ww_j, \zz \rangle \right) \right) \right]^n \cdot \prod_{j=1}^{k} \nu_0 (\d \ww_j),
\end{align*}
where the inner expectation is taken under
\begin{equation*}
	(Y, G) \perp\!\!\!\perp \zz, \quad \zz \sim \normal(\bzero, \id_d), \, Y = \varphi(G, \veps), \, G \sim \normal(0, I_k), \, \veps \sim P_{\veps}, \, \veps \perp\!\!\!\perp G.
\end{equation*}
Denoting by $Q$ the overlap matrix of the $\ww_j$'s, namely $Q_{ij} = \langle \ww_i, \ww_j \rangle$ for $1 \le i, j \le k$, then we have for $\zz \sim \normal (\bzero, \id_d)$,
\begin{align*}
	& \E \left[ \exp \left( \beta \cdot \sum_{j=1}^{k} h \left( Y, r^\sT G + \sqrt{1 - \norm{r}_2^2} \ \langle \ww_j, \zz \rangle \right) \right) \right] \\
	= \, & \E \left[ \exp \left( \beta \cdot \sum_{j=1}^{k} h \left( Y, r^\sT G + \sqrt{1 - \norm{r}_2^2} \  Z_j \right) \right) \right],
\end{align*}
where $Z = (Z_j)_{j=1}^k \sim \normal(0, Q)$ is independent of $(Y, G)$.
For future convenience, we denote the above quantity as $f_{\beta, h} (Q)$, i.e.,
\begin{equation*}
	f_{\beta, h} (Q) = \E \left[ \exp \left( \beta \cdot \sum_{j=1}^{k} h \left( Y, r^\sT G + \sqrt{1 - \norm{r}_2^2} \  Z_j \right) \right) \right],
\end{equation*}
it then follows that
\begin{align*}
	\E \left[ Z_{\beta}^k \right] = \, & \int_{(\S^{d-1})^k} f_{\beta, h} (Q)^n \cdot \prod_{j=1}^{k} \nu_0 (\d \ww_j) \\
	= \, & \int_{\Sym_+^k (1)} f_{\beta, h} (Q)^n \exp \left( d I_d (Q) \right) \d Q,
\end{align*}
where $\Sym_+^k (1)$ denotes the space of all $k \times k$ positive semidefinite matrices with all ones on the diagonal, and $\d Q = \prod_{1 \le i < j \le k} \d Q_{ij}$ represents the uniform probability measure on this space. Moreover, we have for fixed $k$,
\begin{equation*}
	\lim_{d \to \infty} I_d (Q) = \frac{1}{2} \log \det Q,
\end{equation*}
thus leading to
\begin{align*}
	& \lim_{n \to \infty} \frac{1}{n} \log \E \left[ Z_{\beta}^k \right] = \, \max_{Q \in \Sym_+^k (1)} \left\{ \log f_{\beta, h} (Q) + \frac{1}{2 \alpha} \log \det Q \right\} \\
	= \, & \max_{Q \in \Sym_+^k (1)} \left\{ \log \E \left[ \exp \left( \beta \cdot \sum_{j=1}^{k} h \left( Y, r^\sT G + \sqrt{1 - \norm{r}_2^2} \  Z_j \right) \right) \right] + \frac{1}{2 \alpha} \log \det Q \right\},
\end{align*}
which we denote as $S_{\beta, h} (\alpha, k)$.
Assume again that we can interchange the limits arbitrarily, then we get that
\begin{align*}
	\lim_{\beta \to \infty} \lim_{n \to \infty} \frac{1}{n \beta} \E \left[ \log Z_{\beta} \right] = \, & \lim_{\beta \to \infty} \lim_{n \to \infty} \lim_{k \to 0^+} \frac{1}{n \beta k} \log \E \left[ Z_{\beta}^k \right] \\
	= \, & \lim_{\beta \to \infty} \frac{1}{\beta} \lim_{k \to 0^+} \frac{1}{k} \lim_{n \to \infty} \frac{1}{n} \log \E \left[ Z_{\beta}^k \right] \\
	= \, & \lim_{\beta \to \infty} \frac{1}{\beta} \lim_{k \to 0^+} \frac{1}{k} S_{\beta, h} (\alpha, k).
\end{align*}
To compute this limit, we resort to the full RSB (full replica symmetry breaking) ansatz described in Section 3 of \cite{gyorgyi2000beyond}. Following their calculation, the limiting free energy can be expressed as the extreme value of a variational problem. To be specific, we have
\begin{align*}
	&\frac{1}{\beta} \lim_{k \to 0^+} \frac{1}{k} S_{\beta, h} (\alpha, k) =
    \inf_{v \in \mathscr{U} [0, 1]} \sA(v,\beta)\, ,\\
    &\sA(v,\beta) := \E_{Y, G} \left[ f_{Y, G, r, v} (0, 0) \right] + \frac{1}{2 \alpha \beta} \int_{0}^{1} \left( \frac{1}{D_v (t)} - \frac{1}{1 - t} \right) \d t ,
\end{align*}
where $\mathscr{U} [0, 1]$ is the space of all non-decreasing functions $v: [0, 1] \to [0, 1]$,
\begin{equation*}
	D_v(t) = \int_{t}^{1} v(s) \d s,
\end{equation*}
and for each fixed pair $(y, g)$, $f_{y, g, r, v} (t, x)$ satisfies the PDE:
\begin{equation}\label{eq:Parisi_beta}
\begin{split}
	& \partial_t f_{y, g, r, v} (t, x) + \frac{1}{2} \beta v(t) \left( \partial_x f_{y, g, r, v} (t, x) \right)^2 + \frac{1}{2} \partial_x^2 f_{y, g, r, v} (t, x) = \, 0, \\
	& f_{y, g, r, v} (1, x) = \, h \left( y, r^\sT g + \sqrt{1 - \norm{r}_2^2} \ x \right).
\end{split}
\end{equation}
The lemma below gives the zero-temperature limit ($\beta \to \infty$) of 
the variational functional $\sA(v,\beta)$, along specific 
sequences of $v_{\beta}$.
\begin{lem}\label{lem:Parisi_final}
	Let $c > 0$, and $\mu(t): [0, 1) \to \R_{\ge 0}$ be a non-decreasing function with $\int_{0}^1\mu(t)\, \d t<\infty$. Further, assume that $v(t) = v_{\beta} (t)$ has the following form:
	\begin{equation*}
		v_{\beta}(t) = \frac{\mu(t)}{\beta} \bone_{t < 1 - \frac{c}{\beta}} + \bone_{t \ge 1 - \frac{c}{\beta}}.
	\end{equation*}
	Then, we have
	\begin{align*}
		& \lim_{\beta \to \infty} 
        \sA(v_{\beta},\beta) = \sF_1(\mu,c,r)\, ,\\
        & \sF_1(\mu,c,r) := \E_{Y, G} \left[ f_{Y,G,r} (0, 0) \right] + \frac{1}{2 \alpha} \int_{0}^{1} \frac{\d t}{c + \int_{t}^{1} \mu(s) \d s},
	\end{align*}
	where for each fixed pair $(y, g)$, $f_{y, g, r}$ solves the terminal-value problem:
	\begin{equation}\label{eq:Parisi_limit}
	\begin{split}
		&\partial_t f_{y, g, r} (t, x) + \frac{1}{2} \mu(t) \left( \partial_x f_{y, g, r} (t, x) \right)^2 + \frac{1}{2} \partial_x^2 f_{y, g, r} (t, x) = \,  0, \\
		&f_{y, g, r} (1, x) = \,  \sup_{u \in \R} \left\{ h \left( y, r^\sT g + \sqrt{1 - \norm{r}_2^2} \ (x+u) \right) - \frac{u^2}{2 c} \right\}.
	\end{split}
	\end{equation}
\end{lem}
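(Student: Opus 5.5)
We split $[0,1]$ at the point $t_\beta := 1 - c/\beta$ and handle the two regions separately. On $[t_\beta, 1]$ we have $\beta v_\beta = \beta$, so the PDE \eqref{eq:Parisi_beta} becomes $\partial_t f + \frac{\beta}{2}(\partial_x f)^2 + \frac12 \partial_x^2 f = 0$. This equation is linearized by the Cole--Hopf transform:
\[
f_{y,g,r,v_\beta}(t_\beta,x) = \frac{1}{\beta}\log \E\left[\exp\left(\beta\, h\left(y, r^\sT g + \sqrt{1-\|r\|_2^2}\,(x + \sqrt{c/\beta}\,\xi)\right)\right)\right], \qquad \xi\sim\normal(0,1).
\]
Writing $u = \sqrt{c/\beta}\,\xi$, the Gaussian density contributes the weight $\exp(-\beta u^2/(2c))$, which is exactly of order $\beta$. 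A Laplace (Varadhan) principle, valid because $h$ is bounded above and Lipschitz, then gives
\[
f_{y,g,r,v_\beta}(t_\beta,x) \to \sup_u\left\{ h\left(y, r^\sT g + \sqrt{1-\|r\|_2^2}\,(x+u)\right) - \frac{u^2}{2c}\right\}.
\]
The convergence is uniform in $x$ on compacts, and the limit is the terminal datum of \eqref{eq:Parisi_limit}. For the lower bound we take $\xi$ near $u^*\sqrt{\beta/c}$; for the upper bound we use $h\le \sup h$ together with the Lipschitz bound to control the tails. Both errors are $O(\log\beta/\beta)$.

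On $[0,t_\beta)$ we have $\beta v_\beta = \mu$, so $f_{y,g,r,v_\beta}$ solves the limiting PDE \eqref{eq:Parisi_limit} on $[0,t_\beta]$, but with terminal datum at time $t_\beta$ in place of time $1$. We then invoke a stability estimate for the Parisi PDE: its solution operator is $1$-Lipschitz in sup norm with respect to the terminal datum, by the comparison principle (equivalently, the stochastic-control representation of \cref{prop:veri_arg_supervised}). The $x$-Lipschitz bound of \cref{thm:Parisi_solution_supervised} controls the evolution over the short interval $[t_\beta,1]$. Together these give $\E_{Y,G}[f_{Y,G,r,v_\beta}(0,0)] \to \E_{Y,G}[f_{Y,G,r}(0,0)]$ by dominated convergence, using that $\partial_x h$ has finite second moment. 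The mismatch of order $c/\beta$ in the time interval is absorbed because $\mu\in L^1$.

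For the entropic term we compute $D_{v_\beta}(t)$ directly. For $t\ge t_\beta$ it equals $1-t$, so the integrand $1/D_{v_\beta}(t) - 1/(1-t)$ vanishes there. For $t<t_\beta$,
\[
D_{v_\beta}(t) = \frac{c}{\beta} + \frac1\beta\int_t^{t_\beta}\mu(s)\,ds = \frac{1}{\beta}\left(c+\int_t^{t_\beta}\mu(s)\,ds\right).
\]
Hence
\[
\frac{1}{2\alpha\beta}\int_0^{t_\beta}\frac{\beta\,dt}{c+\int_t^{t_\beta}\mu} \;-\; \frac{1}{2\alpha\beta}\int_0^{t_\beta}\frac{dt}{1-t}.
\]
The second piece is $O(\log\beta/\beta)\to 0$. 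The first piece converges to $\frac{1}{2\alpha}\int_0^1 \frac{dt}{c+\int_t^1\mu}$ by monotone/dominated convergence, since its integrand is bounded by $1/c$.

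The main obstacle is the first step: establishing the Laplace limit uniformly in $(y,g,x)$ with an integrable envelope, so that it can be passed through the expectation and through the stability bound. We would handle this with the explicit Cole--Hopf formula and two-sided bounds based on the Lipschitz constant and the upper bound of $h(y,\cdot)$. All remaining steps are routine.
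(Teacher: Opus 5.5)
Your proposal is correct and follows essentially the same route as the paper. You split at $t_\beta = 1 - c/\beta$, apply the Cole--Hopf transform on $[t_\beta,1]$ with a Laplace limit to recover the terminal datum $\sup_u\{h(\cdot) - u^2/(2c)\}$, and compute $D_{v_\beta}$ explicitly so that the entropic term reduces to $\frac{1}{2\alpha}\int_0^{t_\beta}\big(c+\int_t^{t_\beta}\mu\big)^{-1}dt + \frac{1}{2\alpha\beta}\log(c/\beta)$. The only added content is that you make rigorous the step the paper compresses into ``since $t_\beta \to 1$'': sup-norm stability of the Parisi PDE in its terminal datum, the short-interval estimate absorbed by $\mu \in L^1$, and uniform-in-$x$ Laplace error bounds. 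For this you assume $h(y,\cdot)$ Lipschitz and bounded above with square-integrable Lipschitz constant, hypotheses the lemma itself leaves implicit.
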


\begin{proof}
	In this proof, we will suppress the dependence of $f_{y, g, r, v}$ on $v$ to simplify the notation.
	Defining $t_{\beta} = 1 - c / \beta$, Eq.~\eqref{eq:Parisi_beta} then reduces to
	\begin{align*}
	& \partial_t f_{y, g, r} (t, x) + \frac{1}{2} \mu(t) \left( \partial_x f_{y, g, r} (t, x) \right)^2 + \frac{1}{2} \partial_x^2 f_{y, g, r} (t, x) = \, 0, \ t \in [0, t_{\beta}), \\
	& \partial_t f_{y, g, r} (t, x) + \frac{1}{2} \beta \left( \partial_x f_{y, g, r} (t, x) \right)^2 + \frac{1}{2} \partial_x^2 f_{y, g, r} (t, x) = \, 0, \ t \in [t_{\beta}, 1), \\
	& f_{y, g, r} (1, x) = \, h \left( y, r^\sT g + \sqrt{1 - \norm{r}_2^2} \ x \right).
\end{align*}
Using Cole-Hopf transform, we know that
\begin{align*}
	f_{y, g, r} (t_{\beta}, x) = \, & \frac{1}{\beta} \log \E_{G \sim \normal(0, 1)} \left[ \exp \left( \beta \cdot h \left( y, r^\sT g + \sqrt{1 - \norm{r}_2^2} \ \left( x + \sqrt{\frac{c}{\beta}} G \right) \right) \right) \right] \\
	=\, & \frac{1}{\beta} \log \left( \frac{1}{\sqrt{2 \pi}} \int_{\R} \exp \left( \beta \cdot h \left( y, r^\sT g + \sqrt{1 - \norm{r}_2^2} \ \left( x + \sqrt{\frac{c}{\beta}} z \right) \right) - \frac{z^2}{2} \right) \d z \right) \\
	=\, & \frac{1}{\beta} \log \left( \frac{\sqrt{\beta}}{\sqrt{2 \pi c}} \int_{\R} \exp \left( \beta \cdot h \left( y, r^\sT g + \sqrt{1 - \norm{r}_2^2} \ ( x + u ) \right) - \frac{\beta u^2}{2 c} \right) \d u \right),
\end{align*}
which converges to 
\begin{equation*}
	\sup_{u \in \R} \left\{ h \left( y, r^\sT g + \sqrt{1 - \norm{r}_2^2} \ ( x + u ) \right) - \frac{u^2}{2 c} \right\}
\end{equation*}
as $\beta \to \infty$, uniformly over
compact sets. Moreover, since $t_{\beta} \to 1$, we deduce that the limit of $f_{y, g, r}$ as $\beta \to \infty$ indeed solves Eq.~\eqref{eq:Parisi_limit}. This establishes that the first term in $\sA (v_{\beta}, \beta)$ converges to the first term in $\sF_1 (\mu, c, r)$. To compute the limit of the second term, we note that $D_v (t) = 1 - t$ if $t \ge t_{\beta}$. Therefore,
\begin{align*}
	\frac{1}{2 \alpha \beta} \int_{0}^{1} \left( \frac{1}{D_v (t)} - \frac{1}{1 - t} \right) \d t = \, & \frac{1}{2 \alpha \beta} \int_{0}^{t_{\beta}} \left( \frac{1}{D_v (t)} - \frac{1}{1 - t} \right) \d t \\
	= \, & \frac{1}{2 \alpha} \int_{0}^{t_{\beta}} \frac{\d t}{c + \int_{t}^{t_{\beta}} \mu(s) \d s} + \frac{1}{2 \alpha \beta} \log \left( 1 - t_{\beta} \right) \\
	= \, & \frac{1}{2 \alpha} \int_{0}^{t_{\beta}} \frac{\d t}{c + \int_{t}^{t_{\beta}} \mu(s) \d s} + \frac{1}{2 \alpha \beta} \log \left( \frac{c}{\beta} \right) \\
	\to \, & \frac{1}{2 \alpha} \int_{0}^{1} \frac{\d t}{c + \int_{t}^{1} \mu(s) \d s} \ \text{as} \ \beta \to \infty.
\end{align*}
This completes the proof.
\end{proof}
Defining
\begin{equation*}
	\mathscr{U} = \left\{ \mu: [0, 1) \to \R_{\ge 0}: \ \mu \ \text{non-decreasing}, \ \int_{0}^{1} \mu(t) \d t < \infty \right\},
\end{equation*}
our replica calculation leads to
\begin{equation*}
	\lim_{n \to \infty} \max_{\ww \in \S^{d-1}} H_{n, d} (\ww) = \, \sup_{r \in B_k (1)} \inf_{(\mu, c) \in \mathscr{U} \times \R_{>0}} \mathsf{F}_1 (\mu, c, r),
\end{equation*}
with $\mathsf{F}_1$ defined in \cref{lem:Parisi_final}. To establish the replica prediction in \cref{conj:Parisi_formula_1dim}, it suffices to show that the $\mathsf{F}_1$ defined as per \cref{eq:ParisiFunc_1dim_reformulated} matches the one defined in \cref{lem:Parisi_final}. For fixed $y \in \R$ and $(\mu, c) \in \mathscr{U}$, let $f_{y, \mu}^{(r)}: [\norm{r}_2^2, 1] \times \R \to \R$ be the solution to the PDE:
\begin{equation*}
		\begin{split}
			&\partial_t f_{y, \mu}^{(r)} (t, x) + \frac{1}{2} \mu \left( \frac{t - \norm{r}_2^2}{1 - \norm{r}_2^2} \right) (\partial_x f_{y, \mu}^{(r)} (t, x))^2 +  \frac{1}{2} \partial_x^2 f_{y, \mu}^{(r)} (t, x) = \, 0, \\
			&f_{y, \mu}^{(r)} (1, x) = \,  \sup_{u \in \R} \left\{ h \left( y, x + u \right) - \frac{u^2}{2 (1 - \norm{r}_2^2) c}  \right\}.
		\end{split}
\end{equation*}
Then, straightforward calculation reveals that
\begin{equation*}
\begin{split}
	& f_{y, g, r} (t, x) = f_{y, \mu}^{(r)} \big( \norm{r}_2^2 + (1 - \norm{r}_2^2) t , (1 - \norm{r}_2^2)^{1/2} x + r^\sT g \big) \\
	\implies \, & f_{y, g, r} (0, 0) = f_{y, \mu}^{(r)} ( \norm{r}_2^2, r^\sT g),
\end{split}
\end{equation*}
which leads to
\begin{align*}
	\sF_1(\mu,c,r) := \, & \E_{Y, G} \left[ f_{Y, \mu}^{(r)} ( \norm{r}_2^2, r^\sT G) \right] + \frac{1}{2 \alpha} \int_{0}^{1} \frac{\d t}{c + \int_{t}^{1} \mu(s) \d s} \\
	= \, & \E_{Y, G} \left[ f_{Y, \mu}^{(r)} ( \norm{r}_2^2, r^\sT G) \right] + \frac{1}{2 \alpha} \int_{0}^{1} \frac{ (1 - \norm{r}_2^2) \d t}{ (1 - \norm{r}_2^2) c + \int_{t}^{1} (1 - \norm{r}_2^2) \mu(s) \d s} \\
	= \, & \E_{Y, G} \left[ f_{Y, \mu}^{(r)} ( \norm{r}_2^2, r^\sT G) \right] + \frac{1}{2 \alpha} \int_{\norm{r}_2^2}^{1} \frac{ \d t}{ (1 - \norm{r}_2^2) c + \int_{t}^{1} \mu \left( \frac{s - \norm{r}_2^2}{1 - \norm{r}_2^2} \right) \d s},
\end{align*}
where the last equality follows from the change of time variable $t \mapsto \norm{r}_2^2 + (1 - \norm{r}_2^2) t$. Now, recasting $(1 - \norm{r}_2^2) c$ as $c$ and $\mu \left( \frac{t - \norm{r}_2^2}{1 - \norm{r}_2^2} \right)$ as $\mu (t)$ recovers \cref{eq:ParisiPDE_1dim_reformulated,eq:ParisiFunc_1dim_reformulated} in \cref{conj:Parisi_formula_1dim}. This establishes the replica prediction.

\section{Appendix for \cref{sec:main_results}}

\subsection{Reduction to a random signal}\label{app:Facts}

\begin{lem}\label{lem:Reduction_Signal}
Under the multi-index model specified in \cref{ass:linear_signal}, the distribution of the training and test errors of any machine learning method (of the form $\hat{\WW}_n = \hat{\WW}_n (\XX, \yy)$) is the same as the one for another multi-index model, where $\varphi$ is replaced by a new function $\tilde{\varphi}$,
and $\WW_*$ is replaced by a new signal matrix $\tilde{\WW}_*$ that is uniformly random over $O(d, k)$ (Haar distributed).

In particular, the empirical distribution of the rows of $\sqrt{n} \tilde{\WW}_*$ converges in $W_2$ distance
(and hence weakly) to $\normal(0, \alpha I_k)$ almost surely, as $n,d\to\infty$ with $n / d \to \alpha$.
\end{lem}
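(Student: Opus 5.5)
The plan is a rotation-invariance argument. Draw a Haar-distributed orthogonal matrix $\OO\in O(d)$, independent of everything else, and let $\tilde{\WW}_* := \OO\WW_*$. Since $\WW_*\in O(d,k)$ is fixed, $\tilde{\WW}_*$ is uniform on $O(d,k)$. We keep the same link, $\tilde\varphi=\varphi$ (if one prefers a rescaled parametrization, $\tilde\varphi$ absorbs it).

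The key step is to compare joint laws. Set $\tilde{\XX}=\XX\OO^{\sT}$. The rows of $\tilde{\XX}$ are again i.i.d. $\normal(\bzero,\id_d)$ and are independent of $\OO$, because $\XX\OO^{\sT}\stackrel{d}{=}\XX$ conditionally on $\OO$. Moreover $\tilde{\XX}\tilde{\WW}_*=\XX\WW_*$, so the responses $\yy=\varphi(\XX\WW_*,\vveps)=\varphi(\tilde{\XX}\tilde{\WW}_*,\vveps)$ are unchanged. It follows that $(\tilde{\XX},\yy,\tilde{\WW}_*)$ has exactly the law of data from the multi-index model with a Haar signal. Apply the same rotation to a test point, $\tilde{\xx}_{\stest}=\OO\xx_{\stest}$, so that $y_{\stest}=\varphi(\tilde{\WW}_*^{\sT}\tilde{\xx}_{\stest},\veps_{\stest})$.

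Next we transfer an estimator. Given a method $\hat{\WW}_n$, define the randomized method $\hat{\WW}'_n(\XX,\yy,\OO):=\OO^{\sT}\hat{\WW}_n(\XX\OO^{\sT},\yy)$. The extra randomness $\OO$ is allowed under the convention $\omega\sim\Unif[0,1]$, and it is polytime if $\hat{\WW}_n$ is. Check the training error: $\hat{\WW}'^{\sT}_n\xx_i=\hat{\WW}_n(\tilde{\XX},\yy)^{\sT}\tilde{\xx}_i$. Hence the empirical measure of $(y_i,\hat{\WW}'^{\sT}_n\xx_i)$ in the original model equals the empirical measure of $(y_i,\hat{\WW}_n(\tilde{\XX},\yy)^{\sT}\tilde{\xx}_i)$ in the Haar model. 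The test error and the overlap $\WW_*^{\sT}\hat{\WW}'_n=\tilde{\WW}_*^{\sT}\hat{\WW}_n(\tilde{\XX},\yy)$ match in the same way. The map also runs in reverse, taking methods for the Haar model to methods for the fixed-signal model. So the achievable sets of distributions of training error, test error and overlap coincide, and in particular so do $\cuF^{\salg}$ and $\cuF$. One subtlety: the statement says ``any method'' must have the same error distribution. That holds literally only for rotation-equivariant methods. For a general method it should be read as a correspondence between methods, and I will phrase the claim that way.

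For the last claim, realize the Haar signal as $\tilde{\WW}_*=\GG(\GG^{\sT}\GG)^{-1/2}$, where $\GG\in\R^{d\times k}$ has i.i.d. $\normal(0,1)$ entries. By the strong law of large numbers, $\GG^{\sT}\GG/d\to I_k$ almost surely. Then $\sqrt{n}\,\tilde{\WW}_*=\sqrt{n/d}\,\GG(\GG^{\sT}\GG/d)^{-1/2}$, whose rows equal $\sqrt{\alpha}\,g_i(I_k+o(1))$, with $g_i$ the rows of $\GG$. The empirical measure of the $g_i$ converges in $W_2$ to $\normal(0,I_k)$ almost surely, by the SLLN applied to a countable convergence-determining class plus second moments. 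The $o(1)$ matrix perturbation changes the $W_2$ distance by at most $\|o(1)\|_{\rm op}\,(\frac1d\sum\|g_i\|^2)^{1/2}\to0$. Therefore the limit is $\normal(0,\alpha I_k)$. The only delicate point is this $W_2$ step, which comes down to controlling the second moments.
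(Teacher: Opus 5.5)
Your argument uses the same device as the paper: a Haar rotation $\RR$ with $\tilde{\WW}_*=\RR\WW_*$ and $\tilde{\XX}=\XX\RR^{\sT}$. This leaves $\yy$ unchanged and makes $(\tilde{\XX},\vveps)$ independent of $\tilde{\WW}_*$. You run it in the cleaner direction. You transfer a Haar-model method to the fixed-signal model via $\RR^{\sT}\hat{\WW}_n(\XX\RR^{\sT},\yy)$, using only randomness independent of the data. This is exactly what the ``least favorable prior'' reduction in \cref{sec:iamp} needs: that AMP is not rotation-equivariant, because the second-stage $G_t$ act row-wise, and its state evolution uses the row distribution of $\sqrt{n}\,\WW_*$.

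The paper's last sentence instead phrases the correspondence via $\hat{\WW}_n(\tilde{\XX}\RR,\yy)$. That is a Haar-model procedure that uses $\RR$, and $\RR$ determines $\tilde{\WW}_*$.

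You also actually prove the $W_2$ claim (polar factor of a Gaussian matrix, SLLN, operator-norm perturbation), whereas the paper only asserts weak convergence. For the almost-sure statement, fix the coupling across $d$: build $\tilde{\WW}_*$ from a nested Gaussian array, then draw $\RR$ uniformly from $\{\RR\in O(d,d):\RR\WW_*=\tilde{\WW}_*\}$. The resulting $\RR$ is again Haar.

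Two corrections are needed.

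\textbf{The normalization step.} The $\tilde{\varphi}$ in the statement comes from the paper's preliminary step for a general full-rank $\WW_*\in\R^{d\times k}$: $\tilde{\WW}_*=\WW_*(\WW_*^{\sT}\WW_*)^{-1/2}$ and $\tilde{\varphi}(\zz,\veps)=\varphi((\WW_*^{\sT}\WW_*)^{1/2}\zz,\veps)$. You start from $\WW_*\in O(d,k)$ and set $\tilde{\varphi}=\varphi$. But $\WW_*\in O(d,k)$ is the very reduction the lemma is cited for in \cref{ass:linear_signal}, so include this line.

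\textbf{The reverse direction.} Drop the claim that ``the map also runs in reverse'' and that the achievable sets coincide. Inverting your construction requires the particular rotation with $\tilde{\WW}_*=\RR\WW_*$, which no Haar-model estimator may use. A fresh independent rotation only produces another Haar-signal model, not the fixed $\WW_*$. With a deterministic $\WW_*$, the constant estimator equal to $\WW_*$ (padded to $m$ columns) shows the fixed-signal set can be strictly larger. This is avoided only if ``does not depend on $\WW_*$'' is formalized as uniform validity over all signals. In that case the reverse inclusion follows by conditioning on $\tilde{\WW}_*$, not from the rotation map.

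Your remark that the literal ``same method'' reading holds only for rotation-equivariant methods is correct.
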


\begin{proof}
First of all, let $\tilde{\WW}_* = \WW_* (\WW_*^\sT \WW_*)^{-1/2}$ and $\tilde{\varphi} (\zz, \veps) = \varphi ((\WW_*^\sT \WW_*)^{1/2} \zz, \veps)$. Then, we know that $\tilde{\WW}_* \in O(d, k)$, and
\begin{equation*}
    \tilde{\varphi} \big( \tilde{\WW}_*^\sT \xx_i, \veps_i \big) = \varphi \big( \WW_*^\sT \xx_i, \veps_i \big).
\end{equation*}
Hence, one can assume without loss of generality that $\WW_* \in O(d, k)$. 
Now for any fixed $\WW_* \in O(d, k)$, let $\RR$ be a uniformly random orthogonal matrix of size $d \times d$, namely $\RR$ is sampled from the Haar measure on $O(d, d)$. Then, we know that $\XX \WW_* = \XX \RR^\sT \RR \WW_*$, which leads to
\begin{equation*}
	\yy = \varphi (\XX \WW_*, \vveps) = \varphi (\XX \RR^\sT \RR \WW_*, \vveps).
\end{equation*}
Further, $\tilde{\WW}_* = \RR \WW_*$ is Haar distributed over $O(d, k)$.
Using rotational invariance of standard Gaussian measure and the Haar measure on $O(d, k)$, we know that the empirical distribution of the rows of $\sqrt{n} \tilde{\WW}_*$ weakly converges to $\normal(0, \alpha I_k)$ almost surely, and that $\tilde{\XX} = \XX \RR^\sT \stackrel{d}{=} \XX$ is independent of $\tilde{\WW}_*$. As a consequence, using the estimator $\hat{\WW}_n (\tilde{\XX} \RR, \yy)$ would yield the same training and test errors as in the original model. This completes the proof.
\end{proof}

\subsection{AMP achievability results for the case $m=k=1$}
\label{app:Special_m=1}

In this appendix, we specialize our general
AMP achievability result (Theorem \ref{thm:AMP_Feasibility_general}) 
to the case $m = k = 1$. 
The following definitions are simplified versions of Definitions \ref{defn:Bayes_AMP_region_general} and \ref{defn:mu_Q_contraction}, respectively.

\begin{defn}[Bayes AMP feasible region]\label{defn:Bayes_AMP_region}
	Let the random vector $(Y, G, Z) \in \R^3$ be such that
	\begin{equation}\label{eq:Triple_distribution}
		\begin{array}{l}
			(Y, G) \perp\!\!\!\perp Z, \, Z \sim \normal(0, 1), \, G \sim \normal(0, 1), \\
			Y = \varphi(G, \veps), \, \veps \sim P_{\veps}, \, \veps \perp\!\!\!\perp G.	
		\end{array}
	\end{equation}
	Assume that $\E \left[ G \vert Y \right] \neq 0$, and
    define $c_{\sBayes}\in\R_{\ge 0}$  via
	\begin{align}
		c_{\sBayes}&:= \min
		\Big\{ c\ge 0:\; c^2=  \alpha
		 \E \Big[ \E \big[ G - c Z \big\vert cG + Z, Y \big]^2  \Big] \Big\}. \label{eq:CBayesDef_m=1}
		\end{align}
	We then define the \emph{Bayes AMP feasible region} as
	\begin{equation*}
		A_{\sBayes} = \left\{ (r, q) \in \R^2: q \in [r^2, 1], \, r^2 \le c_{\sBayes}^2 (q - r^2) \right\}.
	\end{equation*}
\end{defn}

\begin{rem}
As in the case of general $m$ and $k$, $c_{\sBayes}$ can be equivalently defined as the limit
of a certain one-dimensional recursion. Namely, we define the sequence $\{ c_t \}_{t=0}^{\infty}$ via
	\begin{equation}\label{eq:Bayes_recursion}
		c_0 = 0, \quad c_{t+1} = \alpha^{1/2} \E 
		\left[ \E \left[ G - c_t Z \vert c_t G + Z, Y \right]^2 \right]^{1/2}, \quad t \ge 0.
	\end{equation}
	Then, we have similarly $c_{\sBayes} = \lim_{t \to \infty} c_t$ 
	(cf. \cref{lem:property_recursion}).  
\end{rem}

\begin{defn}\label{defn:mu_q_contraction}
	Let $(Y, G, Z)$ be as described in \cref{eq:Triple_distribution}, and define $Z_{r, q} = r G + \sqrt{q - r^2} Z$. We say that $F: \R^2 \to \R$ is an $(r, q)$-contraction, if
	\begin{align*}
		r = \, \E \left[ \frac{\partial F}{\partial G} \left( Z_{r, q}, \varphi(G, \veps) \right) \right], \quad q = \, r^2 + \frac{1}{\alpha} \E \left[ F \left( Z_{r, q}, \varphi(G, \veps) \right)^2 \right],
	\end{align*}
	and
	\begin{equation*}
		\frac{1}{\alpha} \E \left[ \frac{\partial F}{\partial Z_{r, q}} \left( Z_{r, q}, \varphi(G, \veps) \right)^2 \right] \le 1.
	\end{equation*}
\end{defn}

\begin{thm}
\label{thm:AMP_Feasibility_Super}
For any $(r, q) \in A_{\sBayes}$, let $(Y, Z_{r, q})$ be as described in \cref{defn:mu_q_contraction}, and $F$ be an $(r, q)$-contraction. Let $(B_t)_{0 \le t \le 1}$ be a standard Brownian motion independent of $(Y, Z_{r, q})$. Define the filtration $\{ \cF_t \}_{0 \le t \le 1}$ by
\begin{equation*}
    \cF_t = \sigma \left( Z_{r, q}, Y, (B_s)_{0 \le s \le t} \right), \ 0 \le t \le 1.
\end{equation*}
Assume $q(t) \in L^2 [0, 1]$ satisfies $\int_{0}^{1} q(t)^2 \d t = 1 - q$, and $\{ \phi_t \}_{0 \le t \le 1}$ is a progressively measurable stochastic process with respect to the filtration $\{ \cF_t \}_{0 \le t \le 1}$, satisfying
\begin{equation*}
    \E \left[ \phi_t^2 \right] \le \frac{1}{\alpha}, \ \forall 0 \le t \le 1.
\end{equation*}
Then, there exists a two-stage AMP algorithm (described in \cref{sec:IAMP} and further detailed in \cref{sec:iamp}) that outputs $\hat{\ww}_n^{\sAMP} \in \S^{d-1}$, satisfying:
    \begin{itemize}
        \item [(a)] $\lim_{n \to \infty} \ww_*^\sT \hat{\ww}_n^{\sAMP} = r$ almost surely.
        \item [(b)] Define
            \begin{equation*}
                U = \, Z_{r, q} + \frac{1}{\alpha} F \left( Z_{r, q}, Y \right) + \int_{0}^{1} q(t) ( 1 + \phi_t ) \d B_t.
            \end{equation*}
            As $n / d \to \alpha$, the following holds almost surely for any $h \in C_b (\R^2)$:
            \begin{equation*}
            \begin{split}
                \frac{1}{n} \sum_{i=1}^{n} \delta_{( y_i, \, (\hat{\ww}_n^{\sAMP})^\sT \xx_i ) } \stackrel{w}{\to} \,  \operatorname{Law} ( Y, U ), \quad
                \frac{1}{n} \sum_{i=1}^{n} h ( y_i, \, ( \hat{\ww}_n^{\sAMP} )^\sT \xx_i ) \to \,  \E [ h ( Y, U ) ].
            \end{split}
            \end{equation*}
            Consequently, we have $\operatorname{Law} (Y, U) \in \cuF_{1, \alpha, \varphi}^{\salg}$.
    \end{itemize}
\end{thm}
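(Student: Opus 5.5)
This statement is the $m=k=1$ specialization of \cref{thm:AMP_Feasibility_general}, so the plan is to check that every object in \cref{thm:AMP_Feasibility_general} reduces to the corresponding object here, rather than to re-run the AMP analysis.

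\textbf{Step 1: matching the hypotheses.} With $m=k=1$, matrices become scalars: $R=r$, $Q=q$, $Q(t)=q(t)$, $\Phi_t=\phi_t$.
\begin{itemize}
\item The Bayes feasible region of \cref{defn:Bayes_AMP_region_general} reduces to that of \cref{defn:Bayes_AMP_region}. The matrix $\C_{\sBayes}=\min_{\preceq}\cuC_{\sBayes}$ becomes the smallest nonnegative fixed point of the scalar map $c^2\mapsto \alpha\E[\E[G-cZ\mid cG+Z,Y]^2]$, which is exactly \eqref{eq:CBayesDef_m=1}. The choice of square root is immaterial by \cref{lem:property_recursion_general}, and the recursion \eqref{eq:Bayes_recursion} is the scalar form of \eqref{eq:Bayes_recursion_general}.
\item The constraint $R(Q-R^\sT R)^{-1}R^\sT\preceq C_{\sBayes}C_{\sBayes}^\sT$ becomes $r^2/(q-r^2)\le c_{\sBayes}^2$, i.e.\ $r^2\le c_{\sBayes}^2(q-r^2)$. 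The degenerate case $q=r^2$ forces $r=0$ and is handled by the same convention as in the matrix statement.
\item An $(r,q)$-contraction in \cref{defn:mu_q_contraction} is an $(R,Q)$-contraction in \cref{defn:mu_Q_contraction}. Conditions \eqref{eq:R-Constraint} coincide directly. For \eqref{eq:R_Q_contraction_condition}, any scalar $S>0$ cancels, leaving $\alpha^{-1}\E[(\partial F/\partial Z_{r,q})^2]\le 1$.
\item The remaining constraints also coincide: \eqref{eq:Q-Constraint} reads $\int_0^1 q(t)^2\d t=1-q$, and \eqref{eq:Phi-Constraint} reads $\E[\phi_t^2]\le 1/\alpha$.
\end{itemize}

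\textbf{Step 2: matching the conclusions.} Under these identifications, $\WW_*^\sT\hat{\WW}_n^{\sAMP}=\ww_*^\sT\hat\ww_n^{\sAMP}$, since $O(d,1)=\S^{d-1}$. The random variable $U$ in \eqref{eq:Udef} becomes $Z_{r,q}+\alpha^{-1}F(Z_{r,q},Y)+\int_0^1 q(t)(1+\phi_t)\d B_t$. Hence (a) and (b), together with $\operatorname{Law}(Y,U)\in\cuF^{\salg}_{1,\alpha,\varphi}$, are read off from \cref{thm:AMP_Feasibility_general}.

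\textbf{Step 3: weak derivatives, if needed.} If $F$ is only weakly differentiable, invoke \cref{rem:weak_diff_contraction}: approximate $F$ by smooth contractions and use that $\cuF^{\salg}$ is closed under weak limits.

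\textbf{Main obstacle.} There is no real analytic difficulty, since all substantive work lives in \cref{thm:AMP_Feasibility_general}. The only point needing care is the identification of $c_{\sBayes}$ with the Loewner minimum. This holds because the scalar recursion \eqref{eq:Bayes_recursion} is monotone: by the Remark following \cref{defn:Bayes_AMP_region_general}, its limit is the minimal fixed point. When $\cuC_{\sBayes}=\emptyset$, the constraint is vacuous in both formulations, so the two regions agree in that case too.
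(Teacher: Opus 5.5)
Your proposal is correct and matches the paper. The paper states this result as the $m=k=1$ specialization of Theorem~\ref{thm:AMP_Feasibility_general}, introducing Definitions~\ref{defn:Bayes_AMP_region} and~\ref{defn:mu_q_contraction} as the scalar versions of Definitions~\ref{defn:Bayes_AMP_region_general} and~\ref{defn:mu_Q_contraction}, and gives no separate argument. One small mismatch: the scalar definition of an $(r,q)$-contraction omits the ``Lipschitz'' requirement present in the matrix definition, but this is handled by the same approximation and weak-limit closure argument as your Step~3 (Remark~\ref{rem:weak_diff_contraction}).
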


\subsection{Auxiliary lemmas}
\begin{lem}\label{lem:property_recursion_general}
	Let $(Y, G, Z)$ be as defined in \cref{defn:Bayes_AMP_region_general}. For $C \in \R^{k \times m}$, define
	\begin{equation*}
		g_{\sBayes} (C) = \E \left[ \E \left[ G - C Z \big\vert C^\sT G + Z, Y \right] \E \left[ G - C Z \big\vert C^\sT G + Z, Y \right]^\sT \right].
	\end{equation*}
	Then, $g_{\sBayes} (C)$ only depends on $C C^\sT$, and is increasing in $C C^\sT$ in the following sense:
	\begin{equation*}
		C_1 C_1^\sT \preceq C_2 C_2^\sT \implies g_{\sBayes} (C_1) \preceq g_{\sBayes} (C_2).
	\end{equation*}
	As a consequence, $\Gamma_* = \lim_{t \to \infty} \Gamma_t$ exists for the sequence $\{ \Gamma_t \}_{t = 0}^{\infty}$ defined in \cref{eq:Bayes_recursion_general}.
\end{lem}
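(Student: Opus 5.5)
We plan to interpret $g_{\sBayes}(C)$ as a matrix MMSE-type quantity for a Gaussian channel observation of $G$, and to derive both claims from the structure of Bayesian estimation.

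\emph{Dependence on $CC^\sT$ only.} Write $S = C^\sT G + Z$. We compute $\E[G - CZ \mid S, Y]$ directly:
\[
\E[G - CZ\mid S,Y] = \E[G\mid S,Y] - C\,\E[S - C^\sT G\mid S,Y] = (I_k + CC^\sT)\,\E[G\mid S,Y] - CS .
\]
Then $g_{\sBayes}(C)$ is a second-moment matrix of this quantity. Next, for any orthogonal $O\in O(m)$, replacing $C$ by $CO$ replaces $S$ by $O^\sT S'$, where $Z'=O^\sT Z$ has the same law. Hence the law of the triple $(G,Y,CS)$ is unchanged, since $CO\,O^\sT S' = CS'$. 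Since $C,C'$ with $CC^\sT=C'C'^\sT$ differ by a partial isometry, this gives the same law. When $m\ne$ rank, we would pad with zero columns; this is harmless because an independent noise coordinate carries no information.

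\emph{Reduction to a sufficient statistic.} The key observation is that, given $Y$, the observation $S$ is equivalent (via sufficiency) to $T = CS = CC^\sT G + CZ$. This is a Gaussian observation of $G$ with "signal" matrix $\Gamma = CC^\sT$ and noise covariance $\Gamma$. Equivalently, $\Gamma^{-1/2}T = \Gamma^{1/2}G + \tilde Z$ on the range of $\Gamma$. Using the identity above, we then expand
\[
g_{\sBayes} = (I+\Gamma)\,\E[\hat G\hat G^\sT]\,(I+\Gamma) - (I+\Gamma)\E[\hat G T^\sT] - \E[T\hat G^\sT](I+\Gamma) + \E[TT^\sT],
\]
where $\hat G = \E[G\mid T,Y]$. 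Using the tower property, $\E[\hat G T^\sT] = \E[G T^\sT] = \Gamma$ and $\E[TT^\sT] = \Gamma(I+\Gamma)$. Simplifying then gives
\[
g_{\sBayes}(C) = (I+\Gamma)\big(\E[\hat G\hat G^\sT] - \Gamma(I+\Gamma)^{-1}\big)(I+\Gamma),
\]
and writing $\mathrm{mmse}(\Gamma) = I - \E[\hat G\hat G^\sT]$ yields
\[
g_{\sBayes} = (I+\Gamma)\big((I+\Gamma)^{-1} - \mathrm{mmse}(\Gamma)\big)(I+\Gamma) = (I+\Gamma) - (I+\Gamma)\,\mathrm{mmse}(\Gamma)\,(I+\Gamma).
\]

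\emph{Monotonicity.} This is the main obstacle. A naive MMSE monotonicity argument is not enough, because of the sandwiching by $(I+\Gamma)$. We would instead use the standard Bayesian identity $(I+\Gamma)\mathrm{mmse}(\Gamma)(I+\Gamma) = (I+\Gamma) - \mathrm{Cov}$-type terms. Concretely, we recognize $\E[G - CZ\mid S,Y] = \E[G - CZ \mid \text{data}]$ where $G - CZ$ is, given $Y$, the posterior-relevant "innovation". Then $g_{\sBayes}(C)$ equals the second moment of the conditional expectation of a fixed target, $(I+\Gamma)G - T$, given the data.

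To compare $\Gamma_1\preceq\Gamma_2$, we would use a degradation (noise-adding) coupling. Specifically, $T_1$ can be obtained from $T_2$ plus independent Gaussian noise, since the precision parametrization is monotone. The inclusion of $\sigma$-fields then gives $\E[\hat X\hat X^\sT]$ monotone for a fixed target $X$. Because the target depends on $\Gamma$, we would verify directly, via the formula $(I+\Gamma) - (I+\Gamma)\mathrm{mmse}(\Gamma)(I+\Gamma)$, that its derivative in direction $\Delta\succeq0$ is
\[
\Delta - \Delta\,\mathrm{mmse}\,(I+\Gamma) - (I+\Gamma)\,\mathrm{mmse}\,\Delta + (I+\Gamma)\,\mathrm{mmse}\,\Delta\,\mathrm{mmse}\,(I+\Gamma).
\]
This uses the I-MMSE matrix derivative $\partial\,\mathrm{mmse} = -\mathrm{mmse}\,\Delta\,\mathrm{mmse}$ for the Gaussian part, extended with posterior covariances. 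The derivative equals $\E[(I - (I+\Gamma)\Sigma)\Delta(I - \Sigma(I+\Gamma))]\succeq 0$, with $\Sigma$ the posterior covariance, after exchanging expectation using the conditional form of I-MMSE. Finally, monotonicity yields that $\Gamma_t$ is Loewner-nondecreasing by induction from $\Gamma_0 = 0 \preceq \Gamma_1$. Each entry is then monotone bounded or diverges, so the limit exists in the extended sense.
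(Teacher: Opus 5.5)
Your argument is correct in substance and takes a genuinely different route from the paper. The reduction to $T=CS$ is right, and so is the closed form $g(C)=(I+\Gamma)-(I+\Gamma)\,\E[\Sigma]\,(I+\Gamma)$, with $\Gamma=CC^\sT$ and $\Sigma=\mathrm{Cov}(G\mid T,Y)$. Since $T$ given $(G,Y)$ is $\normal(\Gamma G,\Gamma)$, dependence on $CC^\sT$ alone is then immediate, and the rotation/padding discussion is unnecessary.

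One step needs fixing. Your displayed derivative is built from $\partial\,\mathrm{mmse}=-\mathrm{mmse}\,\Delta\,\mathrm{mmse}$, which holds only for a Gaussian prior. For the non-Gaussian conditional law of $G$ given $Y$, the correct identity is $\frac{d}{ds}\E[\Sigma_{\Gamma+s\Delta}]=-\E[\Sigma\Delta\Sigma]$. That identity is what produces your final expression $\E[(I-(I+\Gamma)\Sigma)\Delta(I-\Sigma(I+\Gamma))]\succeq 0$. The displayed expression falls short of the true derivative by $(I+\Gamma)\,\E[(\Sigma-\E\Sigma)\Delta(\Sigma-\E\Sigma)]\,(I+\Gamma)$.

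This second-order matrix I-MMSE identity carries the whole monotonicity argument, so you need to:
\begin{itemize}
\item state it correctly, and either cite it or derive it (e.g.\ by Gaussian integration by parts for the tilted posterior $p(g\mid y)\exp(g^\sT T-\tfrac{1}{2} g^\sT\Gamma g)$);
\item apply it conditionally on $Y$, with dominated convergence justified by $\E\|\Sigma\|^2\le\E\|G\|^4$;
\item integrate along $\Gamma_1+s(\Gamma_2-\Gamma_1)$, using continuity at possibly singular endpoints.
\end{itemize}
Finally, Loewner monotonicity of $\Gamma_t$ does not make the off-diagonal entries monotone. Argue with the quadratic forms $\beta^\sT\Gamma_t\beta$ and then use polarization.

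The paper faces the same obstacle you identified: the target $G-CZ$ depends on $C$, so data processing does not apply directly. It removes this with Stein's lemma rather than by differentiating. It writes $g$ via Cauchy--Schwarz as $\max_{\E f^2\le 1}\E[\beta^\sT(G-CZ)f(C^\sT G+Z,Y)]^2$ and applies Stein's lemma to the Gaussian pair $(C^\sT G+Z,G)$. The objective becomes $\E[\beta^\sT\nabla_G f(C^\sT G+Z,\varphi(G,\veps))]^2$, with the derivative taken only through $\varphi$, so the target no longer depends on $C$. After rewriting the observation as $G+(CC^\sT)^{-1}CZ$, monotonicity follows from a plain degradation argument.

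The paper's route is more elementary, but as written it uses (weak) differentiability of $\varphi$ in $G$ and invertibility of $CC^\sT$. The cases $\Gamma=0$, $m<k$, or rank-deficient $C$ therefore need a limiting argument there. Your route applies to arbitrary $\varphi$ and every $\Gamma\succeq 0$, and gives explicit formulas for $g$ and its derivative. The cost is importing the Hessian-level I-MMSE identity.
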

\begin{proof} 
	It suffices to show that for any $\beta \in \R^k$, the function
	\begin{align*}
		g_{\sBayes} (C, \beta) = \, & \beta^\sT \E \left[ \E \left[ G - C Z \big\vert C^\sT G + Z, Y \right] \E \left[ G - C Z \big\vert C^\sT G + Z, Y \right]^\sT \right] \beta \\
		= \, & \E \left[ \E \left[ \beta^\sT (G - C Z) \big\vert C^\sT G + Z, Y \right]^2 \right]
	\end{align*}
	only depends on $C C^\sT$, and is non-decreasing in $C C^\sT$. To this end, we first establish an equivalent definition for this mapping. According to Cauchy-Schwarz inequality, we have
	\begin{equation*}
		g_{\sBayes} (C, \beta) = \, \max_{\E [f^2] \le 1} \E \left[ \beta^\sT (G - C Z) \cdot f ( C^\sT G + Z, Y ) \right]^2,
	\end{equation*}
	where $f$ can depend on some other randomness as well. Denote $Z_C = C^\sT G + Z$ and recall $Y = \varphi(G, \veps)$. We know that $(Z_C^\sT, G^\sT)^\sT$ is a $(k+m)$-dimensional Gaussian random vector with mean $0$ and covariance matrix equal to
	\begin{equation*}
		\Sigma_C = \begin{bmatrix}
			I_m + C^\sT C & C^\sT \\
			C & I_k
		\end{bmatrix}.
	\end{equation*}
	Stein's lemma then implies that
	\begin{align*}
		& \E \left[ (Z_C^\sT, G^\sT)^\sT f(C^\sT G + Z, Y) \right] = \, \E \left[ (Z_C^\sT, G^\sT)^\sT f(Z_C, \varphi(G, \veps)) \right] \\
		= \, & \Sigma_C \E \left[ \left( \nabla_{Z_C} f(Z_C, \varphi(G, \veps))^\sT, \nabla_{G} f(Z_C, \varphi(G, \veps))^\sT \right)^\sT \right].
	\end{align*}
	Therefore,
	\begin{align*}
		& \E \left[ (G - C Z) \cdot f ( C^\sT G + Z, Y ) \right] = \, \begin{bmatrix}
				-C & I_k + C C^\sT
			\end{bmatrix}
			\E \left[ (Z_C^\sT, G^\sT)^\sT f(C^\sT G + Z, Y) \right] \\
		= \, & \begin{bmatrix}
				-C & I_k + C C^\sT
			\end{bmatrix}
			\Sigma_C \E \left[ \left( \nabla_{Z_C} f(Z_C, \varphi(G, \veps))^\sT, \nabla_{G} f(Z_C, \varphi(G, \veps))^\sT \right)^\sT \right] \\
		= \, & \begin{bmatrix}
				0 & I_k
			\end{bmatrix}
			\E \left[ \left( \nabla_{Z_C} f(Z_C, \varphi(G, \veps))^\sT, \nabla_{G} f(Z_C, \varphi(G, \veps))^\sT \right)^\sT \right] = \E \left[ \nabla_{G} f(Z_C, \varphi(G, \veps)) \right].
	\end{align*}
	As a consequence,
	\begin{equation*}
		g_{\sBayes} (C, \beta) = \, \max_{\E [f^2] \le 1} \E \left[ \beta^\sT \nabla_{G} f(Z_C, \varphi(G, \veps)) \right]^2.
	\end{equation*}
	Based on this definition, we can easily show that $g_{\sBayes} (C, \beta)$ only depends on $C C^\sT$. In fact, let $C_1$ and $C_2$ be two $k \times m$ matrices such that $C_1 C_1^\sT = C_2 C_2^\sT$, then there exists an orthogonal matrix $O \in \R^{m \times m}$ such that $C_2 = C_1 O^\sT$. Hence,
	\begin{align*}
		g_{\sBayes} (C_2, \beta) = \, & \max_{\E [f^2] \le 1} \E \left[ \beta^\sT \nabla_{G} f(O C_1^\sT G + Z, \varphi(G, \veps)) \right]^2 \\
		\stackrel{(i)}{=} \, & \max_{\E [f_O^2] \le 1} \E \left[ \beta^\sT \nabla_{G} f_O ( C_1^\sT G + O^\sT Z, \varphi(G, \veps)) \right]^2 \\
		\stackrel{(ii)}{=} \, & \max_{\E [f^2] \le 1} \E \left[ \beta^\sT \nabla_{G} f ( C_1^\sT G + Z, \varphi(G, \veps)) \right]^2 = g_{\sBayes} (C_1, \beta),
	\end{align*}
	where in $(i)$ we define $f_O (x, y) = f(Ox, y )$, and $(ii)$ follows from the fact that $(C_1^\sT G + O^\sT Z, G) \stackrel{d}{=} (C_1^\sT G + Z, G)$. To establish the monotonicity of $g_{\sBayes} (\cdot, \beta)$, note that it can be rewritten as
	\begin{align*}
		g_{\sBayes} (C, \beta) = \, & \max_{\E [f^2] \le 1} \E \left[ \beta^\sT \nabla_{G} f(C^\sT G + Z, \varphi(G, \veps)) \right]^2 \\
		= \, & \max_{\E [f^2] \le 1} \E \left[ \beta^\sT \nabla_{G} f(G + (C C^\sT)^{-1} C Z, \varphi(G, \veps)) \right]^2.
	\end{align*}
	Consider $C_1, C_2 \in \R^{k \times m}$ satisfying $C_1 C_1^\sT \preceq C_2 C_2^\sT$. Denote $U_i = (C_i C_i^\sT)^{-1} C_i$ for $i = 1, 2$, then we have $U_2 U_2^\sT \preceq U_1 U_1^\sT$. Therefore, we can write
	\begin{equation*}
		U_1 Z = U_2 Z + \veps_U, \quad \veps_U \indep (G, Z, \veps).
	\end{equation*}
	This decomposition further implies that
	\begin{align*}
		g_{\sBayes} (C_1, \beta) = \, & \max_{\E [f^2] \le 1} \E \left[ \beta^\sT \nabla_{G} f(G + U_1 Z, \varphi(G, \veps)) \right]^2 \\
		= \, & \max_{\E [f^2] \le 1} \E \left[ \beta^\sT \nabla_{G} f(G + U_2 Z + \veps_U, \varphi(G, \veps)) \right]^2 \\
		\le \, & \max_{\E [f^2] \le 1} \E \left[ \beta^\sT \nabla_{G} f(G + U_2 Z, \varphi(G, \veps), \veps_U) \right]^2 \\
		= \, & \max_{\E [f^2] \le 1} \E \left[ \beta^\sT \nabla_{G} f(G + U_2 Z, \varphi(G, \veps)) \right]^2 = g_{\sBayes} (C_2, \beta),
	\end{align*}
	which proves the monotonicity of $g_{\sBayes} (\cdot, \beta)$. Finally, let the sequence $\{ \C_t \}_{t=0}^{\infty}$ be defined as in \cref{eq:Bayes_recursion_general}; then
	\begin{equation*}
		\C_1 = \alpha g_{\sBayes} (0), \quad \C_{t+1} = \alpha g_{\sBayes} \big( \C_t^{1/2} \big).
	\end{equation*}
	Using induction, one can easily show that $\{ \C_t \}_{t = 0}^{\infty}$ is a non-decreasing sequence. Below, we consider two cases:
	\begin{itemize}
		\item [(a)] $\cuC_{\sBayes} = \emptyset$, i.e., there does not exist $\C \in \Sym_+^{k}$ such that $\alpha g_{\sBayes} (\C^{1/2}) = \C$. In this case, $\C_t$ will diverge to infinity. Otherwise, if it converges to some finite $\C_*$, $\C_*$ must satisfy $\alpha g_{\sBayes} (\C_*^{1/2}) = \C_*$, a contradiction.
	  	\item [(b)] $\cuC_{\sBayes} \neq \emptyset$, i.e., there exists some $\C \in \Sym_+^{k}$ such that $\alpha g_{\sBayes} (\C^{1/2}) = \C$. In this case, $\C_t$ will converge to one such $\C$. Denote this limit by $\C_*$. Then $\C_*$ is the smallest (with respect to the Loewner order) of all $\C \in \Sym_+^{k}$ such that $\alpha g_{\sBayes} (\C^{1/2}) = \C$, since it is the limit of a fixed-point iteration starting from $0$.
	\end{itemize}
	Combining these two cases completes the proof of \cref{lem:property_recursion_general}.
\end{proof}

\begin{lem}\label{lem:property_recursion}
	For the sequence $\{ c_t \}_{t=0}^{\infty}$ defined in \cref{eq:Bayes_recursion}, we have $c_{\sBayes} = \lim_{t \to \infty} c_t$.
\end{lem}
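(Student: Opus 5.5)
The plan is to reduce \cref{lem:property_recursion} to \cref{lem:property_recursion_general} with $k=m=1$. In that case $\C_t = c_t^2$. Indeed, the scalar recursion \eqref{eq:Bayes_recursion} reads $c_{t+1}^2 = \alpha\, g_{\sBayes}(c_t)$, where
\[
g_{\sBayes}(c)=\E\big[\E[G-cZ\mid cG+Z,Y]^2\big].
\]
Since $c_{t+1}\ge 0$ by construction and the choice of square root is immaterial by \cref{lem:property_recursion_general}, the sequence $\C_t := c_t^2$ coincides with the sequence \eqref{eq:Bayes_recursion_general}.

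By \cref{lem:property_recursion_general}, $g_{\sBayes}$ depends only on $c^2$ and is non-decreasing in it. Induction from $c_0=0\le c_1$ then shows that $(c_t)$ is non-decreasing, so $c_\infty:=\lim_t c_t\in[0,\infty]$ exists.

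Next I identify the limit with $c_{\sBayes}$ as defined in \eqref{eq:CBayesDef_m=1}. Suppose first that $c_\infty<\infty$. I need $c\mapsto g_{\sBayes}(c)$ to be continuous on $[0,\infty)$. I would obtain this from the variational representation
\[
g_{\sBayes}(c)=\max_{\E[f^2]\le 1}\E\big[\partial_G f(cG+Z,Y)\big]^2
\]
established in the proof of \cref{lem:property_recursion_general}. Alternatively, I would argue directly: the conditional expectation given a Gaussian-smoothed observation varies continuously in $L^2$ with $c$, by dominated convergence on the explicit posterior density. With continuity, passing to the limit in the recursion gives $c_\infty^2=\alpha\, g_{\sBayes}(c_\infty)$, so $c_\infty$ is a fixed point.

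For minimality, take any fixed point $c'\ge0$. Then $c_0=0\le c'$, and monotonicity gives $c_t\le c'$ for all $t$ by induction, so $c_\infty\le c'$. Hence $c_\infty$ is the minimum in \eqref{eq:CBayesDef_m=1}, and this minimum exists.

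Now suppose $c_\infty=\infty$. I claim no finite fixed point exists. If some finite $c'$ were a fixed point, the same induction would give $c_t\le c'$ for all $t$, a contradiction. So the set in \eqref{eq:CBayesDef_m=1} is empty, and with the convention $\min\emptyset=\infty$ (equivalently $\cuC_{\sBayes}=\emptyset$, as in \cref{defn:Bayes_AMP_region_general}) we again get $c_{\sBayes}=\lim_t c_t$.

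The assumption $\E[G\mid Y]\ne0$ only ensures $c_1>0$, so the limit is nontrivial. It is not needed for the identification itself.

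The main obstacle is continuity of $g_{\sBayes}$, particularly at $c=0$ and uniformly enough to pass to the limit. At $c=0$ we have $g_{\sBayes}(0)=\E[\E[G\mid Y]^2]$, and for $c>0$ the observation $cG+Z$ is a smooth Gaussian channel. I would write $\E[G-cZ\mid cG+Z,Y]=(1+c^2)\E[G\mid cG+Z,Y]-c\,(cG+Z)$. Continuity of $c\mapsto\E[G\mid cG+Z,Y]$ in $L^2$ then follows from the explicit Bayes formula, using that $G$ is Gaussian and hence has all moments, together with dominated convergence.
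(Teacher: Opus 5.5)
Your proposal is correct and follows essentially the same route as the paper: monotonicity of $g_{\sBayes}$ from \cref{lem:property_recursion_general} makes $(c_t)$ non-decreasing from $c_0=0$, a finite limit is a fixed point, and comparing with any fixed point shows the limit is the smallest one (with $c_{\sBayes}=\infty$ when no fixed point exists). Your explicit arguments for the continuity of $g_{\sBayes}$ (via the Bayes formula and dominated convergence) and for minimality (by induction) fill in steps the paper leaves implicit.
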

\begin{proof}
	Similar to the case of general $k$ and $m$, let us define for $x \ge 0$:
	\begin{equation*}
		g_{\sBayes} (x) = \E 
		\left[ \E \left[ G - x Z \vert x G + Z, Y \right]^2 \right].
	\end{equation*}
	Then, the recursion \eqref{eq:Bayes_recursion} becomes $c_{t+1} = \alpha^{1/2} g_{\sBayes} (c_t)^{1/2}$. Specializing the conclusions of \cref{lem:property_recursion_general} to the setting $m = k = 1$, we know that $g_{\sBayes}$ is monotone increasing on $[0, \infty)$. Furthermore, since $c_1 = \alpha^{1/2}g_{\sBayes} (0)^{1/2}$, it follows that the sequence $\{ c_t \}_{t=1}^{\infty}$ is increasing. Similarly, we consider the following two situations:
	\begin{itemize}
		\item [(a)] The mapping $c \mapsto \alpha^{1/2} g_{\sBayes} (c)^{1/2}$ does not have any fixed point. In this case, we have $c_{\sBayes} = \infty$, and $c_t \to \infty$ as $t \to \infty$.
		\item [(b)] The mapping $c \mapsto \alpha^{1/2} g_{\sBayes} (c)^{1/2}$ has a fixed point. In this case, $c_{\sBayes} = \lim_{t \to \infty} c_t$ exists and is the first fixed point of $g_{\sBayes}$.
	\end{itemize} 
	This completes the proof of \cref{lem:property_recursion}.
\end{proof}

\section{Appendix for \cref{sec:heuristic}}\label{append:heuristic}

%
%

\subsection{Proof of \cref{prop:ERM_asymptotics} and \ref{prop:rs_amp_achievability}} 
\begin{proof}[Proof of \cref{prop:ERM_asymptotics}]
    We first prove $(i)$. To this end, we invoke \cref{prop:RS_condition} and the condition~\eqref{eq:condition_RS} for replica symmetry. It suffices to verify \eqref{eq:condition_RS} for some small $c_*$ when $\alpha$ is large. Note that for small enough $c_*$, the condition $h_{c_*} = h$ is automatically satisfied by definition of $h_{c_*}$. It then suffices to show that there exists $c_* = c_* (\alpha) \to 0$ as $\alpha \to \infty$, such that
    \begin{equation*}
        \E \left[ \left( \partial_x f_{Y, c_*} (1, X_1) \right)^2 \right] = \frac{1}{\alpha c_*^2} (1 - r^2), \quad \E \left[ \left( \partial_x^2 f_{Y, c_*} (1, X_1) \right)^2 \right] \le \frac{1}{\alpha c_*^2}.
    \end{equation*}
    By definition and Lipschitzness of $h$, we know that
    \begin{equation*}
        f_{y, c} (1, x) = \, \sup_{u \in \R} \left\{ h \left( y, x + u \right) - \frac{u^2}{2c}  \right\} = \sup_{u \in \R} \left\{ h \left( y, x + \sqrt{c} u \right) - \frac{u^2}{2}  \right\}
    \end{equation*}
    converges to $h(y, x)$ as $c \to 0$. Further, we can show that $\partial_x f_{y, c}(1, x) \to \partial_x h(y, x)$ and $\partial_x^2 f_{y, c}(1, x) \to \partial_x^2 h(y, x)$. Therefore, as long as $c_* (\alpha) \to 0$ and $1 - r^2 \le C / \alpha$, we have
    \begin{align*}
        & \lim_{\alpha \to \infty} \E \left[ \left( \partial_x f_{Y, c_*} (1, X_1) \right)^2 \right] = \, \E \left[ (\partial_x h(Y, G))^2 \right], \\
        & \lim_{\alpha \to \infty} \E \left[ \left( \partial_x^2 f_{Y, c_*} (1, X_1) \right)^2 \right] = \, \E \left[ (\partial_x^2 h(Y, G))^2 \right].
    \end{align*}
    This means that we can choose 
    \begin{equation*}
        c_* (\alpha) \propto \sqrt{\frac{1 - r^2}{\alpha \E \left[ (\partial_x h(Y, G))^2 \right]}}
    \end{equation*}
    to satisfy \cref{eq:condition_RS}. This completes the proof of $(i)$.

    We next prove $(ii)$. Note that for any $C > 0$:
    \begin{align*}
        & \sup_{r^2 < 1 - C / \alpha} \inf_{(\mu, c) \in \mathscr{U} \times \R_{> 0}} \mathsf{F} (\mu, c, r) \le \, \sup_{r^2 < 1 - C / \alpha} \inf_{c > 0} \mathsf{F} (0, c, r) \\
        = \, & \sup_{r^2 < 1 - C / \alpha} \inf_{c > 0} \left\{ \E_{Y, G} \left[ f_{Y, c} \left( 1, r G + \sqrt{1 - r^2} Z \right) \right] + \frac{1}{2 \alpha c} (1 - r^2) \right\} \\
        = \, & \sup_{r^2 < 1 - C / \alpha} \sup_{\E [U^2] \le 1 / \alpha} \E \left[ h \left( Y, r G + \sqrt{1 - r^2} (Z + U) \right) \right],
    \end{align*}
    where the last equality follows from the definition of $f_{y, c}$ and Lagrange duality. Define
    \begin{equation*}
        v_{\alpha} (r) = \, \sup_{\E [U^2] \le 1 / \alpha} \E \left[ h \left( Y, r G + \sqrt{1 - r^2} (Z + U) \right) \right].
    \end{equation*}
    Since $h$ is Lipschitz, we obtain that
    \begin{equation*}
        \left\vert v_{\alpha} (r) - v (r) \right\vert \le L \sqrt{\frac{1 - r^2}{\alpha}}.
    \end{equation*}
    Without loss of generality, we assume that $r_1 = 1$. By our assumptions on $v$, it follows that
    \begin{align*}
        v_{\alpha} (r) - v_{\alpha} (1) \le \, & v(r) - v(1) + L \sqrt{\frac{1 - r^2}{\alpha}} \le - \min \left( \veps_0, c_0 (1 - r) \right) + L \sqrt{\frac{1 - r^2}{\alpha}} \\
        \le \, & - c_0 (1 - r) + L \sqrt{\frac{2(1 - r)}{\alpha}} = - \sqrt{1 - r} \left( c_0 \sqrt{1 - r} - L \sqrt{\frac{2}{\alpha}} \right).
    \end{align*}
    Note that $1 - r^2 > C / \alpha$ implies $1 - r > C/ 2 \alpha$. Therefore, for all such $r$, we have
    \begin{align*}
        v_{\alpha} (r) - v_{\alpha} (1) \le \, - \sqrt{1 - r} \left( c_0 \sqrt{\frac{C}{2 \alpha}} - L \sqrt{\frac{2}{\alpha}} \right) < 0
    \end{align*}
    for $C > 4 L^2 / c_0^2$. This implies that
    \begin{align*}
        \sup_{r^2 < 1 - C / \alpha} \inf_{(\mu, c) \in \mathscr{U} \times \R_{> 0}} \mathsf{F} (\mu, c, r) < \, \E \left[ h (Y, G) \right] \stackrel{(*)}{=} \inf_{(\mu, c) \in \mathscr{U} \times \R_{> 0}} \mathsf{F} (\mu, c, 1),
    \end{align*}
    where $(*)$ follows from our conclusion in part $(i)$. Hence, $r_*$ must satisfy $1 - r_*^2 \le C / \alpha$, completing the proof of part $(ii)$. \cref{eq:r_*_RS} then follows naturally.
\end{proof}

\begin{proof}[Proof of \cref{prop:rs_amp_achievability}]
    We first show that $(r_*, 1) \in A_{\sBayes}$ for sufficiently large $\alpha$. 
    By \cref{lem:err_bayes_erm}, we know that $r_* \le r_{\sBayes}$ for sufficiently large $\alpha$, which implies $(r_*, 1) \in A_{\sBayes}$. Therefore, the correlation $r_*$ with the true signal $\ww_*$ is achievable by our two-stage AMP algorithm, and so is the limiting test error~\eqref{eq:RS-asymptotics-test}. Invoking \cref{thm:AMP_Feasibility_Simple} with $r = r_*$, $q = 1$, and $h = - \ell$, we know that the minimum limiting training error achieved by our two-stage AMP algorithm is given by
    \begin{equation*}
        \inf \; \E \left[ \ell \left( Y, \, Z_{r_*, 1} + \frac{1}{\alpha} F \left( Z_{r_*, 1}, Y \right) \right) \right], \quad \mbox{subject to} \,\, \text{$F$ is an $(r_*, 1)$-contraction},
    \end{equation*}
    where $Z_{r_*, 1} = r_* G + \sqrt{1 - r_*^2} Z$. Further, the limiting training error of ERM given in \cref{eq:RS-asymptotics-train} can be rewritten as
    \begin{align*}
        & \inf_{\E [U^2] \le 1 / \alpha} \E \left[ \ell \left( Y, r_* G + \sqrt{1 - r_*^2} (Z + U) \right) \right] \\
        = \, & \inf_{\E [F(Z_{r_*, 1}, Y)^2] \le \alpha (1 - r_*^2)} \E \left[ \ell \left( Y, Z_{r_*, 1} + \frac{1}{\alpha} F \left( Z_{r_*, 1}, Y \right) \right) \right].
    \end{align*}
    Recall $c_*$ from the proof of \cref{prop:ERM_asymptotics}. According to that proof, we know that $(r_*, c_*)$ satisfies \cref{eq:condition_RS} with $r = r_*$. Further, by definition of $f_{y, c} (t, x)$ and Lagrange duality (cf. Proof of \cref{prop:ERM_asymptotics}), we know that the above infimum is achieved at
    \begin{equation*}
        F \left( Z_{r_*, 1}, Y \right) = \, \alpha c_* \partial_x f_{Y, c_*} \big( 1, Z_{r_*, 1} \big).
    \end{equation*}
    It then suffices to verify that $F$ is indeed an $(r_*, 1)$-contraction. To this end, we note that the inequality constraint
    \begin{equation*}
        \frac{1}{\alpha} \E \left[ \frac{\partial F}{\partial Z_{r_*, 1}} \left( Z_{r_*, 1}, \varphi (G, \veps) \right)^2 \right] \le 1
    \end{equation*}
    is already verified in the proof of \cref{prop:ERM_asymptotics}, as part of condition~\eqref{eq:condition_RS} for replica symmetry. Further, the equality constraint
    \begin{equation*}
        r_* = \E \left[ \frac{\partial F}{\partial G} \left( Z_{r_*, 1}, \varphi (G, \veps) \right) \right]
    \end{equation*}
    can be verified by using the envelope theorem to compute the first-order condition for \cref{eq:r_*_RS} with respect to $r_*$, in a way similar to the proof of \cref{thm:two_stage_strong_duality} (c). This completes the proof of \cref{prop:rs_amp_achievability}.
\end{proof}


\subsection{Estimation errors of ERM and Bayes AMP}
We need the following lemma regarding the asymptotic estimation errors of ERM and Bayes optimal AMP as $\alpha \to \infty$:
\begin{lem}\label{lem:err_bayes_erm}
    Let $\ell = - h$. Recall $r_*$ from \cref{eq:r_*_RS} and let $r_{\sBayes}$ be the largest $r \in [-1, 1]$ such that $(r, 1) \in A_{\sBayes}$. Then, as $\alpha \to \infty$, we have
    \begin{align}
        \mathsf{err}_{\rm ERM} := \, & \sqrt{2 (1 - r_*)} \sim \sqrt{\frac{\E [ ( \partial_x \ell (Y, G) )^2 ]}{\alpha \E [ \partial_{y x} \ell (Y, G) \cdot \frac{\partial \varphi (G, \veps)}{\partial G} ]^2}}, \\
        \mathsf{err}_{\sBayes} := \, & \sqrt{2 (1 - r_{\sBayes})} \sim \min_{f: \R^2 \to \R} \sqrt{\frac{\E [ f (Y, G) ^2 ]}{\alpha \E [ \partial_{y} f (Y, G) \cdot \frac{\partial \varphi (G, \veps)}{\partial G} ]^2}}.
    \end{align}
\end{lem}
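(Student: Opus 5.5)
We prove the two asymptotics separately, in each case reducing to a one-dimensional perturbative calculation around $r=1$, i.e.\ writing $r=1-\delta$ with $\delta\to 0$ as $\alpha\to\infty$.

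\emph{ERM part.} By \cref{prop:ERM_asymptotics}(ii), $1-r_*^2\le C/\alpha$, so the expansion is legitimate. We use the representation \eqref{eq:r_*_RS}. For fixed $r$, the inner supremum over $\E[U^2]\le 1/\alpha$ is attained, to leading order, at $U=\alpha^{-1/2}\,\partial_x h(Y,G)/\E[(\partial_x h)^2]^{1/2}$. This follows by linearizing $h$ around $G$ and applying Cauchy--Schwarz, with errors controlled by the Lipschitz and $C^2$ bounds on $h$. The resulting value is
\[
v(r)+\sqrt{\tfrac{1-r^2}{\alpha}}\,\E[(\partial_x h(Y,G))^2]^{1/2}+o\big(\sqrt{(1-r)/\alpha}\big).
\]
Next we expand $v(r)=\E[h(Y,rG+\sqrt{1-r^2}Z)]$ near $r=1$. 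Gaussian integration by parts in $G$ gives $v'(1)=\E[\partial_x h\,G]-\E[\partial_x^2 h]=\E[\partial_{yx}h(Y,G)\,\partial_G\varphi(G,\veps)]$, since $Y$ depends on $G$. Thus
\[
v(1)-v(r)\approx\delta\, v'(1),\qquad \delta=1-r .
\]
Maximizing $-\delta\, v'(1)+\sqrt{2\delta/\alpha}\,\E[(\partial_x h)^2]^{1/2}$ over $\delta$ gives $\sqrt{2\delta_*}=\E[(\partial_x h)^2]^{1/2}/\big(\sqrt{\alpha}\,|v'(1)|\big)$. Substituting $\ell=-h$ yields the claimed formula for $\mathsf{err}_{\rm ERM}$.

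\emph{Bayes part.} By \cref{defn:Bayes_AMP_region}, $r_{\sBayes}$ satisfies $r^2=c_{\sBayes}^2(1-r^2)$, so $1-r_{\sBayes}\sim 1/(2c_{\sBayes}^2)$ and $\mathsf{err}_{\sBayes}\sim 1/c_{\sBayes}$. We need $c_{\sBayes}\to\infty$ with $c_{\sBayes}^2\sim\alpha\, g_{\sBayes}(\infty)$. For this we use the variational formula from the proof of \cref{lem:property_recursion_general},
\[
g_{\sBayes}(c)=\max_{\E f^2\le1}\E\big[\partial_G f(cG+Z,\varphi(G,\veps))\big]^2 .
\]
Rescaling $cG+Z$ to $G+Z/c$, the quantity $c^2 g_{\sBayes}(c)$ converges as $c\to\infty$ to
\[
\max_{\E f(G,Y)^2\le1}\E\big[\tfrac{d}{dG}f(G,\varphi(G,\veps))\big]^2 .
\]
The chain rule splits this derivative into $\partial_x f+\partial_y f\,\partial_G\varphi$. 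The $\partial_x f$ term is of lower order after the rescaling, so only $\partial_y f\,\partial_G\varphi$ survives. The fixed-point equation $c^2=\alpha g_{\sBayes}(c)$ then gives $1/c_{\sBayes}^2\sim \min_f \E f^2/\big(\alpha\,\E[\partial_y f\,\partial_G\varphi]^2\big)$. Monotonicity of $g_{\sBayes}$ ensures that the smallest fixed point is the one tracked by this large-$c$ branch once $\alpha$ is large.

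\emph{Main obstacle.} The step I expect to be hardest is the Bayes limit. The issue is justifying that the $\partial_x f$ contribution vanishes at the correct order, and identifying the limit of the rescaled variational problem. I would handle this by writing $f(x,y)=f_0(y)+O(1/c)$ corrections, bounding the cross terms by Cauchy--Schwarz, and checking $\liminf$ and $\limsup$ separately by taking near-optimal $f$ on each side.
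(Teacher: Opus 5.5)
Your ERM argument is the paper's own: expand at scale $1-r=t/\alpha$, take $U\propto\partial_x h(Y,G)$ by Cauchy--Schwarz, use Stein's lemma to get $v'(1)=\E[G\,\partial_x h]-\E[\partial_x^2 h]=\E[\partial_{yx}h\,\partial_G\varphi]$, and optimize over $t$. (The optimum over $\delta\ge 0$ exists because $v'(1)>0$ by the hypothesis $r_1v'(r_1)>0$ with $r_1=1$; writing $|v'(1)|$ hides this.) Your Bayes outline also matches the paper: $\mathsf{err}_{\sBayes}\sim 1/c_{\sBayes}$ from $r^2=c_{\sBayes}^2(1-r^2)$, and $c_{\sBayes}^2\sim\alpha g_{\sBayes}(\infty)$ since $c_{\sBayes}^2\ge\alpha g_{\sBayes}(0)\to\infty$.

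The gap is in how you evaluate $g_{\sBayes}(\infty)$, where two of your claims are false.

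First, $g_{\sBayes}(c)$ itself, not $c^2g_{\sBayes}(c)$, tends to a finite positive limit. If $c^2g_{\sBayes}(c)\to L$, the fixed-point equation would give $c_{\sBayes}^4\sim\alpha L$ and an $\alpha^{-1/4}$ rate, which contradicts your next line.

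Second, the $\partial_x f$ term is not of lower order. With the total derivative, Stein's lemma gives $\E[\tfrac{d}{dG}f(G,\varphi(G,\veps))]=\E[G\,f(G,Y)]$. So the limiting problem you wrote has value $1$ for every link $\varphi$, attained at $f(g,y)=g$, for which $\partial_y f\equiv 0$. Dropping $\partial_x f$ therefore changes the problem; it is not an asymptotic simplification.

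Your proposed repair $f=f_0(y)+O(1/c)$ also goes the wrong way. The limiting maximizer is the score $f\propto\partial_g\log p(Y\mid G)$, which in general depends on $G$. Functions of $y$ alone only give $\max_{f_0}\E[G f_0(Y)]^2=g_{\sBayes}(0)$.

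What is missing is that no $\partial_x f$ term ever appears. The Stein computation in the proof of \cref{lem:property_recursion_general}, via the identity $[-C,\ I_k+CC^\sT]\,\Sigma_C=[0,\ I_k]$, gives exactly, for every $c$,
\[
\E\big[(G-cZ)\,f(cG+Z,Y)\big]=\E\big[\partial_y f(cG+Z,Y)\,\partial_G\varphi(G,\veps)\big].
\]
The weight $G-cZ=(1+c^2)G-c\,(cG+Z)$ is exactly the combination that annihilates the derivative in the first slot. Substituting $f(x,y)=\tilde f(x/c,y)$ gives
\[
g_{\sBayes}(c)=\max_{\E\tilde f(G+Z/c,Y)^2\le 1}\E\big[\partial_y\tilde f(G+Z/c,Y)\,\partial_G\varphi(G,\veps)\big]^2,
\]
with no explicit $c$ left. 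Letting $Z/c\to 0$ yields $g_{\sBayes}(\infty)=\max_{\E f(Y,G)^2\le 1}\E[\partial_y f(Y,G)\,\partial_G\varphi]^2$, which is what the paper does. With this identity your ``main obstacle'' disappears; what remains is only the routine continuity of the rescaled variational problem as $c\to\infty$.
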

\begin{proof}[Proof of \cref{lem:err_bayes_erm}]
We first compute the asymptotics of $\mathsf{err}_{\rm ERM}$ as $\alpha \to \infty$. By \cref{prop:ERM_asymptotics} and \cref{eq:r_*_RS}, we have
\begin{align*}
    r_* = \, & \arg \max_{1 - r^2 \le C / \alpha} \inf_{c > 0} \left\{ \E_{Y, G} \left[ f_{Y, c} \left( 1, r G + \sqrt{1 - r^2} Z \right) \right] + \frac{1}{2 \alpha c} (1 - r^2) \right\} \\
    = \, & \arg \max_{1 - r^2 \le C / \alpha} \sup_{\E [U^2] \le 1 / \alpha} \E \left[ h \left( Y, r G + \sqrt{1 - r^2} (Z + U) \right) \right].
\end{align*}
Without loss of generality, we assume $r_1 = 1$, and perform local Taylor expansions around $1$:
\begin{align*}
    & \E \left[ h \left( Y, r G + \sqrt{1 - r^2} (Z + U) \right) \right] = \, \E \left[ h \left( Y, G + (r - 1) G + \sqrt{1 - r^2} (Z + U) \right) \right] \\
    = \, & \E \left[ h ( Y, G ) + \partial_x h(Y, G) \left( (r - 1) G + \sqrt{1 - r^2} (Z + U) \right) + \frac{1}{2} \partial_x^2 h(Y, G) (1 - r^2) (Z + U)^2 \right] + o \left( \frac{1}{\alpha} \right) \\
    \stackrel{(i)}{=} \, & \E \left[ h(Y, G) \right] - \frac{t}{\alpha} \E \left[ G \partial_x h(Y, G) \right] + \sqrt{\frac{2 t}{\alpha}} \E \left[ U \partial_x h(Y, G) \right] + \frac{t}{\alpha} \E \left[ \partial_x^2 h(Y, G) \right] + o \left( \frac{1}{\alpha} \right),
\end{align*}
where in $(i)$ we set $r = 1 - t / \alpha$, and use the fact that $Z$ is independent of $(Y, G)$. Maximizing the above expression over $U$ subject to $\E [U^2] \le 1 / \alpha$, we obtain that
\begin{align*}
    & \sup_{\E [U^2] \le 1 / \alpha} \E \left[ h \left( Y, r G + \sqrt{1 - r^2} (Z + U) \right) \right] \\
    = \, & \E \left[ h(Y, G) \right] - \frac{t}{\alpha} \E \left[ G \partial_x h(Y, G) \right] + \frac{\sqrt{2 t}}{\alpha} \E \left[ \left( \partial_x h(Y, G) \right)^2 \right]^{1/2} + \frac{t}{\alpha} \E \left[ \partial_x^2 h(Y, G) \right] + o \left( \frac{1}{\alpha} \right) \\
    = \, & \E \left[ h(Y, G) \right] + \frac{1}{\alpha} \left( - t \E \left[ \partial_{y x} h (Y, G) \cdot \frac{\partial \varphi (G, \veps)}{\partial G} \right] + \sqrt{2 t} \E \left[ \left( \partial_x h(Y, G) \right)^2 \right]^{1/2} \right) + o \left( \frac{1}{\alpha} \right).
\end{align*}
To ensure that the maximum over $t$ exists, we need to assume
\begin{equation*}
    \E \left[ \partial_{y x} h (Y, G) \cdot \frac{\partial \varphi (G, \veps)}{\partial G} \right] > \, 0.
\end{equation*}
Under this assumption, the above expression is maximized at
\begin{equation*}
    t_{\rm ERM}^* \approx \, \frac{\E [ ( \partial_x h(Y, G) )^2 ]}{2 \E [ \partial_{y x} h (Y, G) \cdot \frac{\partial \varphi (G, \veps)}{\partial G} ]^2} = \frac{\E [ ( \partial_x \ell(Y, G) )^2 ]}{2 \E [ \partial_{y x} \ell (Y, G) \cdot \frac{\partial \varphi (G, \veps)}{\partial G} ]^2}
\end{equation*}
since $h = - \ell$. This finally gives the asymptotic estimation error for ERM:
\begin{equation*}
    \mathsf{err}_{\rm ERM} \sim \sqrt{\frac{2 t_{\rm ERM}^*}{\alpha}} = \sqrt{\frac{\E [ ( \partial_x \ell (Y, G) )^2 ]}{\alpha \E [ \partial_{y x} \ell (Y, G) \cdot \frac{\partial \varphi (G, \veps)}{\partial G} ]^2}}.
\end{equation*}

For Bayes AMP, the calculation is rather straightforward. Using the AMP achievability results for $m = k = 1$ developed in \cref{app:Special_m=1} and setting $q = 1$, we can show that
\begin{equation*}
    \lim_{\alpha \to \infty} 2 \alpha \left( 1 - r_{\sBayes} \right) = \, \lim_{\alpha \to \infty} \alpha \left( 1 - r_{\sBayes}^2 \right) = \frac{1}{g_{\sBayes} (+\infty)},
\end{equation*}
where
\begin{equation*}
    g_{\sBayes} (x) = \, \E 
		\left[ \E \left[ G - x Z \vert x G + Z, Y \right]^2 \right].
\end{equation*}
According to the proof of \cref{lem:property_recursion_general}, we have
\begin{align*}
    g_{\sBayes} (x) = \, & \max_{\E [f^2] \le 1} \E \left[ (G - x Z) \cdot f ( Y, x G + Z ) \right]^2 \\
    = \, & \max_{\E [f^2] \le 1} \E \left[ \partial_y f(Y, x G + Z) \cdot \frac{\partial \varphi (G, \veps)}{\partial G} \right]^2.
\end{align*}
Therefore,
\begin{equation*}
    \lim_{x \to + \infty} g_{\sBayes} (x) = \, \max_{\E [f^2] \le 1} \E \left[ \partial_y f( Y, G ) \cdot \frac{\partial \varphi (G, \veps)}{\partial G} \right]^2 = \max_{f: \R^2 \to \R} \frac{\E \left[ \partial_y f( Y, G) \cdot \frac{\partial \varphi (G, \veps)}{\partial G} \right]^2}{\E [f(Y, G)^2]},
    \end{equation*}
which leads to the claimed asymptotics for $\mathsf{err}_{\sBayes}$. This completes the proof.
\end{proof}

\subsection{Condition for replica symmetry}
We present a necessary and sufficient condition for determining whether a Parisi variational problem has a replica symmetric solution. To this end, we consider the Parisi functional $\mathsf{F} (\mu, c, r)$ defined in \cref{conj:Parisi_formula_1dim}, specialized to the case $m = k = 1$.
\begin{lem}\label{prop:RS_condition}
    For any fixed $r \in [-1, 1]$ and $c > 0$, let $X_t = r G + B_t - B_{r^2}$ for $t \in [r^2, 1]$, and $f_{y, c}$ solves the heat equation on $[r^2, 1]$:
    \begin{equation*}
		\partial_t f_{y, c} (t, x) + \frac{1}{2} \partial_x^2 f_{y, c} (t, x) = \, 0, \quad f_{y, c} (1, x) = \,  \sup_{u \in \R} \left\{ h \left( y, x + u \right) - \frac{u^2}{2c}  \right\}.
    \end{equation*} 
    Recall from \cref{thm:variation_compute} that
    \begin{equation*}
        h_c (y, x) = \, \operatorname{conc}  \left( h (y, x) - \frac{x^2}{2 c} \right) + \frac{x^2}{2 c}, \quad g_c (y, x) = \frac{\partial h_c (y, x)}{\partial c}.
    \end{equation*}
    Then, the following are equivalent for any $c_* > 0$:
    \begin{itemize}
        \item [(a)] $\inf_{(\mu, c) \in \mathscr{U} \times \R_{>0}} \mathsf{F} (\mu, c, r)$ is achieved at $(\mu_* = 0, c_*)$.
        \item [(b)] $c_*$ satisfies
    \begin{align*}
        & \int_{s}^{1} \left( \E \left[ \left( \partial_x f_{Y, c_*} (t, X_t) \right)^2 \right] - \frac{1}{\alpha c_*^2} (t - r^2) \right) \d t \ge 0, \,\, \forall s \in [r^2, 1], \\
        & \E \left[ g_{c_*} \left( Y, M_1 \right) \right] + \frac{1}{2} \left( \E \left[ \left( \partial_x f_{Y, c_*} (1, X_1) \right)^2 \right] - \frac{1}{\alpha c_*^2} (1 - r^2) \right) = 0,
    \end{align*}
    where $M_1 = c_* \partial_x f_{Y, c_*} (1, X_1) + X_1$.
    \end{itemize}
\end{lem}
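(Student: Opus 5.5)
The plan is to treat the infimum over $(\mu,c)\in\mathscr{U}\times\R_{>0}$ as a convex-type minimization and show that (b) is exactly the first-order (KKT) optimality system at the point $(0,c_*)$. The order parameter $\mu$ is constrained to the cone of non-decreasing nonnegative functions. At $\mu=0$, the admissible directions are the perturbations $\delta\in\mathscr{U}$ themselves, so optimality in $\mu$ should reduce to a one-sided inequality integrated against such directions. Optimality in the unconstrained scalar $c$ should reduce to a stationarity equation. The first-variation formulas of \cref{thm:variation_compute} (iii) supply both derivatives. At $\mu=0$ the Parisi PDE is the heat equation, the SDE for $X_t$ becomes $X_t=rG+B_t-B_{r^2}$, and $\gamma\equiv 1/c_*$. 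Substituting these into \cref{thm:variation_compute} gives
\[
\frac{\d}{\d u}\mathsf F(u\delta,c_*,r)\Big|_{u=0}=\frac12\int_{r^2}^1\delta(t)\Big(\E[(\partial_x f_{Y,c_*}(t,X_t))^2]-\frac{t-r^2}{\alpha c_*^2}\Big)\d t ,
\]
and $\partial_c\mathsf F(0,c_*,r)$ equals exactly the left-hand side of the second condition in (b).

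For (a)$\Rightarrow$(b), the $c$-condition follows because $c_*$ is an interior minimizer of $c\mapsto\mathsf F(0,c,r)$, so the derivative vanishes. For the $\mu$-condition, I take $\delta=\bone_{[s,1)}$, which lies in $\mathscr{U}$ and is bounded. Optimality of $\mu=0$ against $u\delta$ with $u\downarrow0$ forces the one-sided derivative to be $\ge0$, which is exactly the integral inequality in (b) for each $s$.

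For (b)$\Rightarrow$(a), the argument has two steps. First, every $\delta\in\mathscr{U}$ is a nonnegative mixture of indicators $\bone_{[s,1)}$, namely $\delta(t)=\delta(r^2)+\int_{(r^2,t]}\d\delta(s)$. Fubini then shows that the directional derivative at $(0,c_*)$ is nonnegative in every admissible direction. Second, I need to pass from local to global optimality. For this I plan to use convexity of $(\mu,c)\mapsto\mathsf F(\mu,c,r)$ in the $\gamma$-parametrization. Part (a) of \cref{thm:two_stage_strong_duality}, or more directly the representation through $V_\gamma$ in \cref{prop:veri_arg_supervised}, writes $\mathsf F$ as a supremum over controls of functionals that are affine in $\gamma$. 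This makes $\gamma\mapsto\mathsf F$ convex on $\FSG$. In addition, $\mathscr{U}\times\R_{>0}$ corresponds to a convex set of $\gamma$'s: $\gamma=1/(c+\int_t^1\mu)$ with $\mu$ non-decreasing is equivalent to $1/\gamma$ being concave and decreasing. I would then redo the first-order computation along segments in $\gamma$. By the chain rule, the $\gamma$-directional derivative at $\gamma_*\equiv1/c_*$ toward any admissible $\gamma$ is a combination of a $\mu$-direction in $\mathscr{U}$ and a $c$-direction, so it is nonnegative. Convexity then gives global minimality.

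The main obstacle is this local-to-global step. Convexity in $\gamma$ is clean, but the admissible set is convex in terms of $1/\gamma$, not in terms of $\gamma$. If the set of admissible $\gamma$ fails to be convex, I would fall back on a direct argument. That argument works with the $\gamma$-sup representation and takes a competitor $(\mu,c)$, using the controls induced by $(0,c_*)$ (whose second moments are computed via \cref{thm:variation_compute} (ii)) as a suboptimal choice in the sup defining $\mathsf F(\mu,c,r)$. This yields $\mathsf F(\mu,c,r)\ge\mathsf F(0,c_*,r)$ plus a linear term that is nonnegative by the inequalities in (b) after integration by parts. A secondary technical point is that $h_{c}$ (the concave-envelope modification) enters the $c$-derivative through $g_c$. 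Its differentiability in $c$ should be handled as in \cite{montanari2024exceptional}.
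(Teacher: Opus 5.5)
Your proposal is essentially the paper's proof: convexity of $\gamma\mapsto\mathsf F(\mu,c,r)$ via the value-function representation of \cref{prop:veri_arg_supervised}, convexity of the admissible set of $\gamma$'s, and the first-order conditions from \cref{thm:variation_compute}~(iii) combined with an integration by parts (your mixture-of-indicators/Fubini step), with (a)$\Rightarrow$(b) read off from the same variations. The local-to-global worry you flag is unfounded, and the paper simply asserts the needed fact: the admissible set is convex in $\gamma$ itself, because $1/(\lambda\gamma_1+(1-\lambda)\gamma_2)$ is the weighted harmonic mean of the positive, concave, non-increasing functions $1/\gamma_1,1/\gamma_2$, and the harmonic mean is concave and nondecreasing in each argument; hence your main line (segments in $\gamma$ plus the chain rule, where the $\mu$-direction $c_*^2\gamma'=c_*^2\mu\gamma^2$ indeed lies in $\mathscr{U}$) goes through without the fallback.
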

\begin{proof}[Proof of \cref{prop:RS_condition}]
    We first prove $(b)$ implies $(a)$. It suffices to show that, for any $\mu \in \mathscr{U}$ and $c > 0$, one always has $\mathsf{F} (\mu, c, r) \ge \mathsf{F} (0, c_*, r)$. Using \cref{prop:veri_arg_supervised}, we know that $\mathsf{F} (\mu, c, r)$ is a convex functional of $\gamma$, where we recall that
    \begin{equation*}
        \gamma (t) = \frac{1}{c + \int_{t}^{1} \mu(s) \d s}.
    \end{equation*}
    For notational convenience, in this proof we recast $\mathsf{F} (\mu, c, r)$ as $V (\gamma)$ whenever the above equation holds. We note that the condition $(\mu, c) \in \mathscr{U} \times \R_{> 0}$ is equivalent to $\gamma$ being strictly positive, non-decreasing, and $1 / \gamma$ being concave. It is easy to show that these properties are preserved under convex combinations of $\gamma$. Therefore, to show that $V (\gamma) \ge V(1/c_*)$ (note that $\gamma \equiv 1/c_*$ if $\mu = 0$ and $c = c_*$), it suffices to prove that
    \begin{equation}\label{eq:RS_FOC}
        \frac{\d}{\d u} \mathsf{F} \left( u \mu, c_*, r \right) \bigg\vert_{u = 0^+} \ge \, 0, \quad \frac{\d}{\d c} \mathsf{F} (0, c, r) \bigg\vert_{c = c_*} = 0.
    \end{equation}
    By \cref{thm:variation_compute} (iii), we have
    \begin{align*}
        \frac{\d}{\d u} \mathsf{F} \left( u \mu, c_*, r \right) \bigg\vert_{u = 0^+} = \, & \frac{1}{2} \int_{r^2}^{1} \mu(t) \left( \E \left[ \left( \partial_x f_{Y, c_*} (t, X_t) \right)^2 \right] - \frac{1}{\alpha c_*^2} (t - r^2) \right) \d t, \\
        \frac{\d}{\d c} \mathsf{F} (0, c, r) \bigg\vert_{c = c_*} = \, & \E \left[ g_{c_*} \left( Y, M_1 \right) \right] + \frac{1}{2} \left( \E \left[ \left( \partial_x f_{Y, c_*} (1, X_1) \right)^2 \right] - \frac{1}{\alpha c_*^2} (1 - r^2) \right).
    \end{align*}
    Using our assumptions and integration by parts, we can show that \cref{eq:RS_FOC} holds for any $\mu \in \mathscr{U}$. The proof of ``$(a)$ implies $(b)$'' follows similarly by computing the first-order variations and using integration by parts.
    This completes the proof.
\end{proof}
By straightforward calculations, condition $(b)$ in \cref{prop:RS_condition} is equivalent to
\begin{align*}
    & \E \left[ g_{c_*} \left( Y, M_1 \right) \right] = \, 0, \quad \E \left[ \left( \partial_x f_{Y, c_*} (1, X_1) \right)^2 \right] - \frac{1}{\alpha c_*^2} (1 - r^2) = 0, \\
    & \int_{s}^{1} \left( \E \left[ \left( \partial_x f_{Y, c_*} (t, X_t) \right)^2 \right] - \frac{1}{\alpha c_*^2} (t - r^2) \right) \d t \ge 0, \,\, \forall s \in [r^2, 1].
\end{align*}
Using Proposition C.4 in \cite{montanari2024exceptional}, we know that
\begin{align*}
    \E \left[ g_{c_*} \left( Y, M_1 \right) \right] = \, 0 \implies \E \left[ h_{c_*} \left( Y, M_1 \right) \right] = \, \E \left[ h \left( Y, M_1 \right) \right].
\end{align*}
Since $M_1 = c_* \partial_x f_{Y, c_*} (1, X_1) + X_1$ and $X_1 = r G + B_1 - B_{r^2}$, the above condition implies that
\begin{equation*}
    h_{c_*} \left( y, c_* \partial_x f_{y, c_*} (1, x) + x \right) = \, h \left( y, c_* \partial_x f_{y, c_*} (1, x) + x \right)
\end{equation*}
for all $(y, x) \in \supp (Y) \times \R$, which further implies that $h_{c_*} (y, x) = h (y, x)$ for all $(y, x) \in \supp(Y) \times \R$ as $\partial_x f_{y, c_*} (1, \cdot)$ is bounded and continuous. 

According to It\^{o}'s formula, we deduce that the mapping
\begin{equation*}
    t \mapsto \E \left[ \left( \partial_x f_{Y, c_*} (t, X_t) \right)^2 \right] - \frac{1}{\alpha c_*^2} (t - r^2)
\end{equation*}
is convex. Therefore, given
\begin{equation*}
    \E \left[ \left( \partial_x f_{Y, c_*} (1, X_1) \right)^2 \right] - \frac{1}{\alpha c_*^2} (1 - r^2) = 0,
\end{equation*}
the condition
\begin{equation*}
    \int_{s}^{1} \left( \E \left[ \left( \partial_x f_{Y, c_*} (t, X_t) \right)^2 \right] - \frac{1}{\alpha c_*^2} (t - r^2) \right) \d t \ge 0, \,\, \forall s \in [r^2, 1]
\end{equation*}
is equivalent to
\begin{equation*}
    \E \left[ \left( \partial_x^2 f_{Y, c_*} (1, X_1) \right)^2 \right] \le \, \frac{1}{\alpha c_*^2}.
\end{equation*}
We are now able to obtain a simpler equivalent form of condition $(b)$:
\begin{equation}\label{eq:condition_RS}
\begin{split}
    & h_{c_*} (y, x) = h (y, x) \,\, \mbox{for all} \,\, (y, x) \in \supp(Y) \times \R, \\
    & \E \left[ \left( \partial_x f_{Y, c_*} (1, X_1) \right)^2 \right] = \frac{1}{\alpha c_*^2} (1 - r^2), \quad \E \left[ \left( \partial_x^2 f_{Y, c_*} (1, X_1) \right)^2 \right] \le \frac{1}{\alpha c_*^2},
\end{split}
\end{equation}
where $Y = \varphi (G, \veps)$, $X_1 = r G + B_1 - B_{r^2}$.



\section{Appendix for \cref{sec:Applications}}\label{append:single_index}

\subsection{Derivation of \cref{conj:replica_single_index}}
\label{app:Derivation_replica_single_index}
In this appendix, we derive the Parisi variational formula presented in \cref{conj:replica_single_index}. Recall from \cref{conj:Parisi_formula_1dim} that for fixed $\alpha, \lambda > 0$, the replica method predicts that
\begin{equation*}
\begin{split}
	& \lim_{n/d \to \alpha} \max_{\ww \in \S^{d-1}} \hCorr_n (\ww) = \, \sup_{r \in [-1, 1]} \inf_{(\mu, c) \in \mathscr{U} \times \R_{> 0} } \frac{1}{\sqrt{\lambda}} \mathsf{F} (\mu, c, r), \\
	& \quad \mathsf{F} (\mu, c, r) = \, \E_{Y, G} [ f_{Y, \mu} ( r^2, r G ) ] + \frac{1}{2 \alpha} \int_{r^2}^{1} \frac{\d t}{c + \int_{t}^{1} \mu(u) \d u }\, ,
\end{split}
\end{equation*}
where $Y = \sqrt{\lambda} \varphi (G) + \veps$, and $f_{y, \mu}$ solves the PDE:
\begin{equation*}
\begin{split}
	& \partial_t f_{y, \mu} (t, x)+\frac{1}{2} \mu (t) (\partial_x f_{y, \mu} (t, x))^2 +  \frac{1}{2} \partial_x^2 f_{y, \mu} (t, x) = \, 0, \\
	& f_{y, \mu} (1, x) = \,  \sup_{u \in \R} \left\{ y \sigma (x+u) - \frac{u^2}{2c}  \right\}.
\end{split}
\end{equation*}
We next compute the limiting Parisi formula as $\alpha \to \infty$, $\lambda \to 0$, $\alpha \lambda \to \oalpha$. In order for the limit of $(1 / \sqrt{\lambda}) \mathsf{F} (\mu, c, r)$ to be meaningful, the function order parameters $(\mu, c)$ should scale like $\sqrt{\lambda}$. Hence, we replace $(\mu, c)$ with $(\sqrt{\lambda} \mu, \sqrt{\lambda} c)$, and obtain that
\begin{equation*}
    \frac{1}{\sqrt{\lambda}} \mathsf{F} \left( \sqrt{\lambda} \mu, \sqrt{\lambda} c, r \right) = \, \frac{1}{\sqrt{\lambda}} \E_{Y, G} \left[ f_{Y, \sqrt{\lambda} \mu} (r^2 , r G) \right] + \frac{1}{2 \alpha \sqrt{\lambda}} \int_{r^2}^{1} \frac{\d t}{\sqrt{\lambda} c + \sqrt{\lambda} \int_{t}^{1} \mu(s) \d s },
\end{equation*}
where $f_{y, \sqrt{\lambda} \mu}$ is the solution to the PDE:
\begin{equation*}
\begin{split}
    &\partial_t f_{y, \sqrt{\lambda} \mu} (t, x)+\frac{1}{2} \sqrt{\lambda} \mu (t) (\partial_x f_{y, \sqrt{\lambda} \mu} (t, x))^2 +  \frac{1}{2} \partial_x^2 f_{y, \sqrt{\lambda} \mu} (t, x) = \, 0, \\
	&f_{y, \sqrt{\lambda} \mu} (1, x) = \,  \sup_{u \in \R} \left\{ y \sigma \left( x + u \right) - \frac{u^2}{2 \sqrt{\lambda} c}  \right\}.
\end{split}    
\end{equation*}
For any fixed $(\mu, c) \in \mathscr{U} \times \R_{> 0}$, it is straightforward to see that $f_{y, \sqrt{\lambda} \mu}$ reduces to $f_{\veps, 0}$ as $\lambda \to 0$, with $f_{\veps, 0}$ being the solution to the heat equation:
\begin{equation*}
\begin{split}
    \partial_t f_{\veps, 0} (t, x) + \frac{1}{2} \partial_x^2 f_{\veps, 0} (t, x) = \, 0, \quad f_{\veps, 0} (1, x) = \, \veps \sigma (x).
\end{split}
\end{equation*}
Of course, the above heat equation can be solved explicitly:
\begin{equation*}
	f_{\veps, 0} (t, x) = \, \veps \E_{H \sim \normal (0, 1)} \big[ \sigma \big( x + \sqrt{1 - t} H \big) \big].
\end{equation*}
Since $Y = \sqrt{\lambda} \varphi (G) + \veps$, we can then perform Taylor expansions around $\lambda = 0$ (using \cref{prop:variation_compute_1} and Feynman-Kac formula) to deduce that
\begin{align*}
    & \E_{Y, G} \left[ f_{Y, \sqrt{\lambda} \mu} (r^2 , r G) \right] \\ 
    = \, & \E_{\veps, G} \left[ f_{\veps, 0} (r^2, r G) \right] + \sqrt{\lambda} \E \left[ \varphi (G) \sigma \left( rG + \int_{r^2}^{1} \d B_t \right) \right] \\
    & + \frac{\sqrt{\lambda}}{2} \left( \int_{r^2}^{1} \mu(t) \E \left[ \partial_x f_{\veps, 0} (t, B_t)^2 \right] \d t + c \E \left[ \veps^2 \sigma' (B_1)^2 \right] \right) + o(\sqrt{\lambda}) \\
    = \, & \E \left[ \veps \sigma \left( r G + \int_{r^2}^{1} \d B_t \right) \right] + \sqrt{\lambda} \xifs (r) + \frac{\sqrt{\lambda}}{2} \left( \int_{r^2}^{1} \mu(t) 
    \xiss' (t) \d t + c \xiss' (1) \right) + o (\sqrt{\lambda}) \\
    = \, & \sqrt{\lambda} \left( \xifs (r) + \frac{1}{2} \int_{r^2}^{1} \mu(t) \xiss' (t) \d t + \frac{1}{2} c \xiss' (1) \right) + o (\sqrt{\lambda}),
\end{align*}
which leads to
\begin{equation*}
\begin{split}
	& \lim_{\alpha \lambda \to \oalpha} \frac{1}{\sqrt{\lambda}} \mathsf{F} \left( \sqrt{\lambda} \mu, \sqrt{\lambda} c, r \right) \\
	= \, & \xifs (r) + \frac{1}{2} \int_{r^2}^{1} \mu(t) \xiss' (t) \d t + \frac{1}{2} c \xiss' (1) + \frac{1}{2 \oalpha} \int_{r^2}^{1} \frac{\d t}{c +  \int_{t}^{1} \mu(s) \d s } = \osF_{\oalpha} (\mu, c, r).
\end{split}	
\end{equation*}
This completes the derivation of the replica prediction~\eqref{eq:replica_single_index}.

\subsection{Proof of \cref{thm:single_index_duality}, \cref{prop:T0_equiv_def} and \cref{prop:optimality_large_oalpha}}
\begin{proof}[Proof of \cref{prop:T0_equiv_def}]
    We begin with defining the value achievable by our two-stage AMP algorithm at a fixed pair $(\alpha, \lambda)$, based on the general results presented in \cref{app:Special_m=1}. Let 
    $$
    A_{\sBayes}^{\alpha, \lambda} = \{ (r, q) \in \R^2: q \in [r^2, 1], \, r^2 \le (c_{\sBayes}^{\alpha, \lambda})^2 (q - r^2) \},
    $$
    and $c_{\sBayes}^{\alpha, \lambda}$ be the smallest positive solution to the fixed point equation (again, $c_{\sBayes}^{\alpha, \lambda} = + \infty$ if this equation has no solution)
\begin{equation*}
    c^2 = \alpha \E 
		\left[ \E \left[ G - c Z \vert c G + Z, Y \right]^2 \right], \quad Y = \sqrt{\lambda} \varphi(G) + \veps, \,\, Z \sim \normal(0, 1), \,\, Z \perp \!\!\! \perp (Y, G).
\end{equation*} 
Further, define for $(r, q) \in [-1, 1] \times [0, 1]$, $q \ge r^2$:
\begin{equation}\label{eq:defn_H_infty_alpha_lambda}
\begin{split}
    & H_{\infty}^{\alpha, \lambda} (r, q) = \, \sup \, \E \left[ Y \sigma \left( Z_{r, q} + \frac{1}{\alpha} F \left( Z_{r, q}, Y \right) + \int_{q}^{1} \left( 1 + \frac{1}{\sqrt{\alpha}} \phi_t \right) \d B_t \right) \right], \\
    & \mbox{subject to} \,\, \text{$F$ is an $(r, q)$-contraction}, \, \text{and} \,\, \sup_{t \in [q, 1]} \E \left[ \phi_t^2 \right] \le 1,
\end{split}
\end{equation}
where $Y = \sqrt{\lambda} \varphi (G) + \veps$, $Z_{r, q} = r G + \sqrt{q - r^2} Z$. 

We first prove that $\liminf_{\alpha \lambda \to \oalpha} c_{\sBayes}^{\alpha, \lambda} \ge c_{\sBayes}$. To this end, note that as $\alpha \to \infty$, $\lambda \to 0$ and $\alpha \lambda \to \oalpha$, we have:
\begin{align*}
& \E \left[ \E\big[G-cZ\big|cG+Z,Y\big]^2 \right]^{1/2} \\
= \, & \E \left[ \E\big[G-cZ\big|cG+Z, \veps \big]^2 \right]^{1/2} + \sqrt{\lambda} \, \E \left[ \E\big[(G-cZ)\varphi(G)\big|cG+Z\big]^2 \right]^{1/2} + O(\lambda) \\
= \, & \sqrt{\lambda} \, \E \left[ \E\big[(G-cZ)\varphi(G)\big|cG+Z\big]^2 \right]^{1/2} + O(\lambda),
\end{align*}
where the last equality follows from the fact that $G - c Z$ is independent of $(c G + Z, \veps)$. Define $W_1 = (cG+Z)/\sqrt{1+c^2}$, $W_2 = (G  - cZ)/\sqrt{1+c^2}$, we then have $W_1, W_2 \stackrel{\iid}{\sim} \normal(0,1)$, and
\begin{align*}
    & \E \left[ \E\big[(G-cZ)\varphi(G)\big|cG+Z\big]^2 \right]^{1/2} \\
    = \, & \E \left[ \E\big[ \sqrt{1 + c^2} W_2 \varphi((c W_1 + W_2) / \sqrt{1 + c^2})\big| W_1 \big]^2 \right]^{1/2} \\
    \stackrel{(i)}{=} \, & \E_{W_1} \left[ \E_{W_2, W_2'} \left[ (1 + c^2) W_2 W_2' \, \varphi \left( \frac{c W_1 + W_2}{\sqrt{1 + c^2}} \right) \varphi \left( \frac{c W_1 + W_2'}{\sqrt{1 + c^2}} \right) \right] \right]^{1/2} \\
    \stackrel{(ii)}{=} \, & \E_{W_1} \left[ \E_{W_2, W_2'} \left[ \varphi' \left( \frac{c W_1 + W_2}{\sqrt{1 + c^2}} \right) \varphi' \left( \frac{c W_1 + W_2'}{\sqrt{1 + c^2}} \right) \right] \right]^{1/2},
\end{align*}
where $W_1, W_2, W_2' \stackrel{\iid}{\sim} \normal (0, 1)$ in $(i)$, and in $(ii)$ we use Stein's identity. Recalling the definition of $\xiff$, it follows that
\begin{equation*}
    \E \left[ \E\big[(G-cZ)\varphi(G)\big|cG+Z\big]^2 \right]^{1/2} = \, \xiff'\Big(\frac{c^2}{1+c^2}\Big)^{1/2}.
\end{equation*}
Therefore, the fixed point equation defining $c_{\sBayes}^{\alpha, \lambda}$ is just
\begin{equation*}
    c^2 = \oalpha \xiff'\Big(\frac{c^2}{1+c^2}\Big) + O \big( \sqrt{\lambda} \big).
\end{equation*}
By definition of $c_{\sBayes}$, we immediately obtain that $c_{\sBayes}^{\alpha, \lambda} \ge c_{\sBayes} - O(\sqrt{\lambda})$ as $\lambda \to 0$, which establishes the desired claim. 

Now we return to the proof of \cref{prop:T0_equiv_def}. By continuity, it suffices to consider $(r, q) \in \operatorname{int} A_{\sBayes}$. Since $\liminf_{\alpha \lambda \to \oalpha} c_{\sBayes}^{\alpha, \lambda} \ge c_{\sBayes}$, for sufficiently small $\lambda$ we have $(r, q) \in A_{\sBayes}^{\alpha, \lambda}$. It then suffices to show that
\begin{equation*}
	\lim_{\alpha \lambda \to \oalpha} \frac{1}{\sqrt{\lambda}} H_{\infty}^{\alpha, \lambda} (r, q) \ge \, \VH_{\oalpha}^{\sAMP} (r, q),
\end{equation*}
with $H_{\infty}^{\alpha, \lambda} (r, q)$ defined in \cref{eq:defn_H_infty_alpha_lambda}. To this end, we define an auxiliary quantity:
\begin{align}\label{eq:H_infty_r_q}
	H_{\infty} (r, q) : = \, \sup \,\, & \bigg\{ \E \left[ \varphi(G) \, \sigma\Big(Z_{r,q}+\int_{q}^{1} \de B_t\Big) \right] +\frac{1}{\oalpha} \E \left[ \eps\sigma'\Big(Z_{r,q}+\int_{q}^{1} \de B_t\Big) F_0(Z_{r,q},\eps) \right] \\
    &+\frac{1}{\sqrt{\oalpha}} \E \left[ \eps\sigma'\Big(Z_{r,q}+\int_{q}^{1} \de B_s\Big) \int_q^1 \phi_t \de B_t \right] \bigg\}, \nonumber\\
\mbox{subject to} \phantom{AA} &\sup_{t \in [q, 1]} \E [ \phi_t^2 ] \le 1\, , \nonumber \\
&     \label{eq:single_index_contraction}
    r = \E\left[\frac{\partial F_0(Z_{r,q},\eps)}{\partial\eps} \,\varphi'(G)\right] \, , \,\,
    q= r^2+\frac{1}{\oalpha} \E\big[ F_0(Z_{r,q},\eps)^2\big] \, , \\
	 &\frac{1}{\oalpha} \E\left[ \frac{\partial F_0}{\partial Z_{r,q}}(Z_{r,q},\eps)^2\right] \le 1\, .\nonumber
\end{align}
We will prove two claims: (a) $\lim_{\alpha \lambda \to \oalpha} H_{\infty}^{\alpha, \lambda} (r, q) / \sqrt{\lambda} \ge H_{\infty} (r, q)$; (b) $H_{\infty} (r, q) = \VH_{\oalpha}^{\sAMP} (r, q)$.

\paragraph{Proof of claim (a).}
 For any $F_0$ and $\phi$ satisfying the constraints in the optimization problem defining $H_{\infty} (r, q)$, let
\begin{equation}\label{eq:from_F_0_to_F}
    F (Z_{r, q}, Y) = \, \frac{1}{\sqrt{\lambda}} \left( F_0 \left(  Z_{r, q}, Y \right) + \sum_{i=1}^{3} a_i (\lambda) g_i (Z_{r, q}, Y) \right),
\end{equation}
where $a(\lambda) := \{ a_i (\lambda) \}_{i=1}^{3}$ are $\lambda$-dependent constants, and $\{ g_i \}_{i=1}^{3}$ are differentiable functions, both to be determined.
Then, we have (note that $Y = \sqrt{\lambda} \varphi (G) + \veps$)
\begin{align*}
    & \E \left[ \frac{\partial F}{\partial G} \left( Z_{r, q}, Y \right) \right] = \, \E \left[ \left( \partial_y F_0 \left( Z_{r, q}, Y \right) + \sum_{i=1}^{3} a_i (\lambda) \partial_y g_i \left( Z_{r, q}, Y \right) \right) \varphi'(G) \right] := A_1 (\lambda, a(\lambda)), \\
    & \frac{1}{\alpha} \E \left[ F \left( Z_{r, q}, Y \right)^2 \right] = \, \frac{1}{\oalpha} \E \left[ \left( F_0 \left(  Z_{r, q}, Y \right) + \sum_{i=1}^{3} a_i (\lambda) g_i (Z_{r, q}, Y) \right)^2 \right] := A_2 (\lambda, a(\lambda)), \\
    & \frac{1}{\alpha} \E \left[ \frac{\partial F}{\partial Z_{r, q}} \left( Z_{r, q}, Y \right)^2 \right] = \, \frac{1}{\oalpha} \E \left[ \left( \partial_z F_0 \left(  Z_{r, q}, Y \right) + \sum_{i=1}^{3} a_i (\lambda) \partial_z g_i (Z_{r, q}, Y) \right)^2 \right] := A_3 (\lambda, a(\lambda)).
\end{align*}
Denote $A (\lambda, a(\lambda)) = \{ A_i (\lambda, a(\lambda)) \}_{i=1}^{3}$. Our goal is to show that for all sufficiently small $\lambda$, there exists $a(\lambda)$ such that $A (\lambda, a(\lambda)) = (r, q-r^2, b(\lambda))$ with $b(\lambda) \le 1$, thus implying that $F$ is an $(r, q)$-contraction.

To this end, we invoke the implicit function theorem. Since $F_0$ satisfies \cref{eq:single_index_contraction}, we know that $A(0, 0) = (r, q-r^2, b(0))$ with $b(0) \le 1$. It suffices to show that there exists $\{ g_i \}_{i=1}^{3}$ such that the Jacobian $(\partial A / \partial a) (0, 0)$ is non-singular, i.e., the vectors $\{ v_i \}_{i=1}^{3} \subset \R^3$ are linearly independent, where
\begin{equation*}
    v_i = \, \left( \E \left[ \partial_y g_i(Z_{r, q}, \veps) \varphi' (G) \right], \E \left[ g_i(Z_{r, q}, \veps) F_0 (Z_{r, q}, \veps) \right], \E \left[ \partial_z g_i(Z_{r, q}, \veps) \partial_z F_0 (Z_{r, q}, \veps) \right] \right).
\end{equation*}
This is equivalent to showing that the linear mapping
\begin{equation*}
    g \mapsto v(g) := \left( \E \left[ \partial_y g(Z_{r, q}, \veps) \varphi' (G) \right], \E \left[ g(Z_{r, q}, \veps) F_0 (Z_{r, q}, \veps) \right], \E \left[ \partial_z g(Z_{r, q}, \veps) \partial_z F_0 (Z_{r, q}, \veps) \right] \right)
\end{equation*}
is surjective, which can be easily proved using Gaussian integration by parts and noting that $Z_{r, q} \neq G$ since $(r, q) \in \operatorname{int} A_{\sBayes}$.

We have verified that $F$ defined as per \cref{eq:from_F_0_to_F} is an $(r, q)$-contraction. 
Next we compute
\begin{equation*}
    \lim_{\alpha \lambda \to \oalpha} \frac{1}{\sqrt{\lambda}} \E \left[ Y \sigma \left( Z_{r, q} + \frac{1}{\alpha} F \left( Z_{r, q}, Y \right) + \int_{q}^{1} \left( 1 + \frac{1}{\sqrt{\alpha}} \phi_t \right) \d B_t \right) \right].
\end{equation*}
Note that since
\begin{align*}
    F (Z_{r, q}, Y) = \, & \frac{1}{\sqrt{\lambda}} \left( F_0 \left(  Z_{r, q}, Y \right) + \sum_{i=1}^{3} a_i (\lambda) g_i (Z_{r, q}, Y) \right) \\
    = \, & \frac{1}{\sqrt{\lambda}} \left( F_0 (Z_{r, q}, Y) + o(1) \right) = \frac{1}{\sqrt{\lambda}} \left( F_0 (Z_{r, q}, \veps) + o(1) \right),
\end{align*}
we have
\begin{small}
\begin{align*}
    & \frac{1}{\sqrt{\lambda}} \E \left[ Y \sigma \left( Z_{r, q} + \frac{1}{\alpha} F \left( Z_{r, q}, Y \right) + \int_{q}^{1} \left( 1 + \frac{1}{\sqrt{\alpha}} \phi_t \right) \d B_t \right) \right] \\
    = \, & \E \left[ \left( \varphi(G) + \frac{\veps}{\sqrt{\lambda}} \right) \sigma \left( Z_{r, q} + \frac{\sqrt{\lambda}}{\oalpha} F_0 \left( Z_{r, q}, \veps \right) + \int_{q}^{1} \left( 1 + \frac{1}{\sqrt{\alpha}} \phi_t \right) \d B_t + o (\sqrt{\lambda}) \right) \right] \\
    = \, & \E \left[ \left( \varphi(G) + \frac{\veps}{\sqrt{\lambda}} \right) \left( \sigma \left( Z_{r, q} +  \int_{q}^{1} \d B_t \right) + \sigma' \left( Z_{r, q} +  \int_{q}^{1} \d B_t \right) \left( \frac{\sqrt{\lambda}}{\oalpha} F_0 \left( Z_{r, q}, \veps \right) + \frac{1}{\sqrt{\alpha}} \int_{q}^{1} \phi_t \d B_t \right) + o(\sqrt{\lambda} )\right) \right] \\
    = \, & \E \left[ \varphi(G) \, \sigma\Big(Z_{r,q}+\int_{q}^{1} \de B_t\Big) \right] +\frac{1}{\oalpha} \E \left[ \eps\sigma'\Big(Z_{r,q}+\int_{q}^{1} \de B_t\Big) F_0(Z_{r,q},\eps) \right] +\frac{1}{\sqrt{\oalpha}} \E \left[ \eps\sigma'\Big(Z_{r,q}+\int_{q}^{1} \de B_s\Big) \int_q^1 \phi_t \de B_t \right] + o(1),
\end{align*}    
\end{small}
which leads to
\begin{align*}
    & \lim_{\alpha \lambda \to \oalpha} \frac{1}{\sqrt{\lambda}} \E \left[ Y \sigma \left( Z_{r, q} + \frac{1}{\alpha} F \left( Z_{r, q}, Y \right) + \int_{q}^{1} \left( 1 + \frac{1}{\sqrt{\alpha}} \phi_t \right) \d B_t \right) \right] \\
    = \, & \E \left[ \varphi(G) \, \sigma\Big(Z_{r,q}+\int_{q}^{1} \de B_t\Big) \right] +\frac{1}{\oalpha} \E \left[ \eps\sigma'\Big(Z_{r,q}+\int_{q}^{1} \de B_t\Big) F_0(Z_{r,q},\eps) \right] \\
    & + \frac{1}{\sqrt{\oalpha}} \E \left[ \eps\sigma'\Big(Z_{r,q}+\int_{q}^{1} \de B_s\Big) \int_q^1 \phi_t \de B_t \right].
\end{align*}
The above value is achievable for any $F_0$ and $\phi$ satisfying $\sup_{t \in [q, 1]} \E [ \phi_t^2 ] \le 1$ and \eqref{eq:single_index_contraction}, thus proving our claim (a).

\paragraph{Proof of claim (b).} We consider the three terms in the definition of $H_{\infty} (r, q)$, respectively. For the first term, we note that by definition of $\xifs$ and $Z_{r, q}$, 
\begin{align*}
\E\left[\varphi(G)\sigma\left(Z_{r,q} + \int_{q}^{1} \d B_t \right)\right] = \xifs(r)\, .
\end{align*}

As for the third term in \cref{eq:H_infty_r_q}, we have
\begin{align*}
    & \E \left[ \eps\sigma'\Big(Z_{r,q}+\int_{q}^{1} \de B_s \Big) \int_q^1 \phi_t \de B_t \right] = \, \E \left[ \int_q^1 \eps\sigma'\Big(Z_{r,q}+\int_{q}^{1} \de B_s \Big) \phi_t \de B_t \right] \\
    \stackrel{(i)}{=} \, & \E \left[ \int_q^1 \eps \phi_t \left( \sigma'\Big(Z_{r,q}+\int_{q}^{1} \de B_s\Big) - \E \left[ \sigma'\Big(Z_{r,q}+\int_{q}^{1} \de B_s \Big) \Big\vert \cF_t \right] \right) \de B_t \right] \\
    \stackrel{(ii)}{=} \, & \E \left[ \int_q^1 \eps \phi_t \left( \int_{t}^{1} \E \left[ \sigma'' \Big(Z_{r,q}+\int_{q}^{1} \de B_s \Big) \Big\vert \cF_u \right] \d B_u \right) \de B_t \right] \\
    = \, & \int_{q}^{1} \E \left[ \veps \phi_t \E \left[ \sigma'' \Big(Z_{r,q}+\int_{q}^{1} \de B_s \Big) \Big\vert \cF_t \right] \right] \d t,
\end{align*}
where $(i)$ is because $\veps$ and $\phi_t$ are adapted to $\cF_t$, and in $(ii)$ we use the Clark-Ocone formula (cf. \cite{nualart2006malliavin}). By Cauchy-Schwarz inequality, for any $t \in [q, 1]$:
\begin{align*}
    & \E \left[ \veps \phi_t \E \left[ \sigma'' \Big(Z_{r,q}+\int_{q}^{1} \de B_s \Big) \Big\vert \cF_t \right] \right] \\
    \le \, & \E [\phi_t^2]^{1/2} \E \left[ \veps^2 \E \left[ \sigma'' \Big(Z_{r,q}+\int_{q}^{1} \de B_s \Big) \Big\vert \cF_t \right]^2 \right]^{1/2} \le \, \E \left[ \veps^2 \E \left[ \sigma'' \Big(Z_{r,q}+\int_{q}^{1} \de B_s \Big) \Big\vert \cF_t \right]^2 \right]^{1/2} \\
    \stackrel{(i)}{=} \, & \E_{\veps, G_1} \left[ \veps^2 \E_{G_2, G_2'} \left[ \sigma'' (\sqrt{t} G_1 + \sqrt{1-t} G_2) \sigma'' (\sqrt{t} G_1 + \sqrt{1-t} G_2') \right] \right]^{1/2} \\
    = \, & \E_{G_1, G_2, G_2'} \left[ \sigma'' (\sqrt{t} G_1 + \sqrt{1-t} G_2) \sigma'' (\sqrt{t} G_1 + \sqrt{1-t} G_2') \right]^{1/2} = \xiss'' (t)^{1/2},
\end{align*}
where $G_1, G_2, G_2' \stackrel{\iid}{\sim} \normal (0, 1)$ in $(i)$. Further, the equalities are achieved at
\begin{equation*}
    \phi_t = \frac{\E \left[ \veps \sigma'' \Big(Z_{r,q}+\int_{q}^{1} \de B_s \Big) \Big\vert \cF_t \right]}{\E \left[ \E \left[ \veps \sigma'' \Big(Z_{r,q}+\int_{q}^{1} \de B_s \Big) \Big\vert \cF_t \right]^2 \right]^{1/2}}.
\end{equation*}
Therefore, for any fixed $(r, q)$, the maximum of the third term in \cref{eq:H_infty_r_q} is
\begin{equation*}
     \frac{1}{\sqrt{\oalpha}} \int_{q}^{1} \xiss'' (t)^{1/2} \d t.
\end{equation*}

Finally, we consider optimizing the second term in \cref{eq:H_infty_r_q}. Our goal is to maximize
\begin{equation*}
    \E \left[ \eps\sigma'\Big(Z_{r,q}+\int_{q}^{1} \de B_t\Big) F_0(Z_{r,q},\eps) \right],
\end{equation*}
subject to the constraints in~\cref{eq:single_index_contraction}. Assume that $F_0$ has the following Hermite polynomial decomposition:
\begin{equation*}
    F_0 (\sqrt{q} x, y) = \sum_{k, l = 0}^{\infty} f_{k,l} \mathrm{He}_k (x) \mathrm{He}_l (y),
\end{equation*}
where $\mathrm{He}_k$ is the $k$-th normalized Hermite polynomial. The $(r, q)$-contraction conditions~\eqref{eq:single_index_contraction} are equivalent to
\begin{align*}
    r = \sum_{k = 0}^{\infty} \sqrt{k+1} \varphi_{k+1} f_{k,1} \left( \frac{r}{\sqrt{q}} \right)^k, \quad q = r^2 + \frac{1}{\oalpha} \sum_{k,l=0}^{\infty} f_{k, l}^2, \quad \frac{1}{\oalpha q} \sum_{k, l = 0}^{\infty} k f_{k, l}^2 \le 1.
\end{align*}
Further,
\begin{equation*}
    \E \left[ \eps\sigma'\Big(Z_{r,q}+\int_{q}^{1} \de B_t\Big) F_0(Z_{r,q},\eps) \right] = \, \sum_{k=0}^{\infty} \sqrt{k+1} \sigma_{k+1} f_{k, 1} (\sqrt{q})^k.
\end{equation*}
It is straightforward to see that this optimization problem is equivalent to the following:
\begin{equation}\label{eq:optimize_F_0}
\begin{split}
    & \mbox{maximize} \,\, \sum_{k=0}^{\infty} \sqrt{k+1} \sigma_{k+1} f_{k, 1} (\sqrt{q})^k, \\
    & \mbox{s.t.} \,\, r = \sum_{k = 0}^{\infty} \sqrt{k+1} \varphi_{k+1} f_{k,1} \left( \frac{r}{\sqrt{q}} \right)^k, \,\, q \ge r^2 + \frac{1}{\oalpha} \sum_{k=0}^{\infty} f_{k, 1}^2, \,\, \frac{1}{\oalpha q} \sum_{k=0}^{\infty} k f_{k, 1}^2 \le 1.
\end{split}
\end{equation}
Introducing the change of variables $x_{k+1} = f_{k, 1} / \sqrt{\oalpha(q-r^2)}$, $k\ge 0$, and recalling the definitions of $c$, $v_k$'s and $s_k$'s from \cref{eq:T0_change_of_variable}, it is straightforward to see that the value of the optimization problem~\eqref{eq:optimize_F_0} is equal to $\sqrt{\oalpha(q-r^2)} T_0 (r, q)$. This establishes our claim (b) and completes the proof of \cref{prop:T0_equiv_def}.
\end{proof}

\begin{proof}[Proof of \cref{thm:single_index_duality}]
\noindent \textbf{Proof of $(a)$.} Comparing the expressions for $\VH_{\oalpha}^{\sAMP} (r, q)$ and $\VH_{\oalpha}^*(r,q)$, it suffices to show that $T_0 (r, q) \le \sqrt{\xiss' (q)}$. By definition of $T_0 (r, q)$ and Cauchy-Schwarz inequality, we have
\begin{equation*}
    T_0 (r, q) \le \, \norm{s}_2 = \sqrt{\sum_{k \ge 1} k \sigma_k^2 q^{k-1}} = \sqrt{\xiss' (q)}.
\end{equation*}
This proves part $(a)$.

\vspace{0.5em}
\noindent \textbf{Proof of $(b)$.} We will prove a stronger result: For all $(q, r)$ with $r^2 \le q \le 1$, $\VH_{\oalpha}^{\sAMP} (r, q) \le \, \VH_{\oalpha}^* (r, q_*(r))$. To this end, denote $T(r, q) = \sqrt{(q-r^2)} T_0 (r, q) / \sqrt{\oalpha}$, the second term in the expression for $\VH_{\oalpha}^{\sAMP} (r, q)$. Then, the proof of \cref{prop:T0_equiv_def} implies that,
\begin{align*}
    & T (r, q) = \, \max \, \frac{1}{\oalpha} \E \left[ \sigma'\Big(Z_{r,q}+\int_{q}^{1} \de B_t\Big) F_0(Z_{r,q}) \right], \\
    & \mbox{s.t.} \,\, r = \E\left[ F_0(Z_{r,q}) \varphi'(G)\right] , \,\, q \ge r^2+\frac{1}{\oalpha} \E\big[ F_0(Z_{r,q})^2\big] , \,\, \frac{1}{\oalpha} \E\left[ F_0' (Z_{r,q})^2\right] \le 1.
\end{align*}
We can view $r G$ as $B_{r^2}$ and $Z_{r, q} = r G + \sqrt{q - r^2} Z$ as $B_q$, so that the objective function can be rewritten as (using the Clark-Ocone formula from \cite{nualart2006malliavin})
\begin{align*}
    & \E \left[ \sigma'\Big(Z_{r,q}+\int_{q}^{1} \de B_t\Big) F_0(Z_{r,q}) \right] = \, \E \left[ \sigma' (B_1) F_0(B_q) \right] \\
    = \, & \E \left[ \left( \E [\sigma' (B_1) \vert \cF_{r^2}] + \int_{r^2}^{1} \E [\sigma'' (B_1) \vert \cF_t] \d B_t \right) \left( \E [F_0 (B_q) \vert \cF_{r^2}] + \int_{r^2}^{q} \E [F_0' (B_q) \vert \cF_t] \d B_t \right) \right] \\
    = \, & \E \left[ \E [\sigma' (B_1) \vert \cF_{r^2}] \E [F_0 (B_q) \vert \cF_{r^2}] \right] + \int_{r^2}^{q} \E \left[ \E [\sigma'' (B_1) \vert \cF_t] \E [F_0' (B_q) \vert \cF_t] \right] \d t \\
    = \, & \E \left[ V U_{r^2} \right] + \int_{r^2}^{q} \E \left[ \phi_t \psi_t \right] \d t,
\end{align*}
where we denote $U_{r^2} = \E [\sigma' (B_1) \vert \cF_{r^2}]$, $V = \E [F_0 (B_q) \vert \cF_{r^2}]$, $\psi_t = \E [\sigma'' (B_1) \vert \cF_t]$, and $\phi_t = \E [F_0' (B_q) \vert \cF_t]$ for $t \in [r^2, q]$. Our second and third constraints are equivalent to
\begin{equation*}
    \sup_{t \in [r^2, q]} \E \left[ \phi_t^2 \right] \le \oalpha, \quad \mbox{and} \,\, \E \left[ V^2 \right] + \int_{r^2}^{q} \E \left[ \phi_t^2 \right] \d t \le \oalpha (q - r^2).
\end{equation*}
Ignoring the first constraint, we can use Lagrange duality to deduce that
\begin{align*}
    \oalpha T(r, q) \le \, \sup_{V, \phi} \inf_{\gamma, \lambda \ge 0} \bigg\{ \, & \E \left[ V U_{r^2} \right] + \int_{r^2}^{q} \E \left[ \phi_t \psi_t \right] \d t - \frac{1}{2} \int_{r^2}^{q} \gamma(t) \left( \E \left[ \phi_t^2 \right] - \oalpha \right) \d t \\
    & - \frac{\lambda}{2} \left( \E \left[ V^2 \right] + \int_{r^2}^{q} \left( \E \left[ \phi_t^2 \right] - \oalpha \right) \d t \right) \bigg\} \\
    \le \, \inf_{\gamma, \lambda \ge 0} \sup_{V, \phi} \bigg\{ \, & \E \left[ V U_{r^2} - \frac{\lambda}{2} V^2 \right] + \int_{r^2}^{q} \E \left[ \phi_t \psi_t - \frac{\gamma(t) + \lambda}{2} \phi_t^2 \right] \d t \\
    & + \frac{\oalpha}{2} \int_{r^2}^q \left( \gamma(t) + \lambda \right) \d t \bigg\} \\
    = \, \inf_{\gamma, \lambda \ge 0} \bigg\{ \frac{1}{2 \lambda} \, & \E \left[ U_{r^2}^2 \right] + \frac{1}{2} \int_{r^2}^q \frac{1}{\gamma(t) + \lambda} \E \left[ \psi_t^2 \right] \d t + \frac{\oalpha}{2} \int_{r^2}^q \left( \gamma(t) + \lambda \right) \d t \bigg\} \\
    = \, \inf_{\gamma, \lambda \ge 0} \bigg\{ \frac{1}{2 \lambda} \, & \xiss' (r^2) + \frac{1}{2} \int_{r^2}^q \frac{1}{\gamma(t) + \lambda} \xiss'' (t) \d t + \frac{\oalpha}{2} \int_{r^2}^q \left( \gamma(t) + \lambda \right) \d t \bigg\} \, ,
\end{align*}
where by definition, we have 
\begin{equation*}
    \E \left[ \psi_t^2 \right] = \xiss'' (t), \quad \E \left[ U_{r^2}^2 \right] = \xiss' (r^2).
\end{equation*}
This yields that for all $q \in [r^2, 1]$:
\begin{align*}
    \VH_{\oalpha}^{\sAMP} (r, q) = \, & \xifs(r) + T(r, q) + \frac{1}{\sqrt{\oalpha}}\int_q^1\sqrt{\xiss''(t)}\, \de t \\
    \le \, & \xifs(r) + \inf_{\gamma, \lambda \ge 0} \bigg\{ \frac{1}{2 \oalpha \lambda} \xiss' (r^2) + \frac{1}{2 \oalpha} \int_{r^2}^{1} \frac{1}{\gamma(t) + \lambda \bone_{t \le q}} \xiss'' (t) \d t + \frac{1}{2} \int_{r^2}^{1} \left( \gamma(t) + \lambda \bone_{t \le q} \right) \d t \bigg\} \, \\
    = \, & \xifs(r) + \inf_{\substack{\gamma, \lambda \ge 0 \\ \gamma \vert_{[r^2, q]} \ge \lambda}} \bigg\{ \frac{1}{2 \oalpha \lambda} \xiss' (r^2) + \frac{1}{2 \oalpha} \int_{r^2}^{1} \frac{1}{\gamma(t)} \xiss'' (t) \d t + \frac{1}{2} \int_{r^2}^{1}  \gamma(t) \d t \bigg\} \,.
\end{align*}
Therefore,
\begin{align*}
    \sup_{q \in [r^2, 1]} \VH_{\oalpha}^{\sAMP} (r, q) \le \, \xifs(r) + \inf_{ \gamma \vert_{[r^2, 1]} \ge \lambda \ge 0} \bigg\{ \frac{1}{2 \oalpha \lambda} \xiss' (r^2) + \frac{1}{2 \oalpha} \int_{r^2}^{1} \frac{1}{\gamma(t)} \xiss'' (t) \d t + \frac{1}{2} \int_{r^2}^{1}  \gamma(t) \d t \bigg\} \,.  
\end{align*}
To compute this infimum, let us consider two cases:
\begin{enumerate}
    \item $\lambda > \sqrt{\xiss'' (1) / \oalpha}$. Then the optimal $\gamma$ is just $\lambda$, and we obtain that
    \begin{align*}
        \sup_{q \in [r^2, 1]} \VH_{\oalpha}^{\sAMP} (r, q) \le \, \xifs(r) + \inf_{\lambda > \sqrt{\xiss'' (1) / \oalpha}} \bigg\{ \frac{1}{2 \lambda \oalpha} \xiss'(1) + \frac{\lambda}{2} (1 - r^2) \bigg\} \, .
    \end{align*}
    If $\xiss'(1) > \xiss'' (1) (1 - r^2)$, the above infimum is achieved at $\lambda = \sqrt{\xiss'(1) / \oalpha (1 - r^2)}$, and we obtain the upper bound
    \begin{equation*}
        \sup_{q \in [r^2, 1]} \VH_{\oalpha}^{\sAMP} (r, q) \le \, \xifs(r) + \sqrt{\frac{(1-r^2) \xiss' (1)}{\oalpha}} \, = \VH_{\oalpha}^* (r, 1) = \VH_{\oalpha}^* (r, q_*(r)) ,
    \end{equation*}
    where $q_* (r) = 1$ follows from \cref{lem:q_star_r}.
    Otherwise, the infimum is achieved at $\lambda = \sqrt{\xiss'' (1) / \oalpha}$, which will be considered in the next case.
    
    \item $\lambda \le \sqrt{\xiss'' (1) / \oalpha}$. In this case, there exists some $s \in [r^2, 1]$ such that $\lambda = \sqrt{\xiss''(s) / \oalpha}$, and the optimal $\gamma$ satisfies
    \begin{equation*}
        \gamma(t) = \bone_{t \ge s} \cdot \sqrt{\frac{\xiss''(t)}{\oalpha}} + \bone_{t < s} \cdot \lambda.
    \end{equation*}
    We similarly obtain that
    \begin{align*}
        \sup_{q \in [r^2, 1]} \VH_{\oalpha}^{\sAMP} (r, q) \le \, \xifs (r) + \frac{\xiss'(s)}{2 \sqrt{\oalpha \xiss''(s)}} + \frac{\sqrt{\xiss''(s)}}{2 \sqrt{\oalpha}} \left( s - r^2 \right) +  \frac{1}{\sqrt{\oalpha}} \int_{s}^{1} \sqrt{\xiss'' (t)} \d t.
    \end{align*}
    Differentiating the above right-hand side with respect to $s$ and using \cref{lem:q_star_r}, we deduce that it is minimized at $s = q_* (r)$. Hence,
    \begin{equation*}
        \sup_{q \in [r^2, 1]} \VH_{\oalpha}^{\sAMP} (r, q) \le \, \xifs (r) + \sqrt{\frac{(s-r^2) \xiss'(s)}{\oalpha}} +  \frac{1}{\sqrt{\oalpha}} \int_{s}^{1} \sqrt{\xiss'' (t)} \d t = \VH_{\oalpha}^* (r, q_*(r)).
    \end{equation*}
\end{enumerate}
Combining the two cases and \cref{lem:q_star_r}, we conclude that $\sup_{q \in [r^2, 1]} \VH_{\oalpha}^{\sAMP} (r, q) \le \VH_{\oalpha}^* (r, q_*(r))$. This completes the proof of part $(b)$.

\vspace{0.5em}
\noindent \textbf{Proof of $(c)$.}
We first show that the set $\{ r \in [-1, 1]: (r, q_*(r)) \in A_{\sBayes} \}$ is an interval. By definition,
\begin{equation*}
    (r, q_*(r)) \in A_{\sBayes} \Longleftrightarrow q_* (r) \ge \left( 1 + c_{\sBayes}^{-2} \right) r^2 := q_{\sBayes} (r).
\end{equation*}
If $q_* (r) = 1$, this is equivalent to $(r, 1) \in A_{\sBayes}$, which is already enforced by the constraint $(r, q_*(r)) \in A_{\sBayes}$. Now we consider the case $q_* (r) < 1$. According to \cref{lem:q_star_r}, $q_* (r)$ uniquely solves the equation
\begin{equation*}
    r^2 = \, q - \frac{\xiss' (q)}{\xiss'' (q)}.
\end{equation*}
By direct calculation, we can verify that the mapping $q \mapsto q - \xiss' (q) / \xiss'' (q)$ is monotone increasing. Therefore, $q_* (r) \ge q_{\sBayes} (r)$ is equivalent to
\begin{equation*}
    q_{\sBayes} (r) - \frac{\xiss' (q_{\sBayes} (r))}{\xiss'' (q_{\sBayes} (r))} \le \, r^2  \Longleftrightarrow \frac{q_{\sBayes} (r) \xiss'' (q_{\sBayes} (r))}{\xiss' (q_{\sBayes} (r))} \le \, 1 + c_{\sBayes}^2.
\end{equation*}
It suffices to show that the set of $q_{\sBayes} (r)$'s satisfying the above inequality is an interval. Note that
\begin{align*}
    & \frac{q \xiss'' (q)}{\xiss' (q)} \le \, 1 + c_{\sBayes}^2 \Longleftrightarrow \, \sum_{k \ge 2} k (k-1) \sigma_k^2 q^{k-1} \le \, \left( 1 + c_{\sBayes}^2 \right) \sum_{k \ge 1} k \sigma_k^2 q^{k-1} \\
    \Longleftrightarrow \, & \sum_{k \ge 2 + c_{\sBayes}^2} k \left( k - 2 - c_{\sBayes}^2 \right) \sigma_k^2 q^{k-1} \le \, \left( 1 + c_{\sBayes}^2 \right) \sigma_1^2 + \sum_{2 \le k < 2 + c_{\sBayes}^2} k \left( 2 + c_{\sBayes}^2 - k \right) \sigma_k^2 q^{k-1}.
\end{align*}
Of course, if $q$ satisfies the above inequality, then any $q' \le q$ also satisfies it. This proves our claim.

We can now denote $\{ r \in [-1, 1]: (r, q_*(r)) \in A_{\sBayes} \} = [ - r_A, r_A ]$. By continuity, we must have
$$
\frac{r_A}{\sqrt{q_* (r_A) - r_A^2}} = c_{\sBayes}.
$$ 
We first derive the first-order condition satisfied by $r_*$. On the one hand, in the region $\{ r : (r, 1) \in A_{\sBayes}, q_*(r) = 1 \}$ we have
\begin{align*}
    \frac{\de \VH_{\oalpha}^*}{\de r} (r, q_*(r)) = \, \xifs' (r) - \sqrt{\frac{\xiss' (1)}{\oalpha}} \frac{r}{\sqrt{1 - r^2}}.
\end{align*}
On the other hand, in the region $\{ r : (r, 1) \in A_{\sBayes}, q_*(r) < 1 \}$, by the envelope theorem we have
\begin{align*}
    \frac{\de \VH_{\oalpha}^*}{\de r} (r, q_*(r)) = \, \xifs' (r) - \sqrt{\frac{\xiss' (q_*(r))}{\oalpha}} \frac{r}{\sqrt{q_*(r) - r^2}}.
\end{align*}
To summarize, we always have
\begin{equation*}
    \frac{\de \VH_{\oalpha}^*}{\de r} (r, q_*(r)) = \, \xifs' (r) - \sqrt{\frac{\xiss' (q_*(r))}{\oalpha}} \frac{r}{\sqrt{q_*(r) - r^2}}.
\end{equation*}
By definition of $c_{\sBayes}$ and Cauchy-Schwarz inequality (cf. \cref{lem:xi_cauchy_schwarz}):
\begin{align*}
    & \frac{\de \VH_{\oalpha}^*}{\de r} (r_A, q_*(r_A)) = \, \xifs' (r_A) - \sqrt{\frac{\xiss' (q_*(r_A))}{\oalpha}} \frac{r_A}{\sqrt{q_*(r_A) - r_A^2}} \\
    \le \, & \sqrt{\xiss' (q_*(r_A)) \xiff' \left( \frac{r_A^2}{q_*(r_A)} \right)} - \sqrt{\frac{\xiss' (q_*(r_A))}{\oalpha}} \frac{r_A}{\sqrt{q_*(r_A) - r_A^2}} \\
    = \, & \sqrt{\xiss' (q_*(r_A))} \left( \sqrt{\xiff' \left( \frac{c_{\sBayes}^2}{1 + c_{\sBayes}^2} \right)} - \frac{c_{\sBayes}}{\sqrt{\oalpha}} \right) = 0.
\end{align*}
Similarly, we can show that
\begin{equation*}
    \frac{\de \VH_{\oalpha}^*}{\de r} ( - r_A, q_*(- r_A)) \ge \, 0.
\end{equation*}
Since $r_* = \arg \max_{[-r_A, r_A]} \VH_{\oalpha}^* (r,q_*(r))$, we must have
\begin{equation*}
    0 = \, \frac{\de \VH_{\oalpha}^*}{\de r} (r_*, q_*(r_*)) = \xifs' (r_*) - \sqrt{\frac{\xiss' (q_*)}{\oalpha}} \frac{r_*}{\sqrt{q_* - r_*^2}}.
\end{equation*}
Further, $q_* = q_* (r)$ implies that $(q_*-r_*^2)\xiss''(q_*) \le \xiss'(q_*)$.

It now remains to show that
\begin{equation*}
    T(r_*, q_*) = \, \frac{1}{\sqrt{\oalpha}} \sqrt{(q_* - r_*^2) \xiss'(q_*)} \, .
\end{equation*}
In \cref{eq:optimize_F_0}, the choice of $\{ f_{k, 1} \}_{k=0}^{\infty}$ achieving $\sqrt{(q_* - r_*^2) \xiss'(q_*) / \oalpha}$ is given by
\begin{equation*}
    f_{k, 1} = \, \sqrt{\frac{\oalpha (q_* - r_*^2)}{\xiss' (q_*)}} \sqrt{k+1} \sigma_{k+1} q_*^{k/2}.
\end{equation*}
It suffices to verify the first and third constraints. For the first one, we have
\begin{align*}
    \sum_{k = 0}^{\infty} \sqrt{k+1} \varphi_{k+1} f_{k,1} \left( \frac{r_*}{\sqrt{q_*}} \right)^k = \, \sqrt{\frac{\oalpha (q_* - r_*^2)}{\xiss' (q_*)}} \sum_{k = 0}^{\infty} (k+1) \varphi_{k+1} \sigma_{k+1} r_*^k = \, \sqrt{\frac{\oalpha (q_* - r_*^2)}{\xiss' (q_*)}} \xifs' (r_*) = r_*,
\end{align*}
where the last equality is due to the first-order condition for $r_*$. The third constraint follows from the definition of $q_*$:
\begin{align*}
    \frac{1}{\oalpha q_*} \sum_{k=0}^{\infty} k f_{k, 1}^2 \le \, & \frac{1}{\oalpha q_*} \frac{\oalpha (q_* - r_*^2)}{\xiss' (q_*)} \sum_{k=0}^{\infty} k (k+1) \sigma_{k+1}^2 q_*^k \\
    = \, & \frac{1}{\oalpha q_*} \frac{\oalpha (q_* - r_*^2)}{\xiss' (q_*)} q_* \xiss'' (q_*) = \frac{(q_* - r_*^2) \xiss'' (q_*)}{\xiss' (q_*)} \le 1.
\end{align*}
This establishes $\VH_{\oalpha}^{\sAMP}(r_*, q_*) =  \VH_{\oalpha}^* (r_*, q_*)$. Finally, the ``as a consequence'' part follows automatically, completing the proof of part $(c)$.
\end{proof}

\begin{proof}[Proof of \cref{prop:optimality_large_oalpha}]
	From the proof of \cref{thm:single_index_duality}, we know that
	$$
	\{ r \in [-1, 1]: (r, q_*(r)) \in A_{\sBayes} \} = [ - r_A, r_A ].
	$$
	Further, we have
	$$ 
	\{ r \in [-1, 1]: (r, 1) \in A_{\sBayes} \} = [ - r_B, r_B ], \quad r_B = \frac{c_{\sBayes}}{\sqrt{1 + c_{\sBayes}^2}}.
	$$
	Therefore, $r_A = r_B$ if and only if $q_* (r_B) = 1$, which (by \cref{lem:q_star_r}) is equivalent to
	\begin{equation*}
		\xiss' (1) \ge \, \xiss'' (1) (1 - r_B^2) = \frac{1}{1 + c_{\sBayes}^2} \xiss'' (1).
	\end{equation*}
	By definition, $c_{\sBayes}$ is an increasing function of $\oalpha$. Therefore, the above equation holds for large enough $\oalpha$, with the threshold depending only on $\varphi$ and $\sigma$. This completes the proof.
\end{proof}

\subsection{Appendix for \cref{sec:Gaussian_model_setup}}\label{append:generalized_tensor_pca}

\subsubsection{Parisi formula}
From a statistical physics perspective, the objective function $\hCorr^g_d (\ww)$ for generalized tensor PCA can be viewed as the Hamiltonian of a spherical mixed $p$-spin model with a planted signal $\ww_*$. In this section, we derive the Parisi formula for predicting the asymptotic global maximum of $\hCorr^g_d (\ww)$ over the unit sphere $\S^{d-1}$, building on foundational results from spin glass theory \cite{chen2017parisi,jagannath2017low}.

 Let $\ww = r \ww_* + \sqrt{1 - r^2} \ww_{\perp}$, with $\ww_{\perp} \in \S^{d-1}$, $\ww_{\perp} \perp \ww_*$, we obtain that
\begin{align*}
    \hCorr^g_d(\ww) = \, \sum_{k \ge 1} \sigma_k \varphi_k \langle \ww_*, \ww \rangle^k + \frac{1}{\sqrt{d \oalpha}} \sum_{k \ge 1} \sigma_k \langle \GG^{(k)},\ww^{\otimes k} \rangle := \xifs (r) + \frac{1}{\sqrt{d}} H_{\sigma, \oalpha} (\ww),
\end{align*}
where $H_{\sigma, \oalpha}$ is a mean-zero Gaussian process on the unit sphere, with covariance
\begin{equation*}
    \E \left[ H_{\sigma, \oalpha} (\ww_1) H_{\sigma, \oalpha} (\ww_2) \right] = \, \frac{1}{\oalpha} \xiss (\langle \ww_1, \ww_2 \rangle).
\end{equation*}
It thus follows that
\begin{align*}
    \max_{\ww \in \S^{d-1}} \hCorr^g_d(\ww) = \max_{\substack{r \in [-1, 1] \\ \ww_\perp \in \S^{d-1}, \, \ww_{\perp} \perp \ww_* } } \left\{ \xifs (r) + \frac{1}{\sqrt{d}} H_{\sigma, \oalpha} \left( r \ww_* + \sqrt{1 - r^2} \ww_{\perp} \right) \right\}.
\end{align*}
Fix $r \in [-1, 1]$, for any $\ww_{\perp}, \ww_{\perp}' \in \S^{d-1}$ orthogonal to $\ww_*$, we have
\begin{equation*}
    \E \left[ H_{\sigma, \oalpha} \left( r \ww_* + \sqrt{1 - r^2} \ww_{\perp} \right) H_{\sigma, \oalpha} \left( r \ww_* + \sqrt{1 - r^2} \ww_{\perp}' \right) \right] = \frac{1}{\oalpha} \xiss (r^2 + (1 - r^2) \langle \ww_{\perp}, \ww_{\perp}' \rangle).
\end{equation*}
Applying \cite[Theorem 1]{chen2017parisi} then implies that almost surely,
\begin{align*}
    \lim_{d \to \infty} \max_{\ww_{\perp} \in \S^{d-1}, \, \ww_{\perp} \perp \ww_* } \frac{1}{\sqrt{d}} H_{\sigma, \oalpha} \left( r \ww_* + \sqrt{1 - r^2} \ww_{\perp} \right) = \, \min_{\Gamma\in \cuG} \mathsf{U}_{\oalpha} (\Gamma, r),
\end{align*}
where
\begin{align*}
    & \mathsf{U}_{\oalpha} (\Gamma, r) := \, \frac{\Gamma(0)}{2 \oalpha} (1 - r^2) \xiss' (r^2) + \frac{(1-r^2)^2}{2 \oalpha} \int_{0}^{1} \Gamma(t) \xiss'' (r^2 + (1-r^2) t) \d t + \frac{1}{2} \int_{0}^{1} \frac{\d t}{\Gamma(t)}, \\
    & \cuG := \, \Big\{ \Gamma:[0,1) \to \R_{> 0}: \Gamma(t) = L - \int_0^t \alpha(s)\, \d s, \, \alpha \mbox{ non-decreasing, right-continuous, integrable} \Big\}.
\end{align*}
Define for $t \in [r^2, 1]$:
\begin{equation*}
    \gamma (t) = \, \frac{\oalpha}{1 - r^2} \Gamma \left( \frac{t - r^2}{1 - r^2} \right)^{-1},
\end{equation*}
we can rewrite $\mathsf{U}_{\oalpha}$ in terms of $\gamma$ and $r$:
\begin{align*}
    \mathsf{U}_{\oalpha} (\gamma, r) := \, & \frac{\xiss' (r^2)}{2 \gamma (r^2)} + \frac{1-r^2}{2} \int_{0}^{1} \frac{\xiss'' (r^2 + (1-r^2) t)}{\gamma (r^2 + (1-r^2) t)} \d t + \frac{1 - r^2}{2 \oalpha} \int_{0}^{1} \gamma (r^2 + (1-r^2) t) \d t \\
    = \, & \frac{\xiss' (r^2)}{2 \gamma (r^2)} + \frac{1}{2} \int_{r^2}^{1} \frac{\xiss'' (t)}{\gamma ( t)} \d t + \frac{1}{2 \oalpha} \int_{r^2}^{1} \gamma ( t) \d t.
\end{align*}
Further, by definition of $\cuG$, we know that $\gamma$ has the form $\gamma (t) = 1 / (c + \int_{t}^{1} \mu (s) \d s)$, where $c > 0$, $\mu$ is non-decreasing, right continuous, and integrable. We finally obtain the prediction
\begin{align*}
	\lim_{d \to \infty} \max_{\ww \in \S^{d-1}} \hCorr^g_d(\ww) = \, & \sup_{r \in [-1, 1]} \inf_{(\mu, c) \in \mathscr{U} \times \R_{> 0}} \left\{ \xifs (r) + \frac{\xiss' (r^2)}{2 \gamma (r^2)} + \frac{1}{2} \int_{r^2}^{1} \frac{\xiss'' (t)}{\gamma ( t)} \d t + \frac{1}{2 \oalpha} \int_{r^2}^{1} \gamma ( t) \d t \right\} \\
	\stackrel{(i)}{=} \, & \sup_{r \in [-1, 1]} \inf_{(\mu, c) \in \mathscr{U} \times \R_{> 0}} \left\{ \xifs (r) + \frac{\xiss' (1)}{2 \gamma (1)} + \frac{1}{2} \int_{r^2}^{1} \mu (t) \xiss' (t) \d t + \frac{1}{2 \oalpha} \int_{r^2}^{1} \gamma ( t) \d t \right\} \\
	\stackrel{(ii)}{=} \, & \sup_{r \in [-1, 1]} \inf_{(\mu, c) \in \mathscr{U} \times \R_{> 0}} \osF_{\oalpha} (\mu, c, r),
\end{align*}
where in $(i)$ we use integration by parts, and $(ii)$ follows from the definition of $\osF_{\oalpha}$ in \cref{eq:parisi_func_single_index}. This matches the Parisi formula for the single-index model derived in \cref{conj:replica_single_index}.

\subsubsection{Value of $\hCorr_d^g$ achieved by a two-stage algorithm}\label{sec:tensor_two_stage}

In this section, we describe and analyze a two-stage algorithm to optimize 
$\hCorr^g_d (\ww)$ over the unit sphere. Similar to our AMP 
algorithm for the single-index model, the first stage consists of 
several tensor AMP iterations. For the second stage, we use the 
Hessian ascent algorithm of \cite{subag2021following}.

Before presenting the actual tensor AMP algorithm, we need to introduce its state evolution,
 a set of recursive equations characterizing its asymptotic behavior 
  as $d \to \infty$. Let $\{ a_k (\ell) \}_{k \ge 1, \, \ell \ge 0}$ 
  be a sequence of real numbers. For $\ell, \ell_1, \ell_2 \ge 0$, define
\begin{equation*}
    \xiaf^{\ell} (t) = \, \sum_{k \ge 1} a_k (\ell) \varphi_k t^k, \quad \xiaa^{\ell_1, \ell_2} (t) = \, \sum_{k \ge 1} a_k (\ell_1) a_k (\ell_1) t^k.
\end{equation*}
Let $p_0$ be a probability distribution on $\R$ and $M^0 \sim p_0$. For each $\ell \ge 1$, let $M^{\le \ell} = (M^1, \cdots, M^{\ell}) \in \R^{\ell}$ be jointly Gaussian with zero mean, independent of $M^0$. Further, let $G \sim \normal (0, 1)$ be independent of $\{ M^{\ell} \}_{\ell \ge 0}$. The covariance matrix of $M^{\le \ell}$ is specified as follows:
\begin{enumerate}
    \item Define the sequence $\{ r_{\ell} \}_{\ell = 0}^{\infty}$ iteratively by
        \begin{equation*}
            r_0 = 0, \quad r_{\ell + 1} = ( \xiaf^{\ell} )' (r_{\ell}).
        \end{equation*}
        For each $\ell \ge 0$, define $Z^{\ell} = r_{\ell} G + M^{\ell}$.
    \item The covariances $Q_{\ell_1, \ell_2} = \E [M^{\ell_1} M^{\ell_2}]$ are defined by the recursion:
        \begin{equation*}
            Q_{\ell_1 + 1, \ell_2 + 1} = \, \frac{1}{\oalpha} (\xiaa^{\ell_1, \ell_2})' ( \E [ Z^{\ell_1} Z^{\ell_2} ] ) = \frac{1}{\oalpha} (\xiaa^{\ell_1, \ell_2})' ( r_{\ell_1} r_{\ell_2} + Q_{\ell_1, \ell_2} ), \quad \ell_1, \ell_2 \ge 0,
        \end{equation*}
        where $Q_{\ell_1, \ell_2} = 0$ if $\ell_1 = 0$ or $\ell_2 = 0$.
\end{enumerate}
Our tensor AMP algorithm takes the general form (cf. \cite{montanari2014statistical,el2021optimization}):
\begin{equation}\label{eq:tensor_AMP}
    \zz^{\ell+1} = \sum_{k\ge 1} ka_k(\ell)  \YY^{(k,s)}\{\zz^{\ell}\} - B_{\ell}\zz^{\ell-1}\, , \quad \ell \ge 0,
\end{equation}
where $\YY^{(k,s)}$ is the symmetrized version of $\YY^{(k)}$, the initialization $\zz^0 \in \R^d$ has i.i.d. coordinates drawn from $p_0 / \sqrt{d}$, and the Onsager terms $\{ B_{\ell} \}_{\ell \ge 0}$ are defined as follows: $B_0 = 0$, and for $\ell \ge 1$:
\begin{equation}\label{eq:def_B_ell}
    B_{\ell} = \, \frac{1}{\oalpha} (\xiaa^{\ell, \ell - 1})'' (r_{\ell} r_{\ell - 1} + Q_{\ell, \ell - 1}).
\end{equation}
We are now ready to characterize the limiting empirical joint distribution of the coordinates of tensor AMP iterations as $d \to \infty$.
\begin{prop}\label{prop:se_tensor_AMP}
    Assume that $p_0$ has finite second moment. For any $\ell \ge 0$ and pseudo-Lipschitz function $\psi: \R^{\ell + 1} \to \R$, we have
    \begin{equation*}
        \begin{split}
        & \plim_{d \to \infty} \frac{1}{d} \sum_{i=1}^{d} \psi \big( \sqrt{d} \zz_i^0, \sqrt{d} \zz_i^1, \cdots, \sqrt{d} \zz_i^{\ell} \big) = \, \E \big[ \psi \big( Z^0, Z^1, \cdots, Z^{\ell} \big) \big] \, .
        \end{split}
    \end{equation*}
\end{prop}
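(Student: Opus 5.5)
The plan is to reduce the claim to the standard state evolution for tensor AMP with a planted spike, as in \cite{montanari2014statistical,el2021optimization}. We then check that the recursion for $(r_\ell, Q_{\ell_1,\ell_2})$ in the statement is exactly what those general results produce for the iteration \eqref{eq:tensor_AMP}. First we write $\YY^{(k,s)} = \varphi_k \ww_*^{\otimes k} + (d\oalpha)^{-1/2}\GG^{(k,s)}$, where $\GG^{(k,s)}$ is the symmetrized Gaussian tensor. Since $\sum_k k a_k(\ell)\YY^{(k,s)}\{\zz\}$ contracts $k-1$ indices, the iteration splits into two parts:
\[
\zz^{\ell+1} = \Big(\sum_{k\ge1} k a_k(\ell)\varphi_k \<\ww_*,\zz^\ell\>^{k-1}\Big)\ww_* + \frac{1}{\sqrt{d\oalpha}}\sum_{k\ge 1} k a_k(\ell)\GG^{(k,s)}\{\zz^\ell\} - B_\ell \zz^{\ell-1}.
\]
The signal part has coefficient $(\xiaf^{\ell})'(\<\ww_*,\zz^\ell\>)$. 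So once we know inductively that $\<\ww_*,\zz^\ell\>\to r_\ell$, the recursion $r_{\ell+1}=(\xiaf^\ell)'(r_\ell)$ follows, as does the decomposition $\sqrt d\,\zz^{\ell+1}_i \approx r_{\ell+1}\sqrt d\, w_{*,i} + (\text{noise})_i$. Here $\sqrt d\,\ww_*$ has empirical distribution converging to $G\sim\normal(0,1)$, after reducing to a uniformly random $\ww_*$ by rotational invariance as in \cref{lem:Reduction_Signal}.

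For the noise part, we use the general AMP theorem for (possibly non-symmetric order-$k$) Gaussian tensors with polynomial nonlinearities $x\mapsto x^{\otimes(k-1)}$, treating the spike as a known side-information direction. The Gaussian conditioning argument (Bolthausen's scheme), or the reduction of tensor AMP to a matrix-valued AMP in \cite{el2021optimization}, shows the following. Conditional on the past, the fresh noise term $(d\oalpha)^{-1/2}\sum_k k a_k(\ell)\GG^{(k,s)}\{\zz^\ell\}$, after Onsager correction, is asymptotically Gaussian. Its covariance with the $\ell_2$-th noise term is
\[
\frac{1}{\oalpha}\sum_k k a_k(\ell_1)a_k(\ell_2)\<\zz^{\ell_1},\zz^{\ell_2}\>^{k-1} = \frac1\oalpha(\xiaa^{\ell_1,\ell_2})'(\<\zz^{\ell_1},\zz^{\ell_2}\>).
\]
This uses the fact that $\E\<\GG^{(k,s)},\uu^{\otimes k}\>\<\GG^{(k,s)},\vv^{\otimes k}\>=\<\uu,\vv\>^k$, up to normalization. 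By induction $\<\zz^{\ell_1},\zz^{\ell_2}\>\to \E[Z^{\ell_1}Z^{\ell_2}]=r_{\ell_1}r_{\ell_2}+Q_{\ell_1,\ell_2}$, which gives the $Q$ recursion. The $k\ge 2$ terms produce a memory dependence on $\zz^{\ell-1}$ through the derivative of the covariance kernel. Its coefficient is $\frac1\oalpha(\xiaa^{\ell,\ell-1})''(\cdot)$, which is exactly $B_\ell$ in \eqref{eq:def_B_ell}. Subtracting $B_\ell\zz^{\ell-1}$ therefore cancels it, and only the one-step Onsager term appears because each nonlinearity depends only on the current iterate.

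Finally, we pass from the Gaussian-conditioning statements to convergence of pseudo-Lipschitz test functions in probability. We do this by induction on $\ell$, following the standard concentration arguments, using the finite second moment of $p_0$ for the base case $\ell=0$. Independence of $M^0$ from $G$ comes from $\zz^0$ being independent of $\ww_*$.

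The main obstacle is the infinite sum over $k$. We would first treat finitely many nonzero $a_k(\ell)$, where the cited results apply directly. We would then truncate, assuming $\sum_k k^2 a_k(\ell)^2<\infty$ (implicit in the definitions of $\xiaa$ and its derivatives on $[-1,1]$). We control the tail using operator-norm bounds $\|\GG^{(k,s)}\{\cdot\}\|\lesssim \sqrt{dk}$, and conclude by continuity of the state evolution in the coefficients.
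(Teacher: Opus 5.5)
Your proposal is correct and is essentially the paper's argument. The paper likewise treats the planted iteration as a noise-only tensor AMP with $\ww_*$ as side information, proves that noise-only state evolution (\cref{prop:se_noise_tensor_AMP}) by extending the Gaussian-conditioning/LAMP argument of \cite{el2021optimization} to order-dependent nonlinearities $f_\ell^k=(a_k(\ell)/(\sqrt{\oalpha}\sigma_k))^{1/(k-1)}(\uu^\ell+r_\ell\ww_*)$, and matches the covariance recursion and $B_\ell$ as you do; this extension is needed because the off-the-shelf theorem feeds one nonlinearity to all orders, which does not directly cover iteration-dependent $a_k(\ell)$. The step you leave as ``$\approx$'' is made precise there by an induction showing $\|\uu^\ell-(\zz^\ell-r_\ell\ww_*)\|_2\to 0$ in probability, using $\langle\zz^\ell,\ww_*\rangle\to r_\ell$ and $\|\GG^{(k,s)}\|_{\mathrm{op}}=O_p(\sqrt{d})$ to replace the random signal coefficient by the deterministic $r_{\ell+1}$.
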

The proof of \cref{prop:se_tensor_AMP} is deferred to \cref{sec:proof_se_tensor_AMP}. We next characterize some important limiting quantities achieved by the first stage of our algorithm.
\begin{prop}\label{prop:AchievableTensorFirst}
Recall the definition of $A_{\sBayes}$ and $c_{\sBayes}$ from \cref{eq:A_Bayes_single_index,eq:C_Bayes_Perceptron}. Assume that $(r,q)\in A_{\sBayes}$, and $\{ a_k \}_{k \ge 1}$ satisfying
\begin{equation}\label{eq:Gaussian_fixed_pt}
    r = \sum_{k \ge 1} k a_k \varphi_k r^{k-1}, \quad q \ge r^2 + \frac{1}{\oalpha} \sum_{k \ge 1} k a_k^2 q^{k-1}, \quad \frac{1}{\oalpha} \sum_{k \ge 2} k(k-1) a_k^2 q^{k-2} \le 1,
\end{equation}
Then, there exists a tensor AMP algorithm (as described above) returning $\hww_1$, such that 
\begin{align*}
	&\plim_{d\to\infty} \big\< \hww_1, \ww_* \big\> =r\, ,\;\;\;\;\;\;\plim_{d\to\infty} \big\| \hww_1 \big\|^2_2=q\, ,\\
	&\plim_{d \to \infty} \hCorr^g_d \big( \hww_1 \big) = \, \xifs (r) + \frac{1}{\oalpha} \sum_{k \ge 1} k \sigma_k a_k q^{k-1}.
\end{align*}
\end{prop}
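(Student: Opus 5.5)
The plan is to mirror the two-phase structure of the first (fixed-point) stage of the AMP algorithm in \cref{sec:fixed_pt_amp}, now for the tensor AMP \eqref{eq:tensor_AMP}. Its state evolution is given by \cref{prop:se_tensor_AMP}, and the output will be $\hww_1=\zz^{L}$ for a large but fixed $L$.

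\textbf{Phase 1 (reaching $(r,q)$).} By continuity it suffices to treat $(r,q)\in\operatorname{int}A_{\sBayes}$. I will first run $L_1$ iterations whose coefficients $a_k(\ell)$ play the role of the Bayes-optimal denoisers in \cref{prop:Bayes_feasible_region}(b). In the scalar tensor setting the map $c^2\mapsto\oalpha\,\xiff'(c^2/(1+c^2))$ replaces $\SEmap$, which is consistent with \eqref{eq:C_Bayes_Perceptron}. Iterating it from $c=0$ drives the effective signal-to-noise ratio $r_\ell^2/Q_{\ell\ell}$ up to any value below $c_{\sBayes}^2$. Rescaling the last iterate and adding, if needed, an independent Gaussian component then places $(r_{L_1},Q_{L_1L_1})$ exactly at $(r,q)$; the independent component can be realized through the randomized initialization $p_0$.

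If the middle condition in \eqref{eq:Gaussian_fixed_pt} is a strict inequality, I will modify the coefficients to make it an equality. I will append to $\{a_k\}$ a small extra coefficient on a high-degree term $k\gg1$ with $\varphi_k$ negligible, or equivalently a further independent noise component. Either choice adds variance while leaving $r=\sum_k ka_k\varphi_k r^{k-1}$ unchanged and perturbing the third constraint only slightly.

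\textbf{Phase 2 (fixed point).} For $\ell\ge L_1$ I will freeze $a_k(\ell)=a_k$. The first identity in \eqref{eq:Gaussian_fixed_pt} gives $r_{\ell+1}=(\xiaf)'(r)=r$. The (now equality) second identity gives $Q_{\ell+1,\ell+1}=q-r^2$, so $\E[(Z^\ell)^2]=q$ for all $\ell\ge L_1$. This yields the first two limits: $\<\zz^L,\ww_*\>\to r$ and $\|\zz^L\|_2^2\to q$.

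Next I must show that consecutive iterates become asymptotically equal. The cross-covariance obeys
$$Q_{\ell+1,\ell}=\tfrac{1}{\oalpha}\,\xiaa'\bigl(r^2+Q_{\ell,\ell-1}\bigr),$$
a monotone scalar map of $x=Q_{\ell,\ell-1}$ on $[0,q-r^2]$ with fixed point $x=q-r^2$. Its slope is at most $\frac1{\oalpha}\sum_{k\ge2}k(k-1)a_k^2q^{k-2}\le1$ by the third condition in \eqref{eq:Gaussian_fixed_pt}. Hence $Q_{\ell+1,\ell}\uparrow q-r^2$, which is the analogue of \cref{lem:cov_mu_Q}, and therefore $\E[(Z^{\ell+1}-Z^\ell)^2]\to0$.

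If the slope equals one, the limit could in principle be a different fixed point. I will handle this by shrinking $a_k$ by a factor $1-\delta$ (re-adjusting as in Phase 1) so that the slope is strictly below one, and then letting $\delta\to0$ using continuity of all limits. I expect this degenerate-case bookkeeping to be the main technical obstacle.

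\textbf{Value computation.} Write
$$\hCorr^g_d(\zz^L)=\sum_k\sigma_k\varphi_k\<\ww_*,\zz^L\>^k+\frac{1}{\sqrt{d\oalpha}}\sum_k\sigma_k\<\GG^{(k)},(\zz^L)^{\otimes k}\>.$$
The first sum tends to $\xifs(r)$. For the noise term, substitute the update $\zz^L=\sum_k ka_k\YY^{(k,s)}\{\zz^{L-1}\}-B_{L-1}\zz^{L-2}$ and note that $\zz^{L-1}\approx\zz^L$ in $\ell_2$ by Phase 2. The noise term then reduces to inner products of the form $\<\zz^L,\sum_k\sigma_k\GG^{(k,s)}\{\zz^{L}\}\>$.

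I will evaluate these by running one auxiliary AMP step with coefficients $\sigma_k/k$ and reading the result off \cref{prop:se_tensor_AMP}. Equivalently, Gaussian integration by parts on $\GG^{(k)}$ shows that the noise contribution equals the state-evolution covariance between $M^L$ and the $\sigma$-weighted field, after the Onsager term \eqref{eq:def_B_ell} cancels the self-interaction part. The result is $\frac1{\oalpha}\sum_k k\sigma_ka_k q^{k-1}$, the covariance formula of the recursion evaluated at the fixed point $\E[(Z^L)^2]=q$.

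Correctly accounting for the Onsager correction in this last step is delicate, though routine. Together with the Phase 2 convergence, this completes the plan.
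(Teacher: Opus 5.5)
Your plan follows the paper's proof. You reach $(r,q)$ via the scalar map $c^2\mapsto\oalpha\,\xiff'(c^2/(1+c^2))$; the paper obtains this map by Cauchy--Schwarz, with equality at $a_k(\ell)\propto\varphi_k(r_\ell/q_\ell)^{k-1}$. You then freeze $a_k(\ell)=a_k$, use the third condition in \eqref{eq:Gaussian_fixed_pt} to get $Q_{\ell+1,\ell}\to q-r^2$, and evaluate the objective with one extra step using coefficients $\sigma_k/k$, which gives $\E[Z^{L+1}Z^L]+\frac1{\oalpha}\sum_k(k-1)\sigma_ka_kq^{k-1}$. Substituting the update first is unnecessary, since $\<\GG^{(k)},\zz^{\otimes k}\>=\<\zz,\GG^{(k,s)}\{\zz\}\>$; the paper simply applies the extra step to the full $\YY$, so that $\xifs(r)$ appears as $r_{L+1}r_L$.

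The step that fails as written is absorbing slack in the middle condition with a high-degree coefficient. Any change of the degree-$K$ coefficient that raises $\frac1{\oalpha}\sum_k ka_k^2q^{k-1}$ by the slack $s>0$ also raises $\frac1{\oalpha}\sum_k k(k-1)a_k^2q^{k-2}$ by $(K-1)s/q$. This is not small, and it is worst precisely for $K\gg1$. Equivalently, the new term adds $s\bigl((r^2+x)/q\bigr)^{K-1}$ to the cross-covariance map $x\mapsto\frac1{\oalpha}\xiaa'(r^2+x)$. For large $K$ this acts like fresh noise at each iteration, and the slope at $x=q-r^2$ exceeds one. The iteration then settles at some $x^*<q-r^2$, consecutive iterates do not merge, and the value formula, which needs $\E[Z^LZ^{L-1}]\to q$, no longer holds.

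So the two devices you call equivalent are not. Independent noise also fails if it is redrawn at each step, since it then contributes nothing to $\E[Z^{\ell+1}Z^\ell]$ and $q$ is no longer a fixed point. What works is an independent component that is the \emph{same} vector at every iteration. One way is to feed the initialization back into each step, which the general tensor AMP of \cref{prop:se_noise_tensor_AMP} permits since $f_\ell^k$ may depend on $\uu^0$. Such a component adds the constant $s$ to the map for $\E[Z^{\ell+1}Z^\ell]$ and leaves the slope bound $\frac1{\oalpha}\xiaa''(q)\le1$ unchanged, so $q$ stays the attracting fixed point. It is also uncorrelated with the field produced by the extra $\sigma_k/k$ step, so the value is still $\xifs(r)+\frac1{\oalpha}\sum_k k\sigma_ka_kq^{k-1}$. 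This is presumably what the paper means by ``adding random noise to the $a_k$'s''. With this replacement your argument goes through, and your treatment of the slope-one case is more careful than the paper's.
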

In the second stage, we run the Hessian ascent algorithm \cite{subag2021following} on the Hamiltonian
\begin{equation*}
    \ww_{\perp} \mapsto \hCorr^g_d \big( \hww_1 + \sqrt{1 - \| \hww_1 \|^2} \, \ww_{\perp} \big)
\end{equation*}
over $\ww_{\perp} \in \S^{d-1}$, $\ww_{\perp} \perp \hww_1$. Since the limiting empirical spectral distribution of $\nabla^2 \hCorr^g_d$, the spherical Hessian of $\hCorr^g_d$, only depends on its pure noise part, the same argument as in the proof of \cite[Theorem 4]{subag2021following} establishes that Hessian ascent achieves the value
\begin{equation*}
\begin{split}
    \hCorr^g_d \big( \hww_1 + \sqrt{1 - \| \hww_1 \|^2} \, \ww_{\perp} \big) \approx \, & \hCorr^g_d ( \hww_1 ) + \frac{1}{\sqrt{\oalpha}} \int_{0}^{1} \left( \frac{\d^2}{\d t^2} \xiss \big( \| \hww_1 \|^2 + (1 - \| \hww_1 \|^2) t \big) \right)^{1/2} \d t \\
    = \, & \hCorr^g_d (\hww_1) + \frac{1}{\sqrt{\oalpha}} \int_{\| \hww_1 \|^2}^{1} \xiss'' (t)^{1/2} \d t.
\end{split}
\end{equation*}
As $\| \hww_1 \|^2 \pto q$ as $d \to \infty$, we finally obtain the asymptotic value of $\hCorr^g_d$ achieved by our two-stage (tensor AMP + Hessian ascent) algorithm:
\begin{equation*}
    \plim_{d \to \infty} \hCorr^g_d \big( \hww_1 + \sqrt{1 - \| \hww_1 \|^2} \, \ww_{\perp} \big) = \, \xifs (r) + \frac{1}{\oalpha} \sum_{k \ge 1} k \sigma_k a_k q^{k-1} + \frac{1}{\sqrt{\oalpha}} \int_{q}^{1} \xiss'' (t)^{1/2} \d t.
\end{equation*}
To further analyze this value, define
\begin{equation*}
    T^g (r, q) = \, \max \, \frac{1}{\oalpha} \sum_{k \ge 1} k \sigma_k a_k q^{k-1}, \quad \mbox{subject to \cref{eq:Gaussian_fixed_pt}}. 
\end{equation*}
Then, straightforward calculation yields that
\begin{equation*}
	T^g (r, q) = \, \sqrt{\frac{q - r^2}{\oalpha}} \, T_0 (r, q),
\end{equation*}
with $T_0 (r, q)$ defined in \cref{eq:T0_def}. Consequently, for any fixed $(r, q) \in A_{\sBayes}$, the asymptotic maximum of $\hCorr_d^g$ attained by our algorithm is exactly $\VH_{\oalpha}^{\sAMP} (r, q)$, which matches the result established for the single-index model.

\subsection{Proof of \cref{lem:tensor_pca_equivalence}, \cref{prop:se_tensor_AMP} and \ref{prop:AchievableTensorFirst}}\label{sec:proof_se_tensor_AMP}

\begin{proof}[Proof of \cref{lem:tensor_pca_equivalence}]
Obviously, $\hCorr^g_d$ is a Gaussian process. We now verify \cref{eq:GaussianModel}. By definition,
\begin{align*}
    \E \left[ \hCorr^g_d(\ww) \right] = \, & \sum_{k \ge 1} \sigma_k \varphi_k \langle \ww_*^{\otimes k}, \ww^{\otimes k} \rangle = \xifs (\langle \ww_*, \ww \rangle), \\
    \Cov\big(\hCorr^g_d(\ww_1), \hCorr^g_d(\ww_2)\big) = \, & \frac{1}{d \oalpha} \Cov \left( \sum_{k \ge 1} \sigma_k \langle \GG^{\otimes k}, \ww_1^{\otimes k} \rangle, \sum_{k \ge 1} \sigma_k \langle \GG^{\otimes k}, \ww_2^{\otimes k} \rangle \right) \\
    = \, & \frac{1}{d \oalpha} \sum_{k \ge 1} \sigma_k^2 \langle \ww_1^{\otimes k}, \ww_2^{\otimes k} \rangle = \frac{1}{d \oalpha} \xiss (\langle \ww_1, \ww_2 \rangle),
\end{align*}
where in the above calculation we use \cref{eq:xi_expansion} and the assumption $\sigma_0 = \E [\sigma (G)] = 0$. This completes the proof.
\end{proof}

\begin{proof}[Proof of \cref{prop:se_tensor_AMP}]
This proof is based on the state evolution of general tensor AMP algorithms without a signal, which we establish below. As in \cref{sec:tensor_two_stage}, we begin with introducing the Gaussian process that governs the limiting behavior of tensor AMP iterates as $d \to \infty$.

Let $\{ \sigma_k \}_{k \ge 1}$ be a sequence of real numbers such that $\sum_{k \ge 1} \sigma_k^2 t^k < + \infty$ for some $t > 1$. Let $\{ f_{\ell}^{k} \}_{\ell \ge 0, \, k \ge 1}$ be a sequence of functions such that for each $\ell \ge 0$ and $k \ge 1$, $f_{\ell}^k: \R^{\ell + 1} \to \R$ is Lipschitz. Let $U^0 \sim p_0$, where $p_0$ is a probability distribution on $\R$ with finite second moment. For each $\ell \ge 1$, $U^{\le \ell} = (U^1, \cdots, U^{\ell}) \in \R^{\ell}$ is a mean-zero multivariate Gaussian vector, independent of $U^0$. Further, the covariance structure of $U^{\le \ell}$ is recursively defined via:
\begin{equation*}
\begin{split}
    \E \left[ U^{\ell_1 + 1} U^{\ell_2 + 1} \right] = \, & \sum_{k \ge 1} k \sigma_k^2 \, \E \left[ f_{\ell_1}^k \left( U^0, \cdots, U^{\ell_1} \right) f_{\ell_2}^k \left( U^0, \cdots, U^{\ell_2} \right) \right]^{k-1}, \quad \forall \ell_1, \ell_2 \ge 0, \\
    \E \left[ U^{\ell_1} U^{\ell_2} \right] = \, & 0, \quad \mbox{ if } \ell_1 = 0 \mbox{ or } \ell_2 = 0.
\end{split}
\end{equation*}
For each $k \ge 1$, let $\GG^{(k)} = (G^{(k)}_{i_1, \dots, i_k})_{1 \le i_1, \dots, i_k \le d} \in (\R^d)^{\otimes k}$ be a tensor with i.i.d. $\normal (0, 1)$ entries, and let $\GG^{(k, s)}$ be the symmetrization of $\GG^{(k)}$. Consider the following general tensor AMP iterations:
\begin{equation}\label{eq:noise_tensor_AMP_def}
    \uu^{\ell + 1} = \, \frac{1}{\sqrt{d}} \sum_{k \ge 1} k \sigma_k \GG^{(k, s)} \{ f_{\ell}^k (\uu^0, \cdots, \uu^{\ell}) \} - \sum_{k \ge 1} \sum_{j \le \ell} B_{\ell, j, k} f_{j-1}^{k} (\uu^0, \dots, \uu^{j-1}), \quad \ell \ge 0,
\end{equation}
with $\uu^0 \in \R^d$ having coordinates i.i.d. sampled from $p_0$, and the Onsager terms defined as
\begin{equation*}
    B_{\ell, j, k} = \, k (k-1) \sigma_k^2 \, \E [ f_{\ell}^k (U^0, \cdots, U^{\ell}) f_{j-1}^k (U^0, \cdots, U^{j-1}) ]^{k-2} \cdot \E \left[ \frac{\partial f_{\ell}^k}{\partial U^j} (U^0, \cdots, U^{\ell}) \right].
\end{equation*}
\begin{prop}\label{prop:se_noise_tensor_AMP}
    For any $\ell \ge 0$ and pseudo-Lipschitz function $\psi: \R^{\ell + 1} \to \R$, we have
    \begin{equation*}
        \plim_{d \to \infty} \frac{1}{d} \sum_{i=1}^{d} \psi \left( \sqrt{d} \uu_i^0, \cdots, \sqrt{d} \uu_i^{\ell} \right) = \, \E \left[ \psi \left( U^0, \cdots, U^{\ell} \right) \right].
    \end{equation*}
\end{prop}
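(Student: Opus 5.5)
The plan is to reduce the claim to the state evolution of AMP with an i.i.d. Gaussian (symmetric) matrix, through the standard "unfolding" of tensor AMP. This follows the route used for tensor AMP in \cite{montanari2014statistical,el2021optimization}. The first step is to truncate the sum over $k$. Since $\sum_k \sigma_k^2 t^k<\infty$ for some $t>1$, the contribution of degrees $k>K$ to both the iteration \eqref{eq:noise_tensor_AMP_def} and the covariance recursion is uniformly small. More precisely, $\|\GG^{(k,s)}\{\vv\}\|_2/\sqrt d$ is bounded, uniformly over unit vectors, by $C\sqrt{k}$ with high probability, and the factor $k\sigma_k$ makes the tail summable. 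Because each $f^k_\ell$ is Lipschitz, the error propagates with constants depending only on $\ell$. It therefore suffices to prove the statement for finitely many degrees $k\le K$, and then let $K\to\infty$ using continuity of the limiting covariances in $(\sigma_k)$.

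For a fixed finite $K$, I will treat each degree separately. For a single degree $k$, write $\GG^{(k,s)}\{\vv\}$ as the contraction of the symmetric tensor against $\vv^{\otimes(k-1)}$. Conditionally on the past iterates, this is a Gaussian vector with covariance $k\sigma_k^2\langle \vv,\vv'\rangle^{k-1}$ between two inputs $\vv,\vv'$, up to the correct scaling. This is exactly the structure of the recursion for $\E[U^{\ell_1+1}U^{\ell_2+1}]$, and summing over $k$ gives the stated formula.

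To make this rigorous, I will use the Gaussian conditioning technique of Bolthausen/Bayati--Montanari adapted to tensors. Let $\mathfrak S_\ell$ be the $\sigma$-algebra generated by the iterates up to step $\ell$. Conditionally on $\mathfrak S_\ell$, each $\GG^{(k,s)}$ is distributed as its projection onto the linear constraints it has satisfied so far, plus an independent copy acting on the orthogonal complement. The constraints are $\GG^{(k,s)}\{\vv_j\}=\ldots$ for $\vv_j=f^k_j(\cdots)$ with $j<\ell$. The fresh part yields a new Gaussian component with the prescribed covariance. The projection part produces terms of the form $\sum_j c_{j}\,(\cdot)$, and expanding these gives the non-vanishing corrections $\sum_j B_{\ell,j,k}f^k_{j-1}$. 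The coefficient $k(k-1)\sigma_k^2\langle\vv_\ell,\vv_{j-1}\rangle^{k-2}$ comes from differentiating $\langle\vv_\ell,\vv_{j-1}\rangle^{k-1}$. Via Stein's lemma, this combines with $\E[\partial_{U^j} f^k_\ell]$. These corrections cancel exactly against the Onsager terms in \eqref{eq:noise_tensor_AMP_def}. The induction over $\ell$ then gives convergence of the empirical distribution of $(\sqrt d\uu^0,\dots,\sqrt d\uu^\ell)$ to $\Law(U^0,\dots,U^\ell)$ for pseudo-Lipschitz test functions, using the law of large numbers for inner products $\langle f^k_{\ell_1},f^k_{\ell_2}\rangle\to\E[f^k_{\ell_1}f^k_{\ell_2}]$.

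An alternative that avoids redoing the conditioning is available. One can invoke the existing general result that AMP with a Gaussian-process ("mixed spin") Hessian structure, as in \cite{el2021optimization} (their reduction of mixed $p$-spin AMP to a matrix AMP on an augmented space), satisfies state evolution with these exact Onsager coefficients. In that approach only the identification of the covariances and Onsager terms needs to be checked.

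I expect the main obstacle to be the conditioning step for symmetric tensors of degree $k\ge3$. There the constraints are nonlinear in the iterates, and the projection term involves lower-order contractions $\GG^{(k,s)}\{\vv_i,\vv_j,\ldots\}$ with mixed vectors. One has to show that all but the $B_{\ell,j,k}$ contributions are $o(1)$ in normalized $\ell_2$ norm, uniformly in $d$. I would first try to control these terms using the concentration of $\langle \vv_i,\vv_j\rangle$ and operator-norm bounds on $\GG^{(k,s)}$ restricted to low-dimensional subspaces. If that becomes unwieldy, I would fall back on the reduction to \cite{el2021optimization}.
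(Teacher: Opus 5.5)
Your outline follows the paper's structure: you condition on the Gaussian tensors one order at a time, using their independence across $k$ (the paper's adaptation of \cite[Lemma A.2]{el2021optimization}), and you handle infinitely many orders by truncation (the paper's extension to $D=\infty$). The gap is at the step you yourself flag, and your fallback does not close it. \cite{el2021optimization} proves state evolution only when one nonlinearity $\ff_s$ is shared by all tensor orders. Letting $\ff^k_s=f^k_s(\uu^0,\dots,\uu^s)$ depend on $k$ is exactly what this proposition adds, so their theorem cannot be cited as a black box with ``only the identification of the covariances and Onsager terms'' left to check. What carries over is their \emph{proof}, because it tracks the iterates of each order separately. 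As the paper recounts it, that proof is a conditioning-plus-LAMP argument, not a reduction to matrix AMP on an augmented space. So it is precisely the conditioning argument you were hoping to avoid redoing.

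The missing idea is how to dispose of the extra terms in $\E[\GG^{(k,s)}\mid\mathcal F_t]\{\ff^k_t\}$. For $k\ge 3$ these involve the Lagrange-multiplier vectors contracted against mixed products of the $\ff^k_s$. The paper does not bound them directly; it passes to Long AMP (LAMP). At step $t$, $\GG^{(k,s)}$ is replaced by $\mathcal P^\perp_{t,k}\GG^{(k,s)}$, its projection onto tensors $\boldsymbol T$ with $\boldsymbol T\{\ff^k_s\}=0$ for all $s<t$, and the Onsager terms are written through the Gram matrices of the $\ff^k_s$. Because $\mathcal P^\perp_{t,k}\GG^{(k,s)}$ is independent of $\mathcal F_t$, LAMP state evolution follows directly from the conditional law. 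AMP and LAMP iterates are then shown to be asymptotically equivalent, order by order.

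Your plan also omits three further ingredients:
\begin{itemize}
\item The $\sigma$-algebra must contain the per-order outputs $\GG^{(k,s)}\{\ff^k_\ell\}$, not just $\uu^{\ell+1}$, which only sees their sum; otherwise the orders do not decouple.
\item Solving for the multipliers requires non-degenerate Gram matrices of $\{\ff^k_s\}_s$, which the paper obtains by a reduction to the well-conditioned case.
\item The truncation needs control uniform in $k$: of the Lipschitz constants of $f^k_\ell$, and of $\E[(f^k_\ell)^2]$ staying below the radius $t$. Without this, neither your tail bound nor the series $\sum_k k\sigma_k^2\,\E[f^k_{\ell_1}f^k_{\ell_2}]^{k-1}$ is finite.
\end{itemize}
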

The proof of \cref{prop:se_noise_tensor_AMP} closely follows that of \cite[Theorem 6]{el2021optimization}, and will be presented in \cref{sec:proof_se_noise_tensor_AMP}. 

We are now in position to complete the proof of \cref{prop:se_tensor_AMP}. In a similar spirit to \cref{lem:Reduction_Signal}, we can assume without loss of generality that the empirical distribution of the coordinates of $\sqrt{d} \ww_*$ converges in $W_2$ distance to $\normal (0, 1)$ as $d \to \infty$. Define for $\ell \ge 0$ and $k \ge 1$:
\begin{equation*}
\begin{split}
    f_{\ell}^k (\uu^0, \cdots, \uu^{\ell}) = \, & \left( \frac{a_k (\ell)}{ \sqrt{\oalpha} \sigma_k} \right)^{1/(k-1)} \left( \uu^{\ell} + r_{\ell} \cdot \ww_* \right) \\
    = \, & \left( \frac{a_k (\ell)}{ \sqrt{\oalpha} \sigma_k} \right)^{1/(k-1)} \left( \uu^{\ell} + (\xiaf^{\ell-1})' ( r_{\ell-1} ) \cdot \ww_* \right),
\end{split}
\end{equation*}
and consider the AMP algorithm~\eqref{eq:noise_tensor_AMP_def} (the base coefficients $\{ \sigma_k \}_{k \ge 1}$ may depend on $\ell$ and can be chosen such that $a_k (\ell) / \sigma_k > 0$). The state evolution equations then reduce to
\begin{align*}
    \E \left[ U^{\ell_1 + 1} U^{\ell_2 + 1} \right] = \, & \frac{1}{\oalpha} \sum_{k \ge 1} k a_k (\ell_1) a_k (\ell_2) \, \left( r_{\ell_1} r_{\ell_2} + \E [U^{\ell_1} U^{\ell_2}] \right)^{k-1} \\
    = \, & \frac{1}{\oalpha} (\xiaa^{\ell_1, \ell_2})' ( r_{\ell_1} r_{\ell_2} + \E [U^{\ell_1} U^{\ell_2}] ),
\end{align*}
which implies that $\E [U^{\ell_1} U^{\ell_2}] = Q_{\ell_1, \ell_2}$. The AMP iterations can also be simplified to
\begin{equation*}
    \uu^{\ell + 1} = \, \frac{1}{\sqrt{d \oalpha}} \sum_{k \ge 1} k a_k (\ell) \GG^{(k, s)} \{ \uu^{\ell} + r_{\ell}  \ww_* \} - B_{\ell} (\uu^{\ell-1} + r_{\ell-1} \ww_*),
\end{equation*}
where we recall $B_{\ell}$ from \cref{eq:def_B_ell}.

Define $\bm^{\ell} = \, \zz^{\ell} - r_{\ell} \ww_*$. To prove our claim, it suffices to show that $\{ \bm^{\ell} \}_{\ell \ge 0}$ and $\{ \uu^{\ell} \}_{\ell \ge 0}$ have the same limiting empirical distribution. To this end, we use induction to establish that
\begin{equation}\label{eq:equiv_u_and_m}
    \plim_{d \to \infty} \, \norm{\uu^{\ell} - \bm^{\ell}}_2^2 = \, \plim_{d \to \infty} \, \sum_{i=1}^d \left( \uu_i^{\ell} - \bm_{i}^{\ell} \right)^2 = 0, \quad \forall \ell \ge 0.  
\end{equation}
The base case $\ell = 0$ holds automatically since we can choose $\uu^0 = \zz^0 = \bm^0$. Now assume that \cref{eq:equiv_u_and_m} holds for all $\ell' \le \ell$. For $\ell + 1$, note that \cref{eq:tensor_AMP} implies
\begin{align*}
    \zz^{\ell + 1} = \, (\xiaf^{\ell})' (\langle \zz^{\ell}, \ww_* \rangle) \cdot \ww_* + \frac{1}{\sqrt{d \oalpha}} \sum_{k \ge 1} k a_k (\ell) \GG^{(k, s)} \{ \zz^{\ell} \} - B_{\ell} \zz^{\ell - 1}.
\end{align*}
Our induction assumption implies that
\begin{align*}
    \plim_{d \to \infty} \, \langle \zz^{\ell}, \ww_* \rangle = \, r_{\ell}, \quad \plim_{d \to \infty} \, \norm{\zz^{\ell'} - \uu^{\ell'} - r_{\ell'} \ww_*}_2^2 = \, 0, \,\, \forall \ell' \le \ell.
\end{align*}
Therefore, it remains to show that
\begin{equation*}
    \plim_{d \to \infty} \, \frac{1}{d} \norm{\GG^{(k, s)} \{ \zz^{\ell} \} - \GG^{(k, s)} \{ \uu^{\ell} \} }_2^2 = \, 0, \quad \forall k \ge 1,
\end{equation*}
which easily follows from concentration bounds on the operator norm of Gaussian random tensors, i.e., $\| \GG^{(k, s)} \|_{\op} = O_p (\sqrt{d})$ (cf. \cite[Lemma 2]{montanari2014statistical}). This completes our induction and the proof of \cref{prop:se_tensor_AMP}.
\end{proof}

\begin{proof}[Proof of Proposition \ref{prop:AchievableTensorFirst}]
For $\ell \ge 0$, define $q_{\ell} = r_{\ell}^2 + Q_{\ell, \ell}$. Then, we know that $(r_0, q_0) = 0$ and state evolution yields
\begin{equation}\label{eq:r_q_tensor_evo}
    r_{\ell + 1} = \, ( \xiaf^{\ell} )' (r_{\ell}), \quad q_{\ell + 1} = \, r_{\ell + 1}^2 + \frac{1}{\oalpha} (\xiaa^{\ell, \ell})' ( q_{\ell} ).
\end{equation}
Let $A_{\rm pca}$ denote the set of pairs $(r, q) \in [-1, 1] \times [0, 1]$ that can be achieved as limit points of the above iteration. Since $\xiaf^{\ell}$ is linear in the $a_k (\ell)$'s and $\xiaa^{\ell, \ell}$ is quadratic in the $a_k (\ell)$'s, we see that $A_{\rm pca}$ must be of the form $A_{\rm pca} = \{ (r,q): r^2/q \le c_{\rm pca}^2 / ( 1 + c_{\rm pca}^2 ) \}$ for some $c_{\rm pca} > 0$. To compute $c_{\rm pca}$, denote $c_{\ell} = r_{\ell} / \sqrt{q_{\ell} - r_{\ell}^2}$. For any fixed $(r_{\ell}, q_{\ell})$, \cref{eq:r_q_tensor_evo} implies that
\begin{align*}
    c_{\ell+1}^2 = \, \frac{r_{\ell+1}^2}{q_{\ell+1} - r_{\ell+1}^2} = \oalpha \frac{( \xiaf^{\ell} )' (r_{\ell})^2}{(\xiaa^{\ell, \ell})' ( q_{\ell} )} \stackrel{(i)}{\le} \oalpha \xiff' \left( \frac{r_{\ell}^2}{q_{\ell}} \right) = \oalpha \xiff' \left( \frac{c_{\ell}^2}{1 + c_{\ell}^2} \right),
\end{align*}
where in $(i)$ we use Cauchy-Schwarz inequality (cf. \cref{lem:xi_cauchy_schwarz}), and the equality is achieved if
\begin{equation*}
    a_k (\ell) \propto \varphi_k \left( \frac{r_{\ell}}{q_{\ell}} \right)^{k-1}.
\end{equation*}
This immediately implies that $c_{\rm pca}$ should be the minimum positive solution to the equation
\begin{equation*}
    c^2 = \, \oalpha \xiff' \left( \frac{c^2}{1 + c^2} \right),
\end{equation*}
leading to $c_{\rm pca} = c_{\sBayes}$ and $A_{\rm pca} = A_{\sBayes}$.

Now for any $(r, q) \in \operatorname{int} A_{\sBayes}$, there exists $\ell_1 \in \mathbb{N}$ such that after $\ell_1$ tensor AMP iterations, we achieve $(r_{\ell_1}, q_{\ell_1}) = (r, q)$. By state evolution of tensor AMP, for any sequence $\{ a_k \}_{k \ge 1}$ satisfying
\begin{equation*}
    r = \xiaf' (r), \quad q = r^2 + \frac{1}{\oalpha} \xiaa' (q)
\end{equation*}
(where $\xiaf(t):=\sum_{k\ge 1}a_k\varphi_k t^k$ and $\xiaa(t):=\sum_{k\ge 1}a_k^2 t^k$), using $a_k (\ell) = a_k$ yields $(r_{\ell}, q_{\ell}) = (r, q)$ for all iterations $\ell_1 \le \ell \le \ell_2 - 1$, where $\ell_2 \ge \ell_1 + 1$ is to be determined later. Further, the covariance between successive AMP iterations satisfies
\begin{equation*}
    Q_{\ell + 1, \ell} = \, \frac{1}{\oalpha} \xiaa' (r^2 + Q_{\ell, \ell - 1}).
\end{equation*}
Therefore, as long as $\xiaa'' (q) / \oalpha \le 1$, we have $Q_{\ell_2, \ell_2 - 1} \to q - r^2$ as $\ell_2 \to \infty$, which further implies that $\E [(Z^{\ell_2} - Z^{\ell_2 - 1})^2] \to 0$. In fact, by adding random noise to the $a_k$'s, it suffices to assume that
\begin{equation*}
	r = \xiaf' (r), \quad q \ge r^2 + \frac{1}{\oalpha} \xiaa' (q), \quad \frac{1}{\oalpha} \xiaa'' (q) \le 1,
\end{equation*}
which is equivalent to \cref{eq:Gaussian_fixed_pt}.

We are now in position to compute the asymptotic value of $\hCorr^g_d$ achieved in the first stage. Fix $(r, q) \in A_{\sBayes}$, and let $\{ a_k \}_{k \ge 1}$ satisfy \cref{eq:Gaussian_fixed_pt}.
We compute
\begin{align*}
    \hCorr^g_d (\zz^{\ell_2}) = \, \sum_{k \ge 1} \sigma_k \langle \YY^{(k)}, (\zz^{\ell_2})^{\otimes k} \rangle = \sum_{k \ge 1} \sigma_k \langle \YY^{(k, s)}, (\zz^{\ell_2})^{\otimes k} \rangle = \Big\langle \zz^{\ell_2}, \sum_{k \ge 1} \sigma_k \YY^{(k, s)} \{ \zz^{\ell_2} \} \Big\rangle.
\end{align*}
To this end, we ``do'' another step of tensor AMP:
\begin{equation*}
    \zz^{\ell_2+1} = \, \sum_{k \ge 1} \sigma_k \YY^{(k, s)} \{ \zz^{\ell_2} \} - \Big( \frac{1}{\oalpha} \sum_{k \ge 1} (k-1) \sigma_k a_k \E [Z^{\ell_2} Z^{\ell_2 - 1}]^{k-2} \Big) \cdot \zz^{\ell_2 - 1},
\end{equation*}
which is just \cref{eq:tensor_AMP} with $a_k (\ell_2) = \sigma_k / k$, and we note that $a_k (\ell_2 - 1) = a_k$ by our choice. It then follows that
\begin{align*}
    \hCorr^g_d (\zz^{\ell_2}) = \, & \Big\langle \zz^{\ell_2}, \sum_{k \ge 1} \sigma_k \YY^{(k, s)} \{ \zz^{\ell_2} \} \Big\rangle = \langle \zz^{\ell_2}, \zz^{\ell_2 + 1} \rangle + \langle \zz^{\ell_2 - 1}, \zz^{\ell_2} \rangle \cdot \frac{1}{\oalpha} \sum_{k \ge 1} (k-1) \sigma_k a_k \E [Z^{\ell_2} Z^{\ell_2 - 1}]^{k-2} \\
    \pto \, & \E [Z^{\ell_2 + 1} Z^{\ell_2}] + \frac{1}{\oalpha} \sum_{k \ge 1} (k-1) \sigma_k a_k \E [Z^{\ell_2} Z^{\ell_2 - 1}]^{k-1} \,\, \mbox{ as } d \to \infty.
\end{align*}
Since $\E [(Z^{\ell_2} - Z^{\ell_2 - 1})^2] \to 0$ as $\ell_2 \to \infty$, we know that $\E [Z^{\ell_2} Z^{\ell_2 - 1}] \to q$. Further,
\begin{align*}
    r_{\ell_2 + 1} = \, & ( \xiaf^{\ell_2} )' (r_{\ell_2}) = \sum_{k \ge 1} k a_k (\ell_2) \varphi_k r_{\ell_2}^{k-1} = \sum_{k \ge 1} \sigma_k \varphi_k r_{\ell_2}^{k-1} = \frac{1}{r_{\ell_2}} \xifs (r_{\ell_2}), \\
    Q_{\ell_2 + 1, \ell_2} = \, & \frac{1}{\oalpha} (\xiaa^{\ell_2, \ell_2 - 1})' ( r_{\ell_2} r_{\ell_2 - 1} + Q_{\ell_2, \ell_2 - 1} ) = \frac{1}{\oalpha} \sum_{k \ge 1} k a_k (\ell_2) a_k (\ell_2 - 1) \E [Z^{\ell_2} Z^{\ell_2 - 1}]^{k-1} \\
    = \, & \frac{1}{\oalpha} \sum_{k \ge 1} \sigma_k a_k \, \E [Z^{\ell_2} Z^{\ell_2 - 1}]^{k-1},
\end{align*}
leading to
\begin{equation*}
\begin{split}
    \E [Z^{\ell_2 + 1} Z^{\ell_2}] = \, & r_{\ell_2 + 1} r_{\ell_2} + Q_{\ell_2 + 1, \ell_2} = \xifs (r_{\ell_2}) + \frac{1}{\oalpha} \sum_{k \ge 1} \sigma_k a_k \, \E [Z^{\ell_2} Z^{\ell_2 - 1}]^{k-1} \\
    \to \, & \xifs (r) + \frac{1}{\oalpha} \sum_{k \ge 1} \sigma_k a_k q^{k-1} \,\, \mbox{ as } \ell_2 \to \infty.
\end{split}
\end{equation*}
We finally deduce that
\begin{equation*}
    \lim_{\ell_2 \to \infty} \plim_{d \to \infty} \hCorr^g_d (\zz^{\ell_2}) = \, \xifs (r) + \frac{1}{\oalpha} \sum_{k \ge 1} k \sigma_k a_k q^{k-1},
\end{equation*}
where $\{ a_k \}_{k \ge 1}$ satisfy \cref{eq:Gaussian_fixed_pt}. This completes the proof.
\end{proof}

%
%

\subsection{Proof of \cref{prop:se_noise_tensor_AMP}}\label{sec:proof_se_noise_tensor_AMP}
\begin{proof}
The tensor AMP algorithm~\eqref{eq:noise_tensor_AMP_def} in \cref{sec:proof_se_tensor_AMP} generalizes the one introduced in \cite[Appendix A]{el2021optimization}, by allowing for different non-linearities $f_{\ell}^k$ for different tensor orders $k$ at each iteration $\ell$. Since the proof of state evolution in \cite{el2021optimization} separately tracks the AMP iterates for different $k$, their argument extends naturally to our setting. Therefore, instead of presenting a new proof here, we describe below how to adapt the proof of their state evolution result \cite[Theorem 6]{el2021optimization} to establish our \cref{prop:se_noise_tensor_AMP}.

The proof of \cite[Theorem 6]{el2021optimization} proceeds in three main steps: (i) applying a Gaussian conditioning lemma \cite[Lemma A.2]{el2021optimization} to characterize the conditional distribution of the Gaussian tensors given their linear measurements from prior AMP iterations; (ii) deriving the state evolution for ``Long AMP'' (LAMP), a variant of tensor AMP that utilizes Gaussian tensors projected onto the orthogonal complement of the subspace spanned by previous iterates; and (iii) establishing the asymptotic equivalence between the standard tensor AMP and LAMP via an approximation argument. In what follows, we detail how to adapt each of these steps to our setting. We omit the details regarding the reduction to the well-conditioned case and the extension to $D = \infty$ (Appendices A.7 and A.8 in \cite{el2021optimization}), as these arguments apply directly to our setting without modification.

\paragraph{Gaussian conditioning lemma.}
In this section, we adapt the statement and proof of \cite[Lemma A.2]{el2021optimization} to our setting. Introducing the notations for $\ell \ge 0$, $k \ge 1$:
\begin{align*}
    \ff_{\ell}^k = \, f_{\ell}^k (\uu^0, \cdots, \uu^{\ell}), \quad \vv_{\ell}^{k} = \, \frac{1}{\sqrt{d}} k \sigma_k \GG^{(k, s)} \{ \ff_{\ell}^k \} - \sum_{j \le \ell} B_{\ell, j, k} \ff_{j-1}^{k},
\end{align*}
the key conclusion of this lemma is a characterization of the conditional distribution of the Gaussian random tensors $\{ \GG^{(k, s)} \}_{k \ge 1}$, given the $\sigma$-algebra $\cF_t = \sigma ( \{ \uu^{\ell}, \ff_{\ell}^k, \vv_{\ell}^k \}_{\ell \le t, \, k \le D} )$ generated by AMP iterations up to time $t$. More accurately, we aim to derive an expression for $\E [ \GG^{(k, s)} \vert \cF_t ]$ analogous to Eq. (A.19) in \cite{el2021optimization}.

Following the proof of \cite[Lemma A.2]{el2021optimization}, we know that $\E [ \GG^{(k, s)} \vert \cF_t ]$ is just the conditional expectation of $\GG^{(k, s)}$ given linear measurements:
\begin{equation*}
    \GG^{(k, s)} \{ \ff_{\ell}^k \} = \yy_{\ell}^k, \quad \ell \le t.
\end{equation*}
Using the method of Lagrange multipliers, one can write
\begin{equation*}
    \E [ \GG^{(k, s)} \vert \cF_t ] = \, \hat{\GG}^{(k, s)} := \sum_{\ell=0}^{t} \sum_{j=1}^{k} (\ff_{\ell}^k)^{\otimes (j-1)} \otimes \bm_{\ell} \otimes (\ff_{\ell}^k)^{\otimes (k-j)},
\end{equation*}
where $\{ \bm_{\ell} \}_{\ell \le t}$ are vectors in $\R^d$. Further, $\hat{\GG}^{(k, s)}$ must also satisfy the same set of linear constraints:
\begin{equation*}
    \hat{\GG}^{(k, s)} \{ \ff_{\ell}^k \} = \yy_{\ell}^k, \quad \ell \le t.
\end{equation*}
The vectors $\{ \bm_{\ell} \}_{\ell \le t} \subset \R^d$ can then be solved from the above linear constraints, in a similar way as in the proof of \cite[Lemma A.2]{el2021optimization}, completing the proof of the Gaussian conditioning lemma.

\paragraph{State evolution of LAMP.}
We now work on adapting the proof of LAMP state evolution to our setting. Similar to the definitions in Appendix A.3 of \cite{el2021optimization}, in LAMP we will use $\mathcal{P}_{t, k}^{\perp} \GG^{(k, s)}$ instead of $\GG^{(k, s)}$ in the $t$-th iteration, where $\mathcal{P}_{t, k}^{\perp}$ denotes the orthogonal projection onto the linear subspace of all tensors $\boldsymbol{T}$ satisfying $\boldsymbol{T} \{ \ff_{s}^k \} = 0$ for all $s < t$. The Onsager correction terms are now defined via the Gram matrices of the $\ff_{s}^k$'s.

To establish state evolution for LAMP, we follow the proof strategy of \cite[Theorem 7]{el2021optimization}. Part $(a)$ can be proved in exactly the same way by exploiting the independence between $\mathcal{P}_{t, k}^{\perp} \GG^{(k, s)}$ and $\cF_t$. Further, parts $(b)$ and $(c)$ directly follow from the conditional law and expectation \cite[Eq.s (A.32) and (A.33)]{el2021optimization} in part $(a)$, by replacing the $\ff_s$'s with the $\ff_s^k$'s in the original proof \cite[Appendices A.5.2 and A.5.3]{el2021optimization}.

\paragraph{Asymptotic equivalence between AMP and LAMP.}
This asymptotic equivalence is established in Appendix A.6 of \cite{el2021optimization} for the  case $\ff_s^k = \ff_s$ for all $k$. In fact, a stronger result is proven there: not only are the AMP and LAMP iterates asymptotically equivalent, but this equivalence holds individually for the iterates corresponding to each tensor order $k$. Examining their proof, we observe that it generalizes straightforwardly to our setting by simply replacing the $\ff_s$'s with the $\ff_s^k$'s.
\end{proof}

\subsection{Auxiliary lemmas}
\begin{lem}\label{lem:q_star_r}
    Recall the definition of $q_* (r)$ from \cref{eq:def_q_star}. For any $r \in [-1, 1]$, we have
    \begin{itemize}
        \item [(a)] If $\xiss' (1) > \xiss'' (1) (1 - r^2)$, then $q_* (r) = 1$.
        \item [(b)] If $\xiss' (1) \le \xiss'' (1) (1 - r^2)$, then $q_*(r)$ is the unique solution to the equation
            \begin{equation*}
                \xiss' (q) = \, \xiss'' (q) (q - r^2), \quad q \in [r^2, 1].
            \end{equation*}
    \end{itemize}
\end{lem}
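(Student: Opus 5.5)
I will study the function $g(q) := \xiss'(q)/(q-r^2)$ on $(r^2,1]$ and locate its minimizer through the sign of its derivative. Since $\xiss(t)=\sum_{k\ge1}\sigma_k^2 t^k$ has nonnegative coefficients, $\xiss'$ and $\xiss''$ are nonnegative and nondecreasing on $[0,1]$, and $\xiss'(q)>0$ for $q>0$ unless $\sigma\equiv 0$, which is excluded. Hence $g(q)\to+\infty$ as $q\downarrow r^2$ whenever $r^2<1$, so the argmin in \eqref{eq:def_q_star} lies in $(r^2,1]$. (When $r^2=1$ the interval degenerates to $q=1$, and both cases are consistent with $q_*=1$ by convention.) A direct computation gives
\begin{equation*}
g'(q) = \frac{\xiss''(q)(q-r^2)-\xiss'(q)}{(q-r^2)^2} =: \frac{D(q)}{(q-r^2)^2}.
\end{equation*}
The core of the argument is therefore the sign structure of $D(q)=\xiss''(q)(q-r^2)-\xiss'(q)$.

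Next I will show that $D$ changes sign at most once on $[r^2,1]$, from negative to positive. We have $D(r^2)=-\xiss'(r^2)\le 0$. Differentiating gives $D'(q)=\xiss'''(q)(q-r^2)\ge0$, because $\xiss'''$ also has nonnegative coefficients. So $D$ is nondecreasing. Moreover, at any zero $q_0>r^2$ of $D$ we have $\xiss''(q_0)>0$. If $\xiss'''\equiv0$ near $q_0$, then $\xiss$ is quadratic, $D(q)=2\sigma_2^2(q-r^2)-\sigma_1^2-2\sigma_2^2q=-\sigma_1^2-2\sigma_2^2r^2$, which is constant and nonpositive; this is the degenerate case. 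Otherwise $\xiss'''(q)>0$ for $q>0$, so $D$ is strictly increasing on $(r^2,1]$ and has at most one zero there.

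Now the two cases follow. In case (a), $D(1)=\xiss''(1)(1-r^2)-\xiss'(1)<0$. Monotonicity then gives $D<0$ on $(r^2,1]$, so $g$ is strictly decreasing and $q_*(r)=1$. In case (b), $D(1)\ge0$ and $D(r^2)\le0$ (strictly negative when $r^2>0$). If $r=0$, then $D(0)=-\xiss'(0)=-\sigma_1^2$. When $\sigma_1\neq0$ this is negative. When $\sigma_1=0$, I handle the issue by noting that $g(q)=\xiss'(q)/q$ is nondecreasing, so the minimizing root is at $q\to0$; I will treat this edge case separately or through the paper's standing assumption $\sigma_1\neq0$. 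In the generic case, strict monotonicity and continuity give a unique $q_0\in(r^2,1]$ with $D(q_0)=0$. Then $g'<0$ before $q_0$ and $g'>0$ after it, so $q_0$ is the unique minimizer and solves $\xiss'(q)=\xiss''(q)(q-r^2)$.

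The main obstacle is uniqueness, namely ruling out a flat zero set of $D$, which forces me to separate out the quadratic/degenerate $\xiss$ and the boundary cases $r=0$ with $\sigma_1=0$ and $r^2=1$. I plan to dispatch these by direct inspection rather than the general monotonicity argument.
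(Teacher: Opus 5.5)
Your proof is correct and follows essentially the paper's route: the paper writes $\frac{\mathrm{d}}{\mathrm{d}q}\frac{\xiss'(q)}{q-r^2}=\frac{\xiss''(q)}{(q-r^2)^2}\bigl(q-\xiss'(q)/\xiss''(q)-r^2\bigr)$ and uses that $q\mapsto q-\xiss'(q)/\xiss''(q)$ is nondecreasing (its derivative is $\xiss'\xiss'''/(\xiss'')^2\ge 0$), which plays the same role as your $D'(q)=\xiss'''(q)(q-r^2)\ge 0$; your version avoids dividing by $\xiss''$ and makes explicit the edge cases the paper skips. Two corrections on those edge cases: first, for $\xiss(t)=\sigma_2^2t^2$ and $r=0$ you are in case (b) with $D\equiv 0$, so $\xiss'(q)/q$ is constant and every $q\in[0,1]$ solves the equation, i.e.\ uniqueness genuinely fails and this case must be excluded by hypothesis rather than ``dispatched by inspection'' (the paper's proof has the same blind spot); second, the paper imposes $\varphi_1\neq 0$ but has no standing assumption $\sigma_1\neq 0$, so you need your direct treatment of the case $r=0$, $\sigma_1=0$, where outside the pure quadratic case $D>0$ on $(0,1]$ and $q=0$ is the unique root.
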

\begin{proof}
Note that
    \begin{equation*}
        \frac{\d }{\d q} \left( \frac{\xiss'(q)}{q-r^2} \right) = \, \frac{\xiss''(q)}{(q - r^2)^2} \left( q - \frac{\xiss'(q)}{\xiss''(q)} - r^2 \right).
    \end{equation*}
    It is easy to show that the mapping $q \mapsto q - \xiss'(q) / \xiss''(q)$ is monotone increasing. Therefore, in case $(a)$ we have $1 - \xiss'(1) / \xiss'' (1) < r^2$, hence $q - \xiss'(q) / \xiss'' (q) < r^2$ for all $q \in [r^2, 1]$, which implies that $q_* (r) = 1$. Similarly, in case $(b)$ we know that the equation $\xiss' (q) = \, \xiss'' (q) (q - r^2)$ has a unique solution on $[r^2, 1]$, and $q_* (r)$ must be this solution.	
\end{proof}
\begin{lem}\label{lem:xi_cauchy_schwarz}
For any $t,s\in [-1,1]$, we have
\begin{align*}
\xifs(ts)\le \sqrt{\xiss(s^2) \cdot\xiff(t^2)}\, ,
\end{align*}
with equality if and only if there exists $b\in \R$ such that $\sigma_ks^k=b\varphi_k t^k$ for all $k \in \mathbb{N}$.
\end{lem}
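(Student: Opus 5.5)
The plan is to expand everything in the Hermite basis, where all three covariance functions are power series with explicit coefficients by \cref{eq:xi_expansion}, and then apply Cauchy--Schwarz in $\ell^2(\mathbb{N})$. Since $\xifs(u)=\sum_{k\ge 0}\varphi_k\sigma_k u^k$, substituting $u=ts$ and splitting $(ts)^k = t^k s^k$ gives
\begin{equation*}
\xifs(ts)=\sum_{k\ge 0}\big(\sigma_k s^k\big)\big(\varphi_k t^k\big).
\end{equation*}
We introduce the sequences $a_k:=\sigma_k s^k$ and $b_k:=\varphi_k t^k$. These are square-summable because $|s|,|t|\le 1$ and $\sum_k\sigma_k^2=\xiss(1)=\E[\sigma(G)^2]<\infty$, and similarly for $\varphi$.

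By construction,
\begin{equation*}
\sum_k a_k^2=\sum_k\sigma_k^2 s^{2k}=\xiss(s^2),\qquad \sum_k b_k^2=\sum_k\varphi_k^2 t^{2k}=\xiff(t^2).
\end{equation*}
Cauchy--Schwarz then yields
\begin{equation*}
\xifs(ts)=\langle a,b\rangle\le |\langle a,b\rangle|\le \|a\|_2\|b\|_2=\sqrt{\xiss(s^2)\,\xiff(t^2)}.
\end{equation*}
Absolute convergence of all series holds by the same square-summability, so exchanging the sum with evaluation at $ts$ is legitimate.

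The only delicate point is the equality case. Equality in the chain requires two things: equality in Cauchy--Schwarz, meaning $a$ and $b$ are linearly dependent, and $\langle a,b\rangle\ge 0$. The stated condition is $\sigma_k s^k=b\,\varphi_k t^k$ for some $b\in\R$, i.e.\ $a=b\cdot(\varphi_k t^k)_k$. This handles the degenerate case $a=0$ (take $b=0$), but for $b<0$ one gets $\langle a,b\rangle<0$ unless both vanish, so the scalar must be taken nonnegative.

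The remaining degenerate case is $\varphi_k t^k\equiv 0$ with $a\neq 0$. There the bound $0\le\sqrt{\xiss(s^2)\cdot 0}=0$ holds with equality, yet no scalar $b$ satisfies $a=b\cdot 0$. This case must be handled separately, or the ``if and only if'' should be read as ``proportional in either direction with nonnegative constant.'' I will state and verify the equality characterization in this corrected form.
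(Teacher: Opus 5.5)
Your argument is correct and is exactly the paper's proof, which consists of a single application of Cauchy--Schwarz to the square-summable sequences $(\sigma_k s^k)_{k\ge 0}$ and $(\varphi_k t^k)_{k\ge 0}$. Your caveat about the equality clause is also justified: as literally stated it fails for $b<0$ (e.g.\ $\sigma=-\varphi$ with $\varphi\neq 0$ and $s=t=1$, where the left side is negative) and in the degenerate case $\varphi_k t^k\equiv 0$ with $\sigma_k s^k\not\equiv 0$, so the correct condition is nonnegative proportionality of the two sequences in one direction or the other.
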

\begin{proof}
	Follows from directly applying the Cauchy-Schwarz inequality.
\end{proof}
\begin{lem}\label{lem:single_index_FOC}
	Assume one of the following:
	\begin{itemize}
		\item [(i)] For any $\oalpha > 0$, the equation  $x = \oalpha (1 - x) \xiff' (x)$ has a unique solution over $[0, 1]$.
		
		\item [(ii)] $\sigma$ satisfies $\xiss'' (1) \le \xiss' (1)$.
	\end{itemize}
	Then, the optimality condition~\eqref{eq:optimality_single_index} holds for all $\oalpha > 0$.
\end{lem}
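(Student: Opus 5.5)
The plan is to reduce the optimality condition~\eqref{eq:optimality_single_index} to a statement about where $r\mapsto \VH_{\oalpha}^*(r,q_*(r))$ attains its maximum, and then treat the two hypotheses separately. We reuse the structure from the proofs of \cref{thm:single_index_duality}(c) and \cref{prop:optimality_large_oalpha}. There it is shown that
\[
\{r:(r,q_*(r))\in A_{\sBayes}\}=[-r_A,r_A],\qquad \{r:(r,1)\in A_{\sBayes}\}=[-r_B,r_B],
\]
with $r_B^2=c_{\sBayes}^2/(1+c_{\sBayes}^2)$ and $r_A\le r_B$. The first and last terms of~\eqref{eq:optimality_single_index} are the maxima of the same function $\VH_{\oalpha}^*(r,q_*(r))$ over these two intervals. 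Hence it suffices to show that the maximum over $[-r_B,r_B]$ is attained in $[-r_A,r_A]$. By the symmetry $r\mapsto -r$ of the whole setup, it is enough to prove that this function is nonincreasing on $[r_A,r_B]$.

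\emph{Case (ii).} By the proof of \cref{prop:optimality_large_oalpha}, $r_A=r_B$ if and only if $\xiss'(1)\ge \xiss''(1)(1-r_B^2)$. Since $1-r_B^2\le 1$, this follows at once from $\xiss''(1)\le\xiss'(1)$. The two intervals then coincide and the claim is trivial.

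\emph{Case (i).} Assume $r_A<r_B$. For every $r\in(r_A,r_B]$ we have $q_*(r)<1$; otherwise $(r,q_*(r))=(r,1)\in A_{\sBayes}$ would force $r\le r_A$. By \cref{lem:q_star_r}(b), $q_*(r)$ is then determined by $r^2=q-\xiss'(q)/\xiss''(q)$.

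\textbf{Step 1 (derivative bound).} The envelope computation in the proof of \cref{thm:single_index_duality}(c) gives
\[
\frac{\de}{\de r}\VH_{\oalpha}^*(r,q_*(r))=\xifs'(r)-\sqrt{\xiss'(q_*)/\oalpha}\,\frac{r}{\sqrt{q_*-r^2}} .
\]
Set $x(r)=r^2/q_*(r)$. Apply \cref{lem:xi_cauchy_schwarz} to the Hermite series of $\xifs'$, in the form $\xifs'(r)\le\sqrt{\xiss'(q_*)\,\xiff'(x)}$. Since $r/\sqrt{q_*-r^2}=\sqrt{x/(1-x)}$, this bounds the derivative by
\[
\sqrt{\xiss'(q_*)}\Big(\sqrt{\xiff'(x)}-\sqrt{x/(\oalpha(1-x))}\Big),
\]
which is $\le 0$ exactly when $g(x):=x-\oalpha(1-x)\xiff'(x)\ge 0$.

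\textbf{Step 2 (sign of $g$).} Under the substitution $x=c^2/(1+c^2)$, the fixed-point equation for $c_{\sBayes}$ in~\eqref{eq:C_Bayes_Perceptron} becomes $g(x)=0$. Hypothesis (i) says this has a unique root, and that root is $x_B=r_B^2$. Since $g(0)=-\oalpha\varphi_1^2<0$, continuity and uniqueness of the root give $g\ge0$ on $[x_B,1]$. At $r=r_A$ the constraint of $A_{\sBayes}$ is active, $r_A^2=c_{\sBayes}^2(q_*(r_A)-r_A^2)$, so $x(r_A)=x_B$.

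\textbf{Step 3 (monotonicity of $x(r)$) — the main obstacle.} It remains to show that $x(r)$ is nondecreasing on $[r_A,r_B]$, so that $x(r)\ge x_B$ there. From the defining equation of $q_*$,
\[
x(r)=1-\frac{\xiss'(q_*)}{q_*\,\xiss''(q_*)} .
\]
Write $\xiss'(q)=\sum_k k\sigma_k^2q^{k-1}$ and $\xiss''(q)=\sum_k k(k-1)\sigma_k^2q^{k-2}$. Then $q\,\xiss''(q)/\xiss'(q)$ is the mean of $k-1$ under the weights $\propto k\sigma_k^2q^{k-1}$. This mean is nondecreasing in $q$ (its $q$-derivative is a variance divided by $q$). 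Together with the fact that $q\mapsto q-\xiss'(q)/\xiss''(q)$ is increasing (from the proof of \cref{lem:q_star_r}), this shows $q_*(r)$ is nondecreasing in $r$, and hence $x(r)$ is nondecreasing.

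Combining Steps 1--3, the derivative of $\VH_{\oalpha}^*(r,q_*(r))$ is $\le0$ on $[r_A,r_B]$. The maximum over $[-r_B,r_B]$ is therefore attained in $[-r_A,r_A]$, so the first and last terms of~\eqref{eq:optimality_single_index} coincide.
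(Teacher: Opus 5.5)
Your proof is correct and follows the paper's argument. Case (ii) goes via $r_A=r_B$, and case (i) via the Cauchy--Schwarz bound on $\frac{\de}{\de r}\VH_{\oalpha}^*(r,q_*(r))$ combined with the sign of $x-\oalpha(1-x)\xiff'(x)$ to the right of its unique root. Your symmetry remark should be read as "apply the same bound to $|\xifs'(r)|$ on $[-r_B,-r_A]$" (or reflect $\varphi\mapsto\varphi(-\cdot)$), since $r\mapsto\VH_{\oalpha}^*(r,q_*(r))$ is not itself even.

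The only real difference is your Step 3, which is correct but unnecessary. For $r\in(r_A,r_B]$, the definition of $r_A$ already gives $(r,q_*(r))\notin A_{\sBayes}$, i.e.\ $r^2>c_{\sBayes}^2(q_*(r)-r^2)$, i.e.\ $x(r)>x_B$. This is exactly how the paper gets $c^2\ge c_{\sBayes}^2$ on $[r_A,r_B]$ in one line.
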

\begin{proof}
	Recall $r_A$ and $r_B$ defined in the proof of \cref{thm:single_index_duality} and \cref{prop:optimality_large_oalpha}. We already showed that $\xiss' (1) \ge \, \xiss'' (1) (1 - r_B^2)$ implies $r_A = r_B$. Under our condition (ii), this is true for all $\oalpha > 0$, which implies \cref{eq:optimality_single_index}.
	
	We next assume condition (i), and let
	\begin{equation*}
		r_*' = \, \arg \max_{r : (r, 1) \in A_{\sBayes}} \VH_{\oalpha}^*(r,q_*(r)).
	\end{equation*}
	Then, from the proof of \cref{thm:single_index_duality} we know that
	\begin{equation*}
		\frac{\de \VH_{\oalpha}^*}{\de r} (r, q_*(r)) = \, \xifs' (r) - \sqrt{\frac{\xiss' (q_*(r))}{\oalpha}} \frac{r}{\sqrt{q_*(r) - r^2}}.
	\end{equation*}
	By our condition (i), we know that the fixed point equation
	\begin{equation*}
		c^2 = \, \oalpha \xiff' \left( \frac{c^2}{1 + c^2} \right)
	\end{equation*}
	has a unique solution $c = c_{\sBayes}$. As a consequence, for any $r \in [r_A, r_B] \cup [-r_B, -r_A]$, we have
	\begin{equation*}
		c^2 := \frac{r^2}{q_* (r) - r^2} \ge c_{\sBayes}^2 \implies c^2 \ge \oalpha \xiff' \left( \frac{c^2}{1 + c^2} \right).  
	\end{equation*}
	Similar to the proof of \cref{thm:single_index_duality}, this implies that
	\begin{equation*}
		\frac{\de \VH_{\oalpha}^*}{\de r} (r, q_*(r)) \le 0, \,\, r \in [r_A, r_B], \quad \frac{\de \VH_{\oalpha}^*}{\de r} (r, q_*(r)) \ge 0, \,\, r \in [-r_B, -r_A],
	\end{equation*}
	leading to \cref{eq:optimality_single_index}. This completes the proof.
\end{proof}

\section{Appendix for Section~\ref{sec:iamp}}\label{sec:proof_iamp}

\subsection{Proof of \cref{prop:Bayes_feasible_region} and \cref{lem:cov_mu_Q}}

\begin{proof}[Proof of \cref{prop:Bayes_feasible_region}]
    Note that \cref{eq:covariance_stage_1} implies the following recursive relation between $(R_t, Q_t)$ and $(R_{t+1}, Q_{t+1})$:
	\begin{align*}
		Q_{t+1} = \, & R_{t+1}^\sT R_{t+1} + \frac{1}{\alpha} \E_{(R_t, Q_t)} \left[ F_{t} \left(\overline{Z}_{t} , Y\right)^\sT F_{t} \left(\overline{Z}_{t} , Y \right) \right], \\
		R_{t+1} = \, & \mathbb{E}_{(R_t, Q_t)} \left[ \frac{\partial F_t}{\partial \bar{z}_0} \left(\overline{Z}_{t} , Y \right) \right], \quad Y = \varphi \left( \bar{Z}_0, \veps \right),
	\end{align*}
	where $\E_{(R_t, Q_t)}$ represents the expectation taken under $(\bar{Z}_t, \bar{Z}_0)^\sT \sim \normal \left( 0, \begin{bmatrix}
		Q_t & R_t^\sT \\
		R_t & I_k 
		\end{bmatrix} \right)$, independent of $\veps \sim P_{\veps}$.
	Applying Stein's lemma, we obtain that
	\begin{align*}
		& R_{t+1} = \, \mathbb{E}_{(R_t, Q_t)} \left[ \frac{\partial F_t}{\partial \bar{z}_0} \left(\overline{Z}_{t} , \varphi \left( \bar{Z}_0, \veps \right) \right) \right] \\
		= \, & \E_{(R_t, Q_t)} \left[ \left( I_k - R_t Q_t^{-1} R_t^\sT \right)^{-1} \left( \bar{Z}_0 - \bar{Z}_t Q_t^{-1} R_t^\sT \right)^\sT F_t \left(\overline{Z}_{t} , \varphi \left( \bar{Z}_0, \veps \right) \right) \right].
	\end{align*}
	Denoting $H_t = \left( \bar{Z}_0 - \bar{Z}_t Q_t^{-1} R_t^\sT \right) \left( I_k - R_t Q_t^{-1} R_t^\sT \right)^{-1} \in \R^{1 \times k}$, we can write
	\begin{equation*}
		R_{t+1} = \, \E_{(R_t, Q_t)} \left[ H_t^\sT F_t \right] = \E_{(R_t, Q_t)} \left[ \E \left[ H_t \vert \bar{Z}_t, Y \right]^\sT F_t \right],
	\end{equation*}
	since $F_t$ is only a function of $\bar{Z}_t$ and $Y$. Applying \cref{lem:matrix_cauchy}, we know that for fixed $(R_t, Q_t)$, the range of all possible $(R_{t+1}, Q_{t+1})$ is the set of all $(R, Q) \in \R^{k \times m} \times \Sym_+^m$ satisfying
	\begin{equation*}
		R^\sT R \preceq Q, \quad R (Q - R^\sT R)^{-1} R^\sT \preceq \alpha \E_{(R_t, Q_t)} \left[ \E \left[ H_t \vert \bar{Z}_t, Y \right]^\sT \E \left[ H_t \vert \bar{Z}_t, Y \right] \right].
	\end{equation*}
	We next show that the right-hand side of the above inequality only depends on
	\begin{equation*}
		R_t \left( Q_t - R_t^\sT R_t \right)^{-1} R_t^\sT.
	\end{equation*}
	For future convenience, define for all $t \in \mathbb{N}$:
	\begin{equation*}
		M_t = R_t \left( Q_t - R_t^\sT R_t \right)^{-1/2} \in \R^{k \times m},
	\end{equation*}
	and
	\begin{equation*}
		G = \bar{Z}_0^\sT \in \R^k, \, Z = \left( Q_t - R_t^\sT R_t \right)^{-1/2} \left( \bar{Z}_t - \bar{Z}_0 R_t \right)^\sT \in \R^m.
	\end{equation*}
	Note that $G$ and $Z$ are column vectors, and that the joint distribution of $(Y, G, Z)$ is as described in \cref{eq:Triple_distribution_general} of \cref{defn:Bayes_AMP_region_general}. Further, one can write
	\begin{align*}
		\bar{Z}_t^\sT = \, R_t^\sT G + \left( Q_t - R_t^\sT R_t \right)^{1/2} Z = \left( Q_t - R_t^\sT R_t \right)^{1/2} \left( Z + M_t^\sT G \right),
	\end{align*}
	and
	\begin{align*}
		H_t^\sT = \, & \left( I_k - R_t Q_t^{-1} R_t^\sT \right)^{-1} \left( \bar{Z}_0^\sT - R_t Q_t^{-1} \bar{Z}_t^\sT \right) \\
		= \, & \left( I_k - R_t Q_t^{-1} R_t^\sT \right)^{-1} \left( G - R_t Q_t^{-1} \bar{Z}_t^\sT \right) \\
		= \, & G - \left( I_k - R_t Q_t^{-1} R_t^\sT \right)^{-1} R_t Q_t^{-1} \left( Q_t - R_t^\sT R_t \right)^{1/2} Z \\
		\stackrel{(i)}{=} \, & G - R_t \left( Q_t - R_t^\sT R_t \right)^{-1/2} Z = G - M_t Z,
	\end{align*}
	where $(i)$ follows from the matrix identity
	\begin{equation*}
		\left( I_k - R_t Q_t^{-1} R_t^\sT \right)^{-1} R_t Q_t^{-1} \left( Q_t - R_t^\sT R_t \right) = R_t.
	\end{equation*}
	We finally obtain that
	\begin{align*}
		& \alpha \E_{(R_t, Q_t)} \left[ \E \left[ H_t \vert \bar{Z}_t, Y \right]^\sT \E \left[ H_t \vert \bar{Z}_t, Y \right] \right] \\
		= \, & \alpha \E \left[ \E \left[ G - M_t Z \Big\vert M_t^\sT G + Z, Y \right] \E \left[ G - M_t Z \Big\vert M_t^\sT G + Z, Y \right]^\sT \right].
	\end{align*}
	Therefore,
	\begin{equation*}
		M_{t+1} M_{t+1}^\sT \preceq \alpha \E \left[ \E \left[ G - M_t Z \Big\vert M_t^\sT G + Z, Y \right] \E \left[ G - M_t Z \Big\vert M_t^\sT G + Z, Y \right]^\sT \right],
	\end{equation*}
	and given $M_t$, any $M_{t+1}$ satisfying the above inequality is achievable by some $F_t$.
	
	Now we are in position to prove $(a)$ and $(b)$. By direct calculation, we know that
	\begin{equation*}
		R_1 \left( Q_1 - R_1^\sT R_1 \right)^{-1} R_1^\sT = M_1 M_1^\sT \preceq \alpha \E \left[ \E \left[ G \vert Y \right] \E \left[ G \vert Y \right]^\sT \right] = C_1 C_1^\sT,
	\end{equation*}
	and every pair $(R_1, Q_1)$ satisfying the above inequality is achievable by some choice of $F_0$. Here, the sequence $\{ C_t \}_{t=1}^{\infty}$ is defined in \cref{defn:Bayes_AMP_region_general}. Using induction, one can show that for any $t \in \mathbb{N}$,
	\begin{equation*}
		R_t \left( Q_t - R_t^\sT R_t \right)^{-1} R_t^\sT = M_t M_t^\sT \preceq C_t C_t^\sT,
	\end{equation*}
	and that any pair $(R_t, Q_t)$ satisfying the above inequality is achievable by some choice of $\{ F_s \}_{s=0}^{t-1}$.
	
	\paragraph{Proof of $(a)$.} The conclusion directly follows from the fact that
	\begin{equation*}
		C_{\sBayes} C_{\sBayes}^\sT = \lim_{t \to \infty} C_t C_t^\sT \succeq M_t M_t^\sT.
	\end{equation*}
	
	\paragraph{Proof of $(b)$.} For any $(R, Q) \in \operatorname{int} A_{\sBayes}$, there exists $T_{11} \in \mathbb{N}$ such that
	\begin{equation*}
		R \left( Q - R^\sT R \right)^{-1} R^\sT \preceq C_{T_{11}} C_{T_{11}}^\sT.
	\end{equation*}
	Hence, $(R, Q)$ is achievable by some sequence of Lipschitz functions $\{ F_t \}_{t \le T_{11} - 1}$, completing the proof of \cref{prop:Bayes_feasible_region}.
\end{proof}

\begin{proof}[Proof of \cref{lem:cov_mu_Q}]
	Since $F$ is an $(R, Q)$-contraction, we know that
	\begin{align*}
		R^\sT = \, \E \left[ \frac{\partial F}{\partial G} \left( Z_{R, Q}, \varphi (G, \veps) \right) \right], \quad Q = R^\sT R + \frac{1}{\alpha} \E \left[ F \left( Z_{R, Q}, \varphi (G, \veps) \right) F \left( Z_{R, Q}, \varphi (G, \veps) \right)^\sT \right].
	\end{align*}
	By our assumption, $(R_{T_{11}}, Q_{T_{11}}) = (R, Q)$, which further implies that $(Y, G, Z_{R, Q})$ has the same distribution as $(Y, \bar{Z}_0, \bar{Z}_{T_{11}})^\sT$. Combining this fact with $F_{T_{11}} = F$, we know that $(R_{T_{11}+1}, Q_{T_{11}+1}) = (R, Q)$. Repeating the same argument yields that $(R_t, Q_t) = (R, Q)$ for all $T_{11} + 1 \le t \le T_{1}$.
	
    Next we show that $\lim_{T_{1} \to \infty} P_{T_{1} - 1} = Q$. According to AMP state evolution, we obtain the following recursive relationship:
    \begin{align*}
		P_{t} = \, & \E \left[ \bar{Z}_{t+1}^\sT \bar{Z}_t \right] = R_{t+1}^\sT R_t + \frac{1}{\alpha} \E \left[F_t \left(\overline{Z}_t , Y\right)^\sT F_{t-1} \left(\overline{Z}_{t-1} , Y \right)\right] \\
		= \, & R^\sT R + \frac{1}{\alpha} \E \left[F \left(\overline{Z}_t , Y\right)^\sT F \left(\overline{Z}_{t-1} , Y \right)\right],
    \end{align*}
    which is a function of $P_{t-1}$ for fixed $(R, Q)$. We denote this function by $\psi$, then the above recursion can be written as $P_t = \psi ( P_{t-1} )$. We further note that, the sequence of Lipschitz functions $\{ F_t \}_{t \le T_{11} - 1}$ can be chosen such that $P_{T_{11} + 1} \succeq R^\sT R$. Therefore, in order to show that $\lim_{T_{1} \to \infty} P_{T_{1} - 1} = Q$, it suffices to prove that $\lim_{t \to \infty} \psi^t (P) = Q$ for all $P$ satisfying $R^\sT R \preceq P \preceq Q$. To this end, we first establish the following two claims:
    \begin{itemize}
	\item [(a)] $\psi$ is increasing with respect to the matrix Loewner order, i.e., for $R^\sT R \preceq A \preceq B \preceq Q$, we have $\psi (A) \preceq \psi (B)$.
	\item [(b)] For any $R^\sT R \preceq P \preceq Q$, $\psi (P) = P$ if and only if $P = Q$, i.e., $Q$ is the only fixed point of $\psi$ in the region $\{ P \in \mathbb{S}_+^m: R^\sT R \preceq P \preceq Q \}$.
    \end{itemize}
    \paragraph{Proof of (a).} Denote $H = B - A \succeq 0$, and define for $t \in [0, 1]$:
	\begin{equation*}
		\psi_{H} (t) = \psi(A + tH).
	\end{equation*}
	Then, it suffices to show that $\psi_{H} (1) \succeq \psi_{H} (0)$. Using Gaussian interpolation (see, e.g., \cite[Lemma 1.3.1]{talagrand2010mean}), we deduce that
    \begin{align*}
	\psi_{H}' (t) = \frac{1}{\alpha} \E_{A + tH} \left[ J_F \left( \overline{Z}, Y \right)^\sT H J_F \left( \overline{Z}' , Y \right) \right],
    \end{align*}
    where $J_F$ represents the Jacobian of $F$ taken with respect to its first argument. Since $A + tH \succeq R^\sT R$, we know that $\psi_{H}' (t) \succeq 0$. This proves that $\psi_{H} (1) \succeq \psi_{H} (0)$ and completes the proof of claim (a).
	
    \paragraph{Proof of (b).} First note that, since $\cuF_{m, \alpha, \varphi}^{\salg}$ is closed under weak limits, we can assume without loss of generality that \cref{eq:R_Q_contraction_condition} in \cref{defn:mu_Q_contraction} holds with strict inequality. Now, assume by contradiction that $\psi (P) = P$ for some $R^\sT R \preceq P \preceq Q$, $P \neq Q$, and set $H = Q - P$. Define for $\beta \in \R^m \backslash \{0 \}$ and $t \in [0, 1]$:
    \begin{equation*}
	\psi_{\beta, H} (t) = \beta^\sT \psi(P + tH) \beta.
    \end{equation*}
    Similar to the proof of (a), we can use Gaussian interpolation to show that $\psi_{\beta, H}' (t) \ge 0$ and $\psi_{\beta, H}'' (t) \ge 0$. Therefore
    \begin{align*}
	\beta^\sT \left( Q - P \right) \beta = \, & \beta^\sT \left( \psi(Q) - \psi(P) \right) \beta = \psi_{\beta, H} (1) - \psi_{\beta, H} (0) = \int_{0}^{1} \psi_{\beta, H}' (t) \d t \\
	\le \, & \psi_{\beta, H}' (1) = \frac{1}{\alpha} \E_{Q} \left[ \nabla F_{\beta} \left( \overline{Z}, Y \right)^\sT H \nabla F_{\beta} \left( \overline{Z}' , Y \right) \right] \\
	= \, & \frac{1}{\alpha} \beta^\sT \E_{Q} \left[ \JF \left( \overline{Z}, Y \right)^\sT H \JF \left( \overline{Z}', Y \right) \right] \beta.
    \end{align*}
    The above calculation implies that for all $\beta \in \R^m \backslash \{ 0 \}$,
    \begin{align*}
	& \beta^\sT H \beta = \beta^\sT \left( Q -  P \right) \beta \le \frac{1}{\alpha} \beta^\sT \E_{Q} \left[ \JF \left( \overline{Z}, Y \right)^\sT H \JF \left( \overline{Z}', Y \right) \right] \beta \\
	\Longleftrightarrow \, & \left\langle H, \beta \beta^\sT \right\rangle \le \left\langle \frac{1}{\alpha} \E_{Q} \left[ \JF \left( \overline{Z}, Y \right)^\sT H \JF \left( \overline{Z}', Y \right) \right], \beta \beta^\sT \right\rangle,
    \end{align*}
    thus leading to $\forall S \in \Sym_+^m \backslash \{ 0 \}$,
    \begin{align*}
	\left\langle H, S \right\rangle \le \left\langle \frac{1}{\alpha} \E_{Q} \left[ \JF \left( \overline{Z}, Y \right)^\sT H \JF \left( \overline{Z}', Y \right) \right], S \right\rangle = \left\langle H, \frac{1}{\alpha} \E_{Q} \left[ \JF \left( \overline{Z}, Y \right) S \JF \left( \overline{Z}', Y \right)^\sT \right] \right\rangle.
    \end{align*}
    Recall the triple $(Y, G, Z)$ and $Z_{R, Q}$ in \cref{defn:mu_Q_contraction}. It is easy to check that $\overline{Z} = \overline{Z}'$ has the same conditional distribution as $Z_{R, Q}^\sT$ given $Y$. Therefore,
    \begin{equation*}
        \frac{1}{\alpha} \E_{Q} \left[ \JF \left( \overline{Z}, Y \right) S \JF \left( \overline{Z}', Y \right)^\sT \right] = \, \frac{1}{\alpha} \E \left[ \frac{\partial F}{\partial Z_{R, Q}} \left( Z_{R, Q}, \varphi (G, \veps) \right)^\sT S \,\frac{\partial F}{\partial Z_{R, Q}} \left( Z_{R, Q}, \varphi (G, \veps) \right) \right].
    \end{equation*}
    Since $F$ is an $(R, Q)$-contraction, the above argument yields that for $S \succ 0$ in \cref{defn:mu_Q_contraction}:
    \begin{align*}
        \left\langle H, S \right\rangle \le \, & \left\langle H, \frac{1}{\alpha} \E_{Q} \left[ \JF \left( \overline{Z}, Y \right) S \JF \left( \overline{Z}', Y \right)^\sT \right] \right\rangle \\
        = \, & \left\langle H, \frac{1}{\alpha} \E \left[ \frac{\partial F}{\partial Z_{R, Q}} \left( Z_{R, Q}, \varphi (G, \veps) \right)^\sT S \,\frac{\partial F}{\partial Z_{R, Q}} \left( Z_{R, Q}, \varphi (G, \veps) \right) \right] \right\rangle < \left\langle H, S \right\rangle,
    \end{align*}
    a contradiction. This proves claim (b).
	
    We are now in position to show that $\lim_{t \to \infty} \psi^t (P) = Q$. Since $\psi$ is increasing and bounded, we know that $\{ \psi^t (P) \}$ is a bounded monotone (in Loewner order) sequence, and thus admits a unique limit $P_*$. Further, continuity implies that $P_*$ is a fixed point of $\psi$. By part (b), we must have $P_* = Q$. This completes the proof of \cref{lem:cov_mu_Q}.
\end{proof}

\subsection{Proof of Proposition~\ref{prop:state_evolution_iamp}}
\begin{proof}
	According to Proposition~\ref{prop:state_evolution}, we already know that $(Z_t)_{t \ge T_1 + 1}$ and $(\bar{Z}_t)_{t \ge T_1 + 1}$ are centered multivariate Gaussians, hence it suffices to show that for any $k \ge 1$,
    \begin{equation}\label{eq:induction_claim}
    \begin{split}
    	& (Z_t)_{T_1 + 1 \le t \le T_1 + k} \sim_{\iid} \normal (0, I_m), \quad (Z_t)_{T_1 + 1 \le t \le T_1 + k} \perp\!\!\!\perp ((Z_t)_{1 \le t \le T_1}, V), \\ 
    	& (\bar{Z}_t)_{T_1 + 1 \le t \le T_1 + k} \sim_{\iid} \normal (0, I_m), \quad (\bar{Z}_t)_{T_1 + 1 \le t \le T_1 + k} \perp\!\!\!\perp ((\bar{Z}_t)_{1 \le t \le T_1}, Y),
    \end{split}
    \end{equation}
    and $R_{T_1 + k+1} = 0$. We prove the above claim via induction on $k$. For $k=1$, using Eq.~\eqref{eq:covariance} and Assumption~\ref{ass:F_t_and_G_t}, we obtain that
    \begin{equation*}
    	\E \left[ Z_{T_1 + 1}^\sT Z_{T_1 + 1} \right] = \E \left[ F_{T_1} (Y)^\sT F_{T_1} (Y) \right] = I_m.    
    \end{equation*}
    Further, for any $1 \le t \le T_1$, we have
    \begin{align*}
    	\E \left[ Z_{T_1 + 1}^\sT Z_{t} \right] = \, & \E \left[ F_{T_1} \left( \varphi \left( \bar{Z}_0, \veps \right) \right)^\sT F \left( \bar{Z}_{t-1}, \varphi \left( \bar{Z}_0, \veps \right) \right) \right] \\
    	= \, & \E_{(R, Q)} \left[ F_{T_1} \left( \varphi \left( \bar{Z}_0, \veps \right) \right)^\sT F \left(\overline{Z} , \varphi \left( \bar{Z}_0, \veps \right) \right) \right] = \, 0.
    \end{align*}
    Therefore, $Z_{T_1 + 1} \sim \normal (0, I_m)$, and is independent of $((Z_t)_{1 \le t \le T_1}, V)$. Similarly, by Eq.~\eqref{eq:covariance} and Assumption~\ref{ass:F_t_and_G_t} we know that
    \begin{align*}
    	\E \left[ \bar{Z}_{T_1 + 1}^\sT \bar{Z}_{T_1 + 1} \right] = \, & \frac{1}{\alpha} \E \left[ G_{T_1 + 1} \left( V R_{T_1 + 1} + W^{T_1 + 1} \right)^\sT G_{T_1 + 1} \left( V R_{T_1 + 1} + W^{T_1 + 1} \right) \right] = I_m, \\
    	\E \left[ \bar{Z}_{T_1 + 1}^\sT \bar{Z}_0 \right] = \, & \frac{1}{\alpha} \E \left[ G_{T_1 + 1} \left( V R_{T_1 + 1} + W^{T_1 + 1} \right)^\sT V \right] = 0, \\
    	\E \left[ \bar{Z}_{T_1 + 1}^\sT \bar{Z}_t \right] = \, & \frac{1}{\alpha} \E \left[ G_{T_1 + 1} \left( V R_{T_1 + 1} + W^{T_1 + 1} \right)^\sT \left( V R + W^t \right) \right] \\
    	= \, & \frac{1}{\alpha} \E \left[ G_{T_1 + 1} \left( V R_{T_1 + 1} + W^{T_1 + 1} \right)^\sT W^t \right] = 0,
    \end{align*}
    where the last line follows from the fact that $W^t \perp\!\!\!\perp (V, W^{T_1 + 1})$.
    As a consequence, we deduce that $\bar{Z}_{T_1 + 1} \sim \normal (0, I_m)$ and is independent of $((\bar{Z}_t)_{1 \le t \le T_1}, Y)$. Moreover,
    \begin{align*}
    	R_{T_1 + 2} =\, & \mathbb{E}\left[ \frac{\partial F_{T_1 + 1}}{\partial \bar{z}_0} \left(\overline{Z}_{\le T_1 + 1} , \varphi \left( \bar{Z}_0, \veps \right) \right)\right] = \mathbb{E} \left[ \bar{Z}_{T_1 + 1} \frac{\partial \Phi_{T_1}}{\partial \bar{z}_0} \left( \bar{Z}_{\le T_1}, \varphi \left( \bar{Z}_0, \veps \right) \right)\right] \\
    	=\, & \mathbb{E} \left[ \bar{Z}_{T_1 + 1} \right] \E \left[ \frac{\partial \Phi_{T_1}}{\partial \bar{z}_0} \left( \bar{Z}_{\le T_1}, \varphi \left( \bar{Z}_0, \veps \right) \right)\right] = 0.
    \end{align*}
    This completes the base case of our induction.
    Now assume that our claim~\eqref{eq:induction_claim} holds for $k \in \mathbb{N}$. For $k+1$, we have
    \begin{align*}
    	& \E \left[ Z_{T_1 + k+1}^\sT Z_{T_1 + k+1} \right] = \E \left[ F_{T_1 + k} (\bar{Z}_{\le T_1 + k}, Y)^\sT F_{T_1+k} (\bar{Z}_{\le T_1+k}, Y) \right] \\
    	=\, & \E \left[ \Phi_{T_1+k-1} (\bar{Z}_{\le T_1+k-1}, Y)^\sT \bar{Z}_{T_1+k}^\sT \bar{Z}_{T_1+k} \Phi_{T_1+k-1} (\bar{Z}_{\le T_1+k-1}, Y) \right] \\
    	=\, & \E \left[ \Phi_{T_1 + k-1} (\bar{Z}_{\le T_1+k-1}, Y)^\sT \E \left[ \bar{Z}_{T_1+k}^\sT \bar{Z}_{T_1+k} \right] \Phi_{T_1+k-1} (\bar{Z}_{\le T_1+k-1}, Y) \right] \\
    	=\, & \E \left[ \Phi_{T_1+k-1} (\bar{Z}_{\le T_1+k-1}, Y)^\sT \Phi_{T_1+k-1} (\bar{Z}_{\le T_1+k-1}, Y) \right] = I_m,
    \end{align*}
    and for all $t \le T_1 + k$,
    \begin{align*}
    	& \E \left[ Z_{T_1+k+1}^\sT Z_{t} \right] = \E \left[ F_{T_1+k} (\bar{Z}_{\le T_1+k}, Y)^\sT F_{t-1} (\bar{Z}_{\le t-1}, Y) \right] \\
    	=\, & \E \left[ \Phi_{T_1+k-1} (\bar{Z}_{\le T_1+k-1}, Y)^\sT \bar{Z}_{T_1+k}^\sT F_{t-1} (\bar{Z}_{\le t-1}, Y) \right] \\
    	=\, & \E \left[ \Phi_{T_1+k-1} (\bar{Z}_{\le T_1+k-1}, Y)^\sT \E \left[ \bar{Z}_{T_1+k} \right]^\sT F_{t-1} (\bar{Z}_{\le t-1}, Y) \right] = 0.
    \end{align*}
    This proves $(Z_t)_{T_1 + 1 \le t \le T_1 + k + 1} \sim_{\iid} \normal (0, I_m)$, and are independent of $((Z_t)_{1 \le t \le T_1}, V)$. Proceeding similarly, we get that
    \begin{footnotesize}
    	\begin{align*}
    	& \E \left[ \bar{Z}_{T_1+k+1}^\sT \bar{Z}_{T_1+k+1} \right] =\, \frac{1}{\alpha} \mathbb{E}\left[G_{T_1+k+1} \left( V R_{\leq T_1+k+1} +Z_{\leq T_1+k+1} \right)^\sT G_{T_1+k+1} \left( V R_{\leq T_1+k+1} + Z_{\leq T_1+k+1} \right)\right] \\
    	\stackrel{(i)}{=}\, & \frac{1}{\alpha} \mathbb{E}\left[G_{T_1+k+1} \left( V R_{\leq T_1+1} +Z_{\leq T_1+1}, (Z_t)_{T_1+2 \le t \le T_1+k+1} \right)^\sT G_{T_1+k+1} \left( V R_{\leq T_1+1} +Z_{\leq T_1+1}, (Z_t)_{T_1+2 \le t \le T_1+k+1} \right) \right] \\
    	=\, & \frac{1}{\alpha} \mathbb{E}\left[ \Psi_{T_1+k} \left( V R_{\leq T_1+1} +Z_{\leq T_1+1}, (Z_t)_{T_1+2 \le t \le T_1+k} \right)^\sT Z_{T_1+k+1}^\sT Z_{T_1+k+1} \Psi_{T_1+k} \left( V R_{\leq T_1+1} +Z_{\leq T_1+1}, (Z_t)_{T_1+2 \le t \le T_1+k} \right) \right] \\
    	=\, & \frac{1}{\alpha} \mathbb{E}\left[ \Psi_{T_1+k} \left( V R_{\leq T_1+1} +Z_{\leq T_1+1}, (Z_t)_{T_1+2 \le t \le T_1+k} \right)^\sT \E \left[ Z_{T_1+k+1}^\sT Z_{T_1+k+1} \right] \Psi_{T_1+k} \left( V R_{\leq T_1+1} +Z_{\leq T_1+1}, (Z_t)_{T_1+2 \le t \le T_1+k} \right) \right] \\
    	=\, & \frac{1}{\alpha} \mathbb{E}\left[ \Psi_{T_1+k} \left( V R_{\leq T_1+1} +Z_{\leq T_1+1}, (Z_t)_{T_1+2 \le t \le T_1+k} \right)^\sT \Psi_{T_1+k} \left( V R_{\leq T_1+1} +Z_{\leq T_1+1}, (Z_t)_{T_1+2 \le t \le T_1+k} \right) \right] = I_m,
    \end{align*}
    \end{footnotesize}
where $(i)$ is because of $R_t = 0$ for $T_1+2 \le t \le T_1+k+1$. For any $1 \le s \le T_1 + k$, we deduce that
    \begin{align*}
    	\E \left[ \bar{Z}_{T_1+k+1}^\sT \bar{Z}_{s} \right] =\, & \frac{1}{\alpha} \mathbb{E}\left[G_{T_1+k+1} \left( V R_{\leq T_1+k+1}+Z_{\leq T_1+k+1} \right)^\sT G_{s} \left( V R_{\leq s} + Z_{\leq s} \right)\right] \\
    	=\, & \frac{1}{\alpha} \mathbb{E}\left[G_{T_1+k+1} \left( V R_{\leq T_1+1} +Z_{\leq T_1+1}, (Z_t)_{T_1+2 \le t \le T_1+k+1} \right)^\sT G_{s} \left( V R_{\leq s} + Z_{\leq s} \right)\right] \\
    	=\, & \frac{1}{\alpha} \mathbb{E}\left[\Psi_{T_1+k} \left( V R_{\leq T_1+1} +Z_{\leq T_1+1}, (Z_t)_{T_1+2 \le t \le T_1+k} \right)^\sT Z_{T_1+k+1}^\sT G_{s} \left( V R_{\leq s} + Z_{\leq s} \right)\right] \\
    	=\, & \frac{1}{\alpha} \mathbb{E}\left[\Psi_{T_1+k} \left( V R_{\leq T_1+1} +Z_{\leq T_1+1}, (Z_t)_{T_1+2 \le t \le T_1+k} \right)^\sT \E \left[ Z_{T_1+k+1} \right]^\sT G_{s} \left( V R_{\leq s} + Z_{\leq s} \right)\right] = 0.
    \end{align*}
	Furthermore,
	\begin{align*}
		\E \left[ \bar{Z}_{T_1+k+1}^\sT \bar{Z}_0 \right] =\, & \frac{1}{\alpha} \mathbb{E}\left[G_{T_1+k+1} \left( V R_{\leq T_1+k+1}+Z_{\leq T_1+k+1} \right)^\sT V \right] \\
		=\, & \frac{1}{\alpha} \mathbb{E}\left[\Psi_{T_1+k} \left( V R_{\leq T_1+1} +Z_{\leq T_1+1}, (Z_t)_{T_1+2 \le t \le T_1+k} \right)^\sT Z_{T_1+k+1}^\sT V \right] \\
		=\, & \frac{1}{\alpha} \mathbb{E}\left[\Psi_{T_1+k} \left( V R_{\leq T_1+1}+Z_{\leq T_1+1}, (Z_t)_{T_1+2 \le t \le T_1+k} \right)^\sT \E \left[ Z_{T_1+k+1} \right]^\sT V \right] = 0.
	\end{align*}
	This proves that $(\bar{Z}_t)_{T_1 + 1 \le t \le T_1 + k + 1} \sim_{\iid} \normal (0, I_m)$, and are independent of $((\bar{Z}_t)_{1 \le t \le T_1}, Y)$. Finally, we need to show that $R_{T_1+k+2} = 0$. Using Eq.~\eqref{eq:covariance}, it follows that
	\begin{align*}
		R_{T_1+k+2} =\, & \mathbb{E}\left[ \frac{\partial F_{T_1+k+1}}{\partial \bar{z}_0} \left(\overline{Z}_{\leq T_1+k+1} , \varphi \left( \bar{Z}_0, \veps \right) \right)\right] = \mathbb{E}\left[ \bar{Z}_{T_1+k+1} \frac{\partial \Phi_{T_1+k}}{\partial \bar{z}_0} \left(\overline{Z}_{\leq T_1+k} , \varphi \left( \bar{Z}_0, \veps \right) \right)\right] \\
		=\, & \mathbb{E}\left[ \bar{Z}_{T_1+k+1} \right] \E \left[ \frac{\partial \Phi_{T_1+k}}{\partial \bar{z}_0} \left(\overline{Z}_{\leq T_1+k} , \varphi \left( \bar{Z}_0, \veps \right) \right)\right] = 0.
	\end{align*}
	This completes the induction step and the proof of the proposition.
\end{proof}

\subsection{Proof of Theorem~\ref{thm:iamp_feasibility}}
\begin{proof}
By our assumption, we know that for all $1 \le s, t \le T$, $G_t G_s$ is pseudo-Lipschitz of order $2$, hence we have almost surely,
\begin{align*}
    \WW_I^\sT \WW_I =\, & \frac{1}{n} \sum_{t=1}^{T_2} \sum_{s=1}^{T_2} Q_t^\sT G_{T_1 + t + 1} \left( \WW^{\le T_1 + t + 1} \right)^\sT G_{T_1+s+1} \left( \WW^{\le T_1+s+1} \right) Q_s \\
    =\, & \sum_{t=1}^{T_2} \sum_{s=1}^{T_2} Q_t^\sT \left( \frac{1}{n} \sum_{i=1}^{d} G_{T_1+t+1} (\ww_i^{\le T_1+t+1})^\sT G_{T_1+s+1} (\ww_i^{\le T_1+s+1}) \right) Q_s \\
    \to\, & \frac{1}{\alpha} \sum_{t=1}^{T_2} \sum_{s=1}^{T_2} Q_t^\sT \E \left[ G_{T_1+t+1} \left( V R_{\leq T_1+t+1}+Z_{\leq T_1+t+1} \right)^\sT G_{T_1+s+1} \left( V R_{\leq T_1+s+1} +Z_{\leq T_1+s+1} \right) \right] Q_s \\
    =\, & \sum_{t=1}^{T_2} \sum_{s=1}^{T_2} Q_t^\sT \E \left[ \bar{Z}_{T_1+t+1}^\sT \bar{Z}_{T_1+s+1} \right] Q_s = \sum_{t=1}^{T_2} Q_t^\sT Q_t = I_m - Q.
\end{align*}
Similarly, we can show that $\WW_F^\sT \WW_I \to 0$ almost surely. Therefore, $\WW_Q^\sT \WW_Q \to I_m$ almost surely as $n \to \infty$. Further, one can show that $\WW_*^\sT \hat{\WW}_n^{\sAMP} \to R$ almost surely. Using Slutsky's theorem, it now suffices to consider the empirical joint distribution of the rows of $(\yy, \XX \WW_Q )$. By direct calculation, we obtain that
\begin{align*}
	\XX \WW_{Q} = \, & \XX \WW_{F} + \XX \WW_{I} = \VV^{T_1} + \frac{d}{n} F(\VV^{T_1-1}, \yy) + \frac{1}{\sqrt{n}} \sum_{t=1}^{T_2} \XX G_{T_1 + t+1} \left( \WW^{\le T_1 + t + 1} \right) Q_t \\
	= \, & \VV^{T_1} + \frac{d}{n} F(\VV^{T_1-1}, \yy) + \sum_{t=1}^{T_2} \left( \VV^{T_1 + t+1} + \sum_{s=1}^{T_1+t+1} F_{s-1} (\VV^{\le s-1}, \yy) D_{T_1+t+1, s}^\sT \right) Q_t,
\end{align*}
where by state evolution,
\begin{align*}
    D_{T_1+t+1, s} = \, & \frac{1}{n} \sum_{i=1}^{d} \frac{\partial G_{T_1+t+1}}{\partial \ww_i^s} (\ww_i^1, \cdots, \ww_i^{T_1+t+1}) \\
    \stackrel{a.s.}{\to} \, & \frac{1}{\alpha} \E \left[ \frac{\partial G_{T_1+t+1}}{\partial w^s} \left( V R_{\leq T_1+t+1} +Z_{\leq T_1+t+1} \right) \right] \\
    = \, & \frac{\bone_{s = T_1+t+1}}{\alpha} \E \left[ \Psi_{T_1+t} \left( V R_{\leq T_1+1}+Z_{\leq T_1+1}, (Z_t)_{T_1+2 \le t \le T_1+t} \right) \right]^\sT = \bone_{s = T_1+t+1} A_t^\sT,
\end{align*}
where the last row follows from the definition of $G_{T_1+t+1}$.
Therefore, it suffices to consider the limiting empirical joint distribution of the rows of
\begin{equation*}
	\left( \yy, \ \VV^{T_1} + \frac{1}{\alpha} F(\VV^{T_1-1}, \yy) + \sum_{t=1}^{T_2} \left( \VV^{T_1 + t+1} + F_{T_1+t} \left( \VV^{\le T_1+t}, \yy \right) A_t \right) Q_t \right),
\end{equation*}
which almost surely weakly converges to (using continuous mapping theorem)
\begin{equation*}
	\operatorname{Law} \left( Y, \ \bar{Z}_{T_1} + \frac{1}{\alpha} F \left( \bar{Z}_{T_1-1}, Y \right) + \sum_{t=1}^{T_2} \left( \bar{Z}_{T_1 + t+1} + F_{T_1+t} \left( \bar{Z}_{\le T_1+t}, Y \right) A_t \right) Q_t \right).
\end{equation*}
This proves the first part of the theorem.

Finally, note that for the IAMP stage, the only requirement for the function $\Psi_{T_1+t}$ is that
\begin{equation*}
    \E \left[ \Psi_{T_1+t} \left( V R_{\leq T_1+1}+Z_{\leq T_1+1}, (Z_t)_{T_1+2 \le t \le T_1+t} \right)^\sT \Psi_{T_1+t} \left( V R_{\leq T_1+1}+Z_{\leq T_1+1}, (Z_t)_{T_1+2 \le t \le T_1+t} \right) \right] = \alpha I_m.
\end{equation*}
Hence, for any $t \ge 1$ and $A_{t} \in \R^{m \times m}$ such that $A_{t}^\sT A_{t} \preceq I_m / \alpha$, there exists a function $\Psi_{T_1+t}$ that satisfies the condition of this theorem and that
\begin{equation*}
	\E \left[ \Psi_{T_1+t} \left( V R_{\leq T_1+1}+Z_{\leq T_1+1}, (Z_t)_{T_1+2 \le t \le T_1+t} \right) \right] = \alpha A_t.
\end{equation*}
This proves that if $F$ and $F_{T_1 + t}$ are continuous, then
\begin{equation*}
    \operatorname{Law} \left( Y, \ \bar{Z}_{T_1} + \frac{1}{\alpha} F \left( \bar{Z}_{T_1-1}, Y \right) + \sum_{t=1}^{T_2} \left( \bar{Z}_{T_1 + t+1} + F_{T_1+t} \left( \bar{Z}_{\le T_1+t}, Y \right) A_t \right) Q_t \right)
\end{equation*}
is $(\alpha, m)$-feasible and can be achieved by our two-stage AMP algorithm, where the only constraint on $\{ A_t \}_{1 \le t \le T_2}$ is that $A_t^\sT A_t \preceq I_m / \alpha$. The second part of Theorem~\ref{thm:iamp_feasibility} follows immediately by combining this result and the fact that the set of $(\alpha, m)$-feasible distributions  achievable by our AMP algorithm is closed under weak limits, since we can approximate general $L^2$-integrable functions by continuous functions to arbitrary accuracy.
\end{proof}

\subsection{Auxiliary lemmas}
\begin{lem}\label{lem:matrix_cauchy}
	Let $A \in \R^{1 \times k}$ and $X \in \R^{1 \times m}$ be two random vectors. Then, we have
	\begin{equation*}
		\left\{ \E [ A^\sT X ]: \E [ X^\sT X ] \preceq I_m \right\} = \, \left\{ R \in \R^{k \times m}: R R^\sT \preceq \E [A^\sT A] \right\}.
	\end{equation*}
\end{lem}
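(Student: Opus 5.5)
Both inclusions follow from the conditional Cauchy--Schwarz inequality in matrix form together with an explicit construction that attains equality.

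\emph{Inclusion ``$\subseteq$''.} Let $X$ satisfy $\E[X^\sT X]\preceq I_m$ and put $R=\E[A^\sT X]$. It suffices to show $\beta^\sT RR^\sT\beta\le \beta^\sT\E[A^\sT A]\beta$ for every $\beta\in\R^k$. Write $u=R^\sT\beta\in\R^m$. Then
\begin{equation*}
\beta^\sT R R^\sT\beta=\E[(A\beta)(Xu)].
\end{equation*}
By Cauchy--Schwarz,
\begin{equation*}
\E[(A\beta)(Xu)]\le \E[(A\beta)^2]^{1/2}\,\E[(Xu)^2]^{1/2}.
\end{equation*}
We have $\E[(Xu)^2]=u^\sT\E[X^\sT X]u\le \|u\|_2^2$ and $\|u\|_2^2=\beta^\sT RR^\sT\beta$. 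Dividing by $(\beta^\sT RR^\sT\beta)^{1/2}$ (the case where this is zero is trivial) and squaring gives $\beta^\sT RR^\sT\beta\le \E[(A\beta)^2]=\beta^\sT\E[A^\sT A]\beta$.

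\emph{Inclusion ``$\supseteq$''.} Let $\Sigma=\E[A^\sT A]$ and suppose $RR^\sT\preceq\Sigma$. We look for $X$ of the linear form $X=AK$ plus independent noise, with $K\in\R^{k\times m}$, so that $\E[A^\sT X]=\Sigma K$ and $\E[X^\sT X]\succeq K^\sT\Sigma K$.

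First assume $\Sigma\succ0$ and set $K=\Sigma^{-1}R$. Then $\E[A^\sT X]=R$ and $K^\sT\Sigma K=R^\sT\Sigma^{-1}R$. The condition $RR^\sT\preceq\Sigma$ is equivalent to $\|\Sigma^{-1/2}R\|_{\rm op}\le1$, which in turn is equivalent to $R^\sT\Sigma^{-1}R\preceq I_m$. So $X=A\Sigma^{-1}R$ already satisfies $\E[X^\sT X]\preceq I_m$; no noise is needed, and noise could be added if equality $\E[X^\sT X]=I_m$ were required.

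If $\Sigma$ is singular, the constraint $RR^\sT\preceq\Sigma$ forces the range of $R$ to lie in the range of $\Sigma$. Replacing $\Sigma^{-1}$ by the pseudo-inverse $\Sigma^{+}$, the same computation goes through with $K=\Sigma^{+}R$. Here $\Sigma\Sigma^{+}R=R$ because of the range condition, and $\|(\Sigma^{+})^{1/2}R\|_{\rm op}\le1$ follows from $RR^\sT\preceq\Sigma$.

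The only delicate point in the argument is this degenerate case $\Sigma$ singular, specifically the range inclusion and the pseudo-inverse bookkeeping; the rest is a direct calculation. No additional randomness is required, so the conditioning-on-$(\bar Z_t,Y)$ use in the proof of Proposition~\ref{prop:Bayes_feasible_region} (where $X$ must be a function of $(\bar Z_t,Y)$) remains valid: the lemma is applied there with $A=\E[H_t\mid\bar Z_t,Y]$, and the construction $X=AK$ is then automatically a function of $(\bar Z_t,Y)$.
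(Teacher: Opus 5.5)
Your proof is correct, but it takes a different route from the paper's.

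\textbf{The paper's route.} The paper argues both sets are closed and convex and invokes Hahn--Banach, which reduces the claim to equality of support functions. It then computes $\sup \Tr(B^\sT Y)$ over each set by Lagrangian duality, exchanging the supremum with a minimum over a PSD multiplier. In both cases it obtains $\Tr\big((B^\sT \E[A^\sT A] B)^{1/2}\big)$.

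\textbf{Your route.} You prove the two inclusions directly. For ``$\subseteq$'' you apply scalar Cauchy--Schwarz to the pair $(A\beta,\, XR^\sT\beta)$. For ``$\supseteq$'' you exhibit the explicit linear witness $X=A\Sigma^{+}R$, and the inclusion $\mathrm{range}(R)\subseteq\mathrm{range}(\Sigma)$ handles singular $\Sigma$.

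\textbf{What yours buys.} It needs no argument that $\{\E[A^\sT X]\}$ is closed, and no justification of the min--max exchange. It also makes no tacit invertibility assumption; the paper's computation manipulates $\Lambda^{-1}$ and square roots of possibly singular matrices without comment. Finally, your maximizer is an explicit measurable function of $A$. That is convenient in the application to Proposition~\ref{prop:Bayes_feasible_region}, where $A=\E[H_t\mid \bar Z_t,Y]$ and the witness should be a function of $(\bar Z_t,Y)$. (The paper's abstract existence result could also be brought there by conditioning $X$ on $(\bar Z_t,Y)$, by Jensen.)

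\textbf{What the paper's buys.} The duality computation identifies the support function of the set as $\Tr\big((B^\sT\Sigma B)^{1/2}\big)$. This is slightly more information, but the paper does not otherwise use it.
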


\begin{proof}
	Define the following two subsets of $\R^{k \times m}$:
	\begin{equation*}
		\mathcal{Y}_A = \, \left\{ \E [ A^\sT X ]: \E [ X^\sT X ] \preceq I_m \right\}, \quad \mathcal{Z}_A = \, \left\{ R \in \R^{k \times m}: R R^\sT \preceq \E [A^\sT A] \right\}.
	\end{equation*}
	It is easy to see that both $\mathcal{Y}_A$ and $\mathcal{Z}_A$ are closed and convex. By Hahn-Banach theorem, it suffices to show that for any $B \in \R^{k \times m}$:
	\begin{equation}\label{eq:Y_Z_equal_by_B}
		\sup_{Y \in \mathcal{Y}_A} \Tr ( B^\sT Y ) = \sup_{Z \in \mathcal{Z}_A} \Tr ( B^\sT Z ).
	\end{equation}
	To this end, we compute these two suprema respectively. For the first supremum, we have
	\begin{align*}
		\sup_{Y \in \mathcal{Y}_A} \Tr ( B^\sT Y ) = \, & \sup_{\E [ X^\sT X ] \preceq I_m} \Tr \left( B^\sT \E [ A^\sT X ] \right) = \sup_{\E [ X^\sT X ] \preceq I_m} \Tr \left( \E \left[ (A B)^\sT X \right] \right) \\
		= \, & \sup_{\E [ X^\sT X ] \preceq I_m} \E \left[ \Tr \left( (A B)^\sT X \right) \right] = \sup_{\E [ X^\sT X ] \preceq I_m} \E \left[ X (A B)^\sT \right] \\
		= \, & \sup_{X} \min_{\Lambda \succeq 0} \left\{ \E \left[ X (A B)^\sT \right] - \Tr \left( \Lambda \left( \E [ X^\sT X ] - I_m \right) \right) \right\} \\
		= \, & \min_{\Lambda \succeq 0} \sup_{X} \left\{ \E \left[ X (A B)^\sT \right] - \E \left[ X \Lambda X^\sT \right] + \Tr (\Lambda) \right\} \\
		= \, & \min_{\Lambda \succeq 0} \left\{ \frac{1}{4} \Tr \left( \Lambda^{-1} \E \left[ (AB)^\sT AB \right] \right) + \Tr (\Lambda) \right\} \\
		= \, & \min_{\Lambda \succeq 0} \left\{ \frac{1}{4} \Tr \left( \Lambda \E \left[ (AB)^\sT AB \right] \right) + \Tr \left( \Lambda^{-1} \right) \right\} \\
		= \, & \Tr \left( \E \left[ (AB)^\sT AB \right]^{1/2} \right) = \Tr \left( B^\sT \E [ A^\sT A ] B \right)^{1/2}.
	\end{align*}
	For the second one, the caculation follows similarly:
	\begin{align*}
		\sup_{Z \in \mathcal{Z}_A} \Tr ( B^\sT Z ) = \, & \sup_{R R^\sT \preceq \E [A^\sT A]} \Tr (B^\sT R) = \sup_{R} \min_{\Lambda \succeq 0} \left\{ \Tr (B^\sT R) - \Tr \left( \Lambda \left( R R^\sT - \E [A^\sT A] \right) \right) \right\} \\
		= \, & \min_{\Lambda \succeq 0} \sup_{R} \left\{ \Tr (B^\sT R) - \Tr \left( \Lambda \left( R R^\sT - \E [A^\sT A] \right) \right) \right\} \\
		= \, & \min_{\Lambda \succeq 0} \sup_{R} \left\{ \Tr (B^\sT R) - \Tr ( \Lambda R R^\sT ) + \Tr \left( \Lambda \E [A^\sT A] \right) \right\}  \\
		= \, & \min_{\Lambda \succeq 0} \left\{ \frac{1}{4} \Tr \left( B^\sT \Lambda^{-1} B \right) + \Tr \left( \Lambda \E [A^\sT A] \right) \right\} \\
		= \, & \min_{\Gamma \succeq 0} \left\{ \frac{1}{4} \Tr \left( \Gamma \E [A^\sT A]^{1/2} B B^\sT \E [A^\sT A]^{1/2} \right) + \Tr \left( \Gamma^{-1} \right) \right\} \\
		= \, & \Tr \left( \E [A^\sT A]^{1/2} B B^\sT \E [A^\sT A]^{1/2} \right)^{1/2} = \Tr \left( B^\sT \E [ A^\sT A ] B \right)^{1/2}.
	\end{align*}
	This proves \cref{eq:Y_Z_equal_by_B}, and consequently $\mathcal{Y}_A = \mathcal{Z}_A$.
\end{proof}

\section{Appendix for \cref{sec:char_opt_ctrl}}

\subsection{Proof of \cref{thm:variation_compute}}\label{sec:proof_variation}

\begin{proof}
We begin with defining a family of SDEs related to $(X_t)_{t \in [r^2, 1]}$: For any fixed $(s, y, x) \in [0, 1] \times \R \times \R$, consider the SDE
\begin{equation*}
    \d X_t^{y, x} = \mu(t) \partial_x f_{y, \mu} (t, X_t^{y, x}) \d t + \d B_t, \, t \in [s, 1]
\end{equation*}
with initial condition $X_s^{y, x} = x$.
Similar to \cite{montanari2024exceptional}, we can show that the above SDE has a unique solution $\{ X_t ^{y, x} \}_{t \in [s, 1]}$, which follows from the boundedness of $\partial_x f_{y, \mu}$ and \cite[Proposition 1.10]{cherny2005singular}. As a consequence, we know that $X_t = X_t^{Y, rG}$ with $s = r^2$ exists and is unique.

Further, define for any $t \in [s, 1]$:
\begin{equation*}
    M_t^{y, x} = \, \frac{1}{\gamma(t)} \partial_x f_{y, \mu} (t, X_t^{y, x}) + X_t^{y, x}.
\end{equation*}
Then, applying \cite[Lemma 7.3]{montanari2024exceptional} and the martingale representation theorem, we know that there exists $(\phi_t^{y, x})_{t \in [s, 1]} \in D [s, 1]$ such that
\begin{equation*}
    M_t^{y, x} = \, M_s^{y, x} + \int_{s}^{t} \left( 1 + \phi_u^{y, x} \right) \d B_u, \, \forall t \in [s, 1].
\end{equation*}
This guarantees the existence of $(\phi_t)_{t \in [r^2, 1]}$ by letting $s = r^2$ and $\phi_t = \phi_t^{Y, rG}$.

\paragraph{Proof of (i).} Since $\mu \equiv 0$ on $[r^2, q]$, we know that
\begin{equation*}
    X_{q} = \, X_{r^2} + B_q - B_{r^2},
\end{equation*}
which has the same distribution as $Z_{r, q}$ given $(Y, G)$. Hence, $(Y, G, X_q) \stackrel{d}{=} (Y, G, Z_{r, q})$. To prove the second part, note that by \cref{prop:veri_arg_supervised} and our choice of $F$:
\begin{align*}
    f_{y, \mu} (q, x) + \frac{1}{2 \alpha} \int_{q}^{1} \gamma(s) \d s = \, & \sup_{u \in \R} \left\{ V_{\gamma} (q, y, x+u) - \frac{\gamma(q)}{2} u^2 \right\} \\
    = \, & V_{\gamma} \left( q, y, x + \frac{1}{\alpha} F (x, y) \right) - \frac{\gamma(q)}{2 \alpha^2} F(x, y)^2.
\end{align*}
Further, similar to the verification argument in \cite{montanari2024exceptional}, we know that the supremum in the definition of $V_{\gamma} ( q, y, x + F (x, y) / \alpha )$ is achieved at $(\phi_t^{y, x})_{t \in [q, 1]}$, i.e.,
\begin{align*}
    & V_{\gamma} \left( q, y, x + \frac{1}{\alpha} F (x, y) \right) \\
    = \, & \E \left[ h \left( y, x + \frac{1}{\alpha} F (x, y) + \int_{q}^{1} \left( 1 + \phi_t^{y, x} \right) \d B_t \right) - \frac{1}{2} \int_{q}^{1} \gamma(t) \left( (\phi_t^{y, x})^2 - \frac{1}{\alpha} \right) \d t \right].
\end{align*}
From the proof of \cref{thm:two_stage_strong_duality}, we get that
\begin{equation*}
    \mathsf{F} (\mu, c, r) = \, \E \left[ f_{Y, \mu} \left( q, Z_{r, q} \right) \right] + \frac{1}{2 \alpha} \int_{r^2}^{1} \gamma(t) \d t,
\end{equation*}
which leads to
\begin{align*}
    \mathsf{F} (\mu, c, r) = \, & \E \left[ V_{\gamma} \left( q, Y, X_q + \frac{1}{\alpha} F (X_q, Y) \right) - \frac{\gamma(q)}{2 \alpha} \left( \frac{F(X_q, Y)^2}{\alpha} - (q - r^2) \right) \right] \\
    = \, & \E \bigg[ h \left( Y, X_q + \frac{1}{\alpha} F (X_q, Y) + \int_{q}^{1} \left( 1 + \phi_t^{Y, X_q} \right) \d B_t \right) - \frac{1}{2} \int_{q}^{1} \gamma(t) \left( \left( \phi_t^{Y, X_q} \right)^2 - \frac{1}{\alpha} \right) \d t \\
    & \quad - \frac{\gamma(q)}{2 \alpha} \left( \frac{F(X_q, Y)^2}{\alpha} - (q - r^2) \right) \bigg].
\end{align*}
The proof is then completed by noting that $\phi_t^{Y, X_q} = \phi_t$ for $t \in [q, 1]$.

\paragraph{Proof of (ii).} For any fixed pair $(Y, G)$, the conclusion follows similarly as Proposition 5.4 (ii) and Proposition 8.2 of \cite{montanari2024exceptional}. Applying the law of total expectation then completes the proof.

\paragraph{Proof of (iii).}
For the proof of this part, we need the following proposition which computes the first-order variations of $f_{y, \mu} (t, x)$ with respect to $\mu$ and $c$ (note that $f_{y, \mu} (t, x)$ depends on $c$ through the terminal condition), whose proof is established by Proposition 5.4 (iii), Proposition 8.3 and Lemma 8.4 in \cite{montanari2024exceptional}\footnote{Although \cite[Lemma 8.4]{montanari2024exceptional} is only stated for the setting $1 / c \le \sup_{z \in \R} \partial_z^2 h (y, z)$, its proof actually works for all $c > 0$.}.
\begin{prop}\label{prop:variation_compute_1}
    Fix $(t, y, x) \in [0, 1] \times \R \times \R$. Assume that $\delta: [t, 1] \to \R $ is in $L^1 [t, 1]$ and $L^{\infty} [t, s]$ for any $s \in [t, 1)$. Then, we have
    \begin{equation*}
    \begin{split}
        \frac{\d}{\d u} f_{y, \mu + u \delta} (t, x) \bigg\vert_{u = 0} = \, & \frac{1}{2} \int_{t}^{1} \delta (s) \E \left[ \left( \partial_x f_{y, \mu} (s, X_s^{y, x}) \right)^2 \right] \d s, \\
        \frac{\d}{\d c} f_{y, \mu} (t, x) = \, & \E \left[ g_c \left(y, M_1^{y, x} \right) \right] + \frac{1}{2} \E \left[ \left( \partial_x f_{y, \mu} \left(1, X_1^{y, x} \right) \right)^2 \right].
    \end{split}
    \end{equation*}
\end{prop}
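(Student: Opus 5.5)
We prove the two identities of \cref{prop:variation_compute_1} by combining the Hopf--Cole/stochastic-control representation of the Parisi PDE with a direct perturbation argument, in the spirit of \cite[Proposition 5.4 (iii)]{montanari2024exceptional}. Throughout we fix $y$ and treat $f_{y,\mu}$ as the value function of the stochastic control problem
\begin{equation*}
    f_{y,\mu}(t,x)=\sup_{v}\E\Big[f_{y,\mu}(1,X_1^v)-\frac12\int_t^1\mu(s)v_s^2\,\d s\Big],\qquad \d X^v_s=\mu(s)v_s\,\d s+\d B_s,\ X^v_t=x .
\end{equation*}
This is the standard Bou\'e--Dupuis representation of solutions to $\partial_t f+\frac12\mu(\partial_xf)^2+\frac12\partial_x^2f=0$. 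The optimal control is $v_s=\partial_x f_{y,\mu}(s,X_s)$, and the resulting optimal state process is exactly $X_s^{y,x}$. \cref{thm:Parisi_solution_supervised} supplies the regularity we need: $\partial_x f$ is bounded, so the SDE and the optimal feedback are well defined, and $\partial_x^2 f>-\gamma$. The restriction that $\delta$ be bounded on $[t,s]$ for every $s<1$ guarantees that $\mu+u\delta$ remains in the domain $\FS$ for small $|u|$, so the representation also holds for the perturbed PDE.

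For the $\mu$-derivative we use an envelope argument. Plugging the unperturbed optimal control $v_s=\partial_xf_{y,\mu}(s,X_s^{y,x})$ into the perturbed problem yields
\begin{equation*}
f_{y,\mu+u\delta}(t,x)\ \ge\ f_{y,\mu}(t,x)+\frac u2\int_t^1\delta(s)\,\E[(\partial_xf_{y,\mu}(s,X_s))^2]\,\d s+o(u).
\end{equation*}
There is a subtlety here: the drift is $\mu v$, so a fixed $v$ produces a different state path once $\mu$ is perturbed. To avoid this, we parametrize the control by the drift $w=\mu v$ itself. With this parametrization the running cost becomes $\frac12\int w^2/\mu$ on the set $\{\mu>0\}$, and its derivative in $\mu$ is $-\frac12 w^2/\mu^2=-\frac12 v^2$. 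This gives the correct sign, and it matches $+\frac12\delta(\partial_xf)^2$ after the sign convention is checked against the PDE.

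The reverse inequality comes from exchanging roles: we take the optimal control of the perturbed problem and evaluate it in the unperturbed one. Its deviation from the unperturbed optimum is controlled by stability of the optimal feedback, which needs uniform bounds on $\partial_x f$ and on $\partial_x^2 f$ as $u\to0$. An alternative route is to linearize the PDE directly: the derivative $g=\frac{d}{du}f$ solves the linear equation $\partial_t g+\mu\,\partial_xf\,\partial_x g+\frac12\partial_x^2g+\frac12\delta(\partial_xf)^2=0$ with $g(1,\cdot)=0$. Feynman--Kac along $X^{y,x}$ then gives the formula at once, and the differentiability of $f$ in $u$ can be justified by a Gr\"onwall estimate on the difference quotients.

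For the $c$-derivative, the parameter $c$ enters only through the terminal datum $f(1,x)=\sup_u\{h(y,x+u)-u^2/2c\}$. The linearization argument applies with $\delta=0$ and terminal datum $g(1,x)=\partial_c f(1,x)$. Feynman--Kac therefore gives $\frac{d}{dc}f_{y,\mu}(t,x)=\E[\partial_c f(1,X_1^{y,x})]$, and it remains to identify $\partial_c f(1,x)$. We pass through the concave envelope: $f(1,x)$ equals the same supremum with $h$ replaced by $h_c$, because the inf-convolution only sees the concave envelope of $h(y,\cdot)-(\cdot)^2/2c$. The maximizer is $u^*=c\,\partial_xf(1,x)$, so $x+u^*=M_1$ when evaluated at $X_1$, and $\gamma(1)=1/c$. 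Differentiating with respect to $c$ with the envelope theorem yields the explicit term $(u^*)^2/2c^2=\frac12(\partial_xf(1,x))^2$ plus the implicit dependence of $h_c$ on $c$, namely $g_c(y,x+u^*)$. Combining these gives the second formula.

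The main obstacle is this last step. On the contact set the envelope $h_c$ depends on $c$ only in a nonsmooth way, so $g_c$ has to be interpreted carefully, for example as a one-sided derivative that is differentiable almost everywhere. Following the footnote, we plan to reuse the argument of \cite[Lemma 8.4]{montanari2024exceptional}, checking that it never relies on the condition $1/c\le\sup\partial_z^2h$.
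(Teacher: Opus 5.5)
Your linearization/Feynman--Kac fallback is sound, but the route you lead with for the $\mu$-derivative fails in the generality the statement needs. The representation $f_{y,\mu}(t,x)=\sup_v\E\bigl[f_{y,\mu}(1,X^v_1)-\frac{1}{2}\int_t^1\mu(s)v_s^2\,\mathrm{d}s\bigr]$ and the reparametrization $w=\mu v$ presuppose $\mu\ge 0$. Your first-order expansion of $w^2/(\mu+u\delta)$ further presupposes that $\mu$ is bounded away from $0$ on the support of $\delta$.

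Here, however, $(\mu,c)$ ranges over the extended space $\FS$ (in \cref{thm:variation_compute}, over $\FS(q)$), where only $c+\int_t^1\mu>0$ is imposed. Two cases break the argument:
\begin{itemize}
\item If $\mu<0$ on a set of positive measure, the supremum is $+\infty$. A large constant control there makes $-\frac{1}{2}\int\mu v^2$ grow quadratically, while the Lipschitz terminal payoff loses only linearly.
\item On $[r^2,q]$ one has $\mu\equiv 0$. There the control is inert, $w\equiv 0$ carries no information, and any perturbation with $u\delta<0$ again makes the perturbed problem unbounded.
\end{itemize}
Two-sided derivatives at exactly such $\mu$ are what the strong-duality argument uses, namely the first-order condition \eqref{eq:FOC_1} at $(\mu_*,c_*)\in\FS(q)$. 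This is why the paper's control formulation \eqref{eq:redefine_value_func} is written in terms of $\gamma>0$ rather than a $\mu$-weighted cost.

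Your reverse inequality has a second defect. It relies on a uniform upper bound on $\partial_x^2 f$, which \cref{thm:Parisi_solution_supervised} provides only when $\sup_z\partial_z^2h(y,z)<1/c$. That is precisely the restriction the footnote attached to this proposition says must be dropped.

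So promote the fallback to the main argument; it needs neither a sign on $\mu$ nor second-derivative bounds. Set $f^u=f_{y,\mu+u\delta}$ and $D^u=(f^u-f_{y,\mu})/u$. Subtracting the two Parisi PDEs gives the exact linear equation $\partial_tD^u+\frac{1}{2}\mu\,(\partial_xf^u+\partial_xf_{y,\mu})\,\partial_xD^u+\frac{1}{2}\partial_x^2D^u+\frac{1}{2}\delta\,(\partial_xf^u)^2=0$ with $D^u(1,\cdot)=0$. This yields an exact Feynman--Kac formula along the SDE with drift $\frac{1}{2}\mu(\partial_xf^u+\partial_xf_{y,\mu})$. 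Letting $u\to 0$ then uses only three facts:
\begin{itemize}
\item the uniform bound $|\partial_xf^u|\le\|\partial_xh(y,\cdot)\|_\infty$;
\item convergence $\partial_xf^u\to\partial_xf_{y,\mu}$;
\item convergence of the associated laws, with some care since $\mu$ and $\delta$ are only integrable near $t=1$.
\end{itemize}

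The same scheme with $\delta=0$ and terminal datum $(f^{c'}_{y,\mu}(1,\cdot)-f^{c}_{y,\mu}(1,\cdot))/(c'-c)$ gives the $c$-derivative. Here the obstacle you flag is milder than it looks. The terminal datum also equals $\sup_u\{h(y,x+u)-u^2/(2c)\}$, whose $c$-derivative by Danskin is $(u^*)^2/(2c^2)=\frac{1}{2}(\partial_xf_{y,\mu}(1,x))^2$ at every $x$ with a unique maximizer. That is almost every $x$, since $f_{y,\mu}(1,\cdot)+x^2/(2c)$ is convex. Moreover $x+u^*$ (which is $M_1^{y,x}$ when evaluated at $X_1^{y,x}$) lies in the contact set $\{h_c=h\}$, so you never have to differentiate $h_c$ off that set.

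The paper itself gives no argument beyond citing Prop.~5.4(iii), Prop.~8.3 and Lemma~8.4 of \cite{montanari2024exceptional}. This linearized argument is therefore what you would actually have to write out.
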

Based on the above proposition, we are now able to compute the first-order variations of $\mathsf{F}$ with respect to $\mu$ and $c$. Note that, for any fixed $(Y, G)$, \cref{prop:variation_compute_1} implies that
\begin{equation*}
\begin{split}
    \frac{\d}{\d u} f_{Y, \mu + u \delta} (r^2, rG) \bigg\vert_{u = 0} = \, & \frac{1}{2} \int_{r^2}^{1} \delta (s) \E \left[ \left( \partial_x f_{Y, \mu} (s, X_s^{Y, rG}) \right)^2 \right] \d s, \\
        \frac{\d}{\d c} f_{Y, \mu} (r^2, rG) = \, & \E \left[ g_c \left(Y, M_1^{Y, rG} \right) \right] + \frac{1}{2} \E \left[ \left( \partial_x f_{Y, \mu} \left(1, X_1^{Y, rG} \right) \right)^2 \right].
\end{split}
\end{equation*}
Using \cref{thm:Parisi_solution_supervised} and \cref{lem:bounded_g_c}, we obtain that
\begin{align*}
    \left\vert \frac{\d}{\d u} f_{Y, \mu + u \delta} (r^2, rG) \bigg\vert_{u = 0} \right\vert \le \, & \frac{1}{2} \norm{\delta}_{L^1 [r^2, 1]} \norm{\partial_x h(Y, \cdot)}_{\infty}^2, \\
    \left\vert \frac{\d}{\d c} f_{Y, \mu} (r^2, rG) \right\vert \le \, & \frac{5}{2} \norm{\partial_x h(Y, \cdot)}_{\infty}^2.
\end{align*}
Of course, the above estimates hold within a neighborhood of $(\mu, c)$.
Since $\E [ \norm{\partial_x h(Y, \cdot)}_{\infty}^2 ] < \infty$ by our assumption, we can apply the dominated convergence theorem to deduce that
\begin{equation*}
\begin{split}
    \frac{\d}{\d u} \E \left[ f_{Y, \mu + u \delta} (r^2, rG) \right] \bigg\vert_{u = 0} = \, & \frac{1}{2} \int_{r^2}^{1} \delta (s) \E \left[ \left( \partial_x f_{Y, \mu} (s, X_s) \right)^2 \right] \d s, \\
        \frac{\d}{\d c} \E \left[ f_{Y, \mu} (r^2, rG) \right] = \, & \E \left[ g_c \left(Y, M_1 \right) \right] + \frac{1}{2} \E \left[ \left( \partial_x f_{Y, \mu} \left(1, X_1 \right) \right)^2 \right],
\end{split}
\end{equation*}
which concludes the calculation of the first two variations, since the variations of the entropic term follow from a straightforward calculation.

Finally, we compute the first-order variation of $\mathsf{F}$ with respect to $r$. It suffices to show that
\begin{equation*}
    \frac{\d}{\d r} \E \left[ f_{Y, \mu} (r^2, rG) \right] = \, \frac{\gamma (r^2)}{\alpha} \frac{1}{q - r^2} \E \left[ \left( q G - r Z_{r, q} \right) F \left( Z_{r, q}, \varphi(G, \veps) \right) \right].
\end{equation*}
To this end, note that since $\mu \equiv 0$ on $[r^2, q]$, we have
\begin{equation*}
    f_{Y, \mu} (r^2, rG) = \, f_{Y, \mu} (q, X_q) \stackrel{d}{=} f_{Y, \mu} (q, Z_{r, q}).
\end{equation*}
For any fixed $(Y, G, Z)$,
\begin{align*}
    \frac{\d}{\d r} f_{Y, \mu} (q, Z_{r, q}) = \, \frac{\d}{\d r} f_{Y, \mu} \left( q, rG + \sqrt{q - r^2} Z \right) = \partial_x f_{Y, \mu} (q, Z_{r, q}) \left( G - \frac{r}{\sqrt{q - r^2}} Z \right),
\end{align*}
which is uniformly bounded by $\norm{\partial_x h(Y, \cdot)}_{\infty} \vert G - rZ / \sqrt{q - r^2} \vert \in L^1$, since $\norm{\partial_x h(Y, \cdot)}_{\infty} \in L^2$ by our assumption.
Applying again dominated convergence theorem, we obtain that
\begin{align*}
    \frac{\d}{\d r} \E \left[ f_{Y, \mu} (r^2, rG) \right] = \, & \frac{\d}{\d r} \E \left[ f_{Y, \mu} (q, Z_{r, q}) \right] = \E \left[ \partial_x f_{Y, \mu} (q, Z_{r, q}) \left( G - \frac{r}{\sqrt{q - r^2}} Z \right) \right] \\
    = \, & \frac{\gamma(r^2)}{\alpha} \frac{1}{q - r^2} \E \left[ \left( q G - r Z_{r, q} \right) F \left( Z_{r, q}, \varphi(G, \veps) \right) \right],
\end{align*}
where the last line follows from the definition of $F$. This completes the proof.
\end{proof}

\subsection{Auxiliary lemmas}

\begin{lem}\label{lem:bounded_g_c}
    For any fixed $y \in \R$, it holds
    \begin{equation*}
        \norm{g_c (y, \cdot)}_{\infty} \le \, 2 \norm{\partial_x h(y, \cdot)}_{\infty}^2.
    \end{equation*}
\end{lem}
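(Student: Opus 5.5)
The plan is to replace the concave envelope by a variational formula in which $c$ enters explicitly, and then read off $g_c$ from an envelope argument. Fix $y$ and write $L=\norm{\partial_x h(y,\cdot)}_{\infty}$ and $k(x)=h(y,x)-x^2/(2c)$. In one dimension, $\operatorname{conc}(k)(x)$ is the supremum of $\E[k(x+U)]$ over mean-zero random variables $U$. By Carathéodory, it suffices to take $U$ supported on at most two points. Since $\E[(x+U)^2]-x^2=\E[U^2]$ when $\E U=0$, this yields
\begin{equation*}
h_c(y,x)=\sup_{\E U=0}\Big\{\E[h(y,x+U)]-\frac{\E[U^2]}{2c}\Big\}.
\end{equation*}
Setting $s=1/c$, this exhibits $h_c(y,x)$ as a supremum of functions that are affine in $s$ with slopes $-\E[U^2]/2\le 0$. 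Hence $h_c$ is convex and non-increasing in $s$, so it is non-decreasing in $c$ and $g_c\ge 0$.

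The key step is an a priori bound on the second moment of (near-)maximizers. Comparing any $U$ with the trivial choice $U=0$ shows that a maximizer, or an $\eta$-maximizer, must satisfy
\begin{equation*}
\frac{\E[U^2]}{2c}\le \E[h(y,x+U)]-h(y,x)+\eta\le L\,\E|U|+\eta\le L\sqrt{\E[U^2]}+\eta.
\end{equation*}
In the limit $\eta\to 0$ this gives $\E[U^2]\le 4c^2L^2$. The same bound also gives compactness of two-point laws, and hence existence of a maximizer $U^*$, using that $h$ is bounded above and continuous.

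Next, compute $g_c$ via Danskin's theorem in $s$. Alternatively, one can bound the one-sided difference quotients directly. For $c'>c$, plugging the maximizer $U^*_{c'}$ into the formula at $c$ gives
\begin{equation*}
0\le h_{c'}-h_c\le \frac{\E[(U^*_{c'})^2]}{2}\Big(\frac1c-\frac1{c'}\Big).
\end{equation*}
Symmetrically, plugging $U^*_c$ into the formula at $c'$ gives $h_{c'}-h_c\ge \frac{\E[(U^*_c)^2]}{2}\big(\frac1c-\frac1{c'}\big)$. Dividing by $c'-c$ and letting $c'\to c$ bounds every Dini derivative, and hence $g_c$ wherever it exists, by $\sup \E[U^2]/(2c^2)$ over maximizers at nearby $c$. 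By the a priori bound, this supremum is at most $4c^2L^2/(2c^2)=2L^2$. Together with $g_c\ge0$, this gives $\norm{g_c(y,\cdot)}_\infty\le 2L^2$.

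The main obstacle I expect is justifying the variational representation of the concave envelope, together with the existence of maximizers, for $h$ that is only bounded above and Lipschitz. I would handle this with the two-point Carathéodory reduction: if an envelope value were approached by two-point laws with diverging support, this would contradict the moment bound, because the quadratic penalty $x^2/2c$ dominates the Lipschitz growth. The differentiability question is then avoided entirely by working with the difference quotients above.
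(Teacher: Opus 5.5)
Your proposal is correct and follows essentially the same route as the paper. Both use the same mean-zero variational representation of $h_c$; the paper cites it from \cite[Lemma C.2]{montanari2024exceptional}, whereas you derive it via the two-point Carath\'eodory reduction. Both then obtain the same a priori bound $\E[U^2]\le 4c^2L^2$ by comparing with $U=0$, and conclude with the same difference-quotient estimate in $c$, whose limit gives $2L^2$. The only cosmetic difference is that the paper restricts both suprema to the common set $\{\E[U^2]\le 4c_2^2L^2\}$ and bounds the difference of suprema directly. You instead plug in maximizers and use monotonicity in $c$ for the lower bound.
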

\begin{proof}
    For any $c > 0$, \cite[Lemma C.2]{montanari2024exceptional} implies that
    \begin{equation*}
        h_c (y, x) = \, \operatorname{conc}  \left( h (y, x) - \frac{x^2}{2 c} \right) + \frac{x^2}{2 c} = \sup_{U \in L^2 (\Omega), \, \E[U] = 0} \E \left[ h(y, x+U) - \frac{U^2}{2 c} \right].
    \end{equation*}
    Denoting $L = \| \partial_x h(y, \cdot) \|_{\infty}$, we first show that the above supremum must be achieved at some $U$ such that $\E [U^2] \le 4 c^2 L^2$. Indeed, such $U$ must satisfy
    \begin{align*}
        h(y, x) \le \, \E \left[ h(y, x+U) - \frac{U^2}{2 c} \right] \le h(y, x) + L \E [\vert U \vert] - \frac{1}{2 c} \E [U^2] \le h(y, x) + L \E [U^2]^{1/2} - \frac{1}{2 c} \E [U^2],
    \end{align*}
    which implies that $\E [U^2] \le 4 c^2 L^2$. As a consequence,
    \begin{equation*}
        h_c (y, x) = \, \sup_{\E [U^2] \le 4 c^2 L^2, \, \E[U] = 0} \E \left[ h(y, x+U) - \frac{U^2}{2 c} \right].
    \end{equation*}
    For any $0 < c_1 \le c_2$, we have
    \begin{align*}
        & \left\vert h_{c_1} (y, x) - h_{c_2} (y, x) \right\vert \\
        = \, & \left\vert \sup_{\E [U^2] \le 4 c_1^2 L^2, \, \E[U] = 0} \E \left[ h(y, x+U) - \frac{U^2}{2 c_1} \right] - \sup_{\E [U^2] \le 4 c_2^2 L^2, \, \E[U] = 0} \E \left[ h(y, x+U) - \frac{U^2}{2 c_2} \right] \right\vert \\
        \le \, & \sup_{\E [U^2] \le 4 c_2^2 L^2, \, \E[U] = 0} \left\vert \frac{\E[U^2]}{2 c_2} - \frac{\E[U^2]}{2 c_1} \right\vert \le \frac{2 c_2 (c_2 - c_1)}{c_1} L^2,
    \end{align*}
    leading to
    \begin{equation*}
        \left\vert g_c (y, x) \right\vert = \, \left\vert \frac{\partial h_c (y, x)}{\partial c} \right\vert \le 2 L^2, \, \forall x \in \R.
    \end{equation*}
    This completes the proof.
\end{proof}

\end{document}